\definecolor{mydarkblue}{rgb}{0,0.08,0.45}
\renewcommand{\mg}[1]{}
\title{Max-Margin Works while Large Margin Fails: Generalization without Uniform Convergence}
\begin{document}
\date{}

\author{%
  Margalit Glasgow \\
  Stanford University \\
  \texttt{mglasgow@stanford.edu} 
  \and
  Colin Wei \\
  Stanford University \\
  \texttt{colinwei@stanford.edu}
  \and
  Mary Wootters \\
  Stanford University \\
  \texttt{marykw@stanford.edu}
  \and
  Tengyu Ma \\
  Stanford University \\
  \texttt{tengyuma@stanford.edu}
  }
\maketitle

\begin{abstract}
A major challenge in modern machine learning is theoretically understanding the generalization properties of overparameterized models. Many existing tools rely on \em uniform convergence \em (UC), a property that, when it holds, guarantees that the test loss will be close to the training loss, uniformly over a class of candidate models.
\citet{nagarajan2019uniform} show that in certain simple linear and neural-network settings, any uniform convergence bound will be vacuous, leaving open the question of how to prove generalization in settings where UC fails. Our main contribution is proving novel generalization bounds in two such settings, one linear, and one non-linear. We study the linear classification setting of \citet{nagarajan2019uniform}, and a quadratic ground truth function learned via a two-layer neural network in the non-linear regime. We prove a new type of margin bound showing that above a certain signal-to-noise threshold, any near-max-margin classifier will achieve almost no test loss in these two settings. Our results show that near-max-margin is important: while any model that achieves at least a $(1 - \epsilon)$-fraction of the max-margin generalizes well, a classifier achieving half of the max-margin may fail terribly. 
Building on the impossibility results of \citet{nagarajan2019uniform}, under slightly stronger assumptions, we show that \em one-sided \em UC bounds and classical margin bounds will fail on near-max-margin classifiers.
% We additionally strengthen the UC impossibility results of \citet{nagarajan2019uniform}, proving that \em one-sided \em UC bounds and classical margin bounds will fail on near-max-margin classifiers. 
Our analysis provides insight on why memorization can coexist with generalization: we show that in this challenging regime where generalization occurs but UC fails, near-max-margin classifiers simultaneously contain some generalizable components and some overfitting components that memorize the data. The presence of the overfitting components is enough to preclude UC, but the near-extremal margin guarantees that sufficient generalizable components are present.

\end{abstract}

\section{Introduction}

A central challenge of machine learning theory is understanding the generalization of overparameterized models. While in many real-world settings deep networks achieve low test loss, their high capacity makes theoretical analysis with classical tools difficult, or sometimes impossible \citep{zhang2017understanding, nagarajan2019uniform}. Most classical theoretical tools are based on \em uniform convergence \em (UC), a property that, when it holds, guarantees that the test loss will be close to the training loss, uniformly over a class of candidate models. Many generalization bounds for neural networks are built on this property, e.g. \cite{neyshabur2015norm, neyshabur2017pac,neyshabur2018towards, harvey2017nearly,golowich2018size}.

The seminal work of \cite{nagarajan2019uniform} gives theoretical and empirical evidence that UC cannot hold in natural overparameterized linear and neural network settings. The impossibility results of Nagarajan and Kolter are very strong: they rule out UC on the smallest reasonable family of models, that is, the models output by gradient descent on clean data. In particular, they prove that in an overparameterized linear classification problem, a certain class of models found by gradient descent will achieve small test loss, but any UC bound over this class will be vacuous. 
% Their results apply to uniform bounds on both the $0/1$-misclassification loss and the ramp loss. Since many classical margin bounds are proved via a uniform bound on the ramp loss, their work also suggests the limitations of such margin bounds.
In a two-layer neural network setting, \citet{nagarajan2019uniform} empirically demonstrate a similar phenomenon for the $0/1$ loss. Beyond these toy settings, they also empirically evaluate many generalization bounds on neural networks trained in practice and show their vacuity.

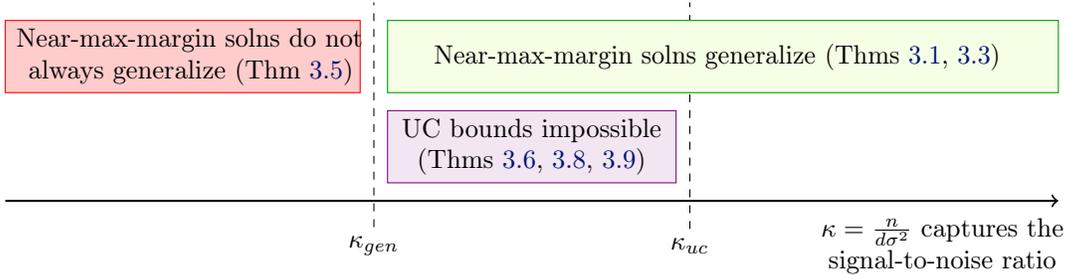
\begin{figure}[t]
    \centering
    \begin{tikzpicture}[xscale=1.4,yscale=1.2]
    \draw[thick, ->] (0,0) to (10,0);
    \node at (8.9, -.5) {\begin{minipage}{4cm}\begin{center}$\kappa = \frac{n}{d\sigma^2}$ captures the signal-to-noise ratio\end{center}\end{minipage}};
    \node(a) at (3.5,-.5) {$\kappa_{gen}$};
    \node(b) at (6.5, -.5) {$\kappa_{uc}$};
    \draw[dashed] (a) to (3.5, 2.2);
    \draw[dashed] (b) to (6.5, 2.2);
    \draw[violet,fill=violet!10] (3.63,.2) rectangle (6.37, 1);
    \draw[red,fill=red!20] (0, 1.2) rectangle (3.37, 2);
    \draw[green!60!black, fill=green!40!yellow!10] (3.63, 1.2) rectangle (10, 2);
    \node at (1.75, 1.6) {\begin{minipage}{4.6cm} \begin{center} Near-max-margin solns do not always
    generalize (Thm~\ref{thm:xor_no_gen}) \end{center} \end{minipage}};
    \node at (6.75, 1.6) {\begin{minipage}{7.5cm} \begin{center} Near-max-margin solns
    generalize (Thms~\ref{thm:linear_gen}, \ref{thm:xor_gen}) \end{center} \end{minipage}};
    \node at (5.0, .6) {\begin{minipage}{4.5cm} \begin{center} UC bounds impossible (Thms~\ref{thm:one_sided_impossible_linear}, \ref{thm:margin_impossible_linear}, \ref{thm:margin_impossible}) \end{center} \end{minipage}};
    \end{tikzpicture}
    \caption{Thresholds for Uniform Convergence and Generalization. All theorems require a sufficiently large dimension, ie. $d \geq \Omega(n)$.}
    \label{fig:thresholds}
\end{figure}

% Margin bounds, because of their data-dependent nature\tnote{data-dependent is not defined and might be confusing for some readers (even expert may not exactly know what it really means)}, have the potential to overcome these limitations. Nevertheless, 
Many margin bound do not technically fit into the category of UC bounds defined by Nagarajan and Kolter, but still may be intrinsically limited for similar reasons. 
Classical margin bounds (see eg. \citet{shalev2014understanding, kakade2009complexity}) and related margin bounds for neural networks \citep{wei2019data,wei2020improved, bartlett2017spectrally,golowich2018size}, in which the generalization guarantee scales inversely polynomial with the margin size, are typically proved via uniform convergence on a surrogate loss (eg. the hinge loss or ramp loss) that upper bounds the $0/1$ misclassification loss. Nagarajan and Kolter's results show that any uniform bound on the ramp loss is vacuous in an overparameterized linear setting, suggesting that classical margin bounds may not be useful. \citet{muthukumar2021classification} shows empirically that such margin bounds are vacuous in a broader linear settings. 
In light of this, it is very important to develop theoretical tools to analyze generalization in settings where uniform convergence cannot yield meaningful bounds.

In this paper we establish novel margin-based generalization bounds in regimes where UC provably fails. These bounds guarantee generalization in the extremal case where the model has a near-maximal margin, and thus we call them \em extremal margin bounds. \em Our main motivation for studying near-max-margin solutions is that minimizing the logistic loss with weak $\ell_2$-regularization achieves max-margin solutions~\citep{wei2019regularization}, and that minimizing unregularized loss with gradient descent converges to the max-margin solution (or stationary points of the max-margin objective)~\citep{ lyu2019gradient,lyu2021gradient}. In linear settings, SGD converges to the max-margin~\citep{nacson2019stochastic}.

%and to a stationary point of the max-margin objective in homogeneous neural networks~
%  \tnote{it's still possible for people to get confused with the question: what's the difference between margin-based bounds and classical margin bounds? They might forget about the word ``classical'' and equate margin-based bounds with margin bounds. the word novel helps that but perhaps not enough. I would say perhaps in the following sentence, just directly say that these bounds, though involve margin, don't fall into the definition of UC bounds in NK}
Our results consider two settings, the linear setting of \citet{nagarajan2019uniform}, and a commonly studied quadratic problem learned on a two-layer neural network~\citep{wei2019regularization, frei2022random}.
% \footnote{We study a different quadratic problem than \citet{nagarajan2019uniform} because it is more amenable to analysis.}
In Theorems~\ref{thm:linear_gen}, \ref{thm:xor_gen}, we prove that above a certain signal-to-noise threshold $\kgen$, near-max-margin solutions will generalize. Below this threshold, max-margin solutions may not generalize (Theorem~\ref{thm:xor_no_gen}). Below a second higher threshold, $\kphen$, uniform convergence fails (Theorems~\ref{thm:one_sided_impossible_linear}). Additionally in this regime where UC fails, we show that classical margin bounds can only yield loose guarantees, even for the max-margin solution (Theorem~\ref{thm:margin_impossible_linear} and \ref{thm:margin_impossible}). 
In Figure~\ref{fig:thresholds} we illustrate these three regions; the main significance of our results is in the challenging middle region between $\kgen$ and $\kphen$ where generalization occurs, but UC fails. Our extremal margin bounds are fundamentally different from classical margin bounds and are not based on uniform convergence. %Because our bounds are not based on uniform convergence, they are not susceptible to the limitations exposed by Nagarajan and Kolter. 

Prior works also studied the challenging regime where uniform convergence does not work, and notable progress has been made in linear settings. \citet{zhou2020uniform} and \citet{koehler2021uniform} show that for linear regression, the test loss can be uniformly bounded for all low-norm solutions that perfectly fit the data; nevertheless, \citet{yang2021exact} shows that such bounds are still loose on the min-norm solution. \citet{negrea2020defense} suggests an alternative framework based on uniform convergence over a less complex family of \em surrogate \em models; they use this technique to show generalization in a linear setting and in another high-dimensional problem amenable to analysis. To our knowledge, our results are the first instance of theoretically proving generalization in a neural network setting (that is not in the NTK regime) where UC provably fails.

We leverage near-max-margins in a unified way for both the linear and nonlinear settings, and we hope that this approach will be useful more broadly in overparameterized settings. 
%In both settings, we work with a high dimensional data distribution in which only a small number of signal directions affect the true label, but the variance of the data is greater in those directions. 
In the challenging regime of generalization without UC, good learned models contain some generalizable signal components and some overfitting components that memorize the data. Our main technique is to show that any near max-margin solution has to contain \em both \em signal components and overfit components. The overfitting component causes UC to fail, but fortunately, has a reduced influence on a random test example, whereas the signal component has a similar influence on training and test examples.

% Besides our generalization bounds, our impossibility results also strengthen the impossibility results of \citet{nagarajan2019uniform}.
% We rule out one-sided UC bounds (which upper bound the test loss), instead of just two-sided bounds (which must upper and lower bound the test loss).
Besides our generalization bounds, we prove impossibility results similar to those of \citet{nagarajan2019uniform}. Under a slightly stronger assumption than the work of \citet{nagarajan2019uniform}, namely, that the bound in question is useful for a \em set \em of ground truth distributions, we rule out one-sided UC bounds (which upper bound the test, instead of just two-sided bounds (which must upper and lower bound the test loss). This stronger assumption is formalized in Definition 2.3 and justified in Remark 2.4.
Further, our results show that there are models that achieve a large but non-near-max-margin (e.g., half the max-margin), but do not generalize at all. We prove that this phase transition cannot be captured by classical margin bounds where generalization decays inversely polynomially with the margin.
\subsection{Additional Related Work}
A large body of work highlights challenges in using classical statistical theory to explain generalization in deep learning. Experimental results~\citep{zhang2017understanding,neyshabur2017exploring} point out that despite being large in traditional capacity measures such as Rademacher complexities, deep networks still generalize well, and new explanations are needed to understand this behavior.~\citet{belkin2018understand} show that similar challenges hold in kernel methods. %, which are a simpler class of nonlinear models.
% ~\citet{nagarajan2019uniform} provide empirical and theoretical results which question whether uniform convergence, a common statistical tool, can explain generalization in deep learning. 
Beyond the work of \citet{nagarajan2019uniform},
~\citet{bartlett2021failures} prove that in a linear interpolation setting, model-dependent generalization bounds fail for the min-norm solution.~\citet{koren2022benign} show that SGD can exhibit a benign underfitting phenomenon where the test loss is small but empirical loss is large. %, defying typical analysis approaches which prove that the train-test generalization gap is small.

% In response to~\citep{nagarajan2019uniform}, several works develop tools for understanding generalization when UC provably fails.~\citet{negrea2020defense} defend uniform convergence by analyzing problem instances where direct application of uniform convergence fails, but uniform convergence succeeds for alternative surrogate classifiers.
% % ~\citet{bachmann2021uniform} theoretically analyze uniform convergence in the setting of~\citet{nagarajan2019uniform} in the case where the model is a kernel classifier.\mg{Not quite}
% ~\citet{zhou2020uniform} and~\citet{koehler2021uniform} explain generalization in linear regression via a uniform bound on low-norm interpolants.
% Nevertheless, \citet{yang2021exact} shows that such bounds are still loose on the min-norm interpolant.

%\mg{Tengyu, you may want to edit this discussion of benign overfitting.}
One related body of work has focused more closely on characterizing ``benign overfitting'', where the model overfits to noise in labels of the training data but still attains good test performance. Our setting differs from benign overfitting because (i) UC provably fails in our setting (whereas such results were not presented in benign overfitting literature), and (ii) the overfitting in our setting cannot be avoided with regularization (See Remark~\ref{remark:benign} for more discussion). 
% A restrictive distribution-independent form of uniform convergence does not directly work in settings with benign overfitting because the training and testing errors are not similar. 
%While overfitting suggest a high model capacity, algorithm-dependent \mg{or norm-based?} notions of a UC bound can still possibly explain benign overfitting, while in our setting, UC cannot possibly explain generalization. 
Most of the results in this area concern linear models: ~\citet{bartlett2020benign} analyze benign overfitting in regression problems by leveraging a closed form expression for the min-norm solution.~\citet{muthukumar2021classification, shamir2022implicit, cao2021risk, wang2020binary} and~\citet{wang2021benign} study classification settings. The works of \citet{muthukumar2020harmless} and \citet{shamir2022implicit} reveal that is often possible to have benign overfitting in classification, whereas in regression for the same covariate distribution, the overfitting would imply poor generalization. \citet{cao2021risk} achieves similar risk bounds to ours for more general distributions of the covariates, but under a tighter overparameterization assumption. (See Remark~\ref{rem:lin_class} for more discussion). Also closely related to our work on linear classification is the work of \citet{montanari2019generalization}, which asymptotically characterizes the generalization of the max-margin solution as $n, d \rightarrow \infty$. Benign overfitting in neural networks has been shown in several simple settings.~\citet{frei2022benign} analyzes two-layer neural networks trained by gradient descent on linearly-separable data.
~\citet{cao2022benign} studies benign overfitting for a two-layer simplified convolutional network.

More broadly, a variety of new generalization bounds have been derived in hopes of explaining generalization in deep learning. While none of these bounds have been explicitly proven to succeed in regimes where UC fails, they leverage additional properties of the training data or the optimization process and thus are not directly susceptible to the critiques of \citet{nagarajan2019uniform}. Among these are works that leverage properties such as Lipschitzness of the model on the training data~\citep{arora2018stronger,nagarajan2019deterministic,wei2019data,wei2019improved}, use algorithmic stability~\citep{mou2018generalization,li2019generalization, chatterjee2022generalization}, or information-theoretic perspectives~\citep{negrea2019information,haghifam2021towards}.

% More broadly, a variety of new generalization bounds have been derived in hopes of explaining generalization in deep learning. 
% ~\citet{neyshabur2015norm} and~\citet{golowich2018size} derive norm-based Rademacher complexities for neural nets, and~\citet{bartlett2017spectrally} and~\citet{neyshabur2017pac, vardi2022sample} derive spectrally-normalized margin bounds.
% Subsequent work makes these bounds more data-dependent by incorporating properties such as the Lipschitzness of the model on the training data~\citep{arora2018stronger,nagarajan2019deterministic,wei2019data,wei2019improved}. 
% However, these methods are still based on uniform convergence over a surrogate loss, \tnote{I think this sentence might not be the right characterization of my paper with colin; I can rewrite this part}\mg{Great! (Though I think Colin wrote this haha)}\mg{Hmm, is Arora paper in this category?}
% and therefore may be subject to the limitations pointed out by~\citet{nagarajan2019uniform}.~\citet{dziugaite2017computing} train a network with a non-vacuous generalization bound by simultaneously optimizing the network and bound. Other approaches have been proposed towards deriving generalization bounds as well, such as algorithmic stability~\citep{mou2018generalization,li2019generalization, chatterjee2022generalization} and information-theoretic perspectives~\citep{negrea2019information,haghifam2021towards}. These approaches have the potential to work without uniform convergence.

Finally, a body of work seeks to draw connections between optimization and generalization in deep learning by studying implicit regularization effects of the optimization algorithm (see e.g.~\citep{gunasekar2017implicit,li2017algorithmic,gunasekar2018characterizing,gunasekar2018implicit,woodworth2019kernel,damian2021label,haochen2020shape,li2019towards,wei2020implicit} and related references). Most relevent in this literature is the aforementioned work connecting gradient descent and max-margin solutions.
\section{Preliminaries}\label{sec:model}

% \subsection{Problem and Model}\label{sec:model}
% \tnote{this subsection title is perhaps not necessary if you want to save space; can still keep the next subsection title}
Our work achieves results in two settings. The first is a linear setting previously studied by~\citet{nagarajan2019uniform} where both the ground truth and the trained model are linear. In the second nonlinear setting, studied before by~\citet{wei2019regularization,frei2022random}, the ground truth is quadratic, and the trained model is a two-layer neural network. In both settings, the data is drawn from a product distribution on features involved in the ground truth labeling function, and ``junk'' features orthogonal to the signal. We formalize the two settings below.

\textbf{Linear setting}\newline
$\blacktriangleright$ \textbf{Data Distribution}. Fix some ground truth unit vector direction $\mu \in \mathbb{R}^d$. Let $x = z + \xi$, where $z \sim \text{Uniform}(\{\mu, -\mu\})$ and $\xi$ is uniform on the sphere of radius $\sqrt{d - 1}\sigma$ in $d - 1$ dimensions, orthogonal to the direction $\mu$. Let $y = \mu^Tx$, such that $y = 1$ with probability $1/2$ and $-1$ with probability $1/2$. We denote this distribution of $(x,y)$ on $\mathbb{R}^d \times \{-1, 1\}$  by $\mathcal{D}_{\mu, \sigma, d}$. \newline
$\blacktriangleright$ \textbf{Model.} We learn a model $w \in \mathbb{R}^d$ that predicts $\hat{y} = \text{sign}(f_w(x))$ where $f_w(x) = w^Tx$.

\textbf{Setting for Two-Layer Neural Network Model with Quadratic ``XOR'' Ground Truth}\newline
$\blacktriangleright$ \textbf{Data Distribution.} Fix some orthogonal ground truth unit vector directions $\mu_1$ and $\mu_2$ in $\mathbb{R}^d$. Let $x = z + \xi$, where $z \sim \text{Uniform}(\{\mu_1, -\mu_1, \mu_2, -\mu_2\})$ and $\xi$ is uniform on the sphere of radius $\sqrt{d - 2}\sigma$ in $d - 2$ dimensions, orthogonal to the directions $\mu_1$ and $\mu_2$. Let $y = (\mu_1^Tx)^2 - (\mu_2^Tx)^2$ for some orthogonal ground truth directions $\mu_1$ and $\mu_2$ (see Figure~\ref{fig:xor}(left)). We denote this distribution of $(x, y)$ on $\mathbb{R}^d \times \{-1, 1\}$  by $\mathcal{D}_{\mu_1, \mu_2, \sigma, d}$.  We call this the XOR problem because $y = \on{XOR}\left((\mu_1 + \mu_2)^Tx, (-\mu_1 + \mu_2)^Tx\right)$. For instance, if $\mu_1 = e_1$ and $\mu_2 = e_2$, then $y = x_1^2 - x_2^2$. As can be seen in Figure~\ref{fig:xor}(left), this distribution is not linearly separable, and so one must use nonlinear model to learn in this setting.

$\blacktriangleright$ \textbf{Model.} 
Fix $a \in \{-1,1\}^m$ so that $\sum_i a_i = 0$. The model is a two-layer neural network with $m$ hidden units and activation function $\phi$, parameterized by $W \in \mathbb{R}^{m \times d}$. $W$ (which will be learned) represents the weights of the first layer and $a$ (which is fixed) is the second layer weights. The model predicts $f_W(x) = \sum_{i = 1}^m a_i \phi(w_i^Tx)$, where $w_i \in \mathbb{R}^d$ denotes the $i$'th column of $W$.
    % We learn the first layer $W \in \mathbb{R}^{m \times d}$ of a two layer neural network with $m$ hidden units with activation $\phi$, with the second layer $a \in \mathbb{R}^m$ fixed to be half $1$ and half $-1$. Thus the model predicts $\hat{y} = \text{sign}(f_W(x))$, where $f_W(x) = \sum_{i = 1}^m a_i \phi(w_i^Tx)$, and $w_i \in \mathbb{R}^d$ denotes the $i$'th column of $W$. 
We work with activations $\phi$ of the form $\phi(z) = \max(0, z)^h$ for $h \in [1, 2)$, and require that $m$ is divisible by $4$\footnote{The assumption that $m$ is divisible by $4$ is for convenience, and can be removed if $m$ is large enough.}.

We define a \em problem class \em of distributions to be a set of data distributions. In this paper, we work with the linear problem class $\Omega_{\sigma, d}^{\on{linear}} := \{\mathcal{D}_{\mu, \sigma, d} : \mu \in \mathbb{R}^d, \|\mu\| = 1\}$, and the quadratic problem class  $\Omega_{\sigma, d}^{\on{XOR}} := \{\mathcal{D}_{\mu_1, \mu_2, \sigma, d} : \mu_1 \perp \mu_2 \in \mathbb{R}^d, \|\mu_1\| = \|\mu_2\| = 1\}$. Here $\|\cdot\|$ denotes the $\ell_2$ norm.

We will sometimes abuse notation and say that $x \sim \mathcal{D}$ instead of saying that $(x, y) \sim \mathcal{D}$.
\begin{figure}
\resizebox{6cm}{6cm}{
\begin{tikzpicture}[scale=1.5]

    \node(p2) at (-1.3, -.8) {$\mu_2$};
    \node(m2) at (1.3, .8) {$-\mu_2$};
    \draw[thick](m2)--(p2);
    \draw[red, dashed] (.8, -1) to (.8, 2);
    \begin{scope}[yshift=.5cm]
     \foreach \y in {-0.2, -0.5, 0, 0.5, 0.2}{
        \node[draw, circle, fill=red!30, scale=.6] at (.8, \y) {-};
    }
    \end{scope}
        \node(m1) at (-2,0) {$-\mu_1$};
    \node(p1) at (2,0) {$\mu_1$};
        \draw[thick](m1)--(p1);
    \draw[green!60!black,dashed] (-1.5, -1.5) -- (-1.5, 1.5);
        \draw[green!60!black,dashed] (1.5, -1.5) -- (1.5, 1.5);
    \foreach \y in {-0.2,  -0.5, 0, 0.5, 0.2}{
        \foreach \x in {-1.5, 1.5}{
        \node[draw, circle, fill=green!20, scale=.6] at (\x, \y) {+};
        }
    }
    \draw[red, dashed] (-.8, -2) to (-.8, 1);
    \begin{scope}[yshift=-.5cm]
     \foreach \y in {-0.25, -0.55, 0, 0.55, 0.25}{
        \node[draw, circle, fill=red!30, scale=.85] at (-.8, \y) {-};
    }
    \end{scope}
    \draw(0,-1.5)--(0,1.5);
    \node(c) at (1, -1.7) {$\{\mu_1, \mu_2\}^\perp$};
    \draw[->] (c) to[out=180,in=0] (0.05, -1.3);

    \end{tikzpicture}}
% \hspace{1.5cm}
\resizebox{4.8cm}{4.8cm}{
\begin{tikzpicture}
\node {\includegraphics[width=5cm]{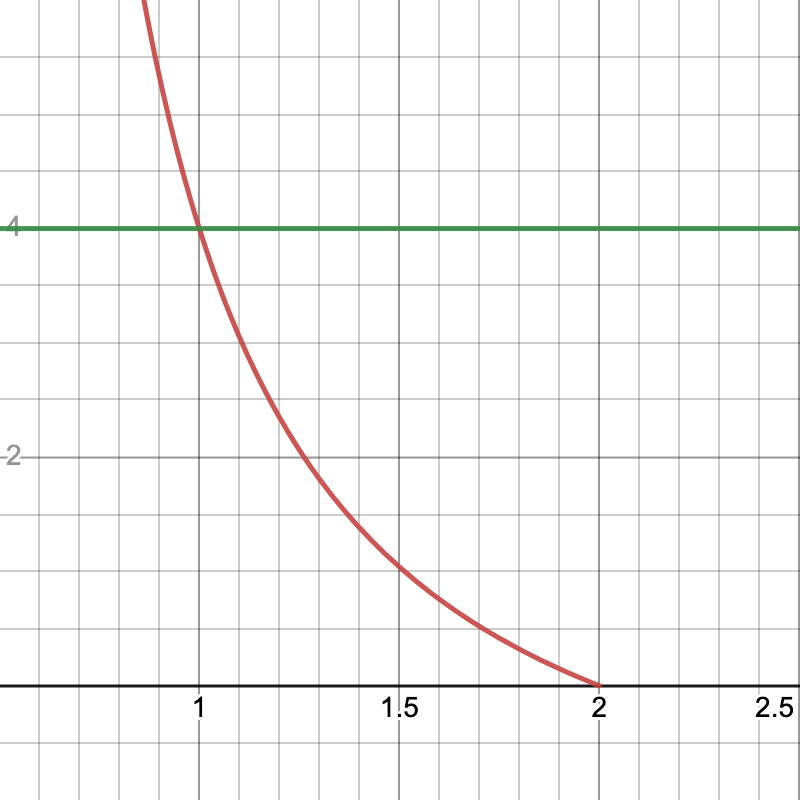}};
\node(a)[red,fill=white] at (2,-.5) {$\kgenx$};
\node at (2.8, -1.8) {$h$};
\node(b)[green!50!black] at (1.7,1.5) {$\kphenx = 4$};
\draw[->, green!50!black] (b) to[out=100,in=80] (0.5,1.1);
\draw[->,red] (a) to[out=180,in=60] (0,-.9);
\end{tikzpicture}}
% \begin{table}[]
%     \centering
%     \begin{tabular}{cc}
%          \input{sections/xor_tikz} & BLAh \\
%         (a) & (b)
%     \end{tabular}
% \end{table}
    %\includegraphics[width=5cm]{}
    \resizebox{6.6cm}{5.4cm}{
    \usetikzlibrary{patterns}

\begin{tikzpicture}

    \draw[gray!50, thin, step=0.5] (-1,-1) grid (5,4);
    \draw[very thick,->] (-1,0) -- (5.2,0) node[right] {$d/n$};
    \draw[very thick,->] (0,-1) -- (0,4.2) node[above] {$1/\sigma^2$};

    % \foreach \x in {-1,...,5} \draw (\x,0.05) -- (\x,-0.05) node[below] {\tiny\x};
    % \foreach \y in {-3,...,4} \draw (-0.05,\y) -- (0.05,\y) node[right] {\tiny\y};

    \fill[green!70!yellow!20] (1,0.45) -- (5,2.05) -- (5, 4)  -- (1, 4) -- cycle;

    \fill[red,opacity=0.3] (1,0.35) -- (5,1.95) -- (5, 0)  -- (1, 0) -- cycle;

    \fill[pattern color=violet!60,opacity=1, pattern=crosshatch] (1,0.45) -- (5,2.05) -- (5, 4)  -- (2.05, 4) -- (1, 1.9) -- cycle;

    \draw [dashed, thick] (0,0) -- node[below,sloped] {$\kappa = \kappa_{gen}$} (5,2);
    \draw [dashed, thick] (0,0) -- node[above,sloped] {$\kappa = \kappa_{uc}$} (2,4);

    \draw [dashed] (1,0) -- (1,4);

   \node at (1, -0.25) {\small $\frac{d}{n} \geq c$};

   \node(b)[green!50!black] at (2,4.5) {Generalization};
\draw[->, green!50!black] (b) to (1.5,3.7);
\draw[->, green!50!black] (b) to (2.5,3);

\node(c)[red!50!black] at (3,0.3) {No Generalization};
\draw[violet,fill=white] (2.2, 2.3) rectangle (3.8, 2.7);
\node(c)[purple!50!black] at (3,2.5) {UC Fails};

% [out=100,in=80]
\end{tikzpicture}}
    \caption{Left: Quadratic XOR Problem. Middle: $\kgenx$ (red) and $\kphenx$ (green) as a function of $h$. Right: Regions in which theorems hold. As shown in this figure, our results only hold when there is sufficient overparameterization, that is, $d \geq cn$ for a constant $c$.}
    \label{fig:xor}
\end{figure}

Before proceeding, we make some comments on our models and compare it to related work.

{\textbf{Large dimension assumption.}} In both the linear and non-linear settings, our focus is an overparameterized regime where the dimension $d$ is at least a constant factor times larger than $n$, the number of training samples. Such an assumption is mild relative to the assumptions made in related work, which require $d = \omega(n)$ (see Remark~\ref{rem:lin_class} for a detailed discussion of work on linear models; for neural networks, the work of \cite{frei2022benign} and \cite{cao2022benign} assume that $d \geq n^2$ or stronger). When the dimension is sufficiently large (in particular, at least $\omega(n)$), with high probability, the max-margin solution coincides with the min-norm regression solution (see~\cite{hsu2021proliferation}), meaning the max-margin solution can be analyzed via a closed-form expression. Our work is fundamentally different from the work on linear classification which operates in the $d = \omega(n)$ regime, because in our setting when $d = \Theta(n)$, these two solutions do not coincide. 
Note that $n \gg d$, then uniform convergence analyses should apply both for the linear and non-linear problems (see \cite{wei2019regularization} which gives a UC-based margin bound for the XOR problem). There remains an interesting open regime when $d$ is close to $n$ where UC may not apply, but our results do not hold.

% {\textbf{Smaller variance of orthogonal ``junk'' features.}} When $d$ is much larger than $n$, for learning to occur, we must have $\sigma^2 < 1$, meaning that the scale of $\mu$ is greater than the standard deviation of the data in any direction orthogonal to the signal. We assume this throughout. This similar to other works which study a ``junk'' feature model (eg. \citet{zhou2020uniform,shamir2022implicit}), or works which study mixture models with centers $\pm \mu$, where $|\mu|^2$ is larger than the operator norm of the covariance matrix, $\Sigma$ (eg. \cite{cao2021risk}).

{\textbf{Distribution of Covariates.}}
Many works on linear classification study more general data models which allow arbitrary decay of the eigenvalues of the covariance matrix (eg. \cite{muthukumar2021classification, wang2020binary, cao2021risk}), or variance in the signal direction, that is, $x^T\mu \neq y$ (eg. \cite{shamir2022implicit}). We work with a simpler distribution, which is still challenging, because it defies existing analyses built on UC or closed-form solutions. Our results can be extended to a setting where the orthogonal features are sub-Gaussian, since the key property we leverage is concentration of the $n \times n$ Gram matrix containing the dot product of all pairs of orthogonal components of the data points. This extension would be similar to the data distribution of studied in \cite{frei2022benign} on neural networks. We expect, however, that having significant variance in the signal direction would affect the threshold at which generalization occurs.

\subsection{Background and Definitions on Uniform Convergence}

In this subsection, we provide some definitions to quantitatively reason about the limits of UC bounds. We also provide some definitions and background on margin bounds. Many of the definitions are based on those from \citet{nagarajan2019uniform}.

For any loss function $\mathcal{L}: \mathbb{R} \times \mathbb{R} \rightarrow \mathbb{R}$, and a hypothesis $h$ mapping from a domain $\mathcal{X}$ to $\mathbb{R}$, we define the test loss on a distribution $\mathcal{D}$ to be $\mathcal{L}_{\mathcal{D}}(h) := \mathbb{E}_{(x, y) \sim \mathcal{D}}\mathcal{L}(h(x), y)$. For a set of examples $S = \{(x_i, y_i)\}_{i \in [n]}$, we define $\mathcal{L}_{S}(h) := \mathbb{E}_{i \in [n]}\mathcal{L}(h(x_i), y_i)$ to be the empirical loss over the samples.

Unless otherwise specified, we will use $\mathcal{L}$ to denote the $0/1$ loss, which equals $1$ if and only if the signs of the two labels disagree, that is, $\mathcal{L}(y, y') = \textbf{1}(\on{sign}(y) \neq \on{sign}(y'))$.

% \tnote{I felt that we should introduce a global hypothesis class, to distinguish it from the hyptohesis class that are useful. This is just not the way that ML people think about things. For most people (even NK paper), hypothesis class means the global one, and then you get pick a subset from it to do UC} \mg{I don't see the point in this, but if you want to change it you can.  Though I strongly prefer to not include NK's definiton of algorithm-dependent UC. I agree that people (me too) often think about a global hypothesis class, but I found NK's definitions so hard to parse that this slight shift of mindset was worth it.}

Typically in machine learning one considers a global hypothesis class $\mathcal{G}$ that an algorithm may explore (e.g., the set of all two-layer neural networks). A uniform convergence bound, defined below, may hold over a smaller subset $\mathcal{H}$ of $\mathcal{G}$, eg. the subset of networks with bounded norm.  

\begin{definition}[Uniform Convergence Bound]\label{def:uc}
A \em two-sided \em uniform convergence bound with parameter $\epsilon_{\on{unif}}$ for a problem class $\Omega$, a set of hypotheses $\mathcal{H}$, and loss $\mathcal{L}$ is a bound that guarantees that for any $\mathcal{D} \in \Omega$,
\begin{align}
    \Pr_{S \sim \mathcal{D}^n}[\sup_{h \in \mathcal{H}}|\mathcal{L}_{\mathcal{D}}(h) - \mathcal{L}_{S}(h) | \geq \epsilon_{\on{unif}}] \leq \frac{1}{4}.
\end{align}
A \em one-sided \em uniform convergence bound with parameter $\epsilon_{\on{unif}}$ for a problem class $\Omega$, a set of hypotheses $\mathcal{H}$, and loss $\mathcal{L}$ is a bound that guarantees that for any $\mathcal{D} \in \Omega$, 
\begin{align}
    \Pr_{S \sim \mathcal{D}^n}[\sup_{h \in \mathcal{H}}\mathcal{L}_{\mathcal{D}}(h) - \mathcal{L}_{S}(h) \geq \epsilon_{\on{unif}}] \leq \frac{1}{4}.
\end{align}
\end{definition}

\begin{remark}
More generally one can require a uniform convergence bound to hold with probability $1 - \delta$, but because we are not concerned with the dependence on $\delta$ in our work, we simplify by allowing the bound to fail with probability $1/4$. This is the largest value for which our results go through.
\end{remark}

%\tnote{we should mention somewhere (or maybe even twice, but not here) that a local rademacher complexty bound would solve our problem. A local rademacher complexity bound is a one side uniform convergence bound that considers that $L_S(h)=0$}\mg{Do you want to do this?}

A uniform convergence bound can be customized to algorithms by choosing $\mathcal{H}$ to depend on the implicit bias of an algorithm. For instance, if an algorithm $\mathcal{A}$ favors low-norm solutions, one could choose $\mathcal{H}$ to be the set of all classifiers with bounded norm. Of course, if $\mathcal{H}$ is too small, it may not be useful for proving generalization, because $\mathcal{A}$ will never output a solution in $\mathcal{H}$. We formalize the notion of choosing a useful algorithm-dependent set $\mathcal{H}$ as follows.

% A uniform convergence bound can achieve non-vacuous generalization guarantees for the classifiers output by a specific algorithm $\mathcal{A}$, if $\mathcal{H}$ contains the output of $\mathcal{A}$ most of the time. We formalize this as follows.
% \tnote{I'm not sure orindary readers can understand the context or implication of this. Maybe first explicitl say that UC bound can also be customized to algorithms and made algorithm-dependent in some way --- as long as you choose the set that $\mathcal{H}$ for which you do uniform convergence on}

\begin{definition}[Useful Hypothesis Class]\label{def:useful}
A hypothesis class $\mathcal{H}$  is \em useful \em with respect to an algorithm $\mathcal{A}$ over a problem class $\Omega$ if for any $\mathcal{D} \in \Omega$, 
\begin{align}
    \Pr_{S \sim \mathcal{D}^n}[\mathcal{A}(S) \in \mathcal{H}] \geq \frac{3}{4}.
\end{align}
\end{definition}

\begin{remark}
The key difference between our Definitions~\ref{def:uc} and \ref{def:useful} and the definition of Algorithm-Dependent Uniform Convergence bound in \citet{nagarajan2019uniform} is that we require that the bound holds for a class of distributions $\Omega$, and not a single distribution $\mathcal{D}$. We believe this is a natural definition, since for most problems, the designer of the generalization bound would not know in advance the ground truth distribution, but might know that their data comes from some problem class, e.g., linearly separable distributions.
\end{remark}

\begin{remark}
All of our impossibility results concern the broader definition of one-sided uniform convergence bounds. We define two-sided uniform convergence bounds to highlight the difference between our results and the impossibility results of \citet{nagarajan2019uniform}, which consider two-sided UC bounds. 
% This makes our results stronger, but is only possible because we consider a class of distributions $\Omega$.
Thus our conclusions are stronger, but this is only possible because we make a stronger assumption by considering a class of distributions. Note, under the same assumptions as \citet{nagarajan2019uniform} (where $\Omega$ is just a single distribution), our techniques would rule out out two-sided UC bounds, as they do.
\end{remark}

More generally, we can have generalization bounds that do not yield the same generalization guarantee for all elements of $\mathcal{H}$. Instead, their guarantee scales with some property of the hypothesis $h$ and the sample $S$.  We call these \em data-dependent \em bounds. Such bounds are useful if the favorable property is satisfied with high probability by the algorithm of interest.

% \cwnote{might be good to have separate paragraph environments for UC and margin}
One specific type of data-dependent bound depends on the margin achieved by the classifier on the training sample. We recall the definition of a margin:

\begin{definition}[Margin]
The \em margin \em $\gamma(h, S)$ of a classifier $h$ on a sample $S$ equals $\min_{(x, y) \in S} yh(x)$.
\end{definition}

 In certain parameterized hypothesis classes it is useful to define a normalized margin. If $f_W$ is $h$-homogeneous, that is, $f_{cW}(x) = c^hf_W(x)$ for any positive scalar $c$, then we define the \em normalized margin \em
\begin{align}
    \bar{\gamma}(f_W, S) := \frac{\gamma(f_W, S)}{\|W\|^h} = \gamma(f_{W/\|W\|}, S),
\end{align}
where we define the norm $\|W\|$ to equal $\sqrt{\mathbb{E}_{i \in [m]}[\|w_i\|^2]}$, where $w_i$ is the $i$'th column of $W$. 

We will use $\gamma^*(S)$ to denote the maximum normalized margin. When we are discussing the linear problem, we let $\gamma^*(S)$ be the max-margin over all vectors $w \in \mathbb{R}^{d}$  with norm $1$, that is $\gamma^*(S) := \sup_{w: \|w\|_2 \leq 1} \gamma(S, f_w)$.  In the XOR problem, we use $\gamma^*(S)$ to denote the max-margin over all weight matrices $W \in \mathbb{R}^{m \times d}$  with norm $1$, that is $\gamma^*(S) := \sup_{W: \|W\| \leq 1} \gamma(S, f_W)$. 

Most classical margin bounds prove that the generalization gap can be bounded by a term that scales inversely linearly or quadratically in the margin~\citep{koltchinskii2002empirical,kakade2009complexity}. Such bounds usually rely on proving uniform convergence for a continuous loss that upper bounds the $0/1$ loss. As we will show in the next section, such bounds are also intrinsically limited in regimes where UC fails on the $0/1$ loss.

In contrast to this, in our work, we prove bounds for classifiers that achieve near-maximal margins. 
\begin{definition}\label{def:maxmargin}
Let $\eps > 0$.  A classifier $h$ is a \emph{$(1 - \eps)$-max-margin solution} for $S$ if 
$ \gamma(h,S) \geq (1 - \epsilon)\gamma^*(S). $
\end{definition}
We refer to a bound that holds for $(1-\epsilon)$-max-margin solutions as a \emph{extremal margin bound.} 

\section{Main Results}\label{sec:results}
In the following section, we state our main results for the linear and quadratic problems, and provide intuition for our findings. As illustrated in Figure~\ref{fig:thresholds}, our results show different possibilities for a near max-margin solution depending on the size of $\kappa := \frac{n}{d\sigma^2}$, a signal-to-noise parameter, where $\sigma$, $d$ are as in Section~\ref{sec:model}. When $\kappa$ is smaller than some threshold $\kgen$ we are not guaranteed to have learning: even a near max-margin solution may not generalize. When $\kappa$ exceeds $\kgen$ by an absolute constant and when $\sigma^2 \ll 1$, our results show that any near max-margin solution generalizes well. Finally, we show that if $\kappa$ is smaller than a second threshold $\kphen$, then uniform convergence approaches will fail to guarantee generalization. 

The exact thresholds $\kgen$ and $\kphen$ depend on the problem class of interest, but in both the linear setting and the nonlinear setting we study, we show that $\kphen > \kgen$. Thus we observe a regime where uniform convergence fails, but generalization still occurs for near max-margin solutions.

For the linear problem, we define the universal constants
\begin{align}\label{def:kgenphen_linear}
    \kgenl := 0 \textup{ and }  \kphen^{\on{linear}} := 1.
\end{align}

For the XOR problem with activation $\on{relu}^h$, for $h \in [1,2)$, we define the constants
\begin{align}\label{def:kgenphen}
    \kgenx := \text{ the solution to }\:\: 2^{\frac{1}{h}}\sqrt{\frac{2}{\kappa}}=\sqrt{\frac{\kappa}{4 + \kappa}}+\sqrt{\frac{16}{\kappa\left(4 + \kappa\right)}} \textup{ and } \kphenx := 4.
\end{align}
The constants are pictured in Figure~\ref{fig:xor}(right) as a function of $h$. Observe that for $h \in (1, 2)$, we have $\kgenx < \kphenx$, and $\kgenx > 0$. When $h = 1$ and the activation is relu, we have $\kgenx = \kphenx$, and thus we do not expect to have a regime where uniform convergence fails, but max-margin solutions generalize. We elaborate more intuitively on why $h > 1$ allows for generalization without UC in Section~\ref{sec:sketch}. %Remark~\ref{rem:h}.

% \begin{figure}
%     \centering
%     %\includegraphics[width=5cm]{figs/desmos.png}
%     \begin{tikzpicture}
%     \node {\includegraphics[width=5cm]{figs/desmos.png}};
%     \node(a)[red,fill=white] at (2,-.5) {$\kgenx$};
%     \node at (2.8, -1.8) {$h$};
%     \node(b)[green!50!black] at (3.75,1) {$\kphenx = 4$};
%     \draw[->, green!50!black] (b) to[out=100,in=80] (2,1.1);
%     \draw[->,red] (a) to[out=180,in=60] (0,-.9);
%     \end{tikzpicture}
%     \caption{Value of $\kgen$ (red) and $\kphen$ (green) as a function of $h$.}
%     \label{fig:kappagen}
% \end{figure}

Our first theorem states that when $\kappa > \kgen$, any near-max-margin solution generalizes.

\begin{restatable}[Extremal-Margin Generalization for Linear Problem]{theorem}{lineargen}\label{thm:linear_gen}Let $\delta > 0$. There exist constants $\epsilon = \epsilon(\delta)$ and $c = c(\delta)$ such that the following holds. For any $n, d, \sigma$ and $\mathcal{D} \in \Omega_{\sigma, d}^{\on{linear}}$ satisfying $\kgenl + \delta \leq \kappa \leq \frac{1}{\delta}$, and $\frac{d}{n} \geq c$, then with probability $1 - 3e^{-n}$ over the randomness of a training set $S \sim \mathcal{D}^n$, for any $w \in \mathbb{R}^d$ that is a $(1 - \epsilon)$-max-margin solution (as in Definition~\ref{def:maxmargin}), we have 
$ \mathcal{L}_{\mathcal{D}}(f_w) \leq e^{-\frac{n}{36d\sigma^4}} + e^{-n/8}.$
\end{restatable}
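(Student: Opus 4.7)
My plan is to exploit the orthogonal decomposition $w = \alpha\mu + w^\perp$ with $w^\perp\perp\mu$, normalized so $\alpha^2 + \|w^\perp\|^2 = 1$. Since $x = y\mu + \xi$ and $\mu\perp\xi$, the margin on a training point is $y_i w^T x_i = \alpha + y_i(w^\perp)^T\xi_i$, and by the rotational symmetry of $\xi$ on the sphere the $0/1$ test loss reduces to
\[
\mathcal{L}_{\mathcal{D}}(f_w) \;=\; \Pr_\xi\bigl[(w^\perp)^T\xi < -\alpha\bigr].
\]
The elementary spherical tail bound $(1-s^2)^{(d-2)/2} \leq \exp(-(d-2)s^2/2)$ for the marginal of the uniform distribution on $S^{d-2}$ then gives $\Pr_\xi[v^T\xi < -t] \leq \exp(-t^2/(2\|v\|^2\sigma^2))$ for any $v\perp\mu$, so the whole proof reduces to establishing $\alpha^2/\|w^\perp\|^2 \geq n/(18 d\sigma^2)$ on a high-probability event.

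To set up that lower bound, set $\tilde\xi_i := y_i\xi_i$ and introduce $\eta^*(S) := \max_{\|u\|=1,\,u\perp\mu}\min_i u^T\tilde\xi_i$, the ``same-sign'' max-margin of the noise vectors in $\mu^\perp$. A direct optimization over the unit ball shows that $\gamma^*(S) = \sqrt{1+\eta^*(S)^2}$, attained by $\alpha^* = 1/\gamma^*(S)$ and $w^{\perp,*} = (\eta^*(S)/\gamma^*(S))\,u^*$ where $u^*$ is the argmax in $\eta^*(S)$. By SVM duality $\eta^*(S)^2 = \min_{a\geq 0,\,\mathbf 1^T a = 1} a^T G a$ with $G_{ij} = \tilde\xi_i^T\tilde\xi_j$, and standard operator-norm concentration for sub-Gaussian row matrices on the sphere of radius $\sqrt{d-1}\sigma$ yields $\|G - (d-1)\sigma^2 I\|_{\mathrm{op}} \lesssim \sigma^2\sqrt{dn}$ except on an event of probability $\leq 3e^{-n}$. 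On this event, $\eta^*(S)^2 = (1\pm o(1))\sigma^2 d/n$ provided $d/n \geq c(\delta)$ is large enough; I henceforth condition on this event.

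Next, any unit-norm $(1-\epsilon)$-max-margin $w$ satisfies $\min_i (w^\perp)^T\tilde\xi_i \geq (1-\epsilon)\gamma^*(S) - \alpha$, while by the very definition of $\eta^*(S)$ the left side is at most $\eta^*(S)\,\|w^\perp\|$. Therefore $(\alpha, \|w^\perp\|)$ lies in the unit disk intersected with the halfplane $\alpha + \eta^*(S)\|w^\perp\| \geq (1-\epsilon)\sqrt{1+\eta^*(S)^2}$. Writing $\alpha = \cos\theta$, $\|w^\perp\| = \sin\theta$ and using $\cos\theta + \eta^*\sin\theta = \sqrt{1+\eta^{*2}}\cos(\theta - \theta^*)$ with $\tan\theta^* = \eta^*(S)$, this is exactly $|\theta - \theta^*| \leq \arccos(1-\epsilon) \leq 2\sqrt{\epsilon}$. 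A first-order expansion then gives $\alpha/\|w^\perp\| = \cot\theta \geq 1/\eta^*(S) - O(\sqrt{\epsilon})\csc^2\theta^*$. Because $\kappa\in[\delta,1/\delta]$ and $d/n \geq c(\delta)$ together force $\eta^*(S)$ to be bounded above and below away from $0$ and $\infty$, a suitable $\epsilon = \epsilon(\delta)$ produces $\alpha^2/\|w^\perp\|^2 \geq n/(18d\sigma^2)$; plugging this into the display above yields $\mathcal{L}_{\mathcal{D}}(f_w) \leq \exp(-n/(36d\sigma^4))$. The additional $e^{-n/8}$ slack in the theorem absorbs the $(1\pm o(1))$ corrections from the concentration of $\eta^*(S)$ as well as the spherical-versus-Gaussian tail mismatch.

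The main obstacle is the quantitative coupling in the last step: the allowable $\epsilon$ depends on $\eta^*(S)$ through $\csc^2\theta^* = 1 + 1/\eta^*(S)^2$, so both the upper bound $\kappa\leq 1/\delta$ (which prevents $\eta^*$ from blowing up) and the lower bound $\kappa\geq\delta$ (which prevents $\eta^*$ from collapsing) are necessary to obtain a single $\epsilon(\delta)$ that works uniformly. The constant $c = c(\delta)$ in $d/n\geq c$ enters because the $o(1)$ error in the concentration $\eta^*(S)^2 = (1\pm o(1))\sigma^2 d/n$ must be dominated by the safety gap created by choosing $\epsilon$ strictly smaller than the critical threshold.
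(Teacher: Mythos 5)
Your proposal is correct and follows essentially the same route as the paper: decompose $w$ into signal and noise components, use Gram-matrix concentration to pin the per-unit-norm noise margin at $\approx\sqrt{d\sigma^2/n}$, deduce that a $(1-\epsilon)$-max-margin constraint forces $\alpha/\|w^\perp\|\gtrsim\sqrt{\kappa}$, and finish with a sub-Gaussian tail bound on $(w^\perp)^T\xi$ for a fresh sample. Your use of SVM duality ($\eta^{*2}=\min_{a\in\Delta}a^TGa$) and the trigonometric perturbation $\cos(\theta-\theta^*)\geq 1-\epsilon$ are cosmetic variants of the paper's min-norm-interpolant construction and quadratic-inequality argument, respectively.
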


Attentive readers may observe that since $\kgenl = 0$, Theorem~\ref{thm:linear_gen} can guarantee asymptotic generalization for some sequences of parameters $(n_i, d_i, \sigma_i)_{i \geq 1}$ even when $\kappa_i = \frac{n_i}{d_i \sigma_i^2} = o_{i \rightarrow \infty}(1)$, as long as $\sigma_i^2$ decays fast enough. In Theorem~\ref{thm:linear_full} in the appendix, we state a more detailed version of this theorem which states the exact dependence of $c$ and $\epsilon$ on $\delta$, yielding precise results for $\kappa = o(1)$.

\begin{remark}[Comparison with work on linear classification]\label{rem:lin_class}
Theorem~\ref{thm:linear_gen} is similar to the results of \cite{cao2021risk, wang2020binary, muthukumar2021classification,  shamir2022implicit}, which study the generalization of the max-margin solution in binary linear classification. These works consider more general data distributions than we do, but make stronger overparameterization assumptions. Among the these works, \cite{wang2020binary} requires the mildest overparameterization assumption, that $d \geq \Theta(n \log(n))$. When this assumption is met, their results (restricted to our setting) yield the same generalization for the the max-margin solution as we do.
\end{remark}

We prove a similar generalization result for XOR problem learned on two-layer neural networks.

\begin{restatable}[Extremal-Margin Generalization for XOR on Neural Network]{theorem}{xorgen}\label{thm:xor_gen}
Let $h \in (1,2)$, and let $\delta > 0$. There exist constants $\epsilon = \epsilon(\delta)$ and $c = c(\delta)$ such that the following holds. For any $n, d, \sigma$ and $\mathcal{D} \in \Omega_{\sigma, d}^{\on{XOR}}$ satisfying $\kappa = \frac{n}{d\sigma^2} \geq \kgenx + \delta$ and $\frac{d}{n} \geq c$, then with probability $1 - 3e^{-n/c}$ over the randomness of a training set $S \sim \mathcal{D}^n$, for any two-layer neural network with activation function relu$^h$ and weight matrix $W$ that is a $(1 - \epsilon)$-max-margin solution (as in Definition~\ref{def:maxmargin}), we have
$ \mathcal{L}_{\mathcal{D}}(f_W) \leq e^{-\frac{1}{c\sigma^2}}.$
% Let $h \in (1,2)$, and let $\kgenx$ be the absolute constant defined in Equation~\ref{def:kgenphen}.Fix any XOR problem $\mathcal{D} \in \Omega_{\sigma, d}^{\on{XOR}}$ (as defined in Section~\ref{sec:model}), and assume that the signal-to-noise ratio $\kappa$ satisfies $\kappa \geq \kgenx$. There exist some positive constants $\epsilon = \epsilon(\kappa)$ and $c = c(\kappa)$ such that the following holds. With probability $1 - 3e^{-n/c}$ over the randomness of a training set $S \sim \mathcal{D}^n$, for any two-layer neural network with activation function relu$^h$ and weight matrix $W$ that is a $(1 - \epsilon)$-max-margin solution (as in Definition~\ref{def:maxmargin}), if $\frac{d}{n} \geq c$, we have
% $ \mathcal{L}_{\mathcal{D}}(f_W) \leq e^{-\frac{1}{c\sigma^2}}.$
\end{restatable}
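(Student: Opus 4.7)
The plan is to prove Theorem~\ref{thm:xor_gen} in three conceptual stages: (i) pin down the maximum normalized margin $\gamma^*(S)$ with matching upper and lower bounds, (ii) prove a structural lemma showing that every $(1-\epsilon)$-max-margin matrix $W$ must carry substantial mass in the signal directions $\mu_1,\mu_2$, and (iii) convert this structural property into an $\exp(-1/(c\sigma^2))$ tail bound on test error via concentration of fresh noise on the sphere.

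For stage (i), I would first exhibit an ``ideal'' weight matrix $W^\star$ that allocates $m/4$ neurons to each of $\pm\mu_1$ (with second-layer weight $a_i=+1$) and $\pm\mu_2$ (with $a_i=-1$), augmented by a small noise-aligned component in the subspace $\{\mu_1,\mu_2\}^\perp$ that boosts margin on training data. A direct computation, using that $\xi_j^T\xi_k\approx d\sigma^2\,\mathbf{1}[j=k]$ when $d/n\geq c$, shows that $W^\star$ achieves normalized margin matching the right-hand side of the equation defining $\kgenx$ in~\eqref{def:kgenphen}. For the matching upper bound I would dualize the max-margin program --- available because the network is $h$-homogeneous in $W$ --- and reduce it to a quadratic problem in the training Gram matrix $G_{ij}=\langle\xi_i,\xi_j\rangle$, whose concentration pins $\gamma^*(S)$ tightly to this value.

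For stage (ii), decompose each neuron as $w_i=\alpha_i\mu_1+\beta_i\mu_2+v_i$ with $v_i\perp\mu_1,\mu_2$, and introduce signal functionals such as $S_1^{+} := \sum_{i:a_i=+1}\phi(\alpha_i)$, together with analogues $S_1^-, S_2^{\pm}$ covering the remaining sign/direction combinations. Summing the near-max-margin inequality $y_j f_W(x_j)\geq(1-\epsilon)\gamma^*(S)$ over the $\approx n/4$ training points in each cluster $z_j\in\{\pm\mu_1,\pm\mu_2\}$, and using Gram concentration to absorb the cross terms $\sum_j v_i^T\xi_j$, forces each signal functional to lie within $O(\epsilon)$ of its ideal value. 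The threshold $\kgenx$ is precisely the point above which the ``signal'' contribution strictly dominates any ``pure memorization'' alternative in the dual, and this separation is what allows the $\epsilon$ slack in the margin to translate quantitatively into signal lower bounds.

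For stage (iii), on a fresh test example $x^\star=\mu_1+\xi^\star$ (label $+1$), write
\[
f_W(x^\star) = \sum_i a_i\,\phi\bigl(\alpha_i + v_i^T\xi^\star\bigr).
\]
The ``mean'' piece $\sum_i a_i\phi(\alpha_i)$ collapses to the appropriate signed combination of $S_1^{\pm}$ (the $\mu_2$-terms vanish because $\xi^\star\perp\mu_2$ and the $\beta_i$ contributions cancel by the symmetry $\sum_i a_i=0$), which by stage (ii) is bounded below by a positive constant times the ideal margin. The fluctuation is a Lipschitz-on-bounded-sets function (since $h<2$) of the uniform spherical variable $\xi^\star$ with variance proxy $O(\sigma^2)$, so L\'evy's concentration inequality on the sphere gives the desired $\exp(-1/(c\sigma^2))$ misclassification tail. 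The main obstacle is stage (ii): turning a characterization of the dual \emph{optimum} into a uniform statement about every $(1-\epsilon)$-near-optimum. This robustness step must quantitatively exploit the strict separation of signal and memorization dual variables for $\kappa>\kgenx+\delta$; it is exactly this separation that breaks for $\kappa<\kgenx$, yielding the complementary non-generalization result of Theorem~\ref{thm:xor_no_gen}.
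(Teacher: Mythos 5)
Your three-stage skeleton (compute $\gamma^*(S)$, force signal mass into any near-max-margin $W$, then concentrate the noise contribution on a fresh point) matches the paper's strategy at the coarsest level, but the load-bearing step --- stage (ii) --- does not go through as you describe it. Summing $y_jf_W(x_j)\geq(1-\epsilon)\gamma^*(S)$ over a cluster and ``absorbing the cross terms $\sum_j v_i^T\xi_j$ by Gram concentration'' cannot force the signal functionals to be large, because the noise sits \emph{inside} the convex activation: Jensen gives $\frac{1}{|\mathcal{P}_1|}\sum_{j\in\mathcal{P}_1}\phi(\alpha_i+v_i^T\xi_j)\geq\phi(\alpha_i)$, which is the wrong direction, and indeed the summed margin inequalities are perfectly satisfiable with $\alpha_i=\beta_i=0$ for every $i$ (this is exactly the pure-memorization solution that wins when $\kappa<\kgenx$, cf.\ Theorem~\ref{thm:xor_no_gen}). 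Extracting a signal lower bound requires comparing the margin-per-unit-norm of a neuron that mixes signal and noise against one that only memorizes, \emph{and} ruling out the configuration the paper explicitly flags as the main difficulty: a neuron in $H_+$ with large $\alpha_i>0$ helps every $j\in\mathcal{P}_1$ but is clipped to zero on $\mathcal{P}_{-1}$, so per-cluster averaging can never certify that \emph{both} $S_1^+=\sum_{i:a_i=1}\phi(\alpha_i)$ and $S_1^-=\sum_{i:a_i=1}\phi(-\alpha_i)$ are large. The paper resolves this by chaining the max-margin program down to a trivariate problem for a single neuron and a \emph{pair} of points $(x_j,x_{j'})\in\mathcal{P}_1\times\mathcal{P}_{-1}$ (Definition~\ref{trivariate}, Lemmas~\ref{tri_opt} and~\ref{lemma:trivariate_analysis}), where the strict convexity of $\phi$ for $h>1$ shows that specializing to one cluster wastes norm budget; this idea is absent from your proposal, and without it the ``strict separation of signal and memorization dual variables'' you invoke is an assertion rather than an argument. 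Note also that your claimed pinning ``within $O(\epsilon)$ of the ideal value'' is stronger than what is true of the full network: only a constant-fraction lower bound $\phi(b)\geq\eta\,\phi(b+c)$ survives the chain of reductions.

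Two secondary gaps. First, in stage (i) you propose dualizing the max-margin program to get the upper bound on $\gamma^*(S)$ as ``a quadratic problem in the Gram matrix''; the program is nonconvex in $W$ because of the activation, so there is no clean strong duality --- only the noise components reduce to a Gram-matrix quadratic (via min-norm interpolation $v_i^T\xi_j=c_{ij}$), while the signal part must be handled by the same per-neuron analysis as above. Second, in stage (iii) the mean piece is $\sum_{i\in H_+}\phi(\alpha_i+v_i^T\xi^\star)-\sum_{i\in H_-}\phi(\alpha_i+v_i^T\xi^\star)$, so you additionally need an \emph{upper} bound on the $\mu_1$-mass of the wrong-sign neurons $H_-$ (the paper's Lemma~\ref{lemma:small_orthogonal}, proved by a separate norm-wastage argument); symmetry of $a$ does not make this term cancel. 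The Lévy/Hanson--Wright concentration of the fluctuation in $\xi^\star$ is fine and essentially matches the paper's Lemma~\ref{lemma:xor_spur_small}.
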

\begin{remark}
So long as $\delta$ is a universal constant (ie. it doesn't depend on $n$, $d$, or $\sigma$), both $c$ and $\epsilon$ are universal constants. While our proofs do not optimize for the dependence of $\epsilon$ and $c$ on $\delta$, an examination of the proofs yields that this the dependence is inversely polynomial in $\frac{1}{\delta}$. 
\end{remark}
This theorem guarantees meaningful results whenever $\sigma$ is small enough. To see this, note that the assumptions of the theorem require that $\frac{d}{n} \in \left[c, \frac{1}{\sigma^2(\kgenx + \delta)}\right]$. If $\sigma$ is small enough (in terms of $\delta$), this interval is non-empty. Further, the generalization guarantee is good if $\sigma$ is small enough (since $\exp(-1/(c\sigma^2))$ tends to $0$ as $\sigma$ approaches $0$).
\mg{Is that helpful? Otherwise see whats commented out below for more explanation.}
% To see this, fix any $\kappa > \kgenx$, and let $c = c(\kappa - \kgenx)$. Then for any $\sigma$ small enough that $\sigma^2 < 1/(c\kappa)$, since $\kappa = n/(d\sigma^2)$, we have $\frac{d}{n} = 1/(\kappa \sigma^2) > c$, so the assumptions of the theorems are met. Further, the generalization guarantee is good if $\sigma$ is small enough (since $\exp(-1/(c\sigma^2))$ tends to $0$ as $\sigma$ approaches $0$). Thus Theorem~\ref{thm:xor_gen} is meaningful for any $\kappa > \kgenx$, any $\sigma$ small enough, and any $n$ and $d$ with the ratio determined by the choice of $\kappa$ and $\sigma$. 

If $\kappa < \kgen$, it is possible that a near-max margin solution does not generalize at all. Since $\kgen = 0$ in the linear setting, we only state this result for the XOR problem.

\begin{restatable}[Region where Max-Margin Generalization not Guaranteed]{theorem}{xornogen}\label{thm:xor_no_gen}
Suppose $\kappa < \kgenx$. For any $\epsilon > 0$, there exists a constant $c = c(\kappa, \epsilon)$ such that if $\frac{d}{n} \geq c$, then for any $\mathcal{D} \in \Omega_{\sigma, d}^{\on{XOR}}$, with probability $1 - 3e^{-n/c}$ over $S \sim \mathcal{D}^n$, there exists some $W$ with $\|W\| = 1$ and $\gamma(f_W, S) \geq (1 - \epsilon)\gamma^*(S)$ such that $\mathcal{L}_\mathcal{D}(f_W) = \frac{1}{2}$.
\end{restatable}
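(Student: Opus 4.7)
The plan is to exhibit a weight matrix $W^\perp$ with $\|W^\perp\|=1$ whose rows $w_i$ all lie in the subspace $V := \mathrm{span}\{\mu_1,\mu_2\}^\perp$, and to show both that (a) $f_{W^\perp}$ has test $0/1$ loss exactly $1/2$ and (b) $W^\perp$ attains normalized margin at least $(1-\epsilon)\gamma^*(S)$ on $S$.  Point (a) is a short symmetry argument: since each $w_i \in V$, we have $w_i^T x = w_i^T(z+\xi) = w_i^T\xi$, so $f_{W^\perp}(x)$ depends only on the noise component $\xi$.  Under $\mathcal{D}_{\mu_1,\mu_2,\sigma,d}$, the label $y$ is a deterministic function of $z$ alone, and $z$ is drawn independently of $\xi$, so $f_{W^\perp}(x)$ is statistically independent of $y$.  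Combined with $\Pr[y=1]=\Pr[y=-1]=1/2$ and the fact that $\{f_{W^\perp}(x)=0\}$ has probability zero (since $\xi$ has a continuous density on its sphere), this forces $\Pr[\mathrm{sign}(f_{W^\perp}(x))\neq y]=1/2$ exactly.

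To construct $W^\perp$, I will dedicate hidden units to individual training points and exploit the near-orthogonality of the training noise vectors when $d/n \geq c$.  Specifically, partition the $m$ hidden units into groups assigned to training points, reusing units in a balanced way so that each positive-label point absorbs $m/(2n_+)$ of the $a_i=+1$ units and similarly for negative labels (the divisibility of $m$ by $4$ and $\sum_i a_i = 0$ make this clean).  For a unit $i$ assigned to training index $j$, set $w_i = \alpha \tilde\xi_j$, where $\tilde\xi_j \in V$ is the dual basis vector satisfying $\tilde\xi_j^T \xi_k = (d-2)\sigma^2 \cdot \mathbf{1}[j=k]$; concretely $\tilde\xi_j$ is the $j$-th column of $(d-2)\sigma^2 \cdot \Xi^T(\Xi\Xi^T)^{-1}$, where the rows of $\Xi\in\mathbb{R}^{n\times d}$ are $\xi_1,\dots,\xi_n$.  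For $d/n \geq c$, standard concentration of the Wishart matrix $\Xi\Xi^T$ gives $\|\tilde\xi_j\|^2 = (d-2)\sigma^2(1\pm O(\sqrt{n/d}))$, so for a suitable choice of $\alpha$ the constraint $\|W^\perp\|=1$ yields a margin which, as $c\to\infty$, approaches the pure-noise quantity on the left-hand side of the defining equation of $\kgenx$ in~(\ref{def:kgenphen}), up to a $(1-o_c(1))$ factor.

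The quantitative heart of the argument is showing that this pure-noise margin is within a $(1-\epsilon)$ factor of $\gamma^*(S)$ whenever $\kappa<\kgenx$.  For an arbitrary unit-norm $W$, decompose each $w_i = w_i^s + w_i^\perp$ with $w_i^s \in \mathrm{span}\{\mu_1,\mu_2\}$ and $w_i^\perp \in V$, and write $\rho^2 := \tfrac{1}{m}\sum_i \|w_i^s\|^2 \in [0,1]$.  The margin then splits into a signal-side contribution, controlled by the budget $\rho$ and the four-point structure $z_j \in \{\pm\mu_1, \pm\mu_2\}$, and a noise-side contribution, controlled by the budget $\sqrt{1-\rho^2}$ together with concentration of the noise Gram matrix.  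Using the $h$-homogeneity of $\phi$ and convex-combination arguments to handle the activation, one obtains a bound $\gamma(f_W, S) \leq F_s(\rho) + F_n(\sqrt{1-\rho^2})$ for explicit functions $F_s, F_n$.  The defining equation of $\kgenx$ in~(\ref{def:kgenphen}) is precisely the first-order condition at which the maximizer of $F_s(\rho)+F_n(\sqrt{1-\rho^2})$ over $\rho \in [0,1]$ transitions from $\rho=0$ to $\rho>0$; hence for $\kappa<\kgenx$ the maximum is attained at $\rho=0$, so $\gamma^*(S)$ equals the noise-only margin up to a $(1-\epsilon)$ factor, with $\epsilon$ quantitatively controlled by $\delta := \kgenx - \kappa$ via continuity of $F_s, F_n$.

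The main obstacle is this tight upper bound on $\gamma^*(S)$: since $\phi(z)=\max(0,z)^h$ is non-convex in $W$, Lagrangian duality does not apply directly, and any lossy upper bound would fail to reach the exact threshold $\kgenx$.  The cleanest route is probably to work in the limit $d/n\to\infty$ where $\Xi\Xi^T$ concentrates sharply around $(d-2)\sigma^2 I_n$, reducing the max-margin problem to a low-dimensional optimization over the joint structure of the pre-activations $w_i^{s,T} z_j$ on the four ground-truth clusters, which can be solved in closed form and compared directly to~(\ref{def:kgenphen}).  The finite-$c$ case then follows by perturbative concentration picking up the $O(\sqrt{n/d})$ error terms in the Gram matrix.
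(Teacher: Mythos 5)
Your symmetry argument for $\mathcal{L}_{\mathcal{D}}(f_W)=\tfrac12$ is correct and is exactly how the paper concludes (any $W$ with zero signal component has output independent of $y$). The two substantive steps, however, both have real problems. First, your explicit construction of $W^\perp$ is not near-max-margin: dedicating each hidden unit to a single training point via the dual vector $\tilde\xi_j$ loses a polynomial factor. Normalizing each neuron to unit norm, a dedicated unit gives activation $\phi(\sigma\sqrt{d-2})$ on its one point, but only a $1/n_+$ fraction of the $a_i=+1$ units fire on any given positive point, so the margin is $\approx \tfrac{1}{2n_+}(\sigma\sqrt{d})^h$. By contrast, taking each $v_i$ to be the min-norm vector with $v_i^T\xi_j=\beta$ for \emph{all} $j\in\mathcal{P}$ simultaneously (so $\beta\approx\sigma\sqrt{d/n_+}$ under the Gram-matrix concentration) gives margin $\approx\tfrac12(\sigma\sqrt{d/n_+})^h$, larger by a factor $n_+^{1-h/2}$, which diverges for $h<2$. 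This "shared" noise solution (the analogue of $\sum_j y_j\xi_j$ in the linear case, and what the paper's backward maps $\psi_{21}\circ\psi_{32}\circ\psi_{43}\circ\psi_{54}$ produce) is the one that attains $(1-\epsilon)\gamma^*(S)$; yours does not.

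Second, the quantitative heart — a tight upper bound on $\gamma^*(S)$ showing the pure-noise solution is asymptotically optimal for $\kappa<\kgenx$ — is asserted rather than proved, and the proposed route would not reach the exact threshold. An additive bound $\gamma(f_W,S)\le F_s(\rho)+F_n(\sqrt{1-\rho^2})$ cannot hold tightly: within a single neuron, $\phi(b+c)\ge\phi(b)+\phi(c)$ for $b,c\ge 0$ (superadditivity of $\max(0,\cdot)^h$), so signal and noise interact constructively through the activation and any decoupled bound is off by constants that move the threshold. Relatedly, $\kgenx$ is not a first-order condition of a one-parameter family in $\rho$: by its definition in~(\ref{def:kgenphen}) it is the value of $\kappa$ at which the objective of the pure-noise candidate ($b=0$, $c=d$) equals that of the best signal-using candidate, which puts all of its noise budget on one of the two clusters ($d=0$). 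Locating this crossing requires actually solving the coupled low-dimensional problem $\max\{\phi(b+c)+\phi(-b+d): b^2+\tfrac{\kappa}{4}(c^2+d^2)\le 1\}$ and justifying the reduction from the full network to it; in the paper this is the chain Opt~1 $\to\cdots\to$ Opt~5 together with Lemma~\ref{tri_opt}, and it constitutes essentially all of the work that your sketch defers to "can be solved in closed form."
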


Theorems~\ref{thm:xor_gen} and \ref{thm:xor_no_gen} demonstrate that in the XOR problem, there is a threshold in $\kappa$ above which generalization occurs. As long as $\kappa$ is above this threshold, we achieve generalization when $\sigma^2 \ll 1$.

The next theorem states that when $\kappa < \kphen$,  uniform convergence bounds over useful hypothesis classes will be vacuous, that is, their generalization guarantee must be arbitrarily close to $1$. For brevity, we state our results for the linear and XOR neural network settings together; we state the more complicated XOR result in full and then mention how the linear result differs.

\begin{restatable}[One sided UC Bounds are Vacuous]{theorem}{linearphen}\label{thm:one_sided_impossible_linear} 
Fix $h \in (1, 2)$, and suppose $\kgenx < \kappa < \kphenx$. For any $\delta > 0$, there exist strictly positive constants $\epsilon = \epsilon(\kappa, \delta)$ and $c = c(\kappa, \delta)$ such that the following holds. Let $\mathcal{A}$ be any algorithm that outputs a $(1-\epsilon)$-max-margin two-layer neural network $f_W$ for any $S \in (\mathbb{R}^d \times \{1,-1\})^n$. Let $\mathcal{H}$ be any concept class that is useful for $\mathcal{A}$ on $\Omega_{\sigma, d}^{h, \on{XOR}}$ (as in Definition~\ref{def:useful}).
Suppose that $\epsilon_{\on{unif}}$ is a uniform convergence bound for the XOR problem $\Omega_{\sigma, d}^{h, \on{XOR}}$: that is, for any $\mathcal{D} \in \Omega_{\sigma, d}^{h, \on{XOR}}$,  $\epsilon_{\on{unif}}$ satisfies $$\Pr_{S \sim \mathcal{D}^n}[ \sup_{h \in \mathcal{H}} \mathcal{L}_{\mathcal{D}}(h) - \mathcal{L}_S(h)  \geq \epsilon_{\on{unif}} ] \leq 1/4.$$
Then if $\frac{d}{n} \geq c$ and $n > c$ we must have $\epsilon_{\on{unif}} \geq 1 - \delta.$
 
A similar result holds for the linear problem with $\kgenl < \kappa < \kphenl$ and the problem class $\Omega_{\sigma, d}^{\on{linear}}$. In this case the result holds for a universal constant $c$ (which does not depend on $\kappa$), for $\eps \leq \frac{\kappa(\kphenl - \kappa)^2}{c}$, and for $\frac{d}{n} \geq \frac{c}{\kappa^2(\kphenl - \kappa)^4}$. We achieve the guarantee that $\eps_{\on{unif}} \geq 1 - e^{-\frac{n}{36d\sigma^2}} - exp(-n/8)$.
\end{restatable}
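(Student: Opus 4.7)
Following the contamination template of \citet{nagarajan2019uniform}, the plan is to exhibit, with positive probability over $S \sim \mathcal{D}^n$, a single hypothesis $\tilde h \in \mathcal{H}$ whose empirical loss on $S$ is zero while whose test loss on $\mathcal{D}$ is near one. Because the one-sided UC inequality must hold uniformly over $\mathcal{H}$, such a witness immediately forces $\epsilon_{\on{unif}} \geq 1 - o(1)$. The witness is produced by running the very same algorithm $\mathcal{A}$ on a carefully coupled ``contaminated'' sample $\tilde S$ which shares the junk coordinates of $S$ but whose signal part corresponds to a different ground truth in $\Omega$. The assumption that $\mathcal{H}$ is useful \emph{uniformly over $\Omega$} (rather than for a single distribution as in \citet{nagarajan2019uniform}) is precisely what ensures $\tilde h := \mathcal{A}(\tilde S) \in \mathcal{H}$ with probability at least $3/4$.

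Specializing to the linear setting with $\mathcal{D} = \mathcal{D}_{\mu,\sigma,d}$, I will write each training point as $x_i = \mu y_i + \xi_i$ with $\xi_i \perp \mu$ and define the coupled sample $\tilde S := \{(-\mu y_i + \xi_i,\, y_i)\}_{i \in [n]}$. A direct check shows $\tilde S$ has the same joint law as a fresh i.i.d.\ draw from $\mathcal{D}_{-\mu,\sigma,d} \in \Omega_{\sigma,d}^{\on{linear}}$, so by usefulness $\tilde h \in \mathcal{H}$ with probability $\geq 3/4$. The test-loss half of the bound then comes essentially for free from the generalization result already proved: since $\tilde h$ is a $(1-\epsilon)$-max-margin classifier on a sample from $\mathcal{D}_{-\mu}$, Theorem~\ref{thm:linear_gen} applied to $(\tilde h, \tilde S, \mathcal{D}_{-\mu})$ gives $\mathcal{L}_{\mathcal{D}_{-\mu}}(\tilde h) \ll 1$, and since $\mathcal{D}_\mu$ and $\mathcal{D}_{-\mu}$ have identical marginals on $x$ with flipped labels, $\mathcal{L}_{\mathcal{D}_\mu}(\tilde h) = 1 - \mathcal{L}_{\mathcal{D}_{-\mu}}(\tilde h)$ is close to $1$, with the quantitative exponent matching that of Theorem~\ref{thm:linear_gen}.

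The main analytic content is showing $\mathcal{L}_S(\tilde h) = 0$ whenever $\kappa < \kphenl = 1$. By the KKT conditions, $\tilde h \propto -\bigl(\sum_i \tilde\alpha_i\bigr)\mu + \sum_i \tilde\alpha_i y_i \xi_i$ for nonnegative dual variables $\tilde\alpha_i$, and standard concentration of the noise Gram matrix $\Xi\Xi^\top$ around $d\sigma^2 I_n$ (valid whenever $d/n \geq c$) forces the $\tilde\alpha_i$'s to be nearly equal. Evaluating on the original training point $x_j = \mu y_j + \xi_j$ gives
\begin{equation*}
y_j \tilde h(x_j) \;\approx\; -\sum_i \tilde\alpha_i \;+\; \tilde\alpha_j (d-1)\sigma^2 \;+\; (\text{small cross terms}),
\end{equation*}
which is strictly positive exactly when $\tilde\alpha_j\, d\sigma^2 > \sum_i \tilde\alpha_i$, i.e., when $\kappa < 1$. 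The near-max-margin slack $\epsilon$ must be chosen small as a function of $\kphenl - \kappa$ so that perturbing away from the exact max-margin does not spoil the approximate equality of the dual variables; this is where the scaling $\epsilon \lesssim \kappa(\kphenl-\kappa)^2/c$ and $d/n \gtrsim 1/(\kappa(\kphenl-\kappa)^2)^2$ enter. A union bound over the events ``$\tilde h \in \mathcal{H}$'', Gram-matrix concentration, and the one-sided UC bound applied to $\mathcal{D}_\mu$ itself then yields $\epsilon_{\on{unif}} \geq \mathcal{L}_{\mathcal{D}_\mu}(\tilde h) - \mathcal{L}_S(\tilde h) \geq 1 - o(1)$ with positive probability.

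For the XOR problem the template is identical, but the contamination now acts by swapping the roles of $\mu_1$ and $\mu_2$, so $\tilde S$ is distributed as $\mathcal{D}_{\mu_2,\mu_1,\sigma,d} \in \Omega_{\sigma,d}^{\on{XOR}}$ and again produces the wrong label on every point of $S$. The test-loss side is immediate from Theorem~\ref{thm:xor_gen} applied to $\tilde h$ on $\tilde S$; the hard part, and the main obstacle of the whole proof, is to make the training-loss computation go through for two-layer networks with activation $\mathrm{relu}^h$. Concretely, one has to decompose each hidden neuron's weight into a signal part pointing along some $\pm \mu_\ell$ and a noise-memorization part $\sum_j \beta_{k,j}\xi_j$, show that the $\beta_{k,j}$'s are nearly uniform in $j$ (the analogue of $\tilde\alpha_j \approx \bar{\tilde\alpha}$ in the linear case) by a perturbation argument around the exact max-margin solution combined with Gram-matrix concentration, and verify that on a training point of $S$ the memorization contribution dominates the wrong-signal contribution precisely when $\kappa < \kphenx = 4$. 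The curvature $h > 1$ of the activation is essential for producing this gap: when $h = 1$, the thresholds collapse ($\kgenx = \kphenx$) and no such near-max-margin adversary exists.
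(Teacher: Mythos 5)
Your overall architecture is exactly the paper's: couple $S$ with the signal-flipped sample $\tilde S=\psi(S)$ (which is distributed as a fresh i.i.d.\ draw from a second distribution in $\Omega$), take the witness $\tilde h=\mathcal{A}(\tilde S)$, which lies in $\mathcal{H}$ because usefulness is required over the whole class, get $\mathcal{L}_{\mathcal{D}}(\tilde h)\approx 1$ by applying the generalization theorem to $\tilde S$ and flipping, and get $\mathcal{L}_{S}(\tilde h)\approx 0$ because the junk-memorization component dominates the (now wrongly-signed) signal component on every training point. This is precisely the structure of the paper's proof (Lemma~\ref{lemma:phen_real} for XOR, and the analogous linear argument via $\psi$), including the union bound over the three events and the observation that only the two ``opposite'' distributions in $\Omega$ are ever needed.

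The gap is in how you justify the key step $\mathcal{L}_S(\tilde h)=0$. You invoke the exact max-margin dual representation $\tilde h\propto -\bigl(\sum_i\tilde\alpha_i\bigr)\mu+\sum_i\tilde\alpha_i y_i\xi_i$ and assert that operator-norm concentration of $\Xi^\top\Xi$ about $d\sigma^2 I$ forces the $\tilde\alpha_i$ to be nearly equal. In the theorem's regime $d/n\geq c$ for a \emph{constant} $c$, this fails: the Gram perturbation has operator norm $\Theta(\sqrt{n/d})$, which only controls $\|\tilde\alpha-\bar\alpha\mathbf{1}\|_2$ up to roughly $\bar\alpha\, n/\sqrt{d}$, so a single coordinate $\tilde\alpha_j$ can deviate from $\bar\alpha$ by much more than $\bar\alpha$ unless $d\gg n^2$; moreover it is not guaranteed in this regime that every point is a support vector (the paper stresses that for $d=\Theta(n)$ the max-margin solution does not coincide with the min-norm interpolant), and a $(1-\epsilon)$-max-margin solution with constant $\epsilon$ need not admit any dual representation nor be a small perturbation of the exact optimum. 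What you need is only the weaker, per-point statement $y_j v^\top\xi_j>w^\top\mu$ for all $j$, and the paper gets it by a variational argument (Lemma~\ref{linear:tech_lemma}): lower-bound $\gamma^*$ by an explicit signal-plus-junk mixture, upper-bound the margin of any unit-norm $w$ by $w^\top\mu+\|v\|_2/\sqrt{\kappa}$ via the min-norm characterization, and solve the resulting quadratic inequality in $r=\min_j y_jv^\top\xi_j/(w^\top\mu)$ to conclude $r>\max(1+1/c,\tfrac{1}{2\kappa})$ whenever $\kappa<1$. The same issue recurs in your XOR sketch, where ``nearly uniform $\beta_{k,j}$ by perturbation around the exact max-margin'' stands in for the paper's chain of reductions to the trivariate program (Lemmas~\ref{lemma:opt_1_relax_1}--\ref{lemma:relax_3_relax_4} and \ref{lemma:phen}). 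To make the proposal sound you should replace the KKT/uniform-dual step with a direct margin-comparison argument of this type.
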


\begin{remark}[Comparison to \cite{nagarajan2019uniform}]
Theorem~\ref{thm:one_sided_impossible_linear} assumes that the concept class $\mc{H}$ must be useful for all of $\Omega_{\sigma, d}^{h, \on{XOR}}$ (resp. 
$\Omega_{\sigma, d}^{\on{linear}}$). We state our theorem this way to parallel our upper bounds which yield bounds which are useful for all of $\Omega_{\sigma, d}^{h, \on{XOR}}$ or
$\Omega_{\sigma, d}^{\on{linear}}$. In contrast, \citet{nagarajan2019uniform} only requires that $\mathcal{H}$ is useful for a single distribution. It is immediate from our proof that our result still holds under the weaker assumption that $\mc{H}$ is useful for an $\Omega$ containing only two distributions with opposite ground truths: $\mc{D}_{\mu, \sigma, d}$ and $\mc{D}_{\mu, \sigma, d}$ in the linear case and $\mc{D}_{\mu_1, \mu_2, \sigma, d}$ and $\mc{D}_{\mu_2, \mu_1, \sigma, d}$ in the neural network case. Note also, if we only assume $\mc{H}$ is useful for a single distribution, our techniques yield the same impossibility result as \citet{nagarajan2019uniform}, which rules out two-sided UC bounds. (This requires a slight modification to our proofs).

The same observation that $\Omega$ needs to contain only two distributions holds also for the next two theorems (\ref{thm:margin_impossible_linear} and \ref{thm:margin_impossible}) on the impossibility of polynomial margin bounds.
\end{remark}

% \begin{restatable}[One sided UC Bounds are Vacuous for XOR Problem]{theorem}{xorphen}\label{thm:one_sided_impossible_xor} 
% 	\tnote{I am tempted to suggest that we merge this to the theorem above. I would be even tempted to just say at the end of the previosu theorem`` the same holds for the XOR problem"}
% 	For any $\delta > 0$, There exist strictly positive constants $\epsilon = \epsilon(\kappa, \delta)$ and $c = c(\kappa, \delta)$ such that the following holds. Let $\mathcal{A}$ be any algorithm that outputs a $(1 - \epsilon)$-max margin solution any XOR problem $\mathcal{D} \in \Omega_{\sigma, d}^{\on{XOR}}$ defined in Section~\ref{sec:model}, ie., $\bar{\gamma}(f_W, S) \geq (1 - \epsilon) \gamma^*(S)$ for any $S \in (\mathbb{R}^d \times \{1, -1\})^n$. Let $\mathcal{H}$ be any concept class such that is useful for $\mathcal{A}$ on $\Omega_{\sigma, d}$, that is for all $\mathcal{D} \in \Omega_{\sigma, d}$, $\Pr_{S \sim \mathcal{D}^n}[\mathcal{A}(S) \in \mathcal{H}] \geq 3/4$.
%  Suppose $\kgenx < \kappa < \kphenx$. Suppose also that $\epsilon_{\on{unif}}$ is a uniform convergence bound for the XOR problem $\Omega_{\sigma, d}$: that is, for any $\mathcal{D} \in \Omega_{\sigma, d}$,  $\epsilon_{\on{unif}}$ satisfies
%  \[ \Pr[ \sup_{h \in \mathcal{H}} \mathcal{L}_{\mathcal{D}}(h) - \mathcal{L}_S(h)  \geq \epsilon_{\on{unif}} ] \leq 1/4 \]
%  where the probability is over $S \sim \mathcal{D}^n$ and where $\mathcal{L}$ is the 0/1 loss.  Then if $\frac{d}{n} \geq c$, we must have
%  \[ \epsilon_{\on{unif}} \geq 1 - \delta. \]
% \end{restatable}

In addition to ruling out uniform convergence bounds, we can also show the limitations of margin bounds which achieve an error that scales inversely polynomially with $\gamma(h, S)$. 
The following results state that if $\kappa < \kphen$, then certain types of margin bounds cannot yield better that constant test loss on even the max-margin solution.

\mg{Work on the paragraph below.}
The crux of the following two theorems is showing that there exists a ``bad" solution with a large classification margin (a constant fraction times the max-margin) which does not generalize. This is harder to do than just finding a a ``bad'' solution with zero classification error that doesn't generalize, which is the crux of Theorem~\ref{thm:one_sided_impossible_linear}. Ultimately, we are able to find such a solution by showing that the generalizing component of the max-margin solution is substantially smaller than the overfitting component in the ``junk'' feature directions.

\begin{restatable}[Polynomial Margin Bounds Fail for Linear Problem]{theorem}{linmargin}\label{thm:margin_impossible_linear} 
Suppose $\kgenl < \kappa < \kphenl$. There exists a universal constant $c$ such that the following holds. Let $\epsilon =  \frac{\kappa(\kphen - \kappa)^2}{c}$, and let $\mathcal{A}$ be any algorithm so that $\mathcal{A}(S)$ outputs a $(1-\epsilon)$-max-margin solution $f_w$ for any $S \in (\mathbb{R}^d \times \{1,-1\})^n$.
Let $\mathcal{H}$ be any concept class that is useful for $\mathcal{A}$ (as in Definition~\ref{def:useful}).
Suppose that there exists an polynomial margin bound of integer degree $p$ for the linear problem $\Omega_{\sigma, d}^{\on{linear}}$: that is, for any $\mathcal{D} \in \Omega_{\sigma, d}^{\on{linear}}$,  there is some $G$ that satisfies
$$\Pr_{S \sim \mathcal{D}^n}\left[\sup_{h \in \mathcal{H}} \mathcal{L}_{\mathcal{D}}(h) - \mathcal{L}_S(h)\geq \frac{G}{\gamma(h, S)^p}\right]  \leq \frac{1}{4}$$
Then for any $\mathcal{D} \in \Omega_{\sigma, d}^{\on{linear}}$, if $\frac{d}{n} \geq \frac{c}{\kappa^2(\kphen - \kappa)^4}$, with probability $\frac{1}{2} - 3e^{-n}$ over $S \sim \mathcal{D}^n$, the margin bound is weak even on the max-margin solution, that is, $\frac{G}{\gamma^*(S)^p} \geq \max\left(\frac{1}{c}, 1 - e^{-\frac{\kappa}{36\sigma^2}} - e^{-n/8} - \frac{3\kappa}{c}\right)^p$, which is more than an absolute constant.
\end{restatable}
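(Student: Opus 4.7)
The plan is to exhibit a concrete ``bad'' algorithm $\mathcal{A}_{\text{bad}}$ which, from any sample $S$, produces a $(1-\epsilon)$-max-margin solution $w_{\text{bad}}(S)$ whose test loss is large, and then instantiate the theorem's hypothesis with $\mathcal{A} := \mathcal{A}_{\text{bad}}$.  By the usefulness of $\mathcal{H}$ for $\mathcal{A}_{\text{bad}}$ together with the supposed margin bound, with probability at least $1/2 - 3e^{-n}$ we have simultaneously $w_{\text{bad}}(S) \in \mathcal{H}$, $\gamma(w_{\text{bad}},S) \geq (1-\epsilon)\gamma^*(S)$, and $\mathcal{L}_S(w_{\text{bad}}) = 0$ (a positive training margin zeros out the $0/1$ empirical loss).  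Applying the margin bound at $h = w_{\text{bad}}$ then forces
\[
\frac{G}{\gamma^*(S)^p} \;\geq\; (1-\epsilon)^p \cdot \mathcal{L}_{\mathcal{D}}(w_{\text{bad}}),
\]
and the stated lower bound follows by substituting the explicit bounds on $\epsilon$ and $\mathcal{L}_{\mathcal{D}}(w_{\text{bad}})$ obtained below.

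The first ingredient is a sharp closed-form characterization of the max-margin.  Under the theorem's heavy overparameterization $d/n \geq c/(\kappa^2 (\kphenl - \kappa)^4)$, the Gram matrix $\Xi\Xi^T$ of the noise components $\xi_i$ concentrates around $(d-1)\sigma^2 I_n$ with relative error small enough to control all subsequent approximations.  A Sherman--Morrison computation on $K = yy^T + \Xi\Xi^T$ identifies $w^*$ with the normalized min-norm interpolator $X^T K^{-1} y / \|X^T K^{-1} y\|$ and yields $\gamma^*(S) \approx \sqrt{1 + 1/\kappa}$, signal $s := \langle w^*,\mu\rangle \approx \sqrt{\kappa/(1+\kappa)}$, and orthogonal norm $\|w^*_\perp\| \approx 1/\sqrt{1+\kappa}$.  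The qualitative fact powering the construction is that for $\kappa < \kphenl = 1$ the ``overfitting'' noise component of $w^*$ strictly dominates its ``generalizing'' signal component ($\|w^*_\perp\|^2 > s^2$), a gap which closes precisely as $\kappa \to \kphenl$.

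Next I would construct $w_{\text{bad}}(S)$ so that it retains (nearly all of) the noise component of $w^*$ -- which is what produces the near-maximal training margin -- while flipping the signal component so as to misclassify test points.  A natural first attempt is $w_{\text{bad}} = -\lambda\mu + \sqrt{1-\lambda^2}\, w^*_\perp/\|w^*_\perp\|$ for a carefully chosen $\lambda > 0$.  Because all training samples are support vectors for $w^*$, the quantities $y_i\,(w^*_\perp)^T \xi_i = \gamma^*(S) - s$ are identical for every $i$, so $w_{\text{bad}}$ has identical margin on every training point: $-\lambda + \sqrt{1-\lambda^2}\,(\gamma^*(S)-s)/\|w^*_\perp\|$.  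On a test point, $y_{\text{test}}\, w_{\text{bad}}^T x_{\text{test}} = -\lambda + \sqrt{1-\lambda^2}\, y_{\text{test}}\,(w^*_\perp)^T \xi_{\text{test}}/\|w^*_\perp\|$ has a negative mean plus mean-zero noise that behaves like a Gaussian with standard deviation $\sigma$ (it is the projection of a uniform spherical vector onto a fixed direction).  A standard tail bound with $\lambda$ matched to the signal scale gives $\mathcal{L}_{\mathcal{D}}(w_{\text{bad}}) \geq 1 - e^{-\kappa/(36\sigma^2)} - e^{-n/8}$, where the second exponential absorbs failure probabilities of the concentration events invoked above.

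The main obstacle is driving the margin loss of $w_{\text{bad}}$ down to the tight value $\epsilon = \kappa(\kphenl - \kappa)^2/c$ demanded by the theorem.  The ``pure flip'' construction above only achieves $\epsilon \sim 2\kappa/(1+\kappa)$, which is $\Theta(1)$ as $\kappa \to \kphenl$ and far too large.  Recovering the $(\kphenl-\kappa)^2$ scaling appears to require blending a partial signal-flip with perturbations inside the ``junk'' subspace $V = (\mathrm{span}\{x_i\})^\perp$.  This subspace has dimension $\geq d-n$, which is enormous under the theorem's overparameterization, and it contains the nontrivial projection $\mu_V$ of $\mu$ (with $\|\mu_V\|^2 \approx 1/(1+\kappa) > 0$), providing a ``free'' mechanism for shifting $\langle w,\mu\rangle$ without perturbing any training inner product $w^T x_i$.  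Precisely optimizing the mixture of in-span noise retention, in-span signal reduction, and out-of-span $\mu_V$-perturbation so as to match $\kappa(\kphenl - \kappa)^2/c$ is the central technical calculation; the $\max(1/c,\,\cdot)$ fallback in the theorem's conclusion then handles the regime where this optimization becomes degenerate as $\kappa$ approaches $\kphenl$.
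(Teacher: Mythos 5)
There is a genuine gap, and in fact two distinct problems. First, a structural one: the theorem is universally quantified over $\mathcal{A}$ and over every $\mathcal{H}$ useful for that $\mathcal{A}$, so you cannot discharge it by instantiating $\mathcal{A} := \mathcal{A}_{\text{bad}}$. An adversary may pick, say, the exact-max-margin algorithm and an $\mathcal{H}$ tailored to its outputs; your $w_{\text{bad}}(S)$ then has no reason to lie in $\mathcal{H}$, and the margin bound cannot be applied at $h = w_{\text{bad}}$. The paper's way around this is to take the bad classifier to be $f_w = \mathcal{A}(\psi(S))$, where $\psi$ flips the signal component of every training point ($y_j\mu + \xi_j \mapsto -y_j\mu + \xi_j$). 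Since $\psi(S)$ is distributed as $n$ samples from $\mathcal{D}_{-\mu,\sigma,d} \in \Omega_{\sigma,d}^{\on{linear}}$ and usefulness is required over the whole problem class, $\mathcal{A}(\psi(S)) \in \mathcal{H}$ with probability $3/4$ — this is exactly where the "useful for a class of distributions" assumption earns its keep, and it is the piece your construction cannot replicate.

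Second, and more fatally, the object you are trying to build cannot exist. You want $w_{\text{bad}}$ to be a $(1-\epsilon)$-max-margin solution \emph{for $S$} with $\epsilon = \kappa(\kphenl - \kappa)^2/c$ while having test loss near $1$. But Theorem~\ref{thm:linear_gen} (via Lemma~\ref{linear:tech_lemma}) says precisely that every $(1-\epsilon)$-max-margin solution for $S$ at this scale of $\epsilon$ has a positive component along $+\mu$ and hence test loss at most $e^{-n/(36d\sigma^4)} + e^{-n/8}$; your target properties are mutually exclusive, so the "central technical calculation" of blending in-span and out-of-span perturbations to close the gap from $\epsilon \sim 2\kappa/(1+\kappa)$ down to $\kappa(\kphenl-\kappa)^2/c$ is chasing an impossibility. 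The resolution is that the bad classifier need only achieve a \emph{constant fraction} of $\gamma^*(S)$ — that is all a degree-$p$ polynomial margin bound can be penalized by, which is why the conclusion reads $\max(1/c,\,\cdot\,)^p$ rather than $(1-\epsilon)^p$. In the paper, the tiny $\epsilon$ plays a different role: it is the near-optimality of $f_w$ on $\psi(S)$ (not on $S$) that triggers the second conclusion of Lemma~\ref{linear:tech_lemma}, namely $y_j v^T\xi_j \geq \max(1+1/C, 1/(2\kappa))\, w^T\mu$ on every training point, whence $\gamma(f_w,S) \geq \frac{a-b}{a+b}\,(1-\epsilon)\gamma^*(\psi(S))$ with $a = \min_j y_j v^T\xi_j$, $b = w^T\mu$, a ratio bounded below by a constant but emphatically not by $1-\epsilon$. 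Your closed-form analysis of $w^*$ and the tail bound for the test loss of a signal-flipped, noise-preserving classifier are consistent with the paper's Lemmas~\ref{lem:min} and \ref{lemma:linear_spur_small_2}, but the overall architecture needs to be replaced by the two-distribution argument above.
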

This theorem says that even on the max-margin solution, the margin guarantee can be no better than a constant whenever $\kappa < \kphenl$. That is, no polynomial margin bound will be able to show that the test error of the max-margin solution is less than an absolute constant. We know however from Theorem~\ref{thm:linear_gen} that in this same regime, the test error of the max-margin solution can be arbitrarily small for small enough $\sigma$. Thus the any polynomial margin bound cannot predict this behaviour. \mg{Cut some of this?}

The attentive reader again may notice that if $\kappa \rightarrow 0$ as $n$ and $d$ grow, but generalization occurs, any such margin bound is vacuous, in that $\frac{G}{\gamma^*(S)^p} \rightarrow 1$.

We achieve a similar result in the XOR setting.
\begin{restatable}[Polynomial Margin Bounds Fail for XOR on Neural Network]{theorem}{xormargin}\label{thm:margin_impossible}
Fix an integer $p \geq 1$, and suppose $\kgenx < \kappa < \kphenx$. For any $\epsilon > 0$, there exists $c = c(\kappa, p, \epsilon)$ such that the following holds. Let $\mathcal{H}$ be any hypothesis class such that for all $\mathcal{D} \in \Omega_{\sigma, d}^{\on{XOR}}$,
$$ \Pr_{S \sim \mathcal{D}^n}[ \text{all $(1-\eps)$-max-margin two-layer neural networks $f_W$ for $S$ lie in $\mathcal{H}$}] \geq 3/4. $$
Suppose that there exists an polynomial margin bound of degree $p$ for the XOR problem $\Omega_{\sigma, d}^{\on{XOR}}$: that is, for any $\mathcal{D} \in \Omega_{\sigma, d}^{\on{XOR}}$, there is some $G$ that satisfies
$$ \Pr_{S \sim \mathcal{D}^n}\left[\sup_{h \in \mathcal{H}} \mathcal{L}_{\mathcal{D}}(h) - \mathcal{L}_S(h)\geq \frac{G}{\gamma(h, S)^p}\right]  \leq \frac{1}{4}.$$
Then for any $\mathcal{D} \in \Omega_{\sigma, d}^{\on{XOR}}$, if $\frac{d}{n} \geq c$ and $n \geq c$, with probability $\frac{1}{2} - 3e^{-n/c}$
over $S \sim \mathcal{D}^n$, on the max-margin solution, the generalization guarantee is no better than $\frac{1}{c}$, that is, $\frac{G}{\gamma^*(S)^p} \geq \frac{1}{c}$.
\end{restatable}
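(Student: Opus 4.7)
The plan is to construct a ``bad'' network $W^{\on{bad}}$ that (i) lies in $\mathcal{H}$, (ii) achieves training margin at least $(1-\epsilon)\gamma^*(S)$, yet (iii) incurs test loss close to $1$, and then apply the polynomial margin bound to $W^{\on{bad}}$. If such a $W^{\on{bad}}$ exists, then on the event that both the bound holds and $W^{\on{bad}} \in \mathcal{H}$ (probability at least $1/2$ by a union bound), we obtain
\begin{equation*}
\frac{G}{((1-\epsilon)\gamma^*(S))^p} \;\geq\; \frac{G}{\gamma(f_{W^{\on{bad}}},S)^p} \;\geq\; \mathcal{L}_{\mathcal{D}}(f_{W^{\on{bad}}}) - \mathcal{L}_S(f_{W^{\on{bad}}}) \;\geq\; 1 - o(1),
\end{equation*}
which rearranges to $G/\gamma^*(S)^p \geq 1/c$ for some $c = c(\kappa,p,\epsilon)$.

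To build $W^{\on{bad}}$, I would first establish a structural decomposition of the max-margin network $W^*$ into a ``signal'' part $W^{\on{sig}}$, whose neurons are aligned with $\pm\mu_1,\pm\mu_2$, and a ``junk'' part $W^{\on{junk}}$, whose neurons lie in the span of the training-noise directions $\xi_i$. The crucial quantitative claim, and the reason the threshold is $\kphenx=4$, is that whenever $\kappa < \kphenx$ the junk part strictly dominates the signal part in its contribution to the training margin of $W^*$. This is morally the same phenomenon as in the linear setting, where $w^* \propto n\mu + \sum_i y_i\xi_i$ and the noise term, with norm $\sim \sigma\sqrt{nd}$, dominates the signal norm $n$ exactly when $\kappa<1$; the shift from $1$ to $4$ reflects the identity $\phi(z)+\phi(-z)=|z|^h$ used to collect the $\pm\mu_k$ neurons. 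I would then define $W^{\on{bad}}$ in function space by $f_{W^{\on{bad}}}\approx f_{W^{\on{junk}}} - f_{W^{\on{sig}}}$; because the second-layer signs $a$ are fixed, this is implemented by pairing each positive signal neuron $w_i$ (with $a_i=+1$) with a negative one $w_j$ (with $a_j=-1$) and swapping their directions, which exactly flips the sign of that pair's contribution to $f_W$. The divisibility of $m$ by $4$ ensures such a pairing is available.

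On a training point $(x_i,y_i)$, the junk neurons still fire on $\xi_i$ in a direction aligned with $y_i$, while the swapped signal neurons now contribute $-y_i\cdot(\text{signal})$; since junk dominates, $y_i f_{W^{\on{bad}}}(x_i) > 0$ and in fact exceeds $(1-\epsilon)\gamma^*(S)$ provided $\kappa$ is bounded away from $\kphenx$. On a fresh test point $x$, the junk neurons fire only weakly since the test noise is independent of $\{\xi_i\}$ (a concentration argument analogous to the one behind Theorem~\ref{thm:xor_gen}), so $f_{W^{\on{bad}}}(x)$ is dominated by $-f_{W^{\on{sig}}}(x)\approx -y\cdot(\text{signal})$, giving $\mathcal{L}_{\mathcal{D}}(f_{W^{\on{bad}}}) \geq 1 - o(1)$. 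Since $W^{\on{bad}}$ is thus a $(1-\epsilon)$-max-margin solution for $S$, the usefulness of $\mathcal{H}$ on $\Omega_{\sigma,d}^{\on{XOR}}$ places $W^{\on{bad}}\in \mathcal{H}$ with probability at least $3/4$, completing the union bound.

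The main obstacle is the quantitative signal/junk decomposition of $W^*$ and the dominance claim. Unlike the linear case, where $w^*$ coincides with a closed-form min-norm regressor, the XOR max-margin is defined implicitly by a non-convex program over neural-network weights. I would proceed via (i) the KKT conditions of the max-margin program combined with the $h$-homogeneity of $\on{relu}^h$, (ii) concentration of the $n\times n$ Gram matrix of $\{\xi_i\}$ around $(d-2)\sigma^2 I_n$, and (iii) a tight upper bound on how much any unit-norm signal-aligned neuron can contribute to the margin --- this constant is what pins down $\kphenx=4$. Realizing the function-space construction as a genuine network with fixed second-layer $a$ and exactly $m$ neurons is a secondary point that should follow from the homogeneity of $\on{relu}^h$ and the divisibility hypothesis on $m$.
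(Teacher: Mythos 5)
There is a genuine gap, and it sits at the heart of your construction. Your claim (ii) — that after negating the signal part of the max-margin solution $W^*$, the resulting $W^{\on{bad}}$ still achieves training margin at least $(1-\epsilon)\gamma^*(S)$ because ``junk dominates signal'' — is false in the regime $\kgenx < \kappa$ that the theorem assumes. Writing the margin of $W^*$ on a training point as (signal contribution) $+$ (junk contribution) $= b + a$, negating the signal gives roughly $a - b = \gamma^* - 2b$. The paper's generalization machinery (Lemmas~\ref{lemma:xor_gen} and \ref{lemma:relax_3_relax_4}) shows that for $\kappa > \kgenx$ any near-max-margin solution must have $b \geq \tfrac{\eta(\kappa)}{2}\gamma^*$ for a fixed constant $\eta(\kappa)>0$ — this is precisely what makes near-max-margin solutions generalize. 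Hence $a - b \leq (1-\eta)\gamma^*$, which is \emph{not} $\geq (1-\epsilon)\gamma^*$ once $\epsilon < \eta$. Consequently $W^{\on{bad}}$ is not a $(1-\epsilon)$-max-margin solution for $S$, your claim (i) ($W^{\on{bad}}\in\mathcal{H}$) does not follow from the hypothesis on $\mathcal{H}$, and the union bound collapses. A telltale sign is that your final display would yield $G/\gamma^*(S)^p \geq (1-\epsilon)^p(1-o(1))$, strictly stronger than the theorem's claim of $\geq 1/c$; the theorem is stated with the weaker constant exactly because the bad solution can only be guaranteed a constant fraction $q(\kappa)$ of the max margin on $S$.

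The paper resolves both problems simultaneously with an ``opposite dataset'' device (Lemma~\ref{margin_phen}). One maps $S$ to $\psi(S)$ by swapping each example's signal direction between the $\mu_1$ and $\mu_2$ axes while keeping $\xi_j$ fixed; $\psi(S)$ is a legitimate sample from $\psi(\mathcal{D})\in\Omega_{\sigma,d}^{\on{XOR}}$. The bad classifier is then constructed (via the backward chain Opt~5 $\to$ Opt~1) as a genuine $(1-\epsilon)$-max-margin solution \emph{for $\psi(S)$}, so the hypothesis that $\mathcal{H}$ contains all near-max-margin solutions for samples from every distribution in the class forces it into $\mathcal{H}$. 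Because its margin on $\psi(S)$ is carried mostly by the junk directions (this is where $\kappa<\kphenx$ enters), it still classifies $S$ with margin $q(\kappa)\gamma^*(S)$, while its test loss on $\mathcal{D}$ is $1-o(1)$ since it generalizes for $\psi(\mathcal{D})$. Applying the margin bound to this classifier with the sample $S$ gives $G \geq (1-o(1))\,q(\kappa)^p\,\gamma^*(S)^p$. If you want to salvage your approach, you must either route the construction through $\psi(S)$ as the paper does, or find an independent argument for why your $W^{\on{bad}}$ lies in $\mathcal{H}$ — and in either case weaken claim (ii) to a constant-fraction margin guarantee.
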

\begin{remark}
The polynomial margin impossibility results is slightly weaker for the XOR problem. Namely, the hypothesis class $\mathcal{H}$ we consider is larger in the XOR problem: it must contain with probability $\frac{3}{4}$ \em any \em near max-margin solution, instead of just the one output by $\mathcal{A}$.
\end{remark}
The combination of our generalization results and our margin possibility results suggest a phase transition in how the margin size affects generalization. If the margin is near-maximal, Theorems~\ref{thm:linear_gen} and \ref{thm:xor_gen} show that we achieve generalization. Meanwhile, the proof of Theorems~\ref{thm:margin_impossible_linear} and \ref{thm:margin_impossible} suggest that solutions achieving a constant factor of the maximum margin may not generalize.

The proofs of all of our results concerning the linear problem are given in Section~\ref{sec:linear_proofs}. The proofs for the XOR problem are in Section~\ref{sec:xor_proofs}.

 %Note that the overfitting solution 
% The good solution $w_b$ generalizes, and th

\newcommand{\wg}{w_{\textup{g}}}
\newcommand{\wb}{w_{\textup{b}}}

\paragraph{Key intuitions.} We demonstrate the gist of the analysis for the linear problems with some simplifications. It turns out that two special solutions merit particular attention: (i) the good solution $\wg = \mu$ that generalizes perfectly, and (ii) the bad overfitting solution $\wb = \frac{1}{\sqrt{nd}\sigma}\sum_{j} y_j \xi_j$ that memorizes the ``junk'' dimension of the data. In the this section, we assume that the dimension $d$ is large enough relative to $n$ such that we have $\xi_i^T\wb \approx \frac{1}{\sqrt{nd}\sigma}y_i|\xi_i|^2 = y_i\sqrt{\frac{d\sigma^2}{n}}$ for all $i$. \footnote{This occurs when $d \gtrsim n\log(n)$. For smaller $d$, we can instead choose $\wb$ to be the min-norm vector satisfying $\xi_i^T\wb = y_i$ for all $i$, and the intutions proceed identically.}
\mg{If we want to put in some intuition for why we need $d > cn$, this would be the place.}
%, and performs poorly on the flipped dataset $\{(x_j, -\xi_j)\}$.
We examine the margin of the two solutions and have
\begin{align}\label{eq:bad_good}
\bar{\gamma}(\wg, S) = 1  \textup{ and }\bar{\gamma}(\wb,S) \approx \sqrt{\frac{d\sigma^2}{n}}.
\end{align}

At first glance, one might conclude that when $\bar{\gamma}(\wg, S) < \bar{\gamma}(\wb, S)$, the max margin solution will be $\wb$, which does not generalize. 
However, our key observation is that any (near) max margin solution $w$ always contains a mixture of both $\wg$ and $\wb$. When the $\wg$ component is small but non-trivial and the $\wb$ component is large, the solution can simultaneously generalize but contain a large enough overfitting component to preclude UC.%and have the phenomenon that dooms the UC bounds. 

More concretely, suppose we consider the margin of a linear mixture $w = \alpha \wg + \beta \wb$ satisfying $\alpha^2+\beta^2=1$ so that $\|w\|_2 =1$. It is easy to see that the margin on the training set is 
\begin{align}\label{eq:margins_add}
\bar{\gamma}(w, S) = \alpha \bar{\gamma}(\wg, S) + \beta\bar{\gamma}(\wb, S)
\end{align}
Meanwhile, the margin on an test example $x$ is only slightly affected by $\wb$:
\begin{align}
\bar{\gamma}(w, x) \approx \alpha \bar{\gamma}(\wg, S) \pm \beta \wb^Tx \approx \alpha\bar{\gamma}(\wg, S) \pm \beta\bar{\gamma}(\wb, S)\sqrt{\frac{n}{d}}.
\end{align}
The effect $\wb^Tx$ of the bad solution on the test sample is is smaller than $\bar{\gamma}(w, S)$ by a $\sqrt{\frac{n}{d}}$ factor because $x$ is a high dimensional random vector, and thus mostly orthogonal to $\wb$. 
\mg{Optional: Indeed, $(\sum_j y_j \xi_j)^T x \approx \pm \frac{|\sum_j y_j \xi_j||\xi|}{\sqrt{d}} \approx \frac{\sqrt{n}|\xi|^2}{\sqrt{d}}$, while for any $j$, $(\sum_j y_j \xi_j)^T \xi_j \approx |\xi|^2.$}
Therefore, even if the margin on the training set mostly stems from the bad overfitting solution, that is, $\alpha \bar{\gamma}(\wg, S)  < \beta\bar{\gamma}(\wb, S)$, the model may still generalize as long as $\alpha \bar{\gamma}(\wg, S) \ge  \beta\bar{\gamma}(\wb, S)\sqrt{\frac{n}{d}}$.

%Because $\wg$ and $\wb$ are orthogonal, its norm is $\sqrt{\alpha^2+\beta^2}$, and the margin is 

The optimal $\alpha,\beta$ satisfying $\alpha^2 + \beta^2 = 1$ that maximize the margin turns out to be proportional to the original margin: $\frac{\alpha}{\beta} =  \frac{\bar{\gamma}(\wg, S)}{\bar{\gamma}(\wb, S)}$. Therefore, we have 
$\frac{\alpha \bar{\gamma}(\wg, S)}{\beta\bar{\gamma}(\wb, S)} = \frac{\bar{\gamma}(\wg, S)^2}{\bar{\gamma}(\wb, S)^2}$. In other words, we should expect reasonable generalization of near-max margin solutions as long as $\frac{\bar{\gamma}(\wg, S)}{\bar{\gamma}(\wb, S)} > (\frac{n}{d})^{1/4}$, which by equation~\ref{eq:bad_good} occurs whenever $\frac{n}{d\sigma^4} \gg 1$. 

We now explain why UC will fail when $\bar{\gamma}(\wg, S) < \bar{\gamma}(\wb,S)$. With high probability over $S \sim \mathcal{D}^n$, there is a ``oppositite''
dataset $S'$,\footnote{This opposite-mapping is similar to the phenomenon described in the work of \citet{nagarajan2019uniform} with some modifications to show impossibility results for one-sided UC bounds.} 
which flips the signal component of each example (namely $S \ni (y_j \mu + \xi_j, y_j) \mapsto (-y_j \mu + \xi_j, y_j) \in S'$), where: (1) the max-margin classifier $f_{w'}$ learned on $S'$ correctly classifies $S$ but (2) the test error $\mathcal{L}_{\mathcal{D}}(f_{w'})$ is large.

In more detail, because we are in a regime where the classifier mostly memorizes the ``junk'' dimension $\xi$ instead of fitting to the signal, and the junk dimension is the same in $S$ and $S'$, $f_{w'}$ will correctly classify $S'$. However, $f_{w'}$ should generalize on the corresponding ``opposite'' distribution $\mathcal{D}'$, and thus the test error on $\mathcal{D}$ will be large.

A uniform convergence bound with parameter $\eps_{\on{unif}}$ should satisfy on sets $S$: $\eps_{\on{unif}} \geq \mathcal{L}_{\mathcal{D}}(f_{w'}) - \mathcal{L}_S(f_{w'})$. This $\eps_{\on{unif}}$ large because $\mathcal{L}_{\mathcal{D}}(f_{w'})$ is large and $\mathcal{L}_S(f_{w'})$ is zero.

\begin{remark}[Comparison with benign overfitting]
\label{remark:benign}
In both benign overfitting settings and our setting,  memorizing the data and learning the signal coexist, but there are some key differences. Benign overfitting in a linear regression setting (e.g. ~\cite{bartlett2020benign}) typically focuses on the minimum norm \em interpolant, \em which may not be the ideal solution in the presence of noisy data. For instance, if instead we use a $\ell_2$-regularized loss to favor smaller norm solutions, overfitting can be avoided. Such a trade-off is not possible in our setting with clean data, because the overfitting component already has the smallest norm (with the constraint that average margin $\ge 1$), and the generalizing component has a larger norm. This property provably dooms any UC bound, and is fundamentally caused by the lack of data.   
\end{remark}

\section{Conclusion}
In this work, we give novel generalization bounds in settings where uniform convergence provably fails. We use a unified approach of leveraging the extremal margin in both a linear classification setting and a non-linear two-layer neural network setting. Our work provides insight on why memorization can coexist with generalization. 

Going beyond our results, it is important to find broader tools for understanding the regime near the boundary of generalization and no generalization. We conclude with several concrete open directions in this vein. One question is how to prove generalization without UC when $d < n$, but the model itself (e.g. a neural network) is overparameterized, and thus can still overfit to the point of UC failing. A second direction asks if we can prove similar results in the non-linear network setting for the solution found by gradient descent, if this solution is not a near max-margin solution. Indeed, in the non-convex optimization landscape, it not guaranteed that that a max-margin solution will be found by gradient descent.

\section*{Acknowledgments}
We thank Jason Lee for helpful discussions. MW acknowledges the support of NSF Grant CCF-1844628 and a Sloan Research Fellowship. TM is supported by NSF IIS 2045685.

\small
\bibliographystyle{plainnat}
\bibliography{all,secondary}

\begin{thebibliography}{55}
\providecommand{\natexlab}[1]{#1}
\providecommand{\url}[1]{\texttt{#1}}
\expandafter\ifx\csname urlstyle\endcsname\relax
  \providecommand{\doi}[1]{doi: #1}\else
  \providecommand{\doi}{doi: \begingroup \urlstyle{rm}\Url}\fi

\bibitem[Arora et~al.(2018)Arora, Ge, Neyshabur, and Zhang]{arora2018stronger}
Sanjeev Arora, Rong Ge, Behnam Neyshabur, and Yi~Zhang.
\newblock Stronger generalization bounds for deep nets via a compression
  approach.
\newblock \emph{arXiv preprint arXiv:1802.05296}, 2018.

\bibitem[Bartlett et~al.(2017)Bartlett, Foster, and
  Telgarsky]{bartlett2017spectrally}
Peter Bartlett, Dylan~J Foster, and Matus Telgarsky.
\newblock Spectrally-normalized margin bounds for neural networks.
\newblock \emph{arXiv preprint arXiv:1706.08498}, 2017.

\bibitem[Bartlett and Long(2021)]{bartlett2021failures}
Peter~L Bartlett and Philip~M Long.
\newblock Failures of model-dependent generalization bounds for least-norm
  interpolation.
\newblock \emph{Journal of Machine Learning Research}, 22\penalty0
  (204):\penalty0 1--15, 2021.

\bibitem[Bartlett et~al.(2020)Bartlett, Long, Lugosi, and
  Tsigler]{bartlett2020benign}
Peter~L Bartlett, Philip~M Long, Gabor Lugosi, and Alexander Tsigler.
\newblock Benign overfitting in linear regression.
\newblock \emph{Proceedings of the National Academy of Sciences}, 117\penalty0
  (48):\penalty0 30063--30070, 2020.

\bibitem[Belkin et~al.(2018)Belkin, Ma, and Mandal]{belkin2018understand}
Mikhail Belkin, Siyuan Ma, and Soumik Mandal.
\newblock To understand deep learning we need to understand kernel learning.
\newblock In \emph{International Conference on Machine Learning (ICML)}, 2018.

\bibitem[Cao et~al.(2021)Cao, Gu, and Belkin]{cao2021risk}
Yuan Cao, Quanquan Gu, and Mikhail Belkin.
\newblock Risk bounds for over-parameterized maximum margin classification on
  sub-gaussian mixtures.
\newblock \emph{Advances in Neural Information Processing Systems},
  34:\penalty0 8407--8418, 2021.

\bibitem[Cao et~al.(2022)Cao, Chen, Belkin, and Gu]{cao2022benign}
Yuan Cao, Zixiang Chen, Mikhail Belkin, and Quanquan Gu.
\newblock Benign overfitting in two-layer convolutional neural networks.
\newblock \emph{arXiv preprint arXiv:2202.06526}, 2022.

\bibitem[Chatterjee and Zielinski(2022)]{chatterjee2022generalization}
Satrajit Chatterjee and Piotr Zielinski.
\newblock On the generalization mystery in deep learning.
\newblock \emph{arXiv preprint arXiv:2203.10036}, 2022.

\bibitem[Damian et~al.(2021)Damian, Ma, and Lee]{damian2021label}
Alex Damian, Tengyu Ma, and Jason Lee.
\newblock Label noise sgd provably prefers flat global minimizers, 2021.

\bibitem[Frei et~al.(2022{\natexlab{a}})Frei, Chatterji, and
  Bartlett]{frei2022benign}
Spencer Frei, Niladri~S Chatterji, and Peter~L Bartlett.
\newblock Benign overfitting without linearity: Neural network classifiers
  trained by gradient descent for noisy linear data.
\newblock \emph{arXiv preprint arXiv:2202.05928}, 2022{\natexlab{a}}.

\bibitem[Frei et~al.(2022{\natexlab{b}})Frei, Chatterji, and
  Bartlett]{frei2022random}
Spencer Frei, Niladri~S Chatterji, and Peter~L Bartlett.
\newblock Random feature amplification: Feature learning and generalization in
  neural networks.
\newblock \emph{arXiv preprint arXiv:2202.07626}, 2022{\natexlab{b}}.

\bibitem[Golowich et~al.(2018)Golowich, Rakhlin, and Shamir]{golowich2018size}
Noah Golowich, Alexander Rakhlin, and Ohad Shamir.
\newblock Size-independent sample complexity of neural networks.
\newblock In \emph{Conference On Learning Theory}, pages 297--299. PMLR, 2018.

\bibitem[Gunasekar et~al.(2017)Gunasekar, Woodworth, Bhojanapalli, Neyshabur,
  and Srebro]{gunasekar2017implicit}
Suriya Gunasekar, Blake~E Woodworth, Srinadh Bhojanapalli, Behnam Neyshabur,
  and Nati Srebro.
\newblock Implicit regularization in matrix factorization.
\newblock In \emph{Advances in Neural Information Processing Systems}, pages
  6151--6159, 2017.

\bibitem[Gunasekar et~al.(2018{\natexlab{a}})Gunasekar, Lee, Soudry, and
  Srebro]{gunasekar2018characterizing}
Suriya Gunasekar, Jason Lee, Daniel Soudry, and Nathan Srebro.
\newblock Characterizing implicit bias in terms of optimization geometry.
\newblock \emph{arXiv preprint arXiv:1802.08246}, 2018{\natexlab{a}}.

\bibitem[Gunasekar et~al.(2018{\natexlab{b}})Gunasekar, Lee, Soudry, and
  Srebro]{gunasekar2018implicit}
Suriya Gunasekar, Jason~D Lee, Daniel Soudry, and Nati Srebro.
\newblock Implicit bias of gradient descent on linear convolutional networks.
\newblock In \emph{Advances in Neural Information Processing Systems}, pages
  9461--9471, 2018{\natexlab{b}}.

\bibitem[Haghifam et~al.(2021)Haghifam, Dziugaite, Moran, and
  Roy]{haghifam2021towards}
Mahdi Haghifam, Gintare~Karolina Dziugaite, Shay Moran, and Dan Roy.
\newblock Towards a unified information-theoretic framework for generalization.
\newblock \emph{Advances in Neural Information Processing Systems}, 34, 2021.

\bibitem[HaoChen et~al.(2020)HaoChen, Wei, Lee, and Ma]{haochen2020shape}
Jeff~Z HaoChen, Colin Wei, Jason~D Lee, and Tengyu Ma.
\newblock Shape matters: Understanding the implicit bias of the noise
  covariance.
\newblock \emph{arXiv preprint arXiv:2006.08680}, 2020.

\bibitem[Harvey et~al.(2017)Harvey, Liaw, and Mehrabian]{harvey2017nearly}
Nick Harvey, Christopher Liaw, and Abbas Mehrabian.
\newblock Nearly-tight vc-dimension bounds for piecewise linear neural
  networks.
\newblock In \emph{Conference on Learning Theory}, pages 1064--1068. PMLR,
  2017.

\bibitem[Hsu et~al.(2021)Hsu, Muthukumar, and Xu]{hsu2021proliferation}
Daniel Hsu, Vidya Muthukumar, and Ji~Xu.
\newblock On the proliferation of support vectors in high dimensions.
\newblock In \emph{International Conference on Artificial Intelligence and
  Statistics}, pages 91--99. PMLR, 2021.

\bibitem[Kakade et~al.(2009)Kakade, Sridharan, and
  Tewari]{kakade2009complexity}
Sham~M. Kakade, Karthik Sridharan, and Ambuj Tewari.
\newblock On the complexity of linear prediction: Risk bounds, margin bounds,
  and regularization.
\newblock In \emph{Advances in Neural Information Processing Systems
  (NeurIPS)}, 2009.

\bibitem[Koehler et~al.(2021)Koehler, Zhou, Srebro, et~al.]{koehler2021uniform}
Frederic Koehler, Lijia Zhou, Nathan Srebro, et~al.
\newblock Uniform convergence of interpolators: Gaussian width, norm bounds and
  benign overfitting.
\newblock \emph{Advances in Neural Information Processing Systems},
  34:\penalty0 20657--20668, 2021.

\bibitem[Koltchinskii and Panchenko(2002)]{koltchinskii2002empirical}
Vladimir Koltchinskii and Dmitry Panchenko.
\newblock Empirical margin distributions and bounding the generalization error
  of combined classifiers.
\newblock \emph{The Annals of Statistics}, 30\penalty0 (1):\penalty0 1--50,
  2002.

\bibitem[Koren et~al.(2022)Koren, Livni, Mansour, and Sherman]{koren2022benign}
Tomer Koren, Roi Livni, Yishay Mansour, and Uri Sherman.
\newblock Benign underfitting of stochastic gradient descent.
\newblock \emph{arXiv preprint arXiv:2202.13361}, 2022.

\bibitem[Li et~al.(2019{\natexlab{a}})Li, Luo, and Qiao]{li2019generalization}
Jian Li, Xuanyuan Luo, and Mingda Qiao.
\newblock On generalization error bounds of noisy gradient methods for
  non-convex learning.
\newblock \emph{arXiv preprint arXiv:1902.00621}, 2019{\natexlab{a}}.

\bibitem[Li et~al.(2017)Li, Ma, and Zhang]{li2017algorithmic}
Yuanzhi Li, Tengyu Ma, and Hongyang Zhang.
\newblock Algorithmic regularization in over-parameterized matrix sensing and
  neural networks with quadratic activations.
\newblock \emph{arXiv preprint arXiv:1712.09203}, pages 2--47, 2017.

\bibitem[Li et~al.(2019{\natexlab{b}})Li, Wei, and Ma]{li2019towards}
Yuanzhi Li, Colin Wei, and Tengyu Ma.
\newblock Towards explaining the regularization effect of initial large
  learning rate in training neural networks.
\newblock In \emph{Advances in Neural Information Processing Systems}, pages
  11669--11680, 2019{\natexlab{b}}.

\bibitem[Lyu and Li(2019)]{lyu2019gradient}
Kaifeng Lyu and Jian Li.
\newblock Gradient descent maximizes the margin of homogeneous neural networks.
\newblock \emph{arXiv preprint arXiv:1906.05890}, 2019.

\bibitem[Lyu et~al.(2021)Lyu, Li, Wang, and Arora]{lyu2021gradient}
Kaifeng Lyu, Zhiyuan Li, Runzhe Wang, and Sanjeev Arora.
\newblock Gradient descent on two-layer nets: Margin maximization and
  simplicity bias.
\newblock \emph{Advances in Neural Information Processing Systems}, 34, 2021.

\bibitem[Montanari et~al.(2019)Montanari, Ruan, Sohn, and
  Yan]{montanari2019generalization}
Andrea Montanari, Feng Ruan, Youngtak Sohn, and Jun Yan.
\newblock The generalization error of max-margin linear classifiers:
  High-dimensional asymptotics in the overparametrized regime.
\newblock \emph{arXiv preprint arXiv:1911.01544}, 2019.

\bibitem[Mou et~al.(2018)Mou, Wang, Zhai, and Zheng]{mou2018generalization}
Wenlong Mou, Liwei Wang, Xiyu Zhai, and Kai Zheng.
\newblock Generalization bounds of sgld for non-convex learning: Two
  theoretical viewpoints.
\newblock In S\'ebastien Bubeck, Vianney Perchet, and Philippe Rigollet,
  editors, \emph{Conference on Learning Theory}, volume~75 of \emph{Proceedings
  of Machine Learning Research}, pages 605--638. PMLR, 06--09 Jul 2018.
\newblock URL \url{http://proceedings.mlr.press/v75/mou18a.html}.

\bibitem[Muthukumar et~al.(2020)Muthukumar, Vodrahalli, Subramanian, and
  Sahai]{muthukumar2020harmless}
Vidya Muthukumar, Kailas Vodrahalli, Vignesh Subramanian, and Anant Sahai.
\newblock Harmless interpolation of noisy data in regression.
\newblock \emph{IEEE Journal on Selected Areas in Information Theory},
  1\penalty0 (1):\penalty0 67--83, 2020.

\bibitem[Muthukumar et~al.(2021)Muthukumar, Narang, Subramanian, Belkin, Hsu,
  and Sahai]{muthukumar2021classification}
Vidya Muthukumar, Adhyyan Narang, Vignesh Subramanian, Mikhail Belkin, Daniel
  Hsu, and Anant Sahai.
\newblock Classification vs regression in overparameterized regimes: Does the
  loss function matter?
\newblock \emph{Journal of Machine Learning Research}, 22\penalty0
  (222):\penalty0 1--69, 2021.

\bibitem[Nacson et~al.(2019)Nacson, Srebro, and Soudry]{nacson2019stochastic}
Mor~Shpigel Nacson, Nathan Srebro, and Daniel Soudry.
\newblock Stochastic gradient descent on separable data: Exact convergence with
  a fixed learning rate.
\newblock In \emph{The 22nd International Conference on Artificial Intelligence
  and Statistics}, pages 3051--3059. PMLR, 2019.

\bibitem[Nagarajan and Kolter(2019{\natexlab{a}})]{nagarajan2019deterministic}
Vaishnavh Nagarajan and J~Zico Kolter.
\newblock Deterministic pac-bayesian generalization bounds for deep networks
  via generalizing noise-resilience.
\newblock \emph{arXiv preprint arXiv:1905.13344}, 2019{\natexlab{a}}.

\bibitem[Nagarajan and Kolter(2019{\natexlab{b}})]{nagarajan2019uniform}
Vaishnavh Nagarajan and J~Zico Kolter.
\newblock Uniform convergence may be unable to explain generalization in deep
  learning.
\newblock \emph{arXiv preprint arXiv:1902.04742}, 2019{\natexlab{b}}.

\bibitem[Negrea et~al.(2019)Negrea, Haghifam, Dziugaite, Khisti, and
  Roy]{negrea2019information}
Jeffrey Negrea, Mahdi Haghifam, Gintare~Karolina Dziugaite, Ashish Khisti, and
  Daniel~M Roy.
\newblock Information-theoretic generalization bounds for sgld via
  data-dependent estimates.
\newblock In \emph{Advances in Neural Information Processing Systems}, pages
  11013--11023, 2019.

\bibitem[Negrea et~al.(2020)Negrea, Dziugaite, and Roy]{negrea2020defense}
Jeffrey Negrea, Gintare~Karolina Dziugaite, and Daniel Roy.
\newblock In defense of uniform convergence: Generalization via derandomization
  with an application to interpolating predictors.
\newblock In \emph{International Conference on Machine Learning}, pages
  7263--7272. PMLR, 2020.

\bibitem[Neyshabur et~al.(2015)Neyshabur, Tomioka, and
  Srebro]{neyshabur2015norm}
Behnam Neyshabur, Ryota Tomioka, and Nathan Srebro.
\newblock Norm-based capacity control in neural networks.
\newblock In \emph{Conference on Learning Theory}, pages 1376--1401. PMLR,
  2015.

\bibitem[Neyshabur et~al.(2017{\natexlab{a}})Neyshabur, Bhojanapalli,
  McAllester, and Srebro]{neyshabur2017exploring}
Behnam Neyshabur, Srinadh Bhojanapalli, David McAllester, and Nati Srebro.
\newblock Exploring generalization in deep learning.
\newblock In \emph{Advances in Neural Information Processing Systems}, pages
  5947--5956, 2017{\natexlab{a}}.

\bibitem[Neyshabur et~al.(2017{\natexlab{b}})Neyshabur, Bhojanapalli, and
  Srebro]{neyshabur2017pac}
Behnam Neyshabur, Srinadh Bhojanapalli, and Nathan Srebro.
\newblock A pac-bayesian approach to spectrally-normalized margin bounds for
  neural networks.
\newblock \emph{arXiv preprint arXiv:1707.09564}, 2017{\natexlab{b}}.

\bibitem[Neyshabur et~al.(2018)Neyshabur, Li, Bhojanapalli, LeCun, and
  Srebro]{neyshabur2018towards}
Behnam Neyshabur, Zhiyuan Li, Srinadh Bhojanapalli, Yann LeCun, and Nathan
  Srebro.
\newblock Towards understanding the role of over-parametrization in
  generalization of neural networks.
\newblock \emph{arXiv preprint arXiv:1805.12076}, 2018.

\bibitem[Shalev-Shwartz and Ben-David(2014)]{shalev2014understanding}
Shai Shalev-Shwartz and Shai Ben-David.
\newblock \emph{Understanding machine learning: From theory to algorithms}.
\newblock Cambridge university press, 2014.

\bibitem[Shamir(2022)]{shamir2022implicit}
Ohad Shamir.
\newblock The implicit bias of benign overfitting.
\newblock \emph{arXiv preprint arXiv:2201.11489}, 2022.

\bibitem[Vershynin(2018)]{vershynin2018high}
Roman Vershynin.
\newblock \emph{High-dimensional probability: An introduction with applications
  in data science}, volume~47.
\newblock Cambridge university press, 2018.

\bibitem[Wang and Thrampoulidis(2020)]{wang2020binary}
Ke~Wang and Christos Thrampoulidis.
\newblock Binary classification of gaussian mixtures: abundance of support
  vectors, benign overfitting and regularization.
\newblock \emph{arXiv preprint arXiv:2011.09148}, 2020.

\bibitem[Wang et~al.(2021)Wang, Muthukumar, and Thrampoulidis]{wang2021benign}
Ke~Wang, Vidya Muthukumar, and Christos Thrampoulidis.
\newblock Benign overfitting in multiclass classification: All roads lead to
  interpolation.
\newblock \emph{Advances in Neural Information Processing Systems}, 34, 2021.

\bibitem[Wei and Ma(2019{\natexlab{a}})]{wei2019data}
Colin Wei and Tengyu Ma.
\newblock Data-dependent sample complexity of deep neural networks via
  lipschitz augmentation.
\newblock In \emph{Advances in Neural Information Processing Systems}, pages
  9722--9733, 2019{\natexlab{a}}.

\bibitem[Wei and Ma(2019{\natexlab{b}})]{wei2019improved}
Colin Wei and Tengyu Ma.
\newblock Improved sample complexities for deep networks and robust
  classification via an all-layer margin.
\newblock \emph{arXiv preprint arXiv:1910.04284}, 2019{\natexlab{b}}.

\bibitem[Wei and Ma(2020)]{wei2020improved}
Colin Wei and Tengyu Ma.
\newblock Improved sample complexities for deep networks and robust
  classification via an all-layer margin.
\newblock In \emph{International Conference on Learning Representations
  (ICLR)}, 2020.

\bibitem[Wei et~al.(2019)Wei, Lee, Liu, and Ma]{wei2019regularization}
Colin Wei, Jason~D Lee, Qiang Liu, and Tengyu Ma.
\newblock Regularization matters: Generalization and optimization of neural
  nets vs their induced kernel.
\newblock In \emph{Advances in Neural Information Processing Systems}, pages
  9709--9721, 2019.

\bibitem[Wei et~al.(2020)Wei, Kakade, and Ma]{wei2020implicit}
Colin Wei, Sham Kakade, and Tengyu Ma.
\newblock The implicit and explicit regularization effects of dropout.
\newblock \emph{arXiv preprint arXiv:2002.12915}, 2020.

\bibitem[Woodworth et~al.(2020)Woodworth, Gunasekar, Lee, Soudry, and
  Srebro]{woodworth2019kernel}
Blake Woodworth, Suriya Gunasekar, Jason Lee, Daniel Soudry, and Nathan Srebro.
\newblock Kernel and deep regimes in overparametrized models.
\newblock \emph{Conference on Learning Theory (COLT)}, 2020.

\bibitem[Yang et~al.(2021)Yang, Bai, and Mei]{yang2021exact}
Zitong Yang, Yu~Bai, and Song Mei.
\newblock Exact gap between generalization error and uniform convergence in
  random feature models.
\newblock In \emph{International Conference on Machine Learning}, pages
  11704--11715. PMLR, 2021.

\bibitem[Zhang et~al.(2017)Zhang, Bengio, Hardt, Recht, and
  Vinyals]{zhang2017understanding}
Chiyuan Zhang, Samy Bengio, Moritz Hardt, Benjamin Recht, and Oriol Vinyals.
\newblock Understanding deep learning requires rethinking generalization.
\newblock In \emph{International Conference on Learning Representations
  (ICLR)}, 2017.

\bibitem[Zhou et~al.(2020)Zhou, Sutherland, and Srebro]{zhou2020uniform}
Lijia Zhou, Danica~J Sutherland, and Nati Srebro.
\newblock On uniform convergence and low-norm interpolation learning.
\newblock \emph{Advances in Neural Information Processing Systems},
  33:\penalty0 6867--6877, 2020.

\end{thebibliography}

\begin{appendices}
\listofappendices

\section{Proof Overview}\label{sec:sketch}

In our proof overview, we focus on the linear problem. While basic steps and intuitions remain the same for the more complicated neural network problem, we add explanation of where we need additional techniques or insights.

The starting observation is that any solution $w$ can be decomposed into a \em signal \em component and a \em overfitting \em component. For the linear problem, lets call those components $u$ and $v$ respectively, where $u$ is in the subspace containing $\mu$, and $v$ is orthogonal to $\mu$, such that $w = u + v$. Conveniently in the linear problem, we have $f_w(x) = w^Tx = f_{u}(x) + f_{v}(x)$. The proof of our main results can be divided into three main parts, which are sketched in the next three subsections.

\subsection{The overfitting component only slightly affects generalization.} Since the ``junk'' features (orthognal to $\mu$) are high dimensional and have smaller variance, on a random new sample drawn from the population distribution $\mathcal{D}$, we have $f_w(x) \approx f_{u}(x)$. We formalize the affect on generalization in the following lemma, which shows that non-trivial generalization can occur so long as $\|v\|_2 \lesssim \frac{1}{\sigma}u^T\mu$. Notice that if $\sigma^2 \ll 1$, this means the $v$ component can be a great deal larger than the $u$ component without affecting generalization.

\begin{restatable}{lemma}{linsmall}\label{lemma:linear_spur_small_2}
Fix a distribution $\mathcal{D}_{\mu, \sigma, d} \in \Omega_{\sigma, d}^{\on{linear}}$. For $w \in \mathbb{R}^d$, let $w = u + v$ as above. Let $q := \frac{u^T\mu}{\|v\|_2}$. Then for $x \sim \mathcal{D}_{\mu, \sigma, d}$,
$$\Pr[|f_v(x)| \geq yf_u(x)] \leq 2e^{-\frac{q^2}{8\sigma^2}} + \exp(-8d).$$
\end{restatable}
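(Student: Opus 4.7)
The plan is to first use the orthogonality structure of the data to reduce the event $\{|f_v(x)| \geq yf_u(x)\}$ to a tail bound on the projection of a uniform random vector on a sphere onto a fixed direction, and then invoke standard spherical concentration. First, since $u \in \mathrm{span}(\mu)$ and $\|\mu\|=1$, we have $u = (u^T\mu)\mu$, so $f_u(x) = (u^T\mu)\mu^T x$. Note that $y = \mu^T x = \mu^T z \in \{\pm 1\}$ because $\xi \perp \mu$ and $z \in \{\pm \mu\}$; hence $yf_u(x) = (u^T\mu)y^2 = u^T\mu$. Similarly, $v\perp\mu$ together with $z \in \{\pm \mu\}$ forces $v^T z = 0$, so $f_v(x) = v^T\xi$. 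Thus the event collapses to $\{|v^T\xi|\ge u^T\mu\}$, which is what must be bounded by the right-hand side.

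Next, I would parametrize $\xi = \sqrt{d-1}\,\sigma\,\hat\xi$ with $\hat\xi$ uniform on the unit sphere in $\mu^\perp\cong\mathbb{R}^{d-1}$. Since $v \in \mu^\perp$, rotational invariance yields $v^T\hat\xi \stackrel{d}{=} \|v\|\,\hat\xi_1$, where $\hat\xi_1$ denotes the first coordinate in any orthonormal basis of $\mu^\perp$. Setting $t:= (u^T\mu)/(\|v\|\sqrt{d-1}\,\sigma) = q/(\sqrt{d-1}\,\sigma)$, it suffices to bound $\Pr[|\hat\xi_1|\geq t]$. I would use the Gaussian representation $\hat\xi \stackrel{d}{=} g/\|g\|$ for $g\sim\mathcal{N}(0,I_{d-1})$, so that $\{|\hat\xi_1|\geq t\} = \{|g_1|\geq t\|g\|\}$. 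For any $s>0$,
\[
\Pr[|\hat\xi_1|\geq t] \leq \Pr[|g_1| \geq ts] + \Pr[\|g\| \leq s].
\]
Choosing $s = \sqrt{d-1}/2$ bounds the first term by the Gaussian tail $2\exp(-t^2(d-1)/8) = 2\exp(-q^2/(8\sigma^2))$, which exactly matches the first term of the claim, while the second term is a $\chi^2_{d-1}$ lower tail of the form $\exp(-\Omega(d))$ by a standard Chernoff bound.

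The main obstacle is calibrating the constants so that the $\chi^2$ term matches the stated $\exp(-8d)$. A direct Chernoff/Laurent--Massart computation for $\Pr[\chi^2_{d-1}\leq (d-1)/4]$ only yields a rate on the order of $d/7$, so achieving the precise constant likely requires a slightly different split $s$ in the union bound (trading the Gaussian constant against the $\chi^2$ constant) or simply absorbing a looser but still exponentially small rate into $\exp(-8d)$, which is negligible once $d$ is at least a constant. A cleaner alternative is to bypass the Gaussian representation entirely and invoke L\'evy's spherical concentration inequality: applied to the $1$-Lipschitz coordinate function $\hat\xi_1$ on $S^{d-2}$ (whose mean is $0$ by symmetry), this gives $\Pr[|\hat\xi_1|\geq t]\leq 2\exp(-(d-1)t^2/2) = 2\exp(-q^2/(2\sigma^2))$ with no second error term at all. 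Either way, the structural heart of the proof, turning the joint event about $f_u, f_v$ into a scalar tail by exploiting $\xi\perp\mu$ and then applying sphere concentration, is the essential content; the rest is routine calculation.
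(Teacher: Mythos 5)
Your proof is correct and follows essentially the same route as the paper's: reduce the event to $\{|v^T\xi| \geq u^T\mu\}$, use the Gaussian representation of the uniform sphere vector, and union-bound into a Gaussian tail (giving $2e^{-q^2/(8\sigma^2)}$) plus a $\chi^2$ lower-tail event. Your observation about the second term is also on point — the paper's own proof only delivers $\exp(-d/8)$, so the $\exp(-8d)$ in the lemma statement appears to be a typo rather than something you need to match.
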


We show a similar lemma for the two-layer neural network, proved in Section~\ref{sec:xor_technical_proofs_spurious}.
Recall that for a two-layer  neural network with weight matrix $W \in \mathbb{R}^{m \times d}$, we define $f_W(x) = \mathbb{E}_{i \in [m]}a_i\phi(w_i^Tx)$, where the weights $a_i$ are fixed, and $w_i$ is the $i$th row of $W$. Recall that $\phi(z) = \max(0,z)^h$ for $h \in (1, 2)$.

\begin{restatable}{lemma}{xorsmall}\label{lemma:xor_spur_small}
Fix a distribution $\dfull \in \omegax$. For $W \in \mathbb{R}^{m \times d}$, let $W = U + V$ where $V$ is orthogonal to the subspace containing $\mu_1$ and $\mu_2$.  Then for some universal constant $c$, for any $t \geq 1$, with probability at least $1 - e^{-ct}$, on a random sample $x \sim \mathcal{D}_{\mu_1, \mu_2, \sigma, d}$,
   $$ |f_{W}(x) - f_{U}(x)| \leq \left(8\|U\| + 3\right)(t + 1)\sigma^2\|V\|^2 + 2\left((t + 1)\sigma^2\|V\|^2\right)^{\frac{h}{2}}.$$
\end{restatable}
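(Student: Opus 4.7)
The plan is to combine the orthogonal decomposition of each sample $x = z + \xi$ (with $z \in \{\pm \mu_1, \pm \mu_2\}$ and $\xi \perp \on{span}(\mu_1, \mu_2)$), the matching decomposition $W = U + V$, and a Taylor expansion of the activation, reducing the claim to two concentration estimates for linear and quadratic forms in $\xi$. Because $u_i \perp \xi$ and $v_i \perp z$, the pre-activations factor as $w_i^T x = u_i^T z + v_i^T \xi$ and $u_i^T x = u_i^T z$, so
\[
f_W(x) - f_U(x) \;=\; \mathbb{E}_{i \in [m]}\, a_i\bigl[\phi(u_i^T z + v_i^T \xi) - \phi(u_i^T z)\bigr].
\]
The activation $\phi(z) = \max(0,z)^h$ with $h \in (1,2)$ has derivative $\phi'(z) = h\,\max(0,z)^{h-1}$ that is $(h-1)$-H\"older continuous with constant $h$, so a one-step Taylor expansion yields $\phi(a+b) - \phi(a) = \phi'(a)\,b + R(a,b)$ with $|R(a,b)| \le |b|^h$. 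This splits the difference into a first-order term $T_1 = \mathbb{E}_i a_i\,\phi'(u_i^T z)\, v_i^T \xi$ and a remainder $T_2 = \mathbb{E}_i a_i\, R(u_i^T z, v_i^T \xi)$, handled separately below.

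For the remainder $T_2$, Jensen's inequality (the map $t \mapsto t^{h/2}$ is concave since $h/2 \le 1$) gives $|T_2| \le \mathbb{E}_i |v_i^T \xi|^h \le \bigl(\mathbb{E}_i (v_i^T \xi)^2\bigr)^{h/2} = \bigl(\|V\xi\|^2/m\bigr)^{h/2}$. The quadratic form $\|V\xi\|^2$ has mean $\sigma^2\, \on{tr}(V^T V) = m\sigma^2 \|V\|^2$ (since $\mathbb{E}[\xi\xi^T] = \sigma^2 I$ on the junk subspace), and by a standard Hanson--Wright-type concentration for uniform vectors on a sphere, $\|V\xi\|^2/m \le (t+1)\sigma^2\|V\|^2$ with probability at least $1 - e^{-ct}$. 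This directly produces the $\bigl((t+1)\sigma^2\|V\|^2\bigr)^{h/2}$ piece of the stated bound.

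For the first-order term $T_1 = \tfrac{1}{m}\langle c, \xi\rangle$ with $c = \sum_i a_i\,\phi'(u_i^T z)\, v_i$ (mean zero in $\xi$), I combine linear concentration of $\xi$ on its sphere with a Cauchy--Schwarz bound on $\|c\|$: using $|\phi'(u_i^T z)| \le h\|u_i\|^{h-1}$ and the power-mean inequality $\mathbb{E}_i \|u_i\|^{2(h-1)} \le \|U\|^{2(h-1)}$ (valid since $h-1 \in (0,1)$), together with the $\|V\xi\|$ bound from the previous step, I obtain $|T_1| \lesssim \|U\|^{h-1}\sqrt{(t+1)\sigma^2\|V\|^2}$. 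To reshape this into the stated quadratic-in-$\sigma\|V\|$ form $(8\|U\| + 3)(t+1)\sigma^2\|V\|^2$, I would upgrade the expansion index by index: on indices where $|v_i^T \xi|$ is small compared to $(u_i^T z)_+$, I pass to second-order Taylor, exploiting $\phi''(z) \sim z_+^{h-2}$ to obtain a contribution of order $\|u_i\|^{h-2}(v_i^T \xi)^2$ that sums against the concentration of $\|V\xi\|^2$; on the remaining ``bad'' indices, I absorb the contribution into the H\"older bound $|v_i^T\xi|^h$ already handled in $T_2$. A final case split on $\|U\| \le 1$ versus $\|U\| \ge 1$, together with elementary manipulations, produces the affine coefficient $8\|U\| + 3$. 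The lemma then follows by summing the two pieces and union-bounding over the two concentration events, both of which fail with probability at most $e^{-ct}$.

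\paragraph{Main obstacle.} The main difficulty is reshaping the first-order contribution $T_1$ so that it scales quadratically in $\sigma\|V\|$ rather than linearly: a direct Cauchy--Schwarz bound gives only $\|U\|^{h-1}\sqrt{(t+1)\sigma^2\|V\|^2}$, whereas the statement asks for $(8\|U\|+3)(t+1)\sigma^2\|V\|^2$. I expect this to be achieved by a per-index second-order Taylor in the ``good'' regime where the signal pre-activation $u_i^T z$ dominates the noise pre-activation $v_i^T\xi$ (so that $\phi''$ is tame), combined with a fallback to the $|v_i^T\xi|^h$ remainder elsewhere; tracking the numerical constants through this case analysis is the bulk of the remaining work.
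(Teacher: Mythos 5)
Your first two steps are exactly the paper's argument. The paper likewise reduces to $\mathbb{E}_i\,a_i\bigl[\phi(u_i^Tz+v_i^T\xi)-\phi(u_i^Tz)\bigr]$, uses the pointwise bound $|\phi(a+b)-\phi(a)|\le \phi'(a)|b|+h|b|^h$, controls the first-order piece by Cauchy--Schwarz over neurons (with $\phi'(a)\le 2|a|+1$ in place of your $h\|u_i\|^{h-1}$), controls the remainder by Jensen exactly as you do, and then applies Hanson--Wright to the single quadratic form $\xi^T\mathbb{E}_i[v_iv_i^T]\xi$, so that one concentration event controls both terms (you do not need a separate linear-concentration event for $\langle c,\xi\rangle$). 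This yields a first term of order $(\|U\|+1)\left((t+1)\sigma^2\|V\|^2\right)^{1/2}$ --- i.e., precisely the ``linear in $\sigma\|V\|$'' bound you arrive at --- and that is where the paper's proof stops. The quadratic first term in the lemma statement appears to be a typo (the exponent $\tfrac12$ is dropped in the last display of the paper's proof), and only the square-root form is actually supported by the derivation.

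The genuine problem is your final step. The first-order term cannot be upgraded to scale quadratically in $\sigma\|V\|$: condition on $z=\mu_1$, take $u_i=\mu_1$ and $v_i=a_iv$ for a fixed $v$, so that $\mathbb{E}_i a_i\phi'(u_i^Tz)v_i^T\xi = h\,v^T\xi$, which is of order $\sigma\|v\|=\sigma\|V\|$ with constant probability; for small $\sigma\|V\|$ this dominates both $(8\|U\|+3)(t+1)\sigma^2\|V\|^2$ and $2\left((t+1)\sigma^2\|V\|^2\right)^{h/2}$ (the latter because $h>1$). So no refinement of the Taylor expansion can deliver the literal quadratic statement. The specific mechanism you propose is also unsound on its own terms: $\phi''(z)\sim z_+^{h-2}$ is unbounded near $0$ for $h<2$, so the ``good index'' second-order contribution $\|u_i\|^{h-2}(v_i^T\xi)^2$ blows up exactly when $u_i$ is small, and the case split does not rescue it. You should stop at the square-root bound; that is what the paper proves and what its downstream applications use.
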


\subsection{Any near-max-margin solution should leverage both signal and overfitting components. }

A max-margin solution aims to maximize the minimum margin of any training example while holding the norm of the solution constant. To get a sense of why such a solution must leverage both signal and overfitting components, we consider first what would happen if we used a signal or overfitting component alone.

In the linear problem, setting $v$ to be zero and only considering the signal direction, it is easy to check that the max-margin solution is achieved by setting $u = \mu$, leading to a margin $\gamma_g$ of $1$. We call this good solution $\wg$.

If we optimize in the direction orthogonal to $\mu$ alone and set $u$ to zero, the max-margin solution can be shown to found by choosing $v$ to be very near the vector $\frac{\sum_j y_j \xi_j}{\|\sum_j y_j \xi_j\|_2}$, which achieves a margin $\gamma_b$ of roughly $\frac{\sigma}{\sqrt{n}}$. We call this bad overfitting solution $\wg$.

Depending on the choice of $\sigma$, the margin $\gamma_b$ might be larger that $\gamma_g$. Fortunately, this does not preclude generalization. Indeed, we will show that combining these two solutions achieves an even larger margin! Consider constructing the solution $\hat{w} = \alpha \wg + \beta \wb$ and $\alpha^2 + \beta^2 = 1$. It is easy to check since $f_{\hat{w}}(x) = f_{u}(x) + f_{v}(x)$ that $\hat{w}$ achieves a margin of $\alpha + \beta$. The following simple optimization program characterizes the optimal trade-off between $\alpha$ and $\beta$:
\begin{align}\label{eq:linear_opt}
    \max \: &\alpha\gamma_g + \beta\gamma_b \\
    \alpha^2 &+ \beta^2 \leq 1.
\end{align}
Analyzing this program shows that optimal values are achieved by choosing $\frac{\alpha}{\beta} = \frac{\gamma_g}{\gamma_b}$, which suggests that the max-margin solution will include a significant component both of $\wg$ and $\wb$. 

While this alone is not enough to prove that \em any \em near-max-margin solution has a significant component of $\mu$, we can extend this argument to show that any solution that achieves a margin larger than $\gamma_b$ must include some component in the signal direction, and in particular, this component must be in the $\mu$-direction. This is formalized in Lemma~\ref{linear:tech_lemma}

The linear problem had two nice properties which unfortunately we will not be able to leverage in the non-linear problem:

\begin{enumerate}
    \item It is easy to understand the affect of linearly combining solutions from the signal space and the junk space. That is, for any $w = u + v$, we have $f_w(x) = f_{u}(x) + f_{v}(x)$.
    \item Any component in the signal subspace which improves the margin is guaranteed to stand alone as a good enough solution. That is, for any $v$, if the margin of $f_{u+ v}$ is better than the margin of $f_v$, then it must be the case that $f_u$ generalizes. This is because any component in the signal subspace which improves the margin must be in the direction of $\mu$.
\end{enumerate}

In the XOR problem, the failure of (1) to hold is challenging because it turns out that adding the max-margin solutions in the signal space (which we call the ``good'' solution $W_g$) to the max-margin solution in the orthogonal space (which we call the ``bad'' solution $W_b$) has the affect of partially \em cancelling \em each other's margins out. \mg{Can go into more detail here to show this, but maybe later is better.} Ultimately, we will resolve this by showing that we can construct an alternate bad solution $W_b'$ that does not cancel out the margin at all when combined with the good solution $W_g$. Even though alone $W_b'$ has a slightly worse margin than $W_b$, we show that the benefit from combining $W_g$ and $W_b'$ overcomes this loss. This benefit scales with the convexity of the activation $\phi$ is the positive region. This is why we require that $h$, the power of the activation, is strictly greater than $1$.

The failure of (2) to hold is challenging because there are many weight matrices $U$ in the $\mu_1, \mu_2$ subspace that may improve the margin, but not stand alone well. For instance, $U$ might just use the $\mu_1$ direction and not the $\mu_2$ direction, and thus still improve the margin, but not stand well alone. Fortunately, we can rule out this behavior by arguing, similarly to before, that any solution that does not use both signal directions equally cannot exceed a certain (non-maximal) margin. This argument takes the form of a series of lemmas presented in section~\ref{sec:reduction}. \mg{Seems too messy to put in line; though there might be a creative way to rephrase it.}

Even more challenging however is the potential to have a component $W_1$ that uses its component in the $u_1$-direction (or analogously in the $u_2$-direction) \em only to improve the margin on points from one of the two positive clusters \em (see Figure~\ref{fig:xor}). We are able to rule out this behavior by reducing our understanding of max-margin solutions on the neural net to a simpler 3-variable optimization problem which concerns a single neuron $w \in \mathbb{R}^d$ and a pair of training examples $x_j$ and $x_{j'}$ from the two positive clusters (see Figure~\ref{fig:xor}(left)), that is, $x_j = \mu_1 + \xi_j$, and $x_{j'} = -\mu_1 + \xi_{j'}$.

\begin{definition}[Trivariate Subproblem]\label{trivariate_sketch}
\begin{align}
    \max \: & \phi(b + c) +  \phi(-b + d): \\
    & b^2 + \frac{\kappa}{4} (c^2 + d^2) \leq 1
\end{align}
\end{definition}
In this problem, the variable $b$ represents $\mu^Tw$, the strength of the signal component. The variable $c$ represents $w^T\xi_j$, and $d$ represents $w^T\xi_{j'}$, the strengths of the overfitting components. This $3$-variable optimization problem can be viewed as an analog of the optimization problem in eq.~\ref{eq:linear_opt}. 

The following lemma argues that if the neuron $w$ is on average good for both $x_j$ and $x_{j'}$, ie., the objective is large, then \em for both \em $x_j$ and $x_{j'}$, a constant fraction of the activation must be explained by the component of $w$ in the $\mu_1$ direction.

\begin{lemma}[Simplification of Lemma~\ref{lemma:trivariate_analysis}]
Suppose $\phi(x) = \max(0, x)^{h}$ for $1 < h < 2$ and $\kappa > \kgen^{\text{XOR}, h}$. Then for any a sufficiently small constant $\epsilon = \epsilon(\kappa)$, any $(1 - \epsilon)$-optimal solution to the program in Definition~\ref{trivariate_sketch} satisfies
\begin{enumerate}
    \item $\phi(b) \geq \Omega(1)\phi(b + c)$; and
    \item $\phi(-b) \geq \Omega(1)\phi(-b + d)$.
\end{enumerate}
\end{lemma}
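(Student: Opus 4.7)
The plan is to reduce the trivariate program to a 2D problem via the substitution $(y_1, y_2) := (b+c,\, -b+d)$, classify the local maxima, then transfer the conclusion to $(1-\epsilon)$-optimal solutions by continuity. For each target $(y_1, y_2)$ I would first minimize $b^2 + \tfrac{\kappa}{4}(c^2 + d^2)$ over matching triples $(b,c,d)$; a direct Lagrangian calculation yields $b = \rho(y_1 - y_2)$, $c = y_1 - b$, $d = y_2 + b$ with $\rho := \kappa/(2(\kappa+2))$, and minimum value
\[
N(y_1, y_2) \;=\; \tfrac{\kappa}{8}(y_1+y_2)^2 \;+\; \tfrac{\kappa}{4(\kappa+2)}(y_1-y_2)^2.
\]
Since $\phi$ is zero on $(-\infty,0]$ and increasing on $[0,\infty)$, the original program is equivalent to maximizing $\phi(y_1) + \phi(y_2)$ subject to $N(y_1, y_2) \leq 1$.

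I would next classify the local maxima of the 2D program. There are two candidates (up to the symmetry $(y_1,y_2) \leftrightarrow (y_2,y_1)$, equivalently $(b,c,d) \leftrightarrow (-b,d,c)$): the \emph{symmetric} critical point $y_1 = y_2 = \sqrt{2/\kappa}$, which gives $b = 0$ and objective $2(2/\kappa)^{h/2}$; and the \emph{one-sided} maximum, where $y_2 \le 0$ (so $\phi(y_2) = 0$) and $y_1$ is maximized. Minimizing $N$ in $y_2$ at fixed $y_1$ gives $y_2^{*} = -\tfrac{\kappa}{\kappa+4}\,y_1$, hence the maximum value $y_1 = \sqrt{(\kappa+4)/\kappa}$ with objective $((\kappa+4)/\kappa)^{h/2}$, and the corresponding parameters $(b,c,d) = \bigl(\tfrac{\kappa}{\kappa+4} y_1,\, \tfrac{4}{\kappa+4} y_1,\, 0\bigr)$. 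A KKT enumeration in the interior region $y_1, y_2 > 0$ shows that any other critical point there is a saddle between these two candidates.

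The defining equation $2^{1/h}\sqrt{2/\kappa} = \sqrt{(\kappa+4)/\kappa}$ for $\kgenx$ is precisely the condition that the two candidate objectives coincide. For $\kappa > \kgenx$, the one-sided candidate strictly dominates by a positive gap $\eta(\kappa)$; choosing $\epsilon(\kappa)$ smaller than this relative gap and invoking continuity of the objective on the compact feasible set forces any $(1-\epsilon)$-optimal solution to lie in a neighborhood of the one-sided optimum or of its mirror image (where $b < 0$, $c = 0$, $d > 0$). At the one-sided optimum, $\phi(b)/\phi(b+c) = (b/y_1)^h = (\kappa/(\kappa+4))^h$, a positive constant depending only on $\kappa$ and $h$, giving the first claimed inequality; simultaneously $\phi(-b+d) = \phi(y_2^{*}) = 0$, making the second inequality trivially true. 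The mirror case is handled identically, and a quantitative continuity argument carries both inequalities to $(1-\epsilon)$-optimal solutions at the cost of a slightly smaller $\Omega(1)$ constant.

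The main obstacle is making the stability step quantitative: the gap $\eta(\kappa)$ shrinks as $\kappa \to \kgenx^{+}$, which is why $\epsilon$ must be allowed to depend on $\kappa$. A secondary difficulty is the kink of $\phi$ along $y_2 = 0$, which forces the critical-point analysis to be performed separately on the interior region $y_1, y_2 > 0$ and the one-sided region $y_2 \le 0$, then glued together along the boundary. Finally, one must rule out ``mixed'' near-optimal configurations lying near neither candidate; this follows because the KKT enumeration bounds the objective at every other stationary point strictly below that of the one-sided maximum, providing a uniform gap from which the continuity conclusion extends.
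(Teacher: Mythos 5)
Your reduction reaches the same characterization of the optimum of the trivariate program as the paper's Lemma~\ref{tri_opt}, but by a genuinely different and in places cleaner route. The paper works directly in $(b,c,d)$: it splits on the sign of $-b+d$, and in the two-sided region reparameterizes via $\alpha=(c-d)/(c+d)$ to get a one-variable problem whose stationary points it controls by showing the KKT equation $f(\alpha)=0$ has at most one positive root and that the objective is increasing at the boundary. You instead eliminate $(b,c,d)$ in favor of $(y_1,y_2)=(b+c,-b+d)$ and compute the induced quadratic $N(y_1,y_2)=\tfrac{\kappa}{8}(y_1+y_2)^2+\tfrac{\kappa}{4(\kappa+2)}(y_1-y_2)^2$ in closed form; I verified this, the minimizing $b=\tfrac{\kappa}{2(\kappa+2)}(y_1-y_2)$, the two candidate optima with objectives $2(2/\kappa)^{h/2}$ and $((\kappa+4)/\kappa)^{h/2}$, the recovered $(b^*,c^*,d^*)=(\sqrt{\kappa/(\kappa+4)},\sqrt{16/(\kappa(\kappa+4))},0)$, and the fact that the defining equation of $\kgenx$ is exactly the tie condition between the two candidates. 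This makes the two-candidate structure and the meaning of the threshold far more transparent than the paper's presentation, and the endgame (continuity transfer to $(1-\epsilon)$-optimal solutions, the second inequality holding because both sides vanish near the one-sided optimum, and the mirror case $(-b^*,0,c^*)$) matches the paper's Lemma~\ref{lemma:trivariate_analysis}.

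The one step you assert rather than prove is the crux: that on the arc of the ellipse $N=1$ with $y_1,y_2>0$ the only local maximum is the symmetric point, so no ``mixed'' stationary point competes with the two candidates. Because $y\mapsto y^h$ is convex for $h\in(1,2)$, this does not follow from any generic concavity argument; writing $r=y_1/y_2$, the KKT condition becomes $r^{h-1}=\bigl(\tfrac{r+1}{4}+\tfrac{r-1}{2(\kappa+2)}\bigr)\big/\bigl(\tfrac{r+1}{4}-\tfrac{r-1}{2(\kappa+2)}\bigr)$, and one must show that the (at most one) root with $r>1$ is not a maximum. This is precisely where the paper invests its effort (convexity of $f(\alpha)$ plus monotonicity of the objective at the boundary $\alpha=1/(2k+1)$), and your proposal needs an equivalent argument before it is complete.
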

We note that this lemma is also the key step in arguing that signal solutions and in the overfitting solutions can be combined efficiently enough that using a non-zero component in the signal-subspace is effective.

In Section~\ref{sec:xor_proofs}, we flesh out this argument in detail, making rigorous the reduction from the full neural net and the full data-set to a the sub-problem in Definition~\ref{trivariate}.

\subsection{Any near-max-margin solution should have a large enough overfitting component to preclude UC.}

In order to show the failure of standard uniform convergence bounds on the problems we consider, we must argue that the overfitting component of any near-max-margin solution roughly exceeds the size of the signal component, as outlined in the Intuition section. The argument described in the previous subsection guarantees this: for this linear problem it is enough to show that $\gamma_b \geq \gamma_g$, and for the non-linear problem, we will need to leverage the full version of  Lemma~\ref{lemma:trivariate_analysis}, which shows that $c$ and $d$ are large relative to $b$.

A larger overfitting component than signal component ensures the phenomenon similar to the one described in the work of Nagarajan and Kolter: if the signal component of the data were changed, but the junk feature stayed the same, the data would still be classified correctly by the classifier learned on the original data. Showing this phenomenon is the key step in proving Theorems~\ref{thm:one_sided_impossible_linear}. To prove the margin lower bounds, we also need to show that the this ``opposite'' dataset is still classified with a large margin by the original classifier.
% Our proof mimics the proof in Nagarajan and Kolter, though we consider the notion of families of instances, instead of a single instance, such that we can rule out one-sided UC generalization bounds, which are a larger class than two-sided bounds. The full proofs are given in Sections~\ref{sec:linear_proofs} and \ref{sec:xor_proofs}.

\section{Auxiliary Lemmas}\label{sec:lemmas}
If $M$ is a matrix, we use $\|M\|_2$ to denote the spectral norm of $M$.

We will use the following lemma throughout, on the concentration of random covariance matrices in both the linear and XOR problems.
\begin{lemma}[Concentration of Random Covariance Matrix]\label{lemma:concentration}
There exists a universal constant $C_{\ref{lemma:concentration}}$ such that the following holds. Suppose $d > C_{\ref{lemma:concentration}}^2n$ and $\Xi \in \mathbb{R}^{d \times n}$, where each column of $\Xi$ is a random vector distributed uniformly on the sphere of radius $\sqrt{d}\sigma$ in $d$ dimensions. There exists a universal constant $C_{\ref{lemma:concentration}}$ such that with probability at least $1 - 3e^{-n}$ the following two events hold:
\begin{enumerate}
    \item $\|\frac{1}{\sigma^2d}\Xi^T\Xi - I\|_2 \leq C_{\ref{lemma:concentration}}\sqrt{\frac{n}{d}}$
    \item For any $c \in \mathbb{R}^n$, if $v \in \mathbb{R}^d$ is the minimum-norm vector $v$ satisfying $\Xi^Tv = c$ exists and has $\|v\|_2^2 \in \frac{\|c\|_2^2}{\sigma^2d}\left[\frac{1}{1 + C_{\ref{lemma:concentration}}\sqrt{\frac{n}{d}}}, \frac{1}{1 - C_{\ref{lemma:concentration}}\sqrt{\frac{n}{d}}}\right]$. 
\end{enumerate}
The lemma still holds if the rows of $\Xi$ are instead drawn i.i.d. from a sphere of dimension $d - a$ of radius $\sqrt{d - a}\sigma$ in any subspace of dimension $d - a$, for $a \in \{1, 2\}$.
\end{lemma}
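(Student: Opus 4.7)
The plan is to reduce both parts to a single fact: with probability $1-3e^{-n}$, all eigenvalues of $M := \frac{1}{\sigma^2 d}\Xi^T\Xi$ lie in $[1 - C\sqrt{n/d},\; 1 + C\sqrt{n/d}]$ for a suitable universal constant $C$. From this spectral control, part (1) follows immediately since $\|M - I\|_2 = \max_\lambda |\lambda - 1|$, and part (2) follows by an explicit formula for the minimum-norm solution.

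First I would rescale: set $u_i := \xi_i/(\sqrt{d}\sigma)$ so each $u_i$ is uniform on the unit sphere $S^{d-1}$, and let $U \in \mathbb{R}^{d\times n}$ be the matrix with columns $u_i$. Then $M = U^T U$, and the diagonal entries are exactly $1$ while the off-diagonals are $\langle u_i, u_j\rangle$. The goal becomes bounding $\sigma_{\max}(U)^2$ and $\sigma_{\min}(U)^2$. Here I would invoke a standard non-asymptotic singular-value bound for random matrices with independent columns uniform on the sphere (for instance, Corollary 5.35 / Theorem 5.39 of Vershynin's \emph{Introduction to the non-asymptotic analysis of random matrices}, or the equivalent statement for isotropic log-concave columns): for $d \geq C_0^2 n$, with probability at least $1 - 2e^{-n}$,
\begin{equation*}
    1 - C\sqrt{\tfrac{n}{d}} \;\leq\; \sigma_{\min}(U) \;\leq\; \sigma_{\max}(U) \;\leq\; 1 + C\sqrt{\tfrac{n}{d}}.
\end{equation*}
The cleanest route is to (i) couple each $u_i$ to a Gaussian $g_i \sim \mathcal{N}(0, I_d/d)$ via $u_i = g_i/\|g_i\|$, (ii) apply the standard Gaussian singular-value bound ($\sigma_{\max}, \sigma_{\min}$ of $G/\sqrt{d}$ lie in $[1 \pm C\sqrt{n/d}]$ with probability $\geq 1 - 2e^{-n}$ by Gordon's inequality plus Gaussian concentration), and (iii) use the $\chi^2$-concentration $\|g_i\| = \sqrt{d}\,(1 + O(\sqrt{n/d}))$ uniformly over $i \in [n]$ with probability $\geq 1 - e^{-n}$. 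Combining via a union bound gives the claim on $\sigma(U)$, and squaring yields that every eigenvalue of $M$ lies in $[(1 - C\sqrt{n/d})^2, (1 + C\sqrt{n/d})^2] \subseteq [1 - C_{\ref{lemma:concentration}}\sqrt{n/d}, 1 + C_{\ref{lemma:concentration}}\sqrt{n/d}]$ for a suitable $C_{\ref{lemma:concentration}}$ (absorbing the quadratic correction into the linear term since $n/d \leq 1/C_0^2$).

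For part (2), condition on the event above so that $M$ is invertible. Since $\Xi$ then has full column rank $n$, the minimum-norm solution $v$ to $\Xi^T v = c$ exists and equals $v = \Xi(\Xi^T\Xi)^{-1} c$ (the projection onto the column space of $\Xi$). Its squared norm is
\begin{equation*}
    \|v\|_2^2 \;=\; c^T (\Xi^T\Xi)^{-1} c \;=\; \tfrac{1}{\sigma^2 d}\, c^T M^{-1} c.
\end{equation*}
Since the eigenvalues of $M^{-1}$ lie in $[1/(1 + C_{\ref{lemma:concentration}}\sqrt{n/d}), 1/(1 - C_{\ref{lemma:concentration}}\sqrt{n/d})]$, the Rayleigh quotient $c^T M^{-1} c / \|c\|_2^2$ lies in the same interval, giving exactly the stated bound on $\|v\|_2^2$.

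The only subtlety is the final sentence allowing columns to live on a sphere of dimension $d - a$ with $a \in \{1, 2\}$ (with norm $\sqrt{d-a}\,\sigma$). This costs essentially nothing: restricting to the $(d-a)$-dimensional subspace, apply the argument with $d$ replaced by $d-a$, and note that $\sqrt{n/(d-a)} \leq \sqrt{n/d}\cdot(1 + O(a/d))$, which is absorbed into $C_{\ref{lemma:concentration}}$ for $d \geq C_0^2 n$ with $C_0$ large enough. The main (and essentially only) obstacle is writing down the correct constants and invoking the right off-the-shelf singular-value concentration statement, since everything else is elementary linear algebra; the argument is not delicate given the $d \geq C_{\ref{lemma:concentration}}^2 n$ overparameterization assumption, and no further randomness arguments are needed.
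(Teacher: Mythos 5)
Your proposal is correct and follows essentially the same route as the paper: both couple the uniform-on-the-sphere columns to normalized Gaussians, invoke the standard Gaussian singular-value (or Gram-matrix) concentration plus per-column norm concentration from Vershynin, union bound, and then derive part (2) from the explicit min-norm formula $v = \Xi(\Xi^T\Xi)^{-1}c$ together with a Rayleigh-quotient bound on $M^{-1}$. The treatment of the $(d-a)$-dimensional variant is also the same.
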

\begin{proof}
We prove this using a similar result for matrices with i.i.d. entries. Observe that $\Xi = ZD$, where $Z \in \mathbb{R}^{d \times n}$ is a random matrix with i.i.d. normal entries from $\mathcal{N}(0, \sigma^2)$, and $D \in \mathbb{R}^{n \times n}$ is the diagonal matrix with $D_{jj} = \frac{\sigma\sqrt{d}}{\|z_j\|_2}$, where $z_j$ is the $j$th column of $Z$.

Thus 
\begin{align}
    \left\|\frac{1}{\sigma^2d}\Xi^T\Xi - I\right\|_2 &=  \left\|\frac{1}{\sigma^2d}DZ^TZD - I\right\|_2\\
    &\leq \|D\|_2^2\left\|\frac{1}{\sigma^2d}Z^TZ - D^{-2}\right\|_2\\
    &\leq  \|D\|_2^2\left(\left\|\frac{1}{\sigma^2d}Z^TZ - I\right\|_2 + \left\|D^{-2} - I\right\|_2\right)
\end{align}
\citet{vershynin2018high} (see Theorem 3.1.1 and the discussion thereafter), states that for any $j$, for some universal constant $C$, with probability $1 - e^{-2n}$, $|\frac{\|z_j\|^2}{\sigma^2 d} - 1| \leq C\sqrt{\frac{n}{d}}$. \citet{vershynin2018high} (Ex. 4.7.3) also guarantees that with probability $1 - 2e^{-n}$, $\left\|\frac{1}{\sigma^2d}Z^TZ - I\right\|_2 \leq C\sqrt{\frac{n}{d}}$. Unioning over the the first event occurring for all $j$, and the matrix concentration even happening, we have (for some new constant $C$), with probability 
\begin{align}
    1 - 2e^{-n} - ne^{-2n} \geq 1 - 3e^{-n},
\end{align}
$\left\|\frac{1}{\sigma^2d}\Xi^T\Xi - I\right\|_2 \leq C\sqrt{\frac{n}{d}}$.

For the second conclusion, it well-known that the min-norm solution to overparameterized linear regression problem $X^Ta = b$ satisfies $\argmin_{a : X^Ta = b} \|a\|_2 = X(X^TX)^{\dagger}b$, where $\dagger$ denotes the pseudo-inverse (see eg. \citet{bartlett2020benign}, page 5). Observe that $\Xi^T\Xi$ is invertible since $d > C_{\ref{lemma:concentration}}^2n$ and thus $C_{\ref{lemma:concentration}}\sqrt{\frac{n}{d}} < 1$. It follows by that we can solve explicitly for $v$, yielding $v = \Xi(\Xi^T\Xi)^{-1}c$. Hence,
\begin{align}
    \|v\|_2^2 = c^T(\Xi^T\Xi)^{-1}\Xi^T\Xi(\Xi^T\Xi)^{-1}c = c^T(\Xi^T\Xi)^{-1}c \in \frac{\|c\|_2^2}{\sigma^2d}\left[\frac{1}{1 + C_{\ref{lemma:concentration}}\sqrt{\frac{n}{d}}}, \frac{1}{1 - C_{\ref{lemma:concentration}}\sqrt{\frac{n}{d}}}\right].
\end{align}
To see that the lemma applies if the rows of $\Xi$ are orthogonal to some subspace, it suffices to assume the subspace is spanned by $e_1$ or $e_1$ and $e_2$. Thus one can view the matrix $\Xi$ as being in $\mathbb{R}^{d - a \times n}$, and the conclusion follows by replacing $d$ with $d - a$. The constant $C_{\ref{lemma:concentration}}$ can be adjusted so that the conclusion written holds for $a \in \{1, 2\}$.
\end{proof}

\begin{lemma}\label{lem:activation}
Let $\phi(x) = \max(0, x)^h$ for $h \in (1, 2)$. For any values $s, t$, we have $\phi(s + t) \leq (\phi(s) + \phi(t))2^{h-1}$
\end{lemma}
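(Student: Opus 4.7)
The plan is to case-split on the signs of $s, t$ and their sum, and in the only non-trivial case invoke convexity of $x \mapsto x^h$ on the non-negative reals (which holds because $h \geq 1$).

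First I would dispatch the easy cases. If $s + t \leq 0$, then $\phi(s+t) = 0$ and the right-hand side is non-negative, so the inequality is immediate. Otherwise $s + t > 0$, and I split on how many of $s, t$ are non-negative. If exactly one of them, say $t$, is negative, then $\phi(t) = 0$, and moreover $s > -t > 0$ so $0 < s + t < s$. Since $\phi$ is non-decreasing on $[0, \infty)$, this gives $\phi(s+t) \leq \phi(s) \leq 2^{h-1}\phi(s) = 2^{h-1}(\phi(s) + \phi(t))$, using $2^{h-1} \geq 1$ (since $h \geq 1$).

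The remaining case is $s, t \geq 0$. Here the inequality reduces to the elementary $(s+t)^h \leq 2^{h-1}(s^h + t^h)$. This is exactly Jensen's inequality applied to the convex function $x \mapsto x^h$ on $[0, \infty)$ (convex because $h \geq 1$):
\begin{equation*}
\left(\tfrac{s+t}{2}\right)^h \leq \tfrac{1}{2}s^h + \tfrac{1}{2}t^h,
\end{equation*}
which upon multiplying both sides by $2^h$ yields exactly what is needed.

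There is no real obstacle here; the only thing to be careful about is the mixed-sign case, where one must use $2^{h-1} \geq 1$ to absorb the missing $\phi(t) = 0$ term, and monotonicity of $\phi$ on $[0, \infty)$. The hypothesis $h < 2$ is never used; the proof works for all $h \geq 1$.
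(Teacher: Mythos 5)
Your proof is correct, and the only essential ingredient in both your argument and the paper's is convexity of the power function for $h \geq 1$; you are also right that $h < 2$ is never needed. The packaging differs slightly: the paper first uses $h$-homogeneity of $\phi$ to reduce to showing $\phi(a) + \phi(b) \geq 2^{1-h}$ for all real $a + b = 1$, and then minimizes this over the whole line via convexity and the KKT condition, so a single global argument covers the mixed-sign case automatically (though the paper silently skips the degenerate case $s + t \leq 0$, where it divides by $\phi(s+t) = 0$). You instead case-split on signs, handle the mixed-sign case by monotonicity of $\phi$ on $[0,\infty)$ together with $2^{h-1} \geq 1$, and apply Jensen's inequality to $x \mapsto x^h$ only on the non-negative reals. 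Your route is more elementary and explicitly covers every case; the paper's is more compact. Both are complete and correct.
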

\begin{proof}
By homogeneity of $\phi$,
\begin{align}
    \frac{\phi(s) + \phi(t)}{\phi(s + t)} = \frac{\phi(\frac{s}{s + t}) + \phi(\frac{t}{s + t})}{\phi(1)},
\end{align}
so we need to show that $\phi(a) + \phi(b) \geq 2^{1 - h}$ for any $a + b = 1$. Since $\phi$ is convex, subject to this linear constraint, by the KKT condition, the minimum is attained when $\phi'(a) = \phi'(b)$ which occurs when $a = b = \frac{1}{2}$. (Note, we cannot have $\phi'(a) = \phi'(b) = 0$, since at least one of $a$ and $b$ must be positive.
\end{proof}

\section{Proofs for Linear Problem}\label{sec:linear_proofs}

Throughout this section, since we are only concerned with the linear problem, we will abbreviate $\Omega = \omegal$, $\kphen = \kphenl$, and $\kgen = \kgenl$.

\subsection{Technical Lemmas}

Throughout the following subsection, we assume $\mathcal{D}_{\mu, \sigma, d}$ is fixed. For a vector $w \in \mathbb{R}^d$, we let $u = \mu \mu^Tw$ and $v = (I - \mu\mu^T)w$, such that $w = u + v$.

Our training data is given by the matrix $(X, y)$, where $X \in \mathbb{R}^{d \times n}$ and $y \in \mathbb{R}^n$, where $x_j = Xe_j$, and $(x_j, y_j)$ denotes the $j$th training sample. Recall that we have $x_j = z_j + \xi_j$, where $z_j = \mu y_j$, $y_j \sim \on{Uniform}(-1, 1)$, and $\xi_j$ is uniformly distributed on the sphere of radius $\sigma\sqrt{d - 1}$ in the $d - 1$ dimensions orthogonal to $\mu$. We use $\Xi \in \mathbb{R}^{d \times n}$ to denote the matrix with columns $\xi_j$.

\begin{lemma}\label{lem:min}
On the event that the conclusion of Lemma~\ref{lemma:concentration} holds for $\Xi$, for any $v$, 
\begin{equation}
     \min_j y_jv^T\xi_j \leq \frac{\|v\|_2\sqrt{\left(1 + C_{\ref{lemma:concentration}}\sqrt{\frac{n}{d}}\right)}}{\sqrt{\kappa}}
\end{equation}
\end{lemma}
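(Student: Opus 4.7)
The plan is to bound the minimum by the average and then apply Cauchy--Schwarz together with the spectral bound on $\Xi^T \Xi$ from Lemma~\ref{lemma:concentration}.

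First, I would use the elementary fact that the minimum of a finite sequence is at most its mean:
\begin{align*}
\min_j y_j v^T \xi_j \;\leq\; \frac{1}{n}\sum_{j=1}^n y_j v^T \xi_j \;=\; \frac{1}{n} v^T \Xi y.
\end{align*}
Next, by Cauchy--Schwarz, $v^T \Xi y \leq \|v\|_2 \, \|\Xi y\|_2$, so it suffices to control $\|\Xi y\|_2$. Since $\|y\|_2^2 = n$, I would expand
\begin{align*}
\|\Xi y\|_2^2 \;=\; y^T \Xi^T \Xi y \;\leq\; \|\Xi^T \Xi\|_2 \cdot \|y\|_2^2.
\end{align*}
On the event that the conclusion of Lemma~\ref{lemma:concentration} holds for $\Xi$, the operator-norm bound $\|\tfrac{1}{\sigma^2 d} \Xi^T\Xi - I\|_2 \leq C_{\ref{lemma:concentration}}\sqrt{n/d}$ yields $\|\Xi^T\Xi\|_2 \leq \sigma^2 d \bigl(1 + C_{\ref{lemma:concentration}}\sqrt{n/d}\bigr)$, so $\|\Xi y\|_2 \leq \sigma\sqrt{d n}\sqrt{1 + C_{\ref{lemma:concentration}}\sqrt{n/d}}$.

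Combining these pieces gives
\begin{align*}
\min_j y_j v^T\xi_j \;\leq\; \frac{\|v\|_2 \, \sigma \sqrt{d n}\sqrt{1 + C_{\ref{lemma:concentration}}\sqrt{n/d}}}{n} \;=\; \|v\|_2 \sigma \sqrt{\tfrac{d}{n}}\sqrt{1 + C_{\ref{lemma:concentration}}\sqrt{n/d}},
\end{align*}
and substituting $\sigma\sqrt{d/n} = 1/\sqrt{\kappa}$ (from the definition $\kappa = n/(d\sigma^2)$) gives exactly the stated bound. One minor subtlety worth noting: the statement is non-trivial only when $\min_j y_j v^T \xi_j$ is positive, since the right-hand side is non-negative; if the minimum is negative, the bound holds automatically. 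I do not anticipate any real obstacle here — the proof is essentially a one-line consequence of Lemma~\ref{lemma:concentration} plus Cauchy--Schwarz, with no need for truncation or a union bound since we only require concentration of a single quadratic form.
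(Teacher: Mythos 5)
Your proof is correct, and it takes a genuinely different route from the paper's. The paper argues via the second conclusion of Lemma~\ref{lemma:concentration}: writing $\gamma := \min_j y_j v^T\xi_j$ and assuming $\gamma > 0$, it notes that $\|v\|_2$ is at least the norm of the minimum-norm vector $w$ with $w^T\Xi = c$ for the induced constraint vector $c$ satisfying $\|c\|_2^2 \geq n\gamma^2$, and then invokes the closed-form lower bound $\|w\|_2^2 \geq \frac{\|c\|_2^2}{\sigma^2 d}\cdot\frac{1}{1 + C_{\ref{lemma:concentration}}\sqrt{n/d}}$ to get $\|v\|_2^2 \geq \frac{\kappa}{1 + C_{\ref{lemma:concentration}}\sqrt{n/d}}\gamma^2$. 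You instead use only the first conclusion (the operator-norm bound on $\frac{1}{\sigma^2 d}\Xi^T\Xi - I$), combined with the elementary chain $\min \leq \text{mean}$ and Cauchy--Schwarz applied to $v^T\Xi y$. Both arguments land on the identical constant, and your averaging step loses nothing here because the bound is saturated precisely when all the $y_j v^T\xi_j$ are equal, which is the regime relevant to the max-margin analysis. Your route is arguably more elementary (it avoids the pseudo-inverse/min-norm-interpolant machinery entirely), while the paper's route has the side benefit of directly exhibiting the extremal $v$ as a min-norm interpolant, a characterization it reuses when constructing the lower bound on $\gamma^*(S)$ in Lemma~\ref{linear:tech_lemma}. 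Your observation about the sign of the minimum is also fine but not even needed: the chain $\min_j y_j v^T\xi_j \leq \frac{1}{n}v^T\Xi y \leq \frac{1}{n}\|v\|_2\|\Xi y\|_2$ holds regardless of signs since the final quantity is nonnegative.
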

\begin{proof}
Let $\gamma := \min_j y_jv^T\xi_j$ and suppose $\gamma > 0$; otherwise the lemma is immediate. Observe that $\|v\|_2$ must be larger than the norm of least norm vector (call it $w$) achieving $w^T\xi_jy_j \geq \gamma$ for all $j$, which implies that $|w^T\xi_j| \geq \gamma$ for all $j$. Let $c := w^T\Xi$, such that $\|c\|^2 \geq n\gamma^2$. By Lemma~\ref{lemma:concentration}, on this event of the lemma, 

\begin{align}
    \|v\|^2 \geq \|w\|^2 \geq \frac{1}{1 + C_{\ref{lemma:concentration}}\sqrt{\frac{n}{d}}}\frac{\|c\|^2}{d\sigma^2} \geq \frac{1}{1 + C_{\ref{lemma:concentration}}\sqrt{\frac{n}{d}}}\frac{n\gamma^2}{d\sigma^2} \geq \frac{\kappa}{1 + C_{\ref{lemma:concentration}}\sqrt{\frac{n}{d}}}\left(\min_j y_jv^T\xi_j\right)^2.
\end{align}
The conclusion follows.
\end{proof}

Let $\kgen := 0$, and let $\kphen := 1$.

The following is our main technical lemma. It shows that if the margin is near-optimal and $\kappa > \kgen$, then $w$ must have a large component in the $\mu$ direction. If additionally $\kappa < \kphen$, then the spurious component must explain more than half of the margin on every data point, or even more if $\kappa$ is very small.
\begin{lemma}[Main Technical Lemma]\label{linear:tech_lemma}
For any $\kappa > \kgen$, there exist a universal constant $c$ such that if $\epsilon \leq \frac{1}{c}\min\left(\kappa, \frac{1}{\kappa}\right)$ and $\sqrt{\frac{n}{d}} \leq \frac{1}{c}\min\left(\kappa, \frac{1}{\kappa}\right)$, with probability $1 - 3e^{-n}$ over $(X, y)$, for any $(1 - \epsilon)$-margin maximizing solution $w$ with $\|w\| = 1$ we have
\begin{align}
    \frac{w^T\mu}{\|v\|_2\sqrt{\kappa}} \geq \frac{2\sqrt{2}}{3}.
\end{align}
If additionally $\kappa < \kphen$ and $\epsilon \leq \frac{1}{c}\min\left(\kappa, \frac{1}{\kappa}, (\kphen - \kappa)^2\right)$ and $\sqrt{\frac{n}{d}} \leq \frac{1}{c}\min\left(\kappa, \frac{1}{\kappa}, (\kphen - \kappa)^2\right)$, then for every $j$,  
\begin{align}
    y_jv^T\xi_j \geq  \max\left(1 + \frac{1}{c}, \frac{1}{2\kappa}\right)w^T\mu =  \max\left(1 + \frac{1}{c}, \frac{1}{2\kappa}\right)y_jw^Tz_j.
\end{align}
\end{lemma}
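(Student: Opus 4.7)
My plan is a near-Cauchy--Schwarz argument that reduces both conclusions to a simple two-variable optimization. Decompose $w = u + v$ as in the statement and set $a := w^T\mu = \|u\|_2$ and $b := \|v\|_2$, so that $a^2 + b^2 = 1$. Because $v \perp \mu$, on each training example we have $y_j w^T x_j = a + y_j v^T \xi_j$, hence $\gamma(w, S) = a + \eta$ where $\eta := \min_j y_j v^T \xi_j$. On the high-probability event of Lemma~\ref{lemma:concentration}, Lemma~\ref{lem:min} yields the uniform upper bound $\eta \leq b\sqrt{(1 + C_{\ref{lemma:concentration}}\sqrt{n/d})/\kappa}$; a short contradiction argument against a matching lower bound on $\gamma^*(S)$ then shows that any near-max-margin $w$ must have $a \geq 0$. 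For the lower bound on $\gamma^*(S)$, apply Lemma~\ref{lemma:concentration}(2) with target vector $y \in \mathbb{R}^n$ to produce $v^\star := \Xi(\Xi^T\Xi)^{-1}y$ satisfying $y_j \xi_j^T v^\star = 1$ for all $j$ and $\|v^\star\|_2^2 \leq \kappa/(1 - C_{\ref{lemma:concentration}}\sqrt{n/d})$. The mixed witness $\hat w := \alpha \mu + \beta\, v^\star/\|v^\star\|_2$ with $\alpha^2 + \beta^2 = 1$ achieves margin $\alpha + \beta/\|v^\star\|_2$ on every sample; optimizing over $(\alpha, \beta)$ gives $\gamma^*(S) \geq \sqrt{1 + (1-O(\sqrt{n/d}))/\kappa}$. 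Combining these with $\gamma(w,S) \geq (1-\epsilon)\gamma^*(S)$ and absorbing the concentration errors into $\epsilon' := \epsilon + O(\sqrt{n/d})$ produces the key two-variable inequality
\[ a + b/\sqrt{\kappa} \;\geq\; (1-\epsilon')\sqrt{1 + 1/\kappa}, \qquad a^2 + b^2 \leq 1, \quad a, b \geq 0. \]

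For the first conclusion I parametrize $(a, b) = (r\cos\theta, r\sin\theta)$ with $r \in [0, 1]$, $\theta \in [0, \pi/2]$, and use the identity $\cos\theta + \sin\theta/\sqrt{\kappa} = \sqrt{1+1/\kappa}\,\cos(\theta - \theta^\star)$, where $\tan\theta^\star = 1/\sqrt{\kappa}$. The inequality becomes $r\cos(\theta - \theta^\star) \geq 1 - \epsilon'$, forcing both $r \geq 1-\epsilon'$ and $|\theta - \theta^\star| \leq \sqrt{2\epsilon'}$. A first-order Taylor expansion of $\tan$ at $\theta^\star$ then gives $\tan\theta \leq 1/\sqrt{\kappa} + O(\sqrt{\epsilon'}(1 + 1/\kappa))$, so that $\sqrt{\kappa}\cdot b/a \leq 1 + O(\sqrt{\epsilon'}(\sqrt{\kappa} + 1/\sqrt{\kappa}))$. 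The hypothesis $\epsilon, \sqrt{n/d} \leq (1/c)\min(\kappa, 1/\kappa)$ bounds this error by $O(1/\sqrt{c})$; choosing $c$ large enough so that it is at most $\tfrac{3\sqrt{2}}{4}-1$ yields $a/(b\sqrt{\kappa}) \geq 2\sqrt{2}/3$.

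For the second conclusion, the same parametrization yields the tight upper bound $a \leq \cos(\theta^\star - \sqrt{2\epsilon'}) \leq \sqrt{\kappa/(\kappa+1)} + \sqrt{2\epsilon'/(\kappa+1)}$. Writing $\eta \geq (1-\epsilon')\sqrt{(\kappa+1)/\kappa} - a$ and clearing denominators, the inequality $\eta \geq (1 + 1/c)\,a$ reduces to $1 - (1+1/c)\kappa \geq \epsilon'(\kappa+1) + (2 + 1/c)\sqrt{2\epsilon'\kappa}$; the leading gap is exactly $\kphen - \kappa = 1 - \kappa$, and the hypothesis $\epsilon \leq (1/c)(\kphen - \kappa)^2$ guarantees this gap dominates the error. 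The stronger lower bound $\eta \geq a/(2\kappa)$ (relevant when $\kappa \leq 1/2$) follows from the same computation after clearing denominators: the leading expression $2(\kappa+1) - (2\kappa + 1) = 1$ is a positive absolute constant, and the errors are controlled by $\epsilon \leq \kappa/c$ (since the residual $(2\kappa+1)\sqrt{2\epsilon'/\kappa}$ is $O(1/\sqrt{c})$).

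The principal technical obstacle is accounting: three small parameters---$\sqrt{n/d}$ from concentration, $\epsilon$ from near-optimality, and $\kphen - \kappa$ from the Part 2 gap---must simultaneously be bounded in terms of suitable powers of one another. The three arguments of $\min(\kappa, 1/\kappa, (\kphen-\kappa)^2)$ in the hypothesis correspond to three qualitatively different regimes (small $\kappa$, large $\kappa$, and near-threshold $\kappa$), each of which must be verified separately with the right numerical constants.
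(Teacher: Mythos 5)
Your proposal is correct in substance and follows the same skeleton as the paper's proof: the same decomposition $w=u+v$, the same use of Lemma~\ref{lem:min} to upper bound $\min_j y_j v^T\xi_j$ by $\|v\|_2\sqrt{(1+O(\sqrt{n/d}))/\kappa}$, the same mixed witness giving $\gamma^*(S)\geq (1-O(\sqrt{n/d}))\sqrt{1+1/\kappa}$, and the same reduction to the two-variable inequality $a+b/\sqrt{\kappa}\geq(1-\epsilon')\sqrt{1+1/\kappa}$ with $a^2+b^2\leq 1$. Where you diverge is the endgame: the paper squares this inequality and lower-bounds the smallest root of the resulting concave quadratic in $q/\sqrt{\kappa}$ (resp.\ $r\kappa$ for the second conclusion), whereas you pass to polar coordinates and argue that near-optimality forces the angle $\theta$ to lie within $O(\sqrt{\epsilon'})$ of the optimizer $\theta^*$, then Taylor-expand. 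The two computations extract the same information; yours is more transparent about why near-optimality pins down the \emph{direction} of $w$, and it actually recovers the constant $\tfrac{2\sqrt{2}}{3}$ of the statement, whereas the paper's root bound $-f(0)/f'(0)$ only delivers $\tfrac{\sqrt{2}}{3}$ as written. Your accounting of the three regimes of $\min(\kappa,1/\kappa,(\kphen-\kappa)^2)$ matches the paper's. Two small details to nail down in a full write-up: the Taylor step for $\tan$ needs the mean-value form with $\sec^2$ evaluated at an intermediate point, which requires $\sqrt{\epsilon'}\ll \pi/2-\theta^*\asymp\sqrt{\kappa}$ for small $\kappa$ (your hypothesis $\epsilon'\leq\kappa/c$ supplies this); and one must first rule out $a<0$, which your contradiction against the $\gamma^*$ lower bound does.

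One caveat, which your proof shares with the paper's rather than introduces: in the second conclusion your reduction $1-(1+1/c)\kappa\geq\epsilon'(\kappa+1)+(2+1/c)\sqrt{2\epsilon'\kappa}$ has left side $(\kphen-\kappa)-\kappa/c$, and the hypothesis on $\epsilon$ controls only the right side, not the $\kappa/c$ term. At the ideal point $\eta/a=1/\kappa$, so no universal $c$ makes $\eta\geq(1+1/c)a$ hold uniformly as $\kappa\to\kphen$; what one actually obtains is $\eta\geq(1+\Omega(\kphen-\kappa))a$, which is also all the paper's final display delivers. Your $\eta\geq a/(2\kappa)$ branch, by contrast, closes cleanly since the leading gap there is the absolute constant $1$.
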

\begin{proof}
We condition on the events that the outcome of Lemma~\ref{lemma:concentration} (and hence  Lemma~\ref{lem:min}) hold of $\Xi$, which occurs with probability at least $1 - 3e^{-n}$.
By Lemma~\ref{lem:min},
\begin{align}\label{eq:margin_lin}
\gamma(w, S) &= w^T\mu + \min_j y_jv^T\xi_j\\
    &\leq w^T\mu + \frac{\|v\|_2\sqrt{\left(1 + C_{\ref{lemma:concentration}}\sqrt{\frac{n}{d}}\right)}}{\sqrt{\kappa}}.
\end{align}
Further, we can lower bound the max-margin by constructing a solution $\hat{w}$ in the following way:

Let $\hat{w} = \frac{\hat{u} + \hat{v}}{\|\hat{u} + \hat{v}\|}$, where $\hat{u} = n\mu$, and $\hat{v}$ is the min-norm solution to $\hat{v}^T\Xi = (d\sigma^2)y$. Since the conclusion of  Lemma~\ref{lemma:concentration} holds, we have $\|\hat{v}\|^2 \leq \frac{n\sigma^2 d}{1 - C_{\ref{lemma:concentration}\sqrt{\frac{n}{d}}}}$. 

Thus we have 
\begin{align}\label{eq:lin_max_margin}
     \gamma^*(S) &\geq\left(1 - C_{\ref{lemma:concentration}}\sqrt{\frac{n}{d}}\right)\frac{n + d\sigma^2}{\sqrt{n^2 + n\sigma^2d}}\\
     &= \left(1 - C_{\ref{lemma:concentration}}\sqrt{\frac{n}{d}}\right)\sqrt{1 + \frac{1}{\kappa}}.
\end{align}

We prove the first conclusion of the lemma first. Putting together Equations~\ref{eq:margin_lin} and \ref{eq:lin_max_margin}, we have
\begin{align}
    w^T\mu + \frac{\|v\|_2\sqrt{\left(1 + C_{\ref{lemma:concentration}}\sqrt{\frac{n}{d}}\right)}}{\sqrt{\kappa}} \geq (1 - \epsilon)\gamma^*(S) \geq (1 - \epsilon)\sqrt{1 + \frac{1}{\kappa}}\left(1 - C\sqrt{\frac{n}{d}}\right).
\end{align}
Thus for some (different) universal constant $C$, we have 
\begin{align}
    w^T\mu + \frac{\|v\|_2}{\sqrt{\kappa}} &\geq \left(1 - \epsilon - C\sqrt{\frac{n}{d}}\right)\sqrt{1 + \frac{1}{\kappa}}\\
    &= \left(1 - \epsilon - C\sqrt{\frac{n}{d}}\right)\sqrt{1 + \frac{1}{\kappa}}\left(\sqrt{(w^T\mu)^2 + \|v\|^2}\right)
\end{align}

For the remainder of the proof, let $\epsilon' := \epsilon + C\sqrt{\frac{n}{d}}$. Letting $q = \frac{w^T\mu}{\|v\|_2}$, we have
\begin{align}
    q + \frac{1}{\sqrt{\kappa}} \geq \left(1 - \eps'\right)\sqrt{1 + \frac{1}{\kappa}}\left(\sqrt{1 + q^2}\right).
\end{align}
% Note that if $q = 0$ does not satisfy this inequality, no value of $q$ less than $0$ will.
Squaring and rearranging terms, we have
\begin{align}
    q^2\left(1 - \left(1 - \eps'\right)^2\left(1 + \frac{1}{\kappa}\right)\right) + q\left(\frac{2}{\sqrt{\kappa}}\right) + \left(\frac{1}{\kappa} - \left(1 - \eps'\right)^2\left(1 + \frac{1}{\kappa}\right)\right)\geq 0,
\end{align}
or equivalently, \begin{align}\label{eq:quad}
    a\left(\frac{q}{\sqrt{\kappa}}\right)^2 + 2\left(\frac{q}{\sqrt{\kappa}}\right) + c \geq 0,
\end{align}
where 
\begin{align}
    a &:= \left( \kappa\left(1 - \left(1 - \eps'\right)^2\right) - \left(1 - \eps'\right)^2\right)\\
    c &:= \frac{1}{\kappa} - \left(1 - \eps'\right)^2\left(1 + \frac{1}{\kappa}\right),
\end{align}
\begin{claim}
For a small enough constant $\delta$, for any $\kappa, n, d, \epsilon > 0$ such that $\sqrt{\frac{n}{d}} \leq \frac{\delta\min(\kappa, 1/\kappa)}{2C}$ and $\epsilon \leq \frac{\delta\min(\kappa, 1/\kappa)}{2}$, we have $a < 0$, and $c \leq -\frac{2\sqrt{2}}{3}$.
\end{claim}
\begin{proof}
Not that the conditions of the claim imply that $\eps' \leq \delta \min(\kappa, 1/\kappa)$. Thus for a small enough $\delta$, we have 
\begin{align}
    \left(1 - \eps'\right)^2 \geq \left(1 - \delta\min\left(\kappa, \frac{1}{\kappa}\right)\right)^2 \geq 1 - 3\delta\min\left(\kappa, \frac{1}{\kappa}\right).
\end{align}
Thus for a small enough $\delta$,
\begin{align}
     c &= \frac{1}{\kappa} - \left(1 - \eps'\right)^2\left(1 + \frac{1}{\kappa}\right)\\
     &\leq \frac{1}{\kappa} - \left(1 - 3\delta\min\left(\kappa, \frac{1}{\kappa}\right)\right)\left(1 + \frac{1}{\kappa}\right)\\
     &\leq -1 + 3\delta \min\left(\kappa, \frac{1}{\kappa}\right)\left(1 + \frac{1}{\kappa}\right)\\
     &\leq -1 + 6\delta < -\frac{2\sqrt{2}}{3},
\end{align}
and similarly,
\begin{align}
    a &= \kappa\left(1 - \left(1 - \eps'\right)^2\right) - \left(1 - \eps'\right)^2\\
    &\leq \kappa\left(3\delta\min(\kappa, 1/\kappa)\right) - \left(1 - 3\delta\min(\kappa, 1/\kappa)\right)\\
    &\leq -1 + 6\delta.
\end{align}
% If $\kappa \leq 1$, it is easy to check that this value is at most $-1 + 6\delta$, which is negative.

% Now observe that $\kappa \delta = \frac{n \delta}{d\sigma^2} \leq \frac{\delta^3}{\sigma^2}$
\end{proof}
Viewing Equation~\ref{eq:quad} as a quadratic function $f$ of $\frac{q}{\sqrt{\kappa}}$, if the claim above holds, then
\begin{align}
    \frac{q}{\sqrt{\kappa}} \geq -\frac{f(0)}{f'(0)} = \frac{-c}{2} \geq \frac{\sqrt{2}}{3}.
\end{align}
Since $q$ was defined to be $\frac{w^T\mu}{\|v\|}$, this proves the first part of the lemma for $\epsilon, \sqrt{\frac{n}{d}} \leq \frac{\min(\kappa, \frac{1}{\kappa})}{c}$ for a constant $c$ large enough.

We perform a similar argument for the second conclusion. Observe that by plugging in the contents of Equation~\ref{eq:margin_lin} into Equation~\ref{eq:lin_max_margin}, we have for some constant $C$ (whose value changes throughout this equation, but does not depend on $\kappa$),
\begin{align}
    w^T\mu + \min_j y_jv^T\xi_j &\geq \left(1 - \epsilon - C\sqrt{\frac{n}{d}}\right)\sqrt{1 + \frac{1}{\kappa}}\sqrt{(w^T\mu)^2 + \|v\|^2}\\
    &\geq \left(1 - \epsilon - C\sqrt{\frac{n}{d}}\right)\sqrt{1 + \frac{1}{\kappa}}\sqrt{(w^T\mu)^2 + \frac{\kappa\left(\min_j y_jv^T\xi_j\right)^2}{1 + C_{\ref{lemma:concentration}}\sqrt{\frac{n}{d}}}}\\
    &\geq \left(1 - \epsilon - C\sqrt{\frac{n}{d}}\right)\sqrt{1 + \frac{1}{\kappa}}\sqrt{(w^T\mu)^2 + \kappa\left(\min_j y_jv^T\xi_j\right)^2}
\end{align}
Let $r := \frac{\min_j y_jv^T\xi_j}{w^T\mu}$. Dividing through by $w^T\mu$ and squaring, we obtain:
\begin{align}
    (r + 1)^2 \geq \left(1 - \epsilon - C\sqrt{\frac{n}{d}}\right)^2\left(1 + \frac{1}{\kappa}\right)(1 + \kappa r^2),
\end{align}
Rearranging, and multiplying by $\kappa$, we have 
\begin{align}
    (r\kappa)^2\left(\frac{1}{\kappa} - \left(1 + \frac{1}{\kappa}\right)\left(1 - \epsilon - C\sqrt{\frac{n}{d}}\right)^2\right) + 2(r\kappa) + \left(\kappa - \left(\kappa + 1\right)\left(1 - \epsilon - C\sqrt{\frac{n}{d}}\right)^2\right) \geq 0
\end{align}

Now, for $\kappa < \kphen = 1$, for a small enough constant $\delta$ (independent of $\kappa$), if $\sqrt{\frac{n}{d}} \leq \frac{\delta\kappa(\kphen - \kappa)^2}{2C}$ and $\epsilon \leq \frac{\delta\kappa(\kphen - \kappa)^2}{2}$, we $q = r\kappa$, we have
\begin{align}
    q^2\left(\frac{1}{\kappa} - \left(1 + \frac{1}{\kappa}\right)\left(1 - 3\delta(\kphen - \kappa)^2\kappa)\right)\right) + 2q + \left(\kappa - \left(\kappa + 1\right)\left(1 - 3\delta(\kphen - \kappa)^2\kappa\right)\right) \geq 0,
\end{align}
so 
\begin{align}
    -q^2\left(1 - 6\delta(\kphen - \kappa)^2\right) + 2q  - (1 - 6\delta(\kphen - \kappa)^2) \geq 0,
\end{align}
or 
\begin{align}
    -q^2 + \frac{2q}{\left(1 - 6\delta(\kphen - \kappa)^2\right)}  - 1 \geq 0.
\end{align}
Let $x = 6\delta(\kphen - \kappa)^2$. The smallest root of this equation is given by 
\begin{align}
    \frac{-(2 + x) + \sqrt{(2 + x)^2 - 4}}{-2} = \frac{2 + x - \sqrt{4x - x^2}}{2} \geq 1 - \sqrt{x},
\end{align}
so $q \geq 1 - \sqrt{6\delta}(\kphen - \kappa)$.

Thus for a constant $\delta$ small enough, we have
\begin{align}
    \frac{\min_j y_jv^T\xi_j}{w^T\mu} = \frac{q}{\kappa} > \frac{1 + \sqrt{6\delta}(\kappa - 1)}{\kappa} > \max\left(1, \frac{1}{2\kappa}\right).
\end{align}

\end{proof}

The following lemma shows that the influence of the $v$ on the label of a test example is small.
\linsmall*

\begin{proof}
For any $x$, we have $f_u(x) = u^T\mu$. Now $f_v(x)$ is distributed like $\|v\|\sigma\sqrt{d - 1}\frac{X}{\sqrt{X^2 + Y}}$, where $X \sim \mathcal{N}(0, 1)$, and $Y$ is a Chi-square random variable with $d - 2$ degrees of freedom. Thus we can bound the probability that $|f_v(x)| \geq yf_u(x)$ by the probability that $\|v\|\sigma X \geq \frac{1}{2}yf_u(x)$ plus the probability that $X^2 + Y$ is smaller than $\frac{d - 1}{4}$.

Thus we have 
\begin{align}
    \Pr[|f_v(x)| \geq yf_u(x)] & \leq  \Pr[|\mathcal{N}(0, \sigma^2\|v\|^2_2)| \geq \frac{1}{2}yu^T\mu] + \Pr\left[X^2 + Y \leq \frac{d}{4}\right]\\
    &= \Pr[|\mathcal{N}(0, \sigma^2)| \geq |q|/2] + \exp(-d/8)\\
    &\leq 2e^{-\frac{q^2}{8\sigma^2}} + \exp(-d/8).
\end{align}

\end{proof}

\subsection{Proof of Main Results}
We use the results of the previous subsection to prove our main results in the linear setting.

First we prove our linear generalization result, Theorem~\ref{thm:linear_gen}, which we restate for the reader's convenience.

\lineargen*

We prove the following slightly stronger result, which implies Theorem~\ref{thm:linear_gen}, and gives the exact dependence of $c$ on $\delta$.

\begin{theorem}\label{thm:linear_full}
There exists a universal constant $c$ such that the following holds. For any $n, d, \sigma$ and $\mathcal{D} \in \Omega_{\sigma, d}^{\on{linear}}$ such that $\kappa = \frac{n}{d\sigma^2} > \kgen^{\on{linear}}$ and $\frac{d}{n} \geq c\max\left(\frac{1}{\kappa^2}, \kappa^2\right)$, with probability $1 - 3e^{-n}$ over the randomness of a training set $S \sim \mathcal{D}^n$, for any $w \in \mathbb{R}^d$ that is a $(1 - \epsilon)$-max-margin solution (as in Definition~\ref{def:maxmargin}), we have 
$ \mathcal{L}_{\mathcal{D}}(f_w) \leq e^{-\frac{n}{36d\sigma^4}} + e^{-n/8},$
where $\epsilon = \frac{1}{c}\min\left(\frac{1}{\kappa}, \kappa\right)$.
\end{theorem}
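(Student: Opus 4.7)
\textbf{Proof proposal for Theorem~\ref{thm:linear_full}.} The plan is to stitch together the two technical lemmas already set up in this section: the main technical lemma (Lemma~\ref{linear:tech_lemma}), which forces any $(1-\epsilon)$-max-margin solution to have a substantial signal component relative to its orthogonal component, and Lemma~\ref{lemma:linear_spur_small_2}, which converts such a signal-to-noise lower bound into a test-error bound on a fresh sample. Because $f_w$ is $1$-homogeneous and we only care about $\mathrm{sign}(f_w(x))$, we may normalize and assume $\|w\|_2 = 1$; then the $(1-\epsilon)$-max-margin condition is preserved. Write $w = u + v$ with $u = (\mu^T w)\mu$ and $v = (I - \mu\mu^T)w$ as in the setup preceding Lemma~\ref{linear:tech_lemma}.

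\textbf{Step 1: invoke the main technical lemma.} I will choose $\epsilon := \tfrac{1}{c}\min(\kappa,1/\kappa)$ for the universal constant $c$ appearing in Lemma~\ref{linear:tech_lemma}, and choose the constant $c$ in the statement of Theorem~\ref{thm:linear_full} large enough that the hypothesis $d/n \ge c\max(1/\kappa^2,\kappa^2)$ implies $\sqrt{n/d} \le \tfrac{1}{c}\min(\kappa,1/\kappa)$. Under these conditions, Lemma~\ref{linear:tech_lemma} applies, and with probability at least $1 - 3e^{-n}$ over $S \sim \mathcal D^n$, every $(1-\epsilon)$-max-margin solution $w$ with $\|w\|=1$ satisfies
\begin{equation*}
q \;:=\; \frac{w^T\mu}{\|v\|_2} \;\geq\; \tfrac{2\sqrt{2}}{3}\sqrt{\kappa}.
\end{equation*}

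\textbf{Step 2: convert to a test-error bound.} Condition on the event above and fix such a $w$. By the definition of $\mathcal D_{\mu,\sigma,d}$ we have $\mu^T x = y$, so $f_u(x) = (\mu^T w) y$ and thus $y f_u(x) = \mu^T w > 0$, independently of $x$. Consequently a misclassification on a fresh sample requires $|f_v(x)| \ge y f_u(x)$, so the $0/1$ test error of $f_w$ is upper bounded by $\Pr_{x \sim \mathcal D}[|f_v(x)| \ge y f_u(x)]$. This is precisely the quantity controlled by Lemma~\ref{lemma:linear_spur_small_2} with the ratio $q$ above; plugging in $q^2/\sigma^2 \geq \Omega(\kappa/\sigma^2) = \Omega(n/(d\sigma^4))$ yields
\begin{equation*}
\mathcal L_{\mathcal D}(f_w) \;\le\; 2\exp\!\left(-\tfrac{q^2}{8\sigma^2}\right) + \exp(-d/8) \;\le\; \exp\!\left(-\tfrac{n}{36\,d\sigma^4}\right) + \exp(-n/8),
\end{equation*}
where in the last step I use that the hypothesis $d/n \ge c$ gives $\exp(-d/8) \le \exp(-n/8)$ and the factor of $2$ is absorbed by loosening the exponent constant (e.g.\ from $q^2/(8\sigma^2)$ to $q^2/(9\sigma^2)$, trading $8$ for $9$ in the denominator after using $q^2\ge 8\kappa/9$, which yields the $36$ appearing in the theorem). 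The statement then follows directly: the only randomness is over $S$, captured entirely by the $1-3e^{-n}$ event of Step~1.

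\textbf{Main obstacle and remarks.} Essentially all of the technical work lives inside Lemma~\ref{linear:tech_lemma}: producing the lower bound $\gamma^*(S) \gtrsim \sqrt{1+1/\kappa}$ via an explicit combined signal-plus-memorizing classifier, together with the upper bound $\gamma(w,S) \le w^T\mu + \|v\|_2/\sqrt{\kappa}$ coming from the Gram-matrix concentration of the junk features (Lemma~\ref{lemma:concentration}). Once those are in hand, the present proof is an almost mechanical three-line calculation. The only delicate bookkeeping is the conversion from the hypothesis $d/n \ge c\max(1/\kappa^2,\kappa^2)$ of Theorem~\ref{thm:linear_full} to the two separate constraints $\epsilon \le \tfrac{1}{c}\min(\kappa,1/\kappa)$ and $\sqrt{n/d} \le \tfrac{1}{c}\min(\kappa,1/\kappa)$ needed by Lemma~\ref{linear:tech_lemma}; this is handled by squaring and enlarging $c$ by a universal factor, and is what forces the somewhat asymmetric dependence of the theorem's constants on $\delta$.
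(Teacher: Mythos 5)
Your proposal is correct and follows essentially the same route as the paper's own proof: apply Lemma~\ref{linear:tech_lemma} to get the lower bound on $q = w^T\mu/\|v\|_2$, then feed it into Lemma~\ref{lemma:linear_spur_small_2}. You even correctly use the lemma's actual conclusion $q \ge \tfrac{2\sqrt 2}{3}\sqrt{\kappa}$ (the paper's proof text contains a typo here) and flag the factor-of-$2$/constant bookkeeping that the paper itself glosses over.
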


\begin{proof}
The proof follows from combining Lemmas~\ref{linear:tech_lemma} and \ref{lemma:linear_spur_small_2}. Let $c$ be the universal constant from Lemmas~\ref{linear:tech_lemma}. For any $\kappa > \kgen$, by Lemmas~\ref{linear:tech_lemma}, for constants $\epsilon = \frac{1}{c}\min\left(\kappa, \frac{1}{\kappa}\right)$, if $\frac{d}{n} > c^2\max\left(\kappa^2, \frac{1}{\kappa^2}\right)$, with probability $1 - 3e^{-n}$, for any $w$ which is a $(1 - \eps)$ max-margin solution, we have 
\begin{align}
    \frac{w^T\mu}{\|v\|_2 \kappa} \geq \frac{\sqrt{2}}{3}.
\end{align}
Now appealing to Lemma~\ref{lemma:linear_spur_small_2}, this means that  $\mathcal{L}_{\mathcal{D}}(f_w) \leq 2e^{-\frac{\kappa}{36\sigma^2}} + \exp(-d/8) \leq 2e^{-\frac{\kappa}{36\sigma^2}} + \exp(-n/8).$
\end{proof}

To prove our impossibility results, for any $\md = \md_{\mu, d, \sigma} \in \Omega$, we define the following mappings $\psi$ and $\bar{\psi}$. For $(x, y) \in \mathbb{R}^d \times \{-1, 1\}$, where $x = \mu y + \xi$, define $\psi((x, y)) = (-\mu y + \xi, y)$. This mapping swaps the signal direction of the example, but maintains the label and junk component, and we will use it for our margin lower bound. Define $\bar{\psi}((x, y)) = (\mu y - \xi, y)$, which swaps the signal direction of the example, but maintains the label and junk component. We will use this for our UC lower bound. 

For a set of training examples $S$, let $\psi(S)$ (resp. $\bar{\psi}(S)$) be the set where each example is mapped via $\psi$ (resp. $\bar{\psi}$). Finally define $\psi(\mathcal{D}) := \md_{-\mu, d, \sigma}$ . Thus $\psi(\mathcal{D})$ is the distribution with the opposite signal direction, and $\bar{\psi}(\mathcal{D}) = \mc{D}$. It is immediate to check that for any classifier $w \in \mathbb{R}^d$, $\mathcal{L}_{\md}(f_w) = 1 - \mathcal{L}_{\psi(\md)}(f_w)$.  Note that $\psi$ and $\bar{\psi}$ implicitly depend on $\md$ through the parameter $\mu$. If it is not clear from context that we are speaking about a specific $\md$, we will use $\psi_{\md}$ or $\bar{\psi}_{\md}$ to denote the mapping associated with $\md$.

We now prove the the linear part of Proposition~\ref{thm:one_sided_impossible_linear}. We restate a version which just includes the linear part, and gives more precise dependence of $c$ and $\epsilon$ on the distance between $\kappa$ and the boundaries $\kphen$ and $\kgen$.

\begin{proposition}[UC Bounds are Vacuous for Linear Problem (From Proposition~\ref{thm:one_sided_impossible_linear})]
There exists a universal constant $c$ for which the following holds. For any $n, d, \sigma$ and $\mc{D} \in \Omega_{\sigma, d}^{\on{linear}}$ such that $\kgenl \leq \kappa \leq \kphenl$, $\frac{d}{n} \geq \frac{c}{\kappa^2(\kphen - \kappa)^4}$, the following holds for any $\eps \leq \frac{\kappa(\kphen - \kappa)^2}{c}$. Let $\mathcal{A}$ be any algorithm that outputs $w \in \mathbb{R}^d$ which is a $(1-\epsilon)$-max-margin solution for any $S \in (\mathbb{R}^d \times \{1,-1\})^n$. Let $\mathcal{H}$ be any hypothesis class that is useful for $\mathcal{A}$ on $\md$ (as in Definition~\ref{def:useful}). Suppose that $\epsilon_{\on{unif}}$ is a uniform convergence bound for $\mc{D}$ and $\mc{H}$, that is, $$\Pr_{S \sim \mathcal{D}^n}[ \sup_{h \in \mathcal{H}} |\mathcal{L}_{\mathcal{D}}(h) - \mathcal{L}_S(h)| \geq \epsilon_{\on{unif}} ] \leq 1/4.$$
Then $\epsilon_{\on{unif}} \geq 1 - e^{-\frac{n}{36d\sigma^2}} - e^{-n/8}.$
\end{proposition}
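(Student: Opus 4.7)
The plan is to exploit the signal-flipping map $\psi$ defined just above the statement, together with the generalization result Theorem~\ref{thm:linear_full} applied to the ``opposite'' distribution $\psi(\mathcal{D}) = \mathcal{D}_{-\mu,\sigma,d}$, which also lies in $\Omega_{\sigma,d}^{\on{linear}}$. Given $S \sim \mathcal{D}^n$, I will construct a ``bad'' hypothesis $w' := \mathcal{A}(\psi(S))$; since $\psi(S)$ has law $\psi(\mathcal{D})^n$ and $\mathcal{A}$ outputs a $(1-\eps)$-max-margin solution on any input, $w'$ is a $(1-\eps)$-max-margin classifier for a training set drawn from $\psi(\mathcal{D})^n$. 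The strategy is to establish, on a single positive-probability event over $S$, that (i) $\mathcal{L}_S(f_{w'}) = 0$, (ii) $\mathcal{L}_\mathcal{D}(f_{w'})$ is near $1$, and (iii) $w' \in \mathcal{H}$; together with the UC assumption these force $\eps_{\on{unif}} \geq \mathcal{L}_\mathcal{D}(f_{w'}) - \mathcal{L}_S(f_{w'}) \geq 1 - o(1)$.

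For (ii), I will apply Theorem~\ref{thm:linear_full} to $\psi(\mathcal{D})$: its hypotheses ($\kappa > \kgenl$, $\eps \leq \tfrac{1}{c}\min(\kappa, 1/\kappa)$, $d/n \geq c\max(\kappa^2, 1/\kappa^2)$) are all implied by those of the present proposition. This gives $\mathcal{L}_{\psi(\mathcal{D})}(f_{w'}) \leq e^{-n/(36d\sigma^4)} + e^{-n/8}$ with probability at least $1 - 3e^{-n}$ over $\psi(S)$. The identity $\mathcal{L}_\mathcal{D}(f_w) = 1 - \mathcal{L}_{\psi(\mathcal{D})}(f_w)$, noted in the paragraph above the proposition and following from the explicit form of the two distributions, then yields $\mathcal{L}_\mathcal{D}(f_{w'}) \geq 1 - e^{-n/(36d\sigma^4)} - e^{-n/8}$.

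The main obstacle, and the heart of the proof, is (i): although $w'$ was fit to $\psi(S)$ whose signal component is opposite to that of $S$, it must still correctly label $S$. Here I will invoke Lemma~\ref{linear:tech_lemma} applied to $\psi(S)$ with signal direction $-\mu$, noting that the junk vectors $\xi_j$ of $\psi(S)$ coincide with those of $S$. The lemma's hypotheses for the $\kgenl < \kappa < \kphenl$ regime require $\eps \leq \kappa(\kphenl-\kappa)^2/c$ and $\sqrt{n/d} \leq \kappa(\kphenl-\kappa)^2/c$, which match the proposition's bounds. Decomposing $w' = u' + v'$ orthogonally to $\mu$, its first conclusion gives $\alpha := (w')^T(-\mu) > 0$ and its second gives $y_j (v')^T \xi_j \geq (1 + 1/c)\alpha$ for every $j$. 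Thus on an original example $(\mu y_j + \xi_j, y_j) \in S$, the training margin of $w'$ equals
\begin{equation*}
(w')^T\mu + y_j (v')^T \xi_j \;\geq\; -\alpha + (1 + 1/c)\alpha \;=\; \alpha/c \;>\; 0,
\end{equation*}
so $\mathcal{L}_S(f_{w'}) = 0$ on this event.

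For (iii), usefulness of $\mathcal{H}$ for $\mathcal{A}$ on $\psi(\mathcal{D}) \in \Omega$ (invoking the slightly strengthened assumption noted in the remark after Theorem~\ref{thm:one_sided_impossible_linear}, that $\mathcal{H}$ is useful on at least $\{\mathcal{D}, \psi(\mathcal{D})\}$) gives $\Pr[w' \in \mathcal{H}] \geq 3/4$. A union bound over the UC event (probability $\geq 3/4$ over $S$), the usefulness event (probability $\geq 3/4$), and the generalization plus structural events from Theorem~\ref{thm:linear_full} and Lemma~\ref{linear:tech_lemma} (joint probability $\geq 1 - 3e^{-n}$) leaves a strictly positive probability event on which all four hold. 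On that event, $w' \in \mathcal{H}$ and
\begin{equation*}
\eps_{\on{unif}} \;\geq\; \mathcal{L}_\mathcal{D}(f_{w'}) - \mathcal{L}_S(f_{w'}) \;\geq\; 1 - e^{-n/(36d\sigma^4)} - e^{-n/8},
\end{equation*}
which matches the stated conclusion (modulo an apparent $\sigma^2$ vs.\ $\sigma^4$ typo in the exponent of the theorem as restated).
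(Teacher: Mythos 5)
Your construction is sound but it is not the paper's construction, and the difference matters for the hypotheses. The paper proves this (two-sided) appendix statement with the \emph{junk-flipping} map $\bar\psi((x,y)) = (\mu y - \xi, y)$: it keeps the classifier $f_w = \mathcal{A}(S)$ trained on the original sample $S$, observes that $\bar\psi(S)$ is again distributed as $\mathcal{D}^n$ (since $\bar\psi(\mathcal{D}) = \mathcal{D}$), and uses the second conclusion of Lemma~\ref{linear:tech_lemma} to show that $f_w$ misclassifies \emph{every} point of $\bar\psi(S)$, so $\mathcal{L}_{\bar\psi(S)}(f_w) = 1$ while $\mathcal{L}_{\mathcal{D}}(f_w)$ is tiny. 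The violation there is $\mathcal{L}_{\bar\psi(S)}(f_w) - \mathcal{L}_{\mathcal{D}}(f_w) \approx 1$, i.e.\ training loss far exceeds test loss; this breaks the two-sided bound (note the absolute value in this version of the statement) and requires usefulness of $\mathcal{H}$ only for the single distribution $\mathcal{D}$, exactly as assumed. You instead use the signal-flipping map $\psi$ and the classifier $w' = \mathcal{A}(\psi(S))$, obtaining the opposite-signed violation $\mathcal{L}_{\mathcal{D}}(f_{w'}) - \mathcal{L}_S(f_{w'}) \approx 1$. That yields the stronger one-sided violation advertised in the main-text Theorem~\ref{thm:one_sided_impossible_linear}, and it is in fact the route the paper takes for the XOR case and for the margin impossibility results (Theorem~\ref{thm:margin_impossible_linear}); but it costs an extra hypothesis. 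To place $w' = \mathcal{A}(\psi(S))$ in $\mathcal{H}$ you need usefulness on $\psi(\mathcal{D}) = \mathcal{D}_{-\mu,\sigma,d}$, which the statement as written does not grant (it assumes usefulness on $\mathcal{D}$ alone). You flag this honestly, but as a proof of the literal proposition it leaves step (iii) unsupported; the paper's choice of $\bar\psi$ exists precisely to avoid this.

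The remaining steps check out: the hypotheses of Theorem~\ref{thm:linear_full} and of Lemma~\ref{linear:tech_lemma} (in its $\kappa < \kphenl$ form) are implied by $\eps \leq \kappa(\kphenl-\kappa)^2/c$ and $d/n \geq c/(\kappa^2(\kphenl-\kappa)^4)$ because $\kappa \leq \kphenl = 1$; the margin computation $-\alpha + (1+1/c)\alpha = \alpha/c > 0$ on each original point correctly gives $\mathcal{L}_S(f_{w'}) = 0$; the union bound leaves positive probability; and the $\sigma^2$ versus $\sigma^4$ mismatch in the exponent is an inconsistency in the paper's own statements, not in your argument.
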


% \linearphen*

\begin{proof}
Let $T_{\md} \subset 2^{(\mathbb{R}^d \times \{-1, 1\})^n}$ be the set of training sets $S$ on which the conclusion of Lemma~\ref{linear:tech_lemma} holds for $S$. Thus $\Pr_{S \sim {\md}^n}[S \in T_{\md}] \geq 1 - 3e^{-n}$. Let $H \subset 2^{(\mathbb{R}^d \times \{-1, 1\})^n}$ be the set of training sets $S$ on which $\mathcal{A}(S) \in \mathcal{H}$. Thus  $\Pr_{S \sim {\md}^n}[S \in H] \geq \frac{3}{4}$.

Let $T'_{{\md}}$ be the set on which 
\begin{align}
    |\mathcal{L}_{\md}(h) - \mathcal{L}_{\phi(S)}(h)| \leq \eps_{\on{unif}} \qquad \forall h \in \mathcal{H},
\end{align}
where $\phi := \bar{\psi}_{\md}$. By assumption, $\Pr_{S \sim {\md}^n}[\phi(S) \in T'_{\md}] = \Pr_{S \sim {\md}^n}[S \in T'_{\md}] \geq \frac{3}{4}$. By a union bound, for $n \geq 2$,
\begin{align}
    \Pr_{S \sim {\md}^n}[S \in T'_{\md} \land S \in T_{\md} \land S \in H] \geq 1 - \left(1 - \frac{3}{4}\right) - \left(1 - \frac{3}{4} + 3e^{-n}\right) = \frac{1}{2} - 3e^{-n} > 0.
\end{align}
This is because the distribution of $\bar{\psi}(S)$ with $S \sim \md^n$ is the same as the distribution of $n$ samples from  $\bar{\psi}({\md}) = \md$. 

Let $S$ be any set for which the three event above hold, ie.,
\begin{align}
  S \in T'_{\md} \land S \in T_{\phi({\md})} \land S \in H. 
\end{align}
With $f_w = \mathcal{A}(S)$, we have by combining the results of Lemmas~\ref{linear:tech_lemma} and \ref{lemma:linear_spur_small_2} that $\mathcal{L}_{\md}(f_w) \leq e^{-\frac{n}{36d\sigma^2}} - e^{-n/8}$. Further, by the second conclusion of Lemma~\ref{linear:tech_lemma}, we know that $\mathcal{L}_{\phi(S)}(f_w) = 1$, since $f_w$  misclassifies every point in $S$. It follows that $\eps_{\on{unif}} \geq 1 - e^{-\frac{n}{36d\sigma^2}} - e^{-n/8}$.
\end{proof}

Finally we prove Proposition~\ref{thm:margin_impossible_linear} via a similar technique, but using the mapping $\psi$ instead of $\bar{\psi}$.

\linmargin*

We prove the following slightly stronger result, which implies Proposition~\ref{thm:margin_impossible_linear}, and gives the conditions depending on the distance between $\kappa$ and the boundaries $\kphen$ and $\kgen$.

\begin{proposition}
There exists a universal constant $c$ such that the following holds. For any $n, d, \sigma$ and $\mathcal{D} \in \Omega_{\sigma, d}^{\on{linear}}$ such that $\kappa < \kphenl$ and $\frac{d}{n} \geq \frac{c}{\kappa^2(\kphen - \kappa)^4}$, the following holds. Let $\epsilon =  \frac{\kappa(\kphen - \kappa)^2}{c}$, and let $\mathcal{A}$ be any algorithm so that $\mathcal{A}(S)$ outputs a $(1-\epsilon)$-max-margin solution $f_w$ for any $S \in (\mathbb{R}^d \times \{1,-1\})^n$.
Let $\mathcal{H}$ be any hypothesis class that is useful for $\mathcal{A}$ (as in Definition~\ref{def:useful}) on both $\mc{D}_{\mu, \sigma}^{\on{linear}}$ and $\mc{D}_{-\mu, \sigma}^{\on{linear}}$.
Suppose that there exists an polynomial margin bound of integer degree $p$: that is, there is some $G$ that satisfies for $\tilde{D} \in \{\mc{D}, \psi(\mc{D})\}$, 
$$\Pr_{S \sim \tilde{\mathcal{D}}^n}\left[\sup_{h \in \mathcal{H}} \mathcal{L}_{\tilde{\mathcal{D}}}(h) - \mathcal{L}_S(h)\geq \frac{G}{\gamma(h, S)^p}\right]  \leq \frac{1}{4}.$$
Then with probability $\frac{1}{2} - 3e^{-n}$ over $S \sim \mathcal{D}^n$, the margin bound is weak even on the max-margin solution, that is, $\frac{G}{\gamma^*(S)^p} \geq \max\left(\frac{1}{c}, 1 - e^{-\frac{\kappa}{36\sigma^2}} - e^{-n/8} - \frac{3\kappa}{c}\right)^p$, which is more than an absolute constant.
\end{proposition}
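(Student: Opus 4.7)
The plan parallels the preceding UC impossibility proof, but replaces the junk-flipping map $\bar\psi$ with the signal-flipping map $\psi$. The key tool is the second conclusion of Lemma~\ref{linear:tech_lemma}: for $\kappa < \kphenl$, every near-max-margin $w = u + v$ satisfies $y_j v^T \xi_j \ge \max(1 + 1/c,\, 1/(2\kappa))\, w^T\mu$ on every training point. Consequently, flipping the signal on each example (replacing $\mu y_j + \xi_j$ by $-\mu y_j + \xi_j$) does not flip the sign of the classifier's prediction; in fact $f_w$ classifies $\psi(S)$ with margin at least a constant fraction of $\gamma^*(S)$. On the distribution $\psi(\mc{D})$, the test loss of $f_w$ equals $1 - \mathcal{L}_{\mc{D}}(f_w)$, which Lemma~\ref{lemma:linear_spur_small_2} (combined with the first conclusion of Lemma~\ref{linear:tech_lemma}) bounds below by $1 - e^{-\kappa/(36\sigma^2)} - e^{-n/8}$. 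Applying the hypothesized margin bound to $\psi(\mc{D})$ and $\psi(S)$ then forces $G/\gamma(f_w, \psi(S))^p$ to be large, and converting back to $\gamma^*(S)$ via the margin ratio completes the proof.

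Concretely, over $S \sim \mc{D}^n$ I intersect three high-probability events: (E1) Lemma~\ref{linear:tech_lemma} applies on $S$ (probability $\ge 1 - 3e^{-n}$); (E2) $\mathcal{A}(S) \in \mathcal{H}$ (probability $\ge 3/4$ by usefulness of $\mathcal{H}$ on $\mc{D}$); and (E3) the margin bound holds uniformly on $\psi(S) \sim \psi(\mc{D})^n$ (probability $\ge 3/4$, using that $\psi$ is measure-preserving and the hypothesis for $\psi(\mc{D})$). A union bound yields probability at least $1/2 - 3e^{-n}$. On this event, let $f_w = \mathcal{A}(S)$ and set $M := \min_j y_j v^T \xi_j$, so that $\gamma(f_w, S) = w^T\mu + M$ and $\gamma(f_w, \psi(S)) = M - w^T\mu$. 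Since $\alpha := M/(w^T\mu) \ge \max(1+1/c,\, 1/(2\kappa))$ by Lemma~\ref{linear:tech_lemma}, the ratio $(\alpha-1)/(\alpha+1)$ is at least $\max(1/(2c+1),\, 1-4\kappa)$, which combined with $\gamma(f_w, S) \ge (1-\eps)\gamma^*(S)$ gives $\gamma(f_w, \psi(S)) \ge (1-\eps) \max(1/(2c+1),\, 1-4\kappa)\, \gamma^*(S)$; in particular $\mathcal{L}_{\psi(S)}(f_w) = 0$.

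Applying (E3) to $h = f_w \in \mathcal{H}$ then yields
\begin{align*}
\frac{G}{\gamma(f_w, \psi(S))^p} \;\ge\; \mathcal{L}_{\psi(\mc{D})}(f_w) \;\ge\; 1 - e^{-\kappa/(36\sigma^2)} - e^{-n/8},
\end{align*}
and multiplying by the $p$-th power of the ratio converts this into a lower bound on $G/\gamma^*(S)^p$. The main obstacle is aligning constants so that both alternatives of the $\max$ in the statement appear. The ``$1/c$'' branch follows from the constant-ratio lower bound $1/(2c+1)$ combined with the unconditional bound $\mathcal{L}_{\psi(\mc{D})}(f_w) \ge 1/2$, which holds on (E1) because $w^T\mu > 0$ and $\xi$ is symmetric about $0$; this covers the degenerate regime where Lemma~\ref{lemma:linear_spur_small_2} is vacuous. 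The ``$1 - e^{-\kappa/(36\sigma^2)} - e^{-n/8} - 3\kappa/c$'' branch follows from the near-one ratio $1-4\kappa$ together with the elementary inequality $(1-a)^p(1-b) \ge (1-a-b)^p$ valid for $a,b \in [0,1]$ and $p \ge 1$, absorbing the $\eps + 4\kappa$ slack into $3\kappa/c$ by enlarging the universal constant $c$ in the statement.
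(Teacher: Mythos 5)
Your proof is correct and takes essentially the same route as the paper's, just mirrored: the paper trains on the signal-flipped set $\psi(S)$ and evaluates the margin bound on $S$ for $\mathcal{D}$, whereas you train on $S$ and evaluate the margin bound on $\psi(S)$ for $\psi(\mathcal{D})$; both versions rest on the same three-event union bound, the second conclusion of Lemma~\ref{linear:tech_lemma}, Lemma~\ref{lemma:linear_spur_small_2}, and the ratio computation $(\alpha-1)/(\alpha+1) \geq \max(1/(2c+1),\,1-4\kappa)$. The one slip is your closing remark that the $\epsilon+4\kappa$ slack can be ``absorbed into $3\kappa/c$ by enlarging $c$'': enlarging $c$ only shrinks $3\kappa/c$, so that absorption is impossible; what the argument actually yields (as does the paper's own proof, whose final display reads $1-9\kappa$ rather than $1-3\kappa/c$) is a second branch of the form $1 - e^{-\kappa/(36\sigma^2)} - e^{-n/8} - O(\kappa)$, which still gives the advertised absolute-constant lower bound because of the $\max$ with the $1/c$ branch.
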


\begin{proof}
For any $\md \in \Omega$, let $T_{\md} \subset 2^{(\mathbb{R}^d \times \{-1, 1\})^n}$ be the set of training sets $S$ on which the conclusion of Lemma~\ref{linear:tech_lemma} holds for ${\md}$ and $S$. Thus for any ${\md} \in \Omega$, $\Pr_{S \sim {\md}^n}[S \in T_{\md}] \geq 1 - 3e^{-n}$. Let $H \subset 2^{(\mathbb{R}^d \times \{-1, 1\})^n}$ be the set of training sets $S$ on which $\mathcal{A}(S) \in \mathcal{H}$. Thus for any ${\md} \in \Omega$, $\Pr_{S \sim {\md}^n}[S \in H] \geq \frac{3}{4}$.

For any $\md \in \Omega$, let $T'_{{\md}}$ be the set on which 
\begin{align}
    \mathcal{L}_{\md}(h) \leq \mathcal{L}_S(h) + \frac{G}{\gamma(h, S)^p} \qquad \forall h \in \mathcal{H}.
\end{align}
By assumption, for any ${\md} \in \Omega$, $\Pr_{S \sim {\md}^n}[S \in T'_{\md}] \geq \frac{3}{4}$. 

Now fix any $\md = \md_{\mu, \sigma, d}\in \Omega$. By a union bound, with $\psi = \psi_{\md}$,
\begin{align}
    \Pr_{S \sim {\md}^n}[S \in T'_{\md} \land \psi(S) \in T_{\psi({\md})} \land \psi(S) \in H] \geq 1 - \left(1 - \frac{3}{4}\right) - \left(1 - \frac{3}{4} + 3e^{-n}\right) = \frac{1}{2} - 3e^{-n}.
\end{align}
This is because the distribution of $\psi(S)$ with $S \sim \md^n$ is the same as the distribution of $n$ samples from  $\psi({\md})$. 

Let $S$ be any set for which the three events above hold, ie.,
\begin{align}
  S \in T'_{\md} \land \psi(S) \in T_{\psi({\md})} \land \psi(S) \in H. 
\end{align} With $f_w = \mathcal{A}(\psi(S))$, we have by combining the results of Lemmas~\ref{linear:tech_lemma} and \ref{lemma:linear_spur_small_2} that $\mathcal{L}_{\psi({\md})}(f_w) \leq e^{-\frac{n}{9d\sigma^2}}$, and thus $\mathcal{L}_{{\md}}(f_w) \geq 1 - e^{-\frac{n}{9d\sigma^2}}$. Further, by the conclusion of Lemma~\ref{linear:tech_lemma}, we know that for $(x_j, y_j) \in S$, with $\mu$ being the direction of the distribution $\psi({\md})$, for some $C > 0$,

\begin{align}
    y_jv^T\xi_j \geq \left(1 + \frac{1}{C}, \frac{1}{2\kappa}\right) w^T\mu.
\end{align}

Observe that $\gamma^*(\psi(S)) \leq \frac{1}{1 - \epsilon}(w^T\mu + y_jv^T\xi_j)$, and thus letting $b = w^T\mu$ and $a = \min_j y_jv^T\xi_j$, we have

\begin{align}
    \frac{\gamma(f_w, S)}{\gamma^*(\psi(S))} \geq (1 - \epsilon)\frac{a - b}{a + b} > (1 - \epsilon)\frac{\max\left(1 + \frac{1}{C}, \frac{1}{2\kappa}\right) - 1}{\max\left(1 + \frac{1}{C}, \frac{1}{2\kappa}\right) + 1} \geq (1 - \epsilon)\max\left(\frac{1}{3C}, 1 - 4\kappa\right) \geq \max\left(\frac{1}{4C}, 1 - 2\epsilon - 8\kappa\right),
\end{align}
since $C$ is a constant and $\epsilon$ is sufficiently small.

It follows that for any such $S$, we must have 
\begin{align}
    G \geq (1 - e^{-\frac{n}{9d\sigma^4}})\gamma(f_w, S)^p \geq (1 - e^{-\frac{n}{9d\sigma^4}})\max\left(\frac{1}{4C}, 1 - 2\epsilon - 8\kappa\right)^p\gamma^*(\psi(S))^p.
\end{align}

Thus for the distribution $\psi({\md})$, with probability at least $\frac{1}{2} - 3e^{-n}$,  the margin bound yields a generalization guarantee no better than 
\begin{align}
    (1 - e^{-\frac{n}{9d\sigma^4}})\max\left(\frac{1}{4C}, 1 - 2\epsilon - 8\kappa\right)^p 
    % &\geq \max\left(1 - e^{-\frac{n}{9d\sigma^4}} - 2\epsilon- 16\kappa, c(1 - e^{-\frac{n}{9d\sigma^4}})\right) \\
    &\geq \max\left(\frac{1}{c}, 1 - e^{-\frac{\kappa}{9\sigma^2}} - 9\kappa\right)^p,
\end{align}
where we have assumed $c$ is a sufficiently large constant, and plugged in the assumption that $\epsilon \leq \frac{\kappa}{c}$.

\end{proof}

\subsection{Additional Linear Results: Beyond Extremal Margins}

While the extremal margin bounds we just presented are promising, in some cases natural solutions found via SGD do not achieve a near-optimal margin. For instance, this may be due to early stopping or non-convexity of the loss landscape. For example, in the linear setting, if we run SGD on the logistic loss and stop after some fixed number of iterations, then in general we get solutions of the form $\sum \lambda_i x_i$, where $\lambda_i$ is roughly in the direction of $y_i$, but there may be a lot of variance among the $\lambda_i$'s. Unless the $\lambda_i$ are well-concentrated, we will not achieve an near-extremal margin.

Fortunately, in the linear setting, if we additionally assume the data-dependent condition that the solution lies in the \em span \em of the training data, which is satisfied by any solution found via first order methods, then it suffices to show a much weaker margin condition in order to achieve generalization. This weaker condition only requires that the \em average \em margin is large enough. We achieve guarantees bonding the test loss that are analogous to those in Theorem~\ref{thm:linear_gen} whenever the average margin is on the same order as the maximum margin. 

\begin{restatable}[Generalization with Linear Span Condition.]{theorem}{linspan}\label{thm:span}
For any $\mathcal{D} \in \Omega_{\sigma, d}^{\on{linear}}$, if $\kappa > \kgen^{\on{linear}}$ and $\frac{d}{n} \geq 4C_{\ref{lemma:concentration}}^2$, then with probability $1 - 3e^{-n}$ over $S \sim \mathcal{D}^n$, for any $w \in \on{Span}(\{x_i\})$, if $\bar{\gamma} := \frac{\mathbb{E}_i y_i f_w(x_i)}{\|w\|_2} \geq 4C_{\ref{lemma:concentration}}\sigma$, we have
\begin{align}
    \mathcal{L}_{\mathcal{D}}(f_w) \leq 2e^{-\frac{\min\left(1, \frac{1}{\kappa^2}\right)\bar{\gamma}^2}{8\sigma^2}}.
\end{align}
If particular, it $\kappa \leq \kphen$, this test loss bound equals $2e^{-\frac{\bar{\gamma}^2}{8\sigma^2}}$.
\end{restatable}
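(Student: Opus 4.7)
The plan is to lower-bound the signal-to-noise ratio $q := b/V$ where $b := \mu^T w$ and $V := \|(I - \mu\mu^T)w\|_2$, using the span condition $w \in \on{Span}(\{x_i\})$, and then invoke Lemma~\ref{lemma:linear_spur_small_2} to translate this bound on $q$ into a test-loss bound.

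First, I would decompose $w = u + v$ with $u = (\mu^T w)\mu$ and $v = (I - \mu\mu^T)w$, so that $\|w\|_2^2 = b^2 + V^2$. The key new ingredient compared with Theorem~\ref{thm:linear_gen} is the span condition, which lets us write $w = \Xi\lambda + (\lambda^T y)\mu$ for some $\lambda \in \mathbb{R}^n$. Thus $v = \Xi\lambda$ and $b = \lambda^T y$ are coupled through the \emph{same} coefficient vector $\lambda$; this coupling is what ultimately ties the average margin back to the signal component $b$.

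Next, I would re-express the average margin in terms of $\lambda$. A direct computation gives
\begin{align*}
\bar\gamma \|w\|_2 \;=\; \tfrac{1}{n}\sum_i y_i\, w^T x_i \;=\; b + \tfrac{1}{n}\lambda^T \Xi^T \Xi\, y.
\end{align*}
Since $d/n \geq 4 C_{\ref{lemma:concentration}}^2$, Lemma~\ref{lemma:concentration} applied to $\Xi$ gives $\Xi^T\Xi = \sigma^2 d(I + E)$ with $\|E\|_2 \leq C_{\ref{lemma:concentration}}\sqrt{n/d}$, and simultaneously pins down $\|\lambda\|_2$ up to a $(1 \pm O(\sqrt{n/d}))$ factor by $V^2 = \lambda^T \Xi^T\Xi \lambda \approx \sigma^2 d \|\lambda\|_2^2$. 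Combining these, $\tfrac{1}{n}\lambda^T \Xi^T\Xi y = b/\kappa \pm O(V\sigma)$, so
\begin{align*}
\bar\gamma\sqrt{b^2 + V^2} \;=\; b\,(1 + 1/\kappa) \;\pm\; O(V\sigma).
\end{align*}

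Third, I would extract a lower bound on $q = b/V$ from this identity. Using $\sqrt{b^2 + V^2} \geq V$ and the hypothesis $\bar\gamma \geq 4 C_{\ref{lemma:concentration}}\sigma$ (which absorbs the $O(V\sigma)$ error into $\tfrac{1}{4} V\bar\gamma$), the identity forces $b > 0$ and $b(1 + 1/\kappa) \gtrsim V\bar\gamma$. Hence $q \gtrsim \bar\gamma \cdot \kappa/(\kappa+1) \gtrsim \bar\gamma \min(1,\kappa)$, so $q^2 \gtrsim \bar\gamma^2 \min(1,\kappa^2)$. Plugging into Lemma~\ref{lemma:linear_spur_small_2} then yields $\mathcal{L}_\mathcal{D}(f_w) \leq 2 e^{-q^2/(8\sigma^2)} + e^{-d/8}$, and the second term is absorbed into the overall $3e^{-n}$ failure probability because $d \geq n$.

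The main obstacle is sharpening the constants and the precise form of the $\min$ factor in the exponent so that they line up with the statement. The crude step $\sqrt{b^2 + V^2} \geq V$ is lossy; the cleaner approach is to square the identity $\bar\gamma\sqrt{b^2+V^2} \leq b(1 + 1/\kappa) + O(V\sigma)$ and solve the resulting quadratic in $q$, which (after dropping the $O(V\sigma)$ noise via the $\bar\gamma \geq 4 C_{\ref{lemma:concentration}}\sigma$ hypothesis) yields $q^2 \geq \bar\gamma^2 / \bigl((1 + 1/\kappa)^2 - \bar\gamma^2\bigr)$; from this the exact $\kappa$-dependence in the exponent of Lemma~\ref{lemma:linear_spur_small_2} falls out by case analysis on whether $\kappa \lessgtr 1$.
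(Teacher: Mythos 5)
Your proposal is correct and follows essentially the same route as the paper: the paper's Lemma~\ref{lemma:dot} is exactly your identity $\bar\gamma\|w\|_2 = b(1+1/\kappa) \pm O(\sigma\|w\|_2)$ derived from $w = Xa$ and the concentration of $\Xi^T\Xi$, after which the hypothesis $\bar\gamma \geq 4C_{\ref{lemma:concentration}}\sigma$ absorbs the error term and Lemma~\ref{lemma:linear_spur_small_2} converts the resulting lower bound on $q$ into the test-loss bound. The $\min(1,\kappa^2)$ versus $\min(1,1/\kappa^2)$ discrepancy you flag is present in the paper's own proof as well (its chain also yields $q \gtrsim \bar\gamma\min(1,\kappa)$), so it reflects an inconsistency in the stated exponent rather than a gap in your argument.
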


To prove Theorem~\ref{thm:span}, we begin with the following lemma, which shows that for classifier $w$ in the span with a good average empirical margin, it must be that $\mu^Tw$ is large.

Recall that our training data is given by $(X, y)$ where $X \in \mathbb{R}^{d \times n}$ and $y \in \mathbb{R}^n$, and $\Xi \in \mathbb{R}^{d \times n}$ equals $X - \mu y^T$.

\begin{lemma}\label{lemma:dot}
Let $\delta = C_{\ref{lemma:concentration}}\sqrt{\frac{n}{d}}$, and assume that $\delta \leq \frac{1}{2}$. On the event that the conclusion of Lemma~\ref{lemma:concentration} holds for $\Xi$, for any $a \in \mathbb{R}^n$, with $w = Xa$ we have 
\begin{align}w^T\mu \geq \frac{y^T\hat{y} - 2\delta \sigma\sqrt{d}\|y\|_2\|w\|_2}{\sigma^2d + \|y\|^2},
\end{align}
where $y \in \mathbb{R}^n$ is the vector with entries $y_j$, $\hat{y}$ is the vector with entries $w^Tx_j$.
\end{lemma}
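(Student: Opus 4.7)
The plan is to exploit the orthogonal decomposition $X = \mu y^T + \Xi$ to rewrite every quantity in sight in terms of $a$, then use the Gram-matrix concentration from Lemma~\ref{lemma:concentration} to replace $\Xi^T\Xi$ by $\sigma^2 d \cdot I$ up to a controlled error.

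First, since $\Xi^T\mu = 0$, we have $w^T\mu = a^T X^T \mu = a^T(\mu y^T + \Xi)^T\mu = y^T a$. Next, compute
\begin{equation*}
\hat y = X^T w = X^T X a = (yy^T + \Xi^T\Xi)\,a,
\end{equation*}
using $X^T X = yy^T + y\mu^T\Xi + \Xi^T\mu y^T + \Xi^T\Xi = yy^T + \Xi^T\Xi$. Taking the inner product with $y$ and substituting $y^T a = w^T\mu$,
\begin{equation*}
y^T\hat y = \|y\|_2^2\,(w^T\mu) + y^T\Xi^T\Xi\,a = \bigl(\|y\|_2^2 + \sigma^2 d\bigr)(w^T\mu) + y^T\bigl(\Xi^T\Xi - \sigma^2 d\,I\bigr)a.
\end{equation*}
The error term is bounded via the spectral bound from Lemma~\ref{lemma:concentration}: on the good event, $\|\Xi^T\Xi - \sigma^2 d\,I\|_2 \leq \sigma^2 d\,\delta$, so that $|y^T(\Xi^T\Xi - \sigma^2 d\,I)a| \leq \sigma^2 d\,\delta\,\|y\|_2\,\|a\|_2$.

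The only remaining step is to translate $\|a\|_2$ into $\|w\|_2$, which is the one place some care is required. From $\|w\|_2^2 = a^T X^T X a = (y^Ta)^2 + a^T\Xi^T\Xi a \geq a^T\Xi^T\Xi a$, together with the lower eigenvalue bound $\Xi^T\Xi \succeq \sigma^2 d(1-\delta) I$ from Lemma~\ref{lemma:concentration}, we obtain $\|a\|_2 \leq \|w\|_2/\bigl(\sigma\sqrt{d(1-\delta)}\bigr)$. Plugging this in, the error term is at most $\delta\sigma\sqrt{d}\,\|y\|_2\,\|w\|_2 / \sqrt{1-\delta}$, and using $\delta \leq 1/2$ gives $1/\sqrt{1-\delta} \leq \sqrt{2} \leq 2$. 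Combining everything,
\begin{equation*}
\bigl(\|y\|_2^2 + \sigma^2 d\bigr)(w^T\mu) \geq y^T\hat y - 2\delta\sigma\sqrt{d}\,\|y\|_2\,\|w\|_2,
\end{equation*}
which is the claimed bound after dividing by $\|y\|_2^2 + \sigma^2 d$.

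The main (minor) obstacle is the $\|a\|_2 \to \|w\|_2$ conversion, since naively $\|w\|_2$ depends on both the $y$-component and the $\Xi$-component of the span, but using the lower eigenvalue bound on $\Xi^T\Xi$ in the subspace orthogonal to $\mu$ handles this cleanly; the assumption $\delta \leq \tfrac12$ is what keeps the $1/\sqrt{1-\delta}$ factor below $2$.
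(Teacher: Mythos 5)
Your proof is correct and follows essentially the same route as the paper's: both decompose $X = \mu y^T + \Xi$, use $w^T\mu = y^Ta$, control $y^T(\Xi^T\Xi - \sigma^2 d I)a$ with the spectral bound from Lemma~\ref{lemma:concentration}, and convert $\|a\|_2$ to $\|w\|_2$ via $\|w\|_2 \geq \|\Xi a\|_2$ and the lower eigenvalue bound. The only difference is cosmetic (the paper normalizes by $\sigma^2 d$ throughout), so there is nothing to add.
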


\begin{proof}[Proof of Lemma~\ref{lemma:dot}]

We expand
\begin{align}
    \mu^Tw &= \mu^TXa \\
    &= \frac{1}{\sigma^2d}\mu^TXX^TXa + \mu^TX\left(I - \frac{X^TX}{\sigma^2 d}\right)a\\
    &= \frac{1}{\sigma^2d}y^T\hat{y} + y^T\left(I - \frac{X^TX}{\sigma^2 d}\right)a\\
    &= \frac{1}{\sigma^2d}y^T\hat{y} + y^T\left(I - \frac{\Xi^T\Xi}{\sigma^2 d}\right)a - \frac{1}{\sigma^2d}y^T\left(\|\mu\|^2yy^T\right)a\\
    &= \frac{1}{\sigma^2d}y^T\hat{y} + y^T\left(I - \frac{\Xi^T\Xi}{\sigma^2 d}\right)a - \frac{1}{\sigma^2d}\|y\|^2\left(y^Ta\right)
\end{align}

Here in the second equality, we used the fact that $\mu^TX = y$, and $X^TXa = X^Tw = \hat{y}$, and in the third, we used that $\Xi = X - \mu y^T$.

Since $\mu^Tw = y^Ta$, we can gather these terms and rearrange, yielding
\begin{align}\label{eq:muw}
    \mu^Tw\left(1 + \frac{\|y\|^2}{\sigma^2d}\right) = \frac{1}{\sigma^2d}y^T\hat{y} + y^T\left(I - \frac{\Xi^T\Xi}{\sigma^2 d}\right)a.
\end{align}

Since we have conditioned on the conclusion of Lemma~\ref{lemma:concentration} holding, we have that $\|I - \frac{\Xi^T\Xi}{\sigma^2 d}\|_2 \leq C_{\ref{lemma:concentration}}\sqrt{
\frac{n}{d}}$. 
To bound $\|a\|$, we write
\begin{align}
    \|a\|^2 = \frac{1}{\sigma^2 d}a^T\Xi^T\Xi a + a^T\left(I - \frac{1}{\sigma^2d}\Xi^T\Xi\right)a,
\end{align}
and thus since $\|w\| = \|Xa\| \geq \|\Xi a\|$ and by the conclusion of Lemma~\ref{lemma:concentration}, we can rearrange to get 
\begin{align}
    \|a\|^2 \leq \frac{1}{1 - \|I - \frac{1}{d\sigma^2}\Xi^T\Xi\|_2}\frac{\|a\Xi\|^2}{\sigma^2d} \leq \frac{1}{1 -  C_{\ref{lemma:concentration}}\sqrt{
\frac{n}{d}}}\frac{\|w\|^2}{\sigma^2d}
\end{align}

Thus plugging in $\delta = C_{\ref{lemma:concentration}}\sqrt{
\frac{n}{d}}$, since we have assumed $\delta \leq \frac{1}{2}$, we have
\begin{align}
     y^T\left(I - \frac{\Xi^T\Xi}{\sigma^2 d}\right)a \leq \|y\|\|w\|\frac{1}{(\sigma^2 d)^{\frac{1}{2}}}\frac{\delta}{\sqrt{1 - \delta}} \leq 2\|y\|\|w\|\frac{1}{(\sigma^2 d)^{\frac{1}{2}}}
     \delta.
\end{align}

Plugging this into equation~\ref{eq:muw} and dividing both sides by $1 + \frac{\|y\|^2}{\sigma^2d}$ we have
\begin{align}
    \mu^Tw \geq \frac{\frac{1}{\sigma^2d}y^T\hat{y} - 2\|y\|\|w\|\frac{1}{(\sigma^2 d)^{\frac{1}{2}}}}{1 + \frac{\|y\|^2}{\sigma^2d}} = \frac{y^T\hat{y} - 2\delta \sigma \sqrt{d}\|y\|\|w\|}{\sigma^2d + \|y\|^2},
\end{align}
which is the desired conclusion.

\end{proof}

We now prove Theorem~\ref{thm:span}, which we restate here.

\linspan*

\begin{proof}[Proof of Theorem~\ref{thm:span}]
Condition on the event that the conclusion of Lemma~\ref{lemma:concentration} holds, which occurs with probability $1 - 3e^{-n}$. Since $\frac{d}{n} \geq 4C_{\ref{lemma:concentration}}^2$, this implies that the conclusion of Lemma~\ref{lemma:dot} holds. Then for any $w \in \on{Span}(\{x_i\})$, we have 

\begin{align}
w^T\mu &\geq \frac{y^T\hat{y} - 2C_{\ref{lemma:concentration}}\sqrt{\frac{n}{d}} \sigma\sqrt{d}\|y\|_2\|w\|_2}{\sigma^2d + \|y\|^2} \\
&\geq \frac{\|w\|}{\sigma^2d + n}\left(n\bar{\gamma} - 2C_{\ref{lemma:concentration}}\sigma n\right)\\
&\geq \frac{1}{2}\|w\|\min(1, \kappa)\bar{\gamma}.
\end{align} 

Thus with $v = w - \mu w^T\mu$, we have
\begin{align}
    \frac{w^T\mu}{\|v\|\kappa} \geq \frac{w^T\mu}{\|w\|\kappa} \geq \frac{1}{2}\min\left(1, \frac{1}{\kappa}\right)\bar{\gamma}
\end{align}

Now appealing to Lemma~\ref{lemma:linear_spur_small_2}, this means we have $\mc{L}_{\mc{D}} \leq 2e^{-\frac{\min\left(1, \frac{1}{\kappa^2}\right)\bar{\gamma}^2}{8\sigma^2}}$.
\end{proof}

\section{Proofs for XOR 2-layer Neural Network Problem}\label{sec:xor_proofs}

Throughout this section, since we are only concerned with the XOR problem, we will abbreviate $\Omega = \omegax$, $\kphen = \kphenx$, and $\kgen = \kgenx$.

In subsection~\ref{sec:reduction}, we present a series of technical lemmas. In subsection~\ref{sec:xor_thms}, we prove our main theorems for the XOR problem, assuming the technical lemmas. In subsection~\ref{sec:xor_technical_proofs}, we prove the technical lemmas.

\subsection{Technical Lemmas}\label{sec:reduction}

\paragraph{Notation.}
Throughout the following subsection, we assume $\md = \mathcal{D}_{\mu_1, \mu_2, \sigma, d} \in \Omega$ is fixed. For a weight matrix $W \in \mathbb{R}^{m \times d}$, we define $U = W \Pi_{\mathrm{span}(\mu_1, \mu_2)}$ and $V = W \Pi_{\mathrm{span}(\mu_1, \mu_2)^\perp}$, where $\Pi_T$ is the orthogonal projector onto $T$. For $i \in [m]$, let $w_i \in \mathbb{R}^d$, $u_i \in \mathbb{R}^d$ and $v_i \in \mathbb{R}^d$ denote the rows of $W$, $U$ and $V$ respectively. We use $\mathbb{E}$ to denote the expectation over $i$ uniformly in $[m]$. Let $H_+: \{i : a_i > 0\}$, and $H_-: \{i : a_i < 0\}$

Recall that our samples $x_1, \ldots, x_n$ are of the form $x_j = z_j + \xi_j$, where $z_j \in \{\mu_1, -\mu_1, \mu_2, -\mu_2\}$ and $\xi_j \perp \mathrm{span}(\mu_1, \mu_2)$. Let $\mathcal{P}_1$, $\mathcal{P}_{-1}$, $\mathcal{N}_1$ and $\mathcal{N}_{-1}$ denote the four clusters of points, that is, 
\begin{align}
    \mathcal{P}_1 &= \{j \in [n]: z_j = \mu_1 \}\\
    \mathcal{P}_{-1} &= \{j \in [n]: z_j = -\mu_1\} \\
    \mathcal{N}_1 &= \{j \in [n]: z_j = \mu_2 \}\\
    \mathcal{N}_{-1} &= \{j \in [n]: z_j = -\mu_2\}
\end{align}
Let $\mathcal{P}= \mathcal{P}_1 \cup \mathcal{P}_{-1}$, and let $\mathcal{N} = \mathcal{N}_1 \cup \mathcal{N}_{-1}$. Let $\Xi \in \mathbb{R}^{d \times n}$ be the matrix with $j$th column $\xi_j$. Let $n_{\on{min}} := \min\left(|\mc{N}_{1}|, |\mc{N}_{-1}|, |\mc{P}_1|, |\mc{P}_{-1}|\right)$, and $n_{\on{max}} := \max\left(|\mc{N}_{1}|, |\mc{N}_{-1}|, |\mc{P}_1|, |\mc{P}_{-1}|\right)$ such that we expect $n_{\on{min}}$ and $n_{\on{max}}$ to be close to $\frac{n}{4}$, as per the Lemma~\ref{lemma:cluster_size} below.

Assume throughout the following section that $h \in (1, 2)$ is fixed, and recall that we have defined the activation $\phi(z) = \max(0, z)^h$.

\begin{lemma}\label{lemma:cluster_size}
For any $\beta > 0$, with probability as least $1 - 8e^{-8n\beta^2}$ over $S \in \md^n$, for all clusters $C \in \mc{P}_{1}, \mc{P}_{-1}, \mc{N}_{1}, \mc{N}_{-1}$, we have 
\begin{align}
    \left||C|- \frac{n}{4}\right| \leq \beta n,
\end{align}
and thus for $\beta \leq \frac{1}{8}$,
\begin{align}
    \frac{n_{\on{max}}}{n_{\on{min}}} = \frac{\max\left(|\mc{P}_{1}|, |\mc{P}_{-1}|,  |\mc{N}_{1}|, |\mc{N}_{-1}|\right)}{\min\left(|\mc{P}_{1}|, |\mc{P}_{-1}|,  |\mc{N}_{1}|, |\mc{N}_{-1}|\right)} \leq 1 + 16\beta.
\end{align}
\end{lemma}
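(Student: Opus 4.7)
The plan is to observe that the lemma is a routine concentration statement for a balanced multinomial. Since $z_j \sim \mathrm{Uniform}(\{\mu_1, -\mu_1, \mu_2, -\mu_2\})$ independently across $j$, for each of the four clusters $C \in \{\mathcal{P}_1, \mathcal{P}_{-1}, \mathcal{N}_1, \mathcal{N}_{-1}\}$ the size $|C|$ is distributed as $\mathrm{Binomial}(n, 1/4)$, so $|C|$ is a sum of $n$ i.i.d.\ indicators with mean $n/4$.

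The first step is to apply a standard concentration inequality to each cluster. By Hoeffding's inequality applied to indicator random variables (or equivalently, a Chernoff bound for Bernoulli$(1/4)$ sums), we obtain
\begin{align*}
\Pr\!\left[\,\bigl||C| - n/4\bigr| \geq \beta n\,\right] \;\leq\; 2 \exp(-c\, n \beta^2)
\end{align*}
for an appropriate absolute constant $c$, where the sharper constant in the exponent stated in the lemma can be obtained by invoking a Bernoulli-specific Chernoff estimate rather than the generic Hoeffding bound. A union bound over the four clusters then gives
\begin{align*}
\Pr\!\left[\,\exists\, C:\,\bigl||C| - n/4\bigr| \geq \beta n\,\right] \;\leq\; 8 \exp(-c\,n \beta^2),
\end{align*}
yielding the first conclusion.

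The second step is purely algebraic. On the good event, $n_{\min} \geq n/4 - \beta n$ and $n_{\max} \leq n/4 + \beta n$, so
\begin{align*}
\frac{n_{\max}}{n_{\min}} \;\leq\; \frac{1 + 4\beta}{1 - 4\beta}.
\end{align*}
To finish, it suffices to show that for $\beta \leq 1/8$, we have $(1 + 4\beta)/(1 - 4\beta) \leq 1 + 16\beta$. Cross-multiplying (legitimate since $1 - 4\beta > 0$), this is equivalent to $1 + 4\beta \leq (1 + 16\beta)(1 - 4\beta) = 1 + 12\beta - 64\beta^2$, i.e., $64\beta^2 \leq 8\beta$, which holds exactly when $\beta \leq 1/8$. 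This is precisely where the hypothesis $\beta \leq 1/8$ is used.

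There is no real obstacle here: the first part is a textbook concentration argument together with a four-fold union bound, and the second part is the one-line algebraic manipulation above. The only subtlety worth noting is matching the exact constant $8$ in the exponent of the stated bound, which requires the Bernoulli-specific form of the Chernoff inequality rather than the slightly weaker general-purpose Hoeffding bound.
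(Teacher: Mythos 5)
Your proof is correct and takes exactly the paper's approach: the paper's entire argument for this lemma is ``apply Hoeffding's inequality four times (once per cluster) and take a union bound,'' and your algebraic verification that $(1+4\beta)/(1-4\beta)\le 1+16\beta$ precisely when $\beta\le 1/8$ is the same routine computation. One caveat: your claim that a Bernoulli-specific Chernoff estimate recovers the exponent $8n\beta^2$ is not actually right --- the KL-based exponent for deviations of a Bernoulli$(1/4)$ sum is at best $\tfrac{8}{3}n\beta^2$ for small $\beta$, and the paper's stated constant is itself too strong (e.g.\ at $\beta=1/4$, $\Pr[|C|=0]=(3/4)^n$ already exceeds $8e^{-n/2}$ for large $n$) --- but this constant is immaterial to every downstream use of the lemma, which only needs some bound of the form $e^{-cn\beta^2}$.
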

This lemma follows immediately from Hoeffding’s inequality on Bernoulli random variables: To prove it, one can apply Hoeffding’s inequality four times (once for each cluster), and take a union bound.

\subsubsection{Overview of Technical Lemmas.}
Our goal will be to analyze near-optimal solutions to the following optimization program which is defined using the $n$ training examples.

\begin{definition}[Opt 1]\ \newline
Parameter: $P_1$ \newline
Variables: $W \in \mathbb{R}^{m \times d}$
\begin{align}
    &\max \gamma \\
    & \mathbb{E}_i a_i\phi(w_i^Tx_j)y_j \geq \gamma \qquad  \forall j\\
    & \mathbb{E}_i{\|w_i\|^2} \leq P_1
\end{align}
\end{definition}

In this subsection, we define a chain of optimization programs beginning from Opt 1. Each subsequent optimization problem becomes simpler and involves fewer variables. The chaining lemmas in this section typically show two conclusions:
\begin{enumerate}
    \item If a solution is near-optimal for the $i$th optimization program in the chain, then that solution can be transformed into a [series of] solutions that are [mostly] near-optimal for the $(i + 1)$th optimization program in the chain.
    \item An optimal solution to the $i$th optimization program in the chain can be converted into a near-optimal solution to the $(i - 1)$th optimization program in the chain. 
\end{enumerate}
Ultimately, in Lemma~\ref{lemma:trivariate_analysis} we study the optimal solution to the final simplest optimization program, which only includes $3$ variables. From this, using the first conclusion of the lemmas, we are able to chain back through the optimization programs to analyze certain properties of any $W$ which is a near-max-margin solution. This analysis ultimately leads to Lemmas~\ref{lemma:xor_gen} and Lemma~\ref{lemma:phen}, which are our main tools in proving generalization and the impossibility of UC bounds.

The second conclusion of the lemmas allows us to construct a near-max-margin solution $\hat{W}$ which satisfies certain properties allowing us to show the the limitations of margin bounds, and the failure of generalization for near-max-margin solutions when $\kappa < \kgen$. This is captured in Lemmas~\ref{margin_phen} and \ref{lemma_no_gen}.
 
\paragraph{Chain of Optimization Programs.}
We define the chain of Optimization Programs. Unless otherwise specified, all variables and parameters lie in $\mathbb{R}$. Like Opt 1, these programs all assume that the set of training examples $S = \{(x_j, y_j)\}_{j \in [n]}$ is fixed.

\begin{definition}[Opt 2]\ \newline
Parameter: $P_2$ \newline
Variables: $\{c_{ij}\}_{i \in [m], j \in [n]}$, $\{s_i\}_{i \in H_+}, \{t_i\}_{i \in H_-}$

\begin{align}
    &\max \: \gamma \\
    & \frac{1}{2}\mathbb{E}_{i \in H_+} \phi(s_i + c_{ij})  \geq \gamma \qquad  \forall j \in \mathcal{P}_1\\
    & \frac{1}{2}\mathbb{E}_{i \in H_+} \phi(-s_i + c_{ij}) \geq \gamma \qquad  \forall j \in \mathcal{P}_{-1}\\
    & \frac{1}{2}\mathbb{E}_{i \in H_-} \phi(t_i + c_{ij}) \geq \gamma \qquad  \forall j \in \mathcal{N}_1\\
    &  \frac{1}{2}\mathbb{E}_{i \in H_-} \phi(-t_i + c_{ij}) \geq \gamma \qquad  \forall j \in \mathcal{N}_{-1}\\
    & \frac{1}{2}\mathbb{E}_{i \in H_+}\left[s_i^2 + \frac{1}{d\sigma^2}\sum_{j \in \mathcal{P}}{c_{ij}^2}\right] + \frac{1}{2}\mathbb{E}_{i \in H_-}\left[t_i^2 + \frac{1}{d\sigma^2}\sum_{j \in \mathcal{N}}{c_{ij}^2}\right] \leq P_2
\end{align}

\end{definition}

\begin{definition}[Opt 3]\label{def:opt_3}\ \\
Parameter: $P_3$\newline
Variables: $\{c_{ij}\}_{i \in H, j \in S_{1} \cup S_{-1}}$,  $\{b_i\}_{i \in H}$
\begin{align}
    &\max \: \gamma \\
    & \frac{1}{2}\mathbb{E}_{i \in H} \phi(b_i + c_{ij})  \geq \gamma \qquad  \forall j \in S_1\\
    & \frac{1}{2}\mathbb{E}_{i \in H} \phi(-b_i + c_{ij}) \geq \gamma \qquad  \forall j \in S_{-1}\\
    &\mathbb{E}_{i \in H}\left(b_i^2 + \frac{1}{d\sigma^2}\sum_{j \in S_1 \cup S_{-1}}{c_{ij}^2}\right) \leq P_3,
\end{align}
where sets $S_{1}, S_{-1} \subset[n] $ with $|S_{1}| = |S_{-1}| = n_{\on{min}}$, and $H \subset [m]$ with $|H| = \frac{m}{2}$.
\end{definition}

\begin{definition}[Opt 4]\label{program:no_j}\ \\
Parameter: $P_4$\newline
Variables: $\{c_{i}\}_{i \in H}$,  $\{d_{i}\}_{i \in H}$, $\{b_i\}_{i \in H}$
\begin{align}
    &\max \: \gamma \\
    & \frac{1}{2}\mathbb{E}_{i \in H} \phi(b_i + c_{i})  \geq \gamma \\
    & \frac{1}{2}\mathbb{E}_{i \in H} \phi(-b_i + d_{i}) \geq \gamma \\
    &\mathbb{E}_{i \in H}\left(b_i^2 + \frac{n_{\on{min}}}{d\sigma^2}(c_i^2 + d_i^2)\right) \leq P_4,
\end{align}
where $H \subset [m]$ with $|H| = \frac{m}{2}$.
\end{definition}

\begin{definition}[Opt 5: Trivariate Simplification]\label{trivariate}\ \\
Parameter: $P_5$\newline
Variables: $c, d, b$
\begin{align}
    &\max \frac{1}{4}\left(\phi(b + c) +  \phi(-b + d)\right) \\
    & b^2 +  \frac{\hat{\kappa}}{4} (c^2 + d^2) \leq P_5,
\end{align}
where $\hat{\kappa} = \frac{4n_{\on{min}}}{d\sigma^2}$.
\end{definition}

For $i \in \{1, 2, 3, 4, 5\}$, let $D_i$ denote the domain of parameters and variables in the program Opt $i$. For an instance of program Opt $i$ in $D_i$, we say it is $(1 - \epsilon)$-optimal if the objective value given by the variables is at least $(1 - \epsilon)$ times the maximum objective value for the parameters in the instance. We use $\ins_i \in D_i$ to denote an instance of the program Opt $i$. When such an instance is fixed, we will freely use the names of the parameters and the variables associated with Opt $i$ to refer to the variables and parameters of $\ins_i$. For instance, if $\ins_1 \in D_1$, then $W$ is the variable associated with $\ins_1$.

% Through our proofs, for ease of exposition, we will assume that each of the four clusters $\mc{P}_{1}$, $\mc{P}_{-1}$, $\mc{N}_{1}$, and $\mc{N}_{-1}$ each contains exactly $\frac{n}{4}$ training samples in $S$. We describe how to remove this assumption in remark~\ref{rem:cluster_size}.
\subsubsection{Chaining Lemmas.}

Throughout the following section, we globally assume that $\frac{d}{n} \geq 4C_{\ref{lemma:concentration}}^2$ and $\epsilon \leq \frac{1}{4}$, such that on the condition that Lemma~\ref{lemma:concentration} holds, we have $C_{\ref{lemma:concentration}}\sqrt{\frac{n}{d}} \leq \frac{1}{2}$. Such conditions are assumed in the theorems that follow from these lemmas, so there is no harm in making the assumption now. All of the lemmas in this section are proved in Section~\ref{sec:xor_technical_proofs_chain}.

\begin{restatable}[Opt 1 $\leftrightarrow$ Opt 2]{lemma}{mapone}\label{lemma:opt_1_relax_1}
Assume the conclusion of Lemma~\ref{lemma:concentration} holds for $\Xi$.

Define the mapping $\psi_{12}: D_1 \rightarrow D_2$ as follows. Given input $\ins_1$ with variable $W = U + V$, output:
\begin{itemize}
    \item $P_2 = P_1\left(1 + C_{\ref{lemma:concentration}}\sqrt{\frac{n}{d}}\right)$
    \item $c_{ij} = v_i^T\xi_j$ for all $i \in [m]$, $j \in [n]$
    \item $s_i = \mu_1^Tw_i$ for all $i \in H_+$
    \item $t_i = \mu_2^Tw_i$ for all $i \in H_-$
\end{itemize}

Define the mapping $\psi_{21}: D_1 \rightarrow D_2$ as follows: Given input $\ins_2$, output $\ins_1$ as follows:
\begin{itemize}
    \item $P_1 = \frac{P_2}{1 - C_{\ref{lemma:concentration}}\sqrt{\frac{n}{d}}}$.
    \item For all $i \in H_+$, let $u_i = s_i\mu_1$, and choose $v_i$ to be the min-norm vector such that $v_i^T\xi_j = c_{ij}$ for all $j \in \mathcal{P}$ and $v_i^T\xi_j = 0$ for all $j \in \mathcal{N}$.
    \item For all $i \in H_-$, let $u_i = t_i\mu_2$, and choose $v_i$ to be the min-norm vector such that $v_i^T\xi_j = c_{ij}$ for all $j \in \mathcal{N}$ and $v_i^T\xi_j = 0$ for all $j \in \mathcal{P}$.
\end{itemize}

Then with $\eps' = \sqrt{1 - (1 - \eps)\left(1 - C_{\ref{lemma:concentration}}\sqrt{\frac{n}{d}}\right)^h}$,
% (1 - \eps)\left(1 - C_{\ref{lemma:concentration}}\sqrt{\frac{n}{d}}\right)^h
\begin{enumerate}
    \item If $\ins_1 \in D_1$ is $(1 - \epsilon)$-optimal, then $\psi_{12}(\ins_1)$ is $1 - \eps'$-optimal on Opt 1 and has objective value at most $\frac{1}{1 - \eps'}$ larger than the objective of $\ins_1$.
    \item 
    If $\ins_2 \in D_2$ is $(1 - \epsilon)$-optimal, then $\psi_{21}(\ins_2)$ is $1 - \eps'$-optimal on Opt 1.
\end{enumerate}
\end{restatable}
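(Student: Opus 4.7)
The plan is to establish the two maps separately—each time checking feasibility and relating margins—and then wrap up with a homogeneity argument to turn feasibility into near-optimality. The key insight is that both maps essentially preserve the margin (up to dropping a nonnegative term in one direction and being exact in the other), while the norm constraint gets distorted by a factor $1 \pm C_{\ref{lemma:concentration}}\sqrt{n/d}$ governed by Lemma~\ref{lemma:concentration}.

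For the forward map $\psi_{12}$, first check the margin constraints. For $j \in \mathcal{P}_1$ and $i \in H_+$ we have $w_i^T x_j = \mu_1^T w_i + v_i^T \xi_j = s_i + c_{ij}$, and similarly for the other three clusters. Using $\sum_i a_i = 0$ and $a_i \in \{\pm 1\}$ to get $|H_+| = |H_-| = m/2$, the Opt~1 margin constraint rearranges to $\tfrac{1}{2}\mathbb{E}_{H_+}\phi(s_i+c_{ij}) - \tfrac{1}{2}\mathbb{E}_{H_-}\phi(w_i^T x_j) \geq \gamma$, and dropping the nonnegative second term gives the Opt~2 constraint with value $\geq \gamma$. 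For the norm, apply Lemma~\ref{lemma:concentration} to the submatrix $\Xi_{\mathcal P}$ (resp.\ $\Xi_{\mathcal N}$) to obtain $\tfrac{1}{d\sigma^2}\sum_{j\in\mathcal{P}}(v_i^T\xi_j)^2 \leq (1+C_{\ref{lemma:concentration}}\sqrt{n/d})\|v_i\|^2$; combining with $s_i^2 \leq \|u_i\|^2$ and summing yields the Opt~2 norm $\leq (1+C_{\ref{lemma:concentration}}\sqrt{n/d})\,\mathbb{E}_i\|w_i\|^2 \leq P_2$.

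For the backward map $\psi_{21}$, the construction is set up so margin preservation is essentially automatic: with $u_i = s_i\mu_1$ for $i \in H_+$ (analogously for $H_-$) and $v_i^T\xi_j = 0$ for $j \in \mathcal{N}$, we get $w_i^T x_j = 0$ for any $i \in H_+$, $j \in \mathcal{N}$ and vice versa, so the Opt~1 margin at $j \in \mathcal{P}_1$ collapses exactly to $\tfrac{1}{2}\mathbb{E}_{H_+}\phi(s_i + c_{ij}) \geq \gamma$. For the norm, apply part 2 of Lemma~\ref{lemma:concentration} to each $i$: treating the system $v_i^T\xi_j = c_{ij}$ ($j\in\mathcal{P}$), $v_i^T\xi_j = 0$ ($j\in\mathcal{N}$) as a single linear system in $\mathbb{R}^d$, the min-norm solution satisfies $\|v_i\|^2 \leq \tfrac{\sum_{j\in\mathcal{P}}c_{ij}^2}{(1 - C_{\ref{lemma:concentration}}\sqrt{n/d})\,d\sigma^2}$, so $\mathbb{E}_i\|w_i\|^2 \leq P_2/(1 - C_{\ref{lemma:concentration}}\sqrt{n/d}) = P_1$.

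Finally, the near-optimality claim follows from $h$-homogeneity: since $\phi$ is $h$-homogeneous and the norm constraint is quadratic, $\gamma_i^*(\alpha P) = \alpha^{h/2} \gamma_i^*(P)$ where $\gamma_i^*(P)$ denotes the optimum of Opt~$i$ at budget $P$. Applying $\psi_{12}$ to the Opt~1 maximizer gives $\gamma_2^*(P_2) \geq \gamma_1^*(P_1)$, and applying $\psi_{21}$ to the Opt~2 maximizer at any $P_2$ gives $\gamma_1^*(P_2/(1-C_{\ref{lemma:concentration}}\sqrt{n/d})) \geq \gamma_2^*(P_2)$; chasing the homogeneity factors bounds $\gamma_2^*(P_2)$ above by $((1+C_{\ref{lemma:concentration}}\sqrt{n/d})/(1-C_{\ref{lemma:concentration}}\sqrt{n/d}))^{h/2}\gamma_1^*(P_1)$. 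Combining with the feasibility bounds above then yields $\gamma_2/\gamma_2^*(P_2) \geq (1-\eps')$ and the analogous statement for the backward direction. The main technical obstacle is the min-norm step in the backward direction, where the system mixes constraints from both $\mathcal{P}$ and $\mathcal{N}$; that is precisely handled by invoking Lemma~\ref{lemma:concentration} part 2 on the full $n\times n$ Gram matrix $\Xi^T\Xi$, which is well-conditioned thanks to the overparameterization $d \geq 4C_{\ref{lemma:concentration}}^2 n$. Everything else is bookkeeping.
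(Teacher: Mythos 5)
Your proposal is correct and follows essentially the same route as the paper: check feasibility of both maps via the Gram-matrix concentration of Lemma~\ref{lemma:concentration}, observe that $\psi_{12}$ only increases (and $\psi_{21}$ exactly preserves) the margin by dropping the nonnegative contribution of the opposite-sign neurons, and convert the parameter distortion $1\pm C_{\ref{lemma:concentration}}\sqrt{n/d}$ into the loss $\eps'$ via $h/2$-homogeneity of the optimal value in the norm budget. The only cosmetic difference is that you inline this last homogeneity computation, whereas the paper delegates it to its generic chaining Lemma~\ref{mappings} with $q=h/2$ and $1+\delta=(1-C_{\ref{lemma:concentration}}\sqrt{n/d})^{-1}$.
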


The following lemma states that in a near-optimal solution to Opt 1, most of the contribution to the norm constraint comes from the the variables that get used in the mapping $\psi_{12}$ to Opt 2.
\begin{lemma}\label{lemma:small_orthogonal}
Assume the conclusion of Lemma~\ref{lemma:concentration} holds for $\Xi$. Then any solution $W = U+V$ to Opt 1 with $\|W\| = 1$ that is $(1 - \epsilon)$-optimal must satisfy:
\begin{enumerate}
    \item  $\frac{1}{2}\mathbb{E}_{i \in H_+}\left[\|\mu_2^Tw_i\|^2 + \frac{1}{d\sigma^2}\sum_{j \in \mathcal{N}}{(v_i^T\xi_j)^2}\right] + \frac{1}{2}\mathbb{E}_{i \in H_-}\left[\|\mu_1^Tw_i\|^2 + \frac{1}{d\sigma^2}\sum_{j \in \mathcal{P}}{(v_i^T\xi_j)^2}\right] < (\epsilon'_{\ref{lemma:small_orthogonal}})^2$
    \item For at least a $1 - \epsilon'_{\ref{lemma:small_orthogonal}}$ fraction of the data points $j$, we have $\frac{1}{2}\mathbb{E}_{i: \on{sign}(a_i) = -y_j}\left[(v_i^T\xi_j)^2\right] \leq \frac{1}{\kappa}\cdot \epsilon'_{\ref{lemma:small_orthogonal}}$.
    % \item For at least a $1 - \epsilon'_{\ref{lemma:small_orthogonal}}$ fraction of the data points $j$, we have $\mathbb{E}_{i: \on{sign}(a_i) = y_j}[a_i \phi(w_i^Tx_j)] \leq \gamma \left(1 + 4\epsilon'_{\ref{lemma:small_orthogonal}}\right)$,
\end{enumerate}
where $\epsilon'_{\ref{lemma:small_orthogonal}} = \epsilon'_{\ref{lemma:small_orthogonal}}(\epsilon) := \sqrt{2C_{\ref{lemma:concentration}}\sqrt{\frac{n}{d}} + 2\epsilon}$.
\end{lemma}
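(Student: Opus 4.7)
The plan is to construct a ``trimmed'' variant $W'$ of $W$ whose margin on $S$ is at least $\gamma_W := \gamma(f_W, S)$ but whose squared norm is controlled by the ``useful'' sum $U_{\mathrm{sum}}$ (the LHS of Opt~2's constraint under the map $\psi_{12}$); then the $(1-\epsilon)$-optimality of $W$ forces $U_{\mathrm{sum}}$ to be nearly $\|W\|^2 = 1$, while the spectral bound $\|\Xi\Xi^T/(d\sigma^2)\|_2 \le 1 + C$ (with $C := C_{\ref{lemma:concentration}}\sqrt{n/d}$) forces $U_{\mathrm{sum}} + W_{\mathrm{waste}} \leq 1 + C$. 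Concretely, set $w_i' := (\mu_1^T w_i)\mu_1 + v_i^{\mathcal{P}}$ for $i \in H_+$ and $w_i' := (\mu_2^T w_i)\mu_2 + v_i^{\mathcal{N}}$ for $i \in H_-$, where $v_i^{\mathcal{P}}$ is the min-norm vector in $\mathrm{span}(\mu_1,\mu_2)^\perp$ with $(v_i^{\mathcal{P}})^T\xi_j$ equal to $v_i^T\xi_j$ on $\mathcal{P}$ and $0$ on $\mathcal{N}$ (and $v_i^{\mathcal{N}}$ defined analogously, swapping $\mathcal{P}$ and $\mathcal{N}$). The key observation is that for $j \in \mathcal{P}$ (so $y_j = 1$), $(w_i')^T x_j = w_i^T x_j$ when $i \in H_+$ and $(w_i')^T x_j = 0$ when $i \in H_-$, so $f_{W'}(x_j) - f_W(x_j) = -\tfrac{1}{m}\sum_{i \in H_-} a_i \phi(w_i^T x_j) \geq 0$ since $a_i < 0$ and $\phi \geq 0$. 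A symmetric argument on $\mathcal{N}$ gives $y_j f_{W'}(x_j) \geq y_j f_W(x_j)$ for every $j$, so $\gamma(f_{W'}, S) \geq \gamma_W$.

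Rescaling $\tilde{W}' := W'/\|W'\|$ to unit norm (hence feasible for Opt~1) and using $h$-homogeneity of $\phi$, the margin of $\tilde{W}'$ equals $\gamma(f_{W'}, S)/\|W'\|^h$, which is at most $\gamma^*$, giving $\|W'\|^h \geq \gamma_W/\gamma^* \geq 1-\epsilon$ and therefore $\|W'\|^2 \geq (1-\epsilon)^{2/h} \geq 1 - 2\epsilon$ (valid for $h \geq 1$). On the other hand, Lemma~\ref{lemma:concentration} applied to the min-norm reconstructions (target vectors in $\mathbb{R}^n$ supported on $\mathcal{P}$ or $\mathcal{N}$) yields $\|v_i^{\mathcal{P}}\|^2 \leq \tfrac{1}{(1-C) d\sigma^2}\sum_{j \in \mathcal{P}}(v_i^T\xi_j)^2$ and similarly for $v_i^{\mathcal{N}}$; summing gives $\|W'\|^2 \leq U_{\mathrm{sum}}/(1-C)$, whence $U_{\mathrm{sum}} \geq (1-C)(1-2\epsilon) \geq 1 - C - 2\epsilon$. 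The matching upper bound comes from the cancellation identity
\[
U_{\mathrm{sum}} + W_{\mathrm{waste}} \;=\; \mathbb{E}_i\!\left[(\mu_1^T w_i)^2 + (\mu_2^T w_i)^2 + \tfrac{1}{d\sigma^2}\, v_i^T \Xi\Xi^T v_i\right] \;\leq\; (1+C)\,\mathbb{E}_i\|w_i\|^2 \;=\; 1+C,
\]
using that the nonzero eigenvalues of $\Xi\Xi^T/(d\sigma^2)$ and of $\Xi^T\Xi/(d\sigma^2)$ agree and all lie in $[1-C, 1+C]$ by Lemma~\ref{lemma:concentration}. Subtracting gives conclusion~1: $W_{\mathrm{waste}} \leq 2C + 2\epsilon = (\epsilon'_{\ref{lemma:small_orthogonal}})^2$.

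Conclusion~2 follows by Markov's inequality. With $A_j := \tfrac{1}{2}\mathbb{E}_{i:\,\on{sign}(a_i)=-y_j}[(v_i^T\xi_j)^2]$, unpacking sums gives $\sum_{j=1}^n A_j = \tfrac{1}{m}\sum_{i \in H_-,\, j \in \mathcal{P}}(v_i^T\xi_j)^2 + \tfrac{1}{m}\sum_{i \in H_+,\, j \in \mathcal{N}}(v_i^T\xi_j)^2$, which equals $d\sigma^2$ times the $\xi$-dependent part of $W_{\mathrm{waste}}$; hence $\sum_j A_j \leq d\sigma^2 (\epsilon'_{\ref{lemma:small_orthogonal}})^2$. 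Since $d\sigma^2/n = 1/\kappa$, Markov's inequality shows the fraction of indices $j$ with $A_j > \epsilon'_{\ref{lemma:small_orthogonal}}/\kappa$ is at most $\epsilon'_{\ref{lemma:small_orthogonal}}$. The main technical obstacle is the margin-monotonicity step $\gamma(f_{W'}, S) \geq \gamma_W$: it relies crucially on the sign pattern of the fixed $a_i$ together with $\phi \geq 0$, which ensures that zeroing out a neuron's output on training examples where its contribution has the wrong sign only helps. Once this is in place, the remainder is concentration bookkeeping plus a Markov estimate.
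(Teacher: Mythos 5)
Your proposal is correct and follows essentially the same route as the paper: the same trimmed solution $W'$ (zeroing each neuron's signal component in the ``wrong'' direction and its overlaps with the opposite cluster's $\xi_j$'s via min-norm reconstruction), the same margin-monotonicity observation using the sign of $a_i$ and $\phi \geq 0$, the same use of Lemma~\ref{lemma:concentration} to relate $\|W'\|^2$ and $\|W\|^2$ to the two sums, and the same averaging/Markov step for conclusion~2. The only cosmetic difference is that you sandwich $U_{\mathrm{sum}}$ directly (obtaining the weak inequality $W_{\mathrm{waste}} \leq (\epsilon'_{\ref{lemma:small_orthogonal}})^2$, which suffices for all downstream uses), whereas the paper phrases the same computation as a contradiction with $(1-\epsilon)$-optimality.
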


\begin{restatable}[Opt 2 $\leftrightarrow$ Opt 3]{lemma}{maptwo}\label{lemma:relax_1_relax_2}
Define the mapping $\psi_{23}: D_2 \rightarrow D_3 \times D_3$ as follows. Given input $\ins_2$, output $\ins_3^{(1)}$, $\ins_3^{(2)}$, where for $\ins_3^{(1)}$:
\begin{itemize}
    \item  $H := H_+$ 
    \item $P_3 := \frac{1}{2}\mathbb{E}_{i \in H_+}\left(s_i^2 + \frac{1}{d\sigma^2}\sum_{j \in \mathcal{P}}c_{ij}^2\right)$
    \item Let $S_1$ be an arbitrary set of $n_{\on{min}}$ elements of $\mc{P}_1$, and let $S_{-1}$ be an arbitrary set of $n_{\on{min}}$ elements of $\mc{P}_{-1}$. Define $c_{ij}$ to be the same as in $\ins_2$ for all $j \in S_1 \cup S_{-1}$, $i \in H_+$.
    \item $b_i = s_i$ for $i \in H_+$,
\end{itemize}
and for $\ins_3^{(2)}$:
\begin{itemize}
    \item  $H := H_-$
    \item $P_3 := \mathbb{E}_{i \in H_-}\left(t_i^2 + \frac{1}{d\sigma^2}\sum_{j \in \mathcal{N}}c_{ij}^2\right)$
    \item Let $S_1$ be an arbitrary set of $n_{\on{min}}$ elements of $\mc{N}_1$, and let $S_{-1}$ be an arbitrary set of $n_{\on{min}}$ elements of $\mc{N}_{-1}$. Define $c_{ij}$ to be the same as in $\ins_2$ for all $j \in S_1 \cup S_{-1}$, $i \in H_-$.
    \item $b_i = t_i$ for $i \in H_-$,
\end{itemize}
Define the mapping $\psi_{32}: D_3 \rightarrow D_2$ as follows. Given input $\ins_3$, output $\ins_2$, where
\begin{itemize}
    \item $P_2 := P_3\left(\frac{n_{\on{max}}}{n_{\on{min}}}\right)$
    \item Define an arbitrary bisections $\pi_+$ and $\pi_-$ from $H_+$ and $H_-$ respectively to $H$. Similarly define surjections $\rho_{+, 1} : \mathcal{P}_1 \rightarrow S_1$, $\rho_{-, 1} : \mathcal{P}_{-1} \rightarrow S_{-1}$, $\rho_{+, 1} : \mathcal{N}_1 \rightarrow S_1$, $\rho_{-, 1} : \mathcal{N}_{-1} \rightarrow S_{-1}$, such that for $x \in \{1, -1\}$, $\mathbb{E}_{j \in \mc{P}_x}\mathbb{E}_{i \in H}c_{i\rho_{+ , x}(j)}^2 \leq \mathbb{E}_{j \in S_x}\mathbb{E}_{i \in H}c_{ij}^2$, and similarly, $\mathbb{E}_{j \in \mc{N}_x}\mathbb{E}_{i \in H}c_{i\rho_{- , x}(j)}^2 \leq \mathbb{E}_{j \in S_x}\mathbb{E}_{i \in H}c_{ij}^2$. For $i \in H_{+}$, define $s_i := b_{\pi_+(i)}$ and $c_{ij} := c_{\pi_+(i)\rho_{+, x}(j)}$ for all $x \in \{1, -1\}$ and $j \in \mathcal{P}_x$. For $i \in H_{-}$, define $t_i := b_{\pi_-(i)}$, and $c_{ij} := c_{\pi_-(i)\rho_{+, x}(j)}$ for all $x \in \{1, -1\}$ and $j \in \mathcal{N}_x$. Note that we here the $c_{ij}$ variables we are \em defining \em come belong to $\ins_2$, and they are define in terms of the $c_{ij}$ variables from $\ins_3$.
\end{itemize}
Then with $\eps' = \sqrt{1 - (1 - \eps)\left(\frac{n_{\on{max}}}{n_{\on{min}}}\right)^{-h}}$
\begin{enumerate}
    \item If $\ins_2 \in D_2$ is $(1 - \epsilon)$-optimal, then each instance of $\psi_{23}(\ins_2)$ is $(1 - \eps')$-optimal on Opt 3, and has objective at most $\frac{1}{1 - \eps'}$ times the objective of $\ins_2$.
    \item If $\ins_3 \in D_3$ is $(1 - \epsilon)$-optimal, then $\psi_{32}(\ins_3)$ is $(1 - \eps')$-optimal on Opt 2.
\end{enumerate}
\end{restatable}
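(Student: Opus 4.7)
The two mappings express a near-equivalence between Opt 2 and Opt 3 by exploiting the block-decoupled structure of Opt 2. Specifically, the $\mathcal{P}$-cluster constraints only involve the $H_+$ neurons (via $s_i, c_{ij}$), while the $\mathcal{N}$-cluster constraints only involve the $H_-$ neurons (via $t_i, c_{ij}$), and the norm constraint splits as a sum of the two block contributions. Each block looks like an Opt 3 instance after restricting from a cluster of size $\approx n/4$ to a canonical subset of size $n_{\min}$. The reverse map $\psi_{32}$ lifts a single Opt 3 instance to an Opt 2 instance by replicating variables symmetrically across both halves and distributing the $c$'s via surjections that absorb the mismatch between cluster sizes. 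The approximation factor $\epsilon' = \sqrt{1-(1-\epsilon)(n_{\max}/n_{\min})^{-h}}$ arises from the $h$-homogeneity of $\phi$ combined with the $2$-homogeneity of the norm constraint, which together convert a relative loss of $(n_{\max}/n_{\min})^{-h}$ on the squared parameter budget into a loss on margins.

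\textbf{Forward direction ($\psi_{23}$, item 1).} Given a $(1-\epsilon)$-optimal $\ins_2$ with value $\gamma_2$, the image $\ins_3^{(1)}$ copies $s_i \mapsto b_i$ and keeps the $c_{ij}$ for $j \in S_1\cup S_{-1}$. For each $j \in S_x \subseteq \mathcal{P}_x$, the Opt 3 constraint $\frac{1}{2}\mathbb{E}_{i \in H}\phi(\pm b_i + c_{ij}) \geq \gamma$ is literally the Opt 2 constraint at $j$ under the identification $H = H_+$. Thus $\ins_3^{(1)}$ is feasible with value $\gamma_3^{(1)} \geq \gamma_2$, and feasibility of the norm constraint follows because $P_3$ is defined as the true $H_+$-norm contribution of $\ins_2$ and $\sum_{j \in S_1 \cup S_{-1}} c_{ij}^2 \leq \sum_{j \in \mathcal{P}} c_{ij}^2$. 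The nontrivial step is producing the matching upper bound $\gamma_3^{(1)} \leq \gamma_2/(1-\epsilon')$ and the $(1-\epsilon')$-optimality guarantee; both reduce to an upper bound on the Opt 3 optimum $V_3(P_3)$. I would get this from the reverse direction: any Opt 3 candidate of value $V_3(P_3)$ lifts via $\psi_{32}$ to an Opt 2 solution of the same value using budget $P_3(n_{\max}/n_{\min})$, and rescaling variables by $(P_2/(P_3(n_{\max}/n_{\min})))^{1/2}$ to fit the original budget $P_2$ then multiplies the value by that factor raised to $h$. Tracking the constants yields $V_3(P_3) \leq V_2(P_2)(n_{\max}/n_{\min})^{h/2}$, from which both the optimality ratio and the upper-bound follow by combining with $\gamma_2 \geq (1-\epsilon)V_2(P_2)$ and $\gamma_3^{(1)} \geq \gamma_2$.

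\textbf{Reverse direction ($\psi_{32}$, item 2).} Given a $(1-\epsilon)$-optimal $\ins_3$ of value $\gamma_3$, the image $\ins_2$ inherits the Opt 3 constraints via a change of variable: for each $j \in \mathcal{P}_x$,
\begin{align*}
\tfrac{1}{2}\mathbb{E}_{i \in H_+}\phi(\pm s_i + c_{ij}) = \tfrac{1}{2}\mathbb{E}_{i \in H_+}\phi(\pm b_{\pi_+(i)} + c_{\pi_+(i)\rho_{+,x}(j)}) = \tfrac{1}{2}\mathbb{E}_{i' \in H}\phi(\pm b_{i'} + c_{i'\rho_{+,x}(j)}) \geq \gamma_3,
\end{align*}
since $\pi_+$ is a bijection and $\rho_{+,x}(j)\in S_x$. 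The symmetric statement holds on $\mathcal{N}_x$ via $\pi_-$ and $\rho_{-,x}$, so $\gamma_2 \geq \gamma_3$. Feasibility of the norm constraint requires constructing surjections $\rho_{\pm,x}$ so that $\mathbb{E}_{j \in \mathcal{P}_x}\mathbb{E}_i c_{i\rho_{+,x}(j)}^2 \leq \mathbb{E}_{j \in S_x}\mathbb{E}_i c_{ij}^2$; this is achievable by a nearly-balanced covering in which each $k \in S_x$ is the image of either $\lfloor |\mathcal{P}_x|/|S_x|\rfloor$ or $\lceil |\mathcal{P}_x|/|S_x|\rceil$ points of $\mathcal{P}_x$, with the assignment chosen (if needed) to place low-$c^2$ targets under the larger preimage sets. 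Then $\sum_{j\in\mathcal{P}_x} c_{ij}^2 \leq (n_{\max}/n_{\min}) \sum_{j \in S_x} c_{ij}^2$, and summing gives that the full Opt 2 norm is at most $P_3(n_{\max}/n_{\min}) = P_2$. Finally, $V_2(P_2)$ is upper bounded by $V_3(P_3)(n_{\max}/n_{\min})^{h/2}$ (by running $\psi_{23}$ on the Opt 2 optimum and using homogeneity), so $\gamma_2 \geq (1-\epsilon)V_3(P_3) \geq (1-\epsilon)(n_{\max}/n_{\min})^{-h/2} V_2(P_2)$, which rearranges exactly to $(1-\epsilon')^2 V_2(P_2)^2 \leq \gamma_2^2 / [(1-\epsilon)(n_{\max}/n_{\min})^{-h}] \cdot (1-\epsilon')^2$, giving the stated optimality after squaring.

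\textbf{Main obstacle.} The principal technical difficulty is threading the two directions together without circularity: the $(1-\epsilon')$-optimality statement on each side needs a two-sided comparison $V_3(P_3) \asymp V_2(P_2)(n_{\max}/n_{\min})^{\pm h/2}$, and this comparison is itself produced by the mappings. My plan is to first establish these two-sided value comparisons by running the \emph{feasibility} parts of $\psi_{23}$ and $\psi_{32}$ on optima (no optimality claims required at that stage) and using scale-invariance under the $h$-homogeneity of $\phi$ and $2$-homogeneity of the norm constraint; once these comparisons are in hand, both items of the lemma follow mechanically. The specific square-root form of $\epsilon'$ is not an artifact of slack but is forced by this value-vs.-budget homogeneity mismatch, and verifying it requires only elementary algebra once the feasibility statements are set up correctly. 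The existence of the balanced surjections used in $\psi_{32}$ is a minor combinatorial point that I would dispatch by the greedy construction sketched above.
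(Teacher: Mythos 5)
Your proposal is correct and follows essentially the same route as the paper: verify that both mappings preserve feasibility (with the balanced surjections absorbing the $n_{\on{max}}/n_{\on{min}}$ norm blowup) and never decrease the objective, then use the $h/2$-homogeneity of the optimal value in the parameter to convert the parameter inflation into the loss $\eps'$. The only cosmetic difference is that the paper delegates this last bookkeeping step to its general-purpose Lemma~\ref{mappings} (applied with $q = h/2$, $1+\delta = n_{\on{max}}/n_{\on{min}}$, $k=2$), whereas you inline the two-sided value comparison directly.
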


\begin{restatable}[Opt 3 $\leftrightarrow$ Opt 4]{lemma}{mapthree}\label{lemma:relax_2_relax_3}
Define the mapping $\psi_{34}: D_3 \rightarrow D_4^{n_{\on{min}}}$ as follows. Arbitrarily choose a list of $n_{\on{min}}$ pairs $p = (j, j')\in S_1 \times S_{-1}$, such that each $j \in S_1$ appears in one pair, and each $j' \in S_{-1}$ appears in one pair. Given input $\ins_3$, output $\ins_3^{(p)}$ for each pair $p$ as follows:
\begin{itemize}
    \item $P_4 := \mathbb{E}_{i \in H}\left(b_i^2 + \frac{n_{\on{min}}}{d\sigma^2}(c_{ij}^2 + c_{ij'}^2)\right)$
    \item Keep $H$ and all the $b_i$ the same as in $\ins_3$.
    \item Put $c_i := c_{ij}$ and $d_i := c_{ij'}$ for all $i \in H$.
\end{itemize}
In reverse, define the mapping $\psi_{43}: D_4 \rightarrow D_3$ as follows:
\begin{itemize}
    \item Put $P_3 := P_4$.
    \item Keep $H$ and all the $b_i$ the same as in $\ins_3$.
    \item For all $i \in H$, put $c_{ij} := c_{i}$ for all $j \in S_1$ and $c_{ij} := d_i$ for all $j \in S_{-1}$.
\end{itemize}
Then:
\begin{enumerate}
    \item If $\ins_3 \in D_3$ is $(1 - \epsilon)$-optimal, then on at least a $1 - \sqrt{\eps}$ fraction of the $\frac{n}{4}$ instances of $\psi_{34}(\ins_4)$, the instance is $(1 - \sqrt{\eps})$-optimal on Opt 4 and has objective value at most $\frac{1}{1 - \sqrt{\eps}}$ larger than the objective of $\ins_3$.
    \item 
    If $\ins_4 \in D_4$ is $(1 - \epsilon)$-optimal, then $\psi_{43}(\ins_4)$ is $(1 - \sqrt{\eps})$-optimal on Opt 3.
\end{enumerate}
% \begin{align}
%     \eps' = \sqrt{\epsilon + \max_{\ell \in \{-1, 1\}}|\frac{n/4 - |S_{\ell}|}{n/4}|}.
% \end{align}
\end{restatable}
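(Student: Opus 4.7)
The plan is to prove both parts by first establishing the identity $\mathrm{OPT}_3(P) = \mathrm{OPT}_4(P)$ for every norm budget $P$, which makes Part 2 essentially immediate and reduces Part 1 to an averaging argument over the random pair.

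For the identity, $\mathrm{OPT}_3(P) \geq \mathrm{OPT}_4(P)$ follows directly from the construction $\psi_{43}$: setting $c_{ij} = c_i$ for $j \in S_1$ and $c_{ij} = d_i$ for $j \in S_{-1}$ preserves each constraint value exactly, and the identity $\frac{1}{d\sigma^2}\sum_{j \in S_1 \cup S_{-1}} c_{ij}^2 = \frac{n_{\on{min}}}{d\sigma^2}(c_i^2 + d_i^2)$ preserves the norm. For the reverse $\mathrm{OPT}_3(P) \leq \mathrm{OPT}_4(P)$, I would use a separability argument: for any fixed $(b_i)_{i \in H}$, minimizing the Opt 3 norm subject to $\frac{1}{2}\mathbb{E}_i \phi(b_i + c_{ij}) \geq \gamma$ for all $j \in S_1$ decouples over $j$, since each constraint involves only the block $(c_{ij})_i$ and contributions from distinct $j$ add. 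Hence the norm-minimizing $(c_{ij})_i$ can be taken the same across all $j \in S_1$, and symmetrically across $S_{-1}$, producing a solution of Opt 4 form. Part 2 then follows: $\psi_{43}$ preserves both objective and norm, so a $(1-\epsilon)$-optimal $\ins_4$ yields a $(1-\epsilon)$-optimal, hence a fortiori $(1-\sqrt{\epsilon})$-optimal, $\ins_3$.

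For Part 1, given $\ins_3$ that is $(1-\epsilon)$-optimal, I would analyze the induced Opt 4 instance $\ins_3^{(p)}$ for each pair $p = (j,j')$. A direct rearrangement shows $\mathbb{E}_p P_4^{(p)} \leq P_3$, since each $j \in S_1 \cup S_{-1}$ appears in exactly one pair of the fixed pairing. Meanwhile, writing $\gamma_{4,\pm}^{(k)} := \frac{1}{2}\mathbb{E}_i \phi(\pm b_i + c_{ik})$, we have $\gamma_4^{(p)} = \min(\gamma_{4,+}^{(j)}, \gamma_{4,-}^{(j')}) \geq \gamma_3 \geq (1-\epsilon)\mathrm{OPT}_3(P_3)$. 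By the $h$-homogeneity of $\phi$ combined with the quadratic norm, $\mathrm{OPT}_4(P_4^{(p)}) = (P_4^{(p)}/P_3)^{h/2} \mathrm{OPT}_3(P_3)$, so the $(1-\sqrt{\epsilon})$-optimality of $\ins_3^{(p)}$ is equivalent to $P_4^{(p)} \leq P_3(1 + \Theta(\sqrt{\epsilon}))$; this holds on most pairs via Markov applied to $(P_4^{(p)} - P_3)_+$. For the second condition, $\gamma_4^{(p)} \leq \gamma_3/(1-\sqrt{\epsilon})$, I would use a reallocation argument: if many $\gamma_{4,+}^{(j)}$ substantially exceeded $\gamma_3$, one could symmetrize $\ins_3$ (replacing each $c_{ij}$ with the common min-norm value achieving objective $\gamma_3$) to free norm budget and scale $\gamma$ above $\mathrm{OPT}_3(P_3)$, contradicting near-optimality. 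Quantitatively, using convexity of the cost function $g(\gamma) := \min_{(c_i)} \mathbb{E}_i c_i^2$ subject to $\frac{1}{2}\mathbb{E}_i \phi(b_i + c_i) \geq \gamma$, the average excess $\bar{\delta} := \frac{1}{n_{\on{min}}}\sum_j(\gamma_{4,+}^{(j)}-\gamma_3)$ is $O(\epsilon)\gamma_3$, and Markov on the non-negative $\delta_j$ yields the desired bound on a $1 - O(\sqrt{\epsilon})$ fraction of $j$, hence of pairs.

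The main obstacle is the reallocation argument underpinning condition (b): extracting concentration of $\gamma_{4,+}^{(j)}$ around $\gamma_3$ purely from near-optimality of $\ins_3$ requires the convexity of $g$ (which in turn uses convexity of $\phi$, i.e., $h > 1$) together with careful Markov bookkeeping. Coordinating both tail bounds---on $P_4^{(p)}$ for condition (a) and on $\gamma_{4,\pm}^{(j)}$ for condition (b)---so that both good events co-occur on a $1-\sqrt{\epsilon}$ fraction of pairs with the correct $\sqrt{\epsilon}$-order deviations is the delicate quantitative step that governs the final constants.
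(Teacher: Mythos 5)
Your proposal is substantively the same argument as the paper's, just unrolled: the paper disposes of this lemma in two lines by invoking its generic chaining lemma (Lemma~\ref{mappings}) with Opt A $=$ Opt 3, Opt B $=$ Opt 4, $k = n_{\on{min}}$, $q = h/2$ and $\delta = 0$, after checking exactly the three facts you establish --- feasibility and objective preservation of both maps, $\mathbb{E}_p P_4^{(p)} \le P_3$, and parameter preservation under $\psi_{43}$. Your identity $\mathrm{OPT}_3(P)=\mathrm{OPT}_4(P)$ is what that lemma derives internally (your decoupling argument for the $\le$ direction is a clean alternative to the paper's ``at least one forward image has parameter at most the average''), and your Part 2 is then immediate, indeed with the slightly stronger conclusion of $(1-\eps)$-optimality.

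Two comments on Part 1. First, the Markov step as written does not close: from $\mathbb{E}_p P_4^{(p)} \le P_3$ alone you cannot bound $\mathbb{E}_p\bigl(P_4^{(p)}-P_3\bigr)_+$, so ``Markov applied to $(P_4^{(p)}-P_3)_+$'' requires the pointwise lower bound $P_4^{(p)} \ge (1-\eps)^{2/h}P_3$, which follows from feasibility of $\ins_4^{(p)}$ at objective $\ge \gamma_3 \ge (1-\eps)\,\mathrm{OPT}_3(P_3) = (1-\eps)\,\mathrm{OPT}_4(P_3)$ combined with homogeneity. With that in hand, $\mathbb{E}_p\bigl(P_4^{(p)}-P_3\bigr)_+ \le \mathbb{E}_p\bigl(P_3-P_4^{(p)}\bigr)_+ \le \bigl(1-(1-\eps)^{2/h}\bigr)P_3 = O(\eps)P_3$, and Markov yields the $1-O(\sqrt{\eps})$ fraction; this pointwise lower bound is exactly the mechanism driving the contradiction in the paper's Lemma~\ref{mappings}, and you do derive its ingredients earlier in the paragraph, so the gap is fillable but it is the crux and should be explicit. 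Second, the ``reallocation argument'' you flag as the main obstacle is unnecessary: on any pair with $P_4^{(p)} \le (1+O(\sqrt{\eps}))P_3$, the objective upper bound is immediate from feasibility, namely $\gamma_4^{(p)} \le \mathrm{OPT}_4(P_4^{(p)}) = (P_4^{(p)}/P_3)^{h/2}\,\mathrm{OPT}_3(P_3) \le (1+O(\sqrt{\eps}))\,\gamma_3/(1-\eps)$. No convexity of the cost function and no control of the individual $\gamma_{4,\pm}^{(j)}$ beyond their minimum is needed, so the step you identify as delicate is not where any difficulty lies.
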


\begin{restatable}[Opt 5 $\leftrightarrow$ Opt 4]{lemma}{mapfour}\label{lemma:symmetrize}
define the mapping $\psi_{45}: D_4 \rightarrow D_5^{\frac{m}{2}}$ as follows. Given input $\ins_4$, output $\frac{m}{2}$ instances $\ins_5^{(i)}$ for $i \in H$ with
\begin{itemize}
    \item $P_5^{(i)} := b_i^2 + \frac{\hat{\kappa}}{4}(c_i^2 + d_i^2)$.
    \item $(b, c, d)^{(i)} = (b_i, c_i, d_i)$.
\end{itemize}
Define the mapping $\psi_{54}: D_5 \rightarrow D_4$ as follows. Given input $\ins_5$, output:
\begin{itemize}
    \item $P_4 := P_5$
    \item For half of the $i \in H$, put $(b_i, c_i, d_i) = (b, c, d)$.
    \item For the other half of the $i \in H$, put $(b_i, c_i, d_i) = (-b, d, c)$.
\end{itemize}
Then:
\begin{enumerate}
    \item If $\ins_4 \in D_4$ is $(1 - \epsilon)$-optimal, then the average objective value of the $\frac{m}{2}$ instances of $\psi_{45}(\ins_4)$ is at most $\frac{1}{1 - \eps}$ times larger than the objective of $\ins_4$.
    \item If $\ins_5 \in D_5$ is $(1 - \epsilon)$-optimal, then $\psi_{54}(\ins_5)$ is $(1 - \sqrt{\eps})$-optimal on Opt 4.
\end{enumerate}
\end{restatable}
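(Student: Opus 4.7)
The plan is to show that the optima of Opt 4 and Opt 5 agree whenever their norm parameters $P_4 = P_5$ coincide, by combining the $h$-homogeneity of Opt 5 with the $(b,c,d) \leftrightarrow (-b,d,c)$ symmetrization that defines $\psi_{54}$. Let $\gamma_i^*(P)$ denote the optimum of Opt $i$ with norm parameter $P$. Because the Opt 5 objective is $h$-homogeneous while the constraint is $2$-homogeneous in $(b,c,d)$, the rescaling $(b,c,d) \mapsto \sqrt{P}\,(b,c,d)$ converts a unit-norm solution into a norm-$P$ one, giving $\gamma_5^*(P) = \alpha\, P^{h/2}$ for $\alpha := \gamma_5^*(1)$.

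For the inequality $\gamma_4^*(P) \geq \gamma_5^*(P)$, I would verify directly that $\psi_{54}(\ins_5)$ is feasible for Opt 4 with matching norm $b^2 + \frac{\hat\kappa}{4}(c^2+d^2) \leq P_5 = P_4$, since the two halves of $H$ contribute identically under the swap. Both Opt 4 margin quantities evaluate to $\frac{1}{4}(\phi(b+c)+\phi(-b+d))$: in the term $\frac{1}{2}\mathbb{E}_i\phi(b_i+c_i)$, the first half contributes $\phi(b+c)$ while the second half contributes $\phi(-(-b)+c) = \phi(b+c)$... wait, under the swap the second half has $(b_i, c_i) = (-b, d)$, so it contributes $\phi(-b+d)$; similarly the other margin term averages to the same quantity. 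Hence $\psi_{54}(\ins_5)$ achieves Opt 4 objective exactly $\gamma_5 \geq (1-\eps)\gamma_5^*(P_5)$, and in particular $\gamma_4^*(P) \geq \gamma_5^*(P)$.

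For the reverse inequality $\gamma_4^*(P) \leq \gamma_5^*(P)$, take any feasible Opt 4 tuple $\{(b_i,c_i,d_i)\}_{i \in H}$ with norm $\leq P$ and objective $\gamma_4$. Using $\min(a,b) \leq \tfrac{a+b}{2}$,
\[ \gamma_4 \;\leq\; \tfrac{1}{4}\,\mathbb{E}_{i \in H}\bigl[\phi(b_i+c_i) + \phi(-b_i+d_i)\bigr] \;=\; \mathbb{E}_{i \in H} \gamma_5^{(i)}, \]
where $\gamma_5^{(i)}$ is the value of the Opt 5 objective at $(b_i,c_i,d_i)$ for the instance with $P_5^{(i)} := b_i^2 + \frac{\hat\kappa}{4}(c_i^2+d_i^2)$. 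Each summand is bounded by its optimum $\gamma_5^*(P_5^{(i)}) = \alpha\,(P_5^{(i)})^{h/2}$, and since $h \in (1,2)$ makes $x \mapsto x^{h/2}$ concave, Jensen's inequality yields $\mathbb{E}_i \alpha (P_5^{(i)})^{h/2} \leq \alpha\bigl(\mathbb{E}_i P_5^{(i)}\bigr)^{h/2} \leq \alpha P^{h/2} = \gamma_5^*(P)$.

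Both conclusions then follow quickly. For conclusion 1, applying the chain above to $\ins_4$ itself with $P = P_4$ gives $\mathbb{E}_i \gamma_5^{(i)*} \leq \gamma_5^*(P_4) = \gamma_4^*(P_4) \leq \gamma_4/(1-\eps)$, invoking $(1-\eps)$-optimality of $\ins_4$ at the end. For conclusion 2, $\psi_{54}(\ins_5)$ achieves Opt 4 objective $\gamma_5 \geq (1-\eps)\gamma_5^*(P_5) = (1-\eps)\gamma_4^*(P_4)$, which is $(1-\sqrt{\eps})$-optimal since $\sqrt{\eps} \geq \eps$ for $\eps \leq 1$. The only delicate step is the application of Jensen's inequality on $x \mapsto x^{h/2}$, which crucially depends on the assumption $h < 2$; without concavity the reverse inequality $\gamma_4^* \leq \gamma_5^*$ would not be available, and we could not certify $\psi_{45}(\ins_4)$ with its individually larger per-index norm budgets.
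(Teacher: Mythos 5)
Your proof is correct and uses the same two key ingredients as the paper's: the $(b,c,d)\mapsto(-b,d,c)$ symmetrization makes $\psi_{54}$ exactly objective- and parameter-preserving, and Jensen's inequality applied to the concave map $x \mapsto x^{h/2}$ (via $h$-homogeneity of Opt 5) gives the reverse comparison of optima, so that $\gamma_4^*(P) = \gamma_5^*(P)$. The only structural difference is that you prove conclusion 2 directly from this equality of optima, obtaining the slightly stronger $(1-\eps)$-optimality, whereas the paper routes it through the generic chaining Lemma~\ref{mappings} with $\delta = 0$, which is why its stated guarantee is only $(1-\sqrt{\eps})$.
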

Our main tool in proving these chaining lemmas is the following analysis lemma. We first state a definition.

\begin{definition}
An optimization program is $q$-homogeneous with with respect to a parameter $P$ if the optimal objective equals $CP^q$, for some fixed value $C$.
\end{definition}

\begin{lemma}\label{mappings}
Consider two optimization programs Opt A and Opt B with domains $D_A$ and $D_A$ with are $q$-homogeneous with respect to parameters $P_A$ and $P_B$ respectively. 
% Suppose we have mappings $\psi_{AB}: D_A \rightarrow D_B$ and $\psi_{BA}: D_B \rightarrow D_A$ that:
% \begin{enumerate}
%     \item Both mappings map feasible instances to feasible instances, and preserve or increase the objective value of the program;
%     \item For any instance $\ins_A \in D_A$, $P_B(\psi(\ins_A)) \leq (1 + \delta)P_A(\ins_A)$, and for any instance $\ins_B \in D_B$, $P_A(\psi(\ins_B)) \leq (1 + \delta)P_B(\ins_B)$.
%     % , and for any instance $\ins_B \in D_B$,  $P_B(\psi_{AB}(\psi_{BA}(\ins_B)) \leq (1 + \delta)P_B(\ins_B)$. 
%     (Here the notation $P(\ins)$ refers to the parameter $P$ in an instance $\ins$.)
% \end{enumerate}
% Then if $\ins_A$ is $(1 - \epsilon)$-optimal, $\psi_{AB}(\ins_B)$ is $(1 - \epsilon)(1 + \delta)^{-2q}$-optimal. 
Suppose for some positive integer $k$, we have a mapping $\psi_{AB}: D_A \rightarrow D_B^k$ and $\psi_{BA}: D_B \rightarrow D_A$. Suppose for any feasible instances $\ins_A \in D_A$ and $\ins_B \in D_B$:
\begin{enumerate}
    \item All $k$ instances of $\psi_{AB}(\ins_A)$ are feasible, and have at least the same objective value as $\ins_A$. Similarly $\psi_{BA}(\ins_B)$ is feasible and has at least the same objective value as $\ins_B$.
    \item The average parameter $P_B$ of the $k$ instances of $\psi_{AB}(\ins_A)$ it at most $(1 + \delta)$ times the parameter $P_A$ of $\ins_A$.
    \item $P_A(\psi_{BA}(\ins_B)) \leq (1 + \delta)P_B(\ins_B)$. (Here the notation $P(\ins)$ refers to the parameter $P$ in an instance $\ins$.)
\end{enumerate}
Then letting $\eps' = \sqrt{1 - (1 - \eps)(1 + \delta)^{-2q}}$,
\begin{enumerate}
    \item If $\ins_A$ is $(1 - \epsilon)$-optimal, then for at least a $1 - \eps'$ fraction of the $k$ instances $\psi_{AB}(\ins_B)$ are $(1 - \eps')$-optimal and have objective value at most $\frac{1}{1 - \eps'}$ times the objective of $\ins_A$. 
    \item  If $\ins_B$ is $(1 - \eps)$-optimal, then $\psi_{BA}(\ins_B)$ is $(1 - \eps')$-optimal.
\end{enumerate}
In particular, if $\delta = 0$, $\eps' = \sqrt{\eps}$.
\end{lemma}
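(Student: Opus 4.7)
The plan is to exploit the $q$-homogeneity of both programs. Let $C_A, C_B > 0$ be the constants for which the optimal objective of Opt~A (resp.~Opt~B) at parameter $P$ equals $C_A P^q$ (resp.~$C_B P^q$). The first task is to pin down the ratio $C_B/C_A$. Feeding an optimal $\ins_B$ with parameter $P_B$ through $\psi_{BA}$ yields a feasible $\ins_A$ with objective $\geq C_B P_B^q$ and parameter $\leq (1+\delta) P_B$; comparison with the Opt~A optimum $C_A((1+\delta)P_B)^q$ gives $C_B \leq C_A(1+\delta)^q$. Conversely, applying $\psi_{AB}$ to an optimal $\ins_A$ produces $k$ instances each with objective $\geq C_A P_A^q$, so feasibility forces $P_{B,i} \geq (C_A/C_B)^{1/q} P_A$ for every $i$; averaging and using property~2 gives $C_A/C_B \leq (1+\delta)^q$. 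Thus $C_B/C_A \in [(1+\delta)^{-q}, (1+\delta)^q]$.

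Conclusion~2 then drops out. If $\ins_B$ is $(1-\eps)$-optimal, set $\ins_A^\star := \psi_{BA}(\ins_B)$. Then $\mathrm{obj}(\ins_A^\star) \geq (1-\eps) C_B P_B^q$, while the Opt~A optimum at parameter $P_A(\ins_A^\star) \leq (1+\delta)P_B$ is at most $C_A(1+\delta)^q P_B^q \leq C_B(1+\delta)^{2q} P_B^q$. The optimality ratio is therefore at least $(1-\eps)(1+\delta)^{-2q} = 1 - (\eps')^2 \geq 1-\eps'$, as required.

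For Conclusion~1, let $\ins_A$ be $(1-\eps)$-optimal and write $\{\ins_B^{(i)}\}_{i \in [k]} := \psi_{AB}(\ins_A)$. The key observation is that both requirements for a ``good'' instance --- being $(1-\eps')$-optimal \emph{and} having objective at most $(1-\eps')^{-1}\mathrm{obj}(\ins_A)$ --- are implied by the single parameter bound $P_{B,i} \leq T$, where $T := P_A\cdot [(1-\eps)(C_A/C_B)/(1-\eps')]^{1/q}$. Indeed, property~1 gives $\mathrm{obj}(\ins_B^{(i)}) \geq (1-\eps) C_A P_A^q$ and feasibility gives $\mathrm{obj}(\ins_B^{(i)}) \leq C_B P_{B,i}^q$; combining these with the two target inequalities yields precisely $P_{B,i} \leq T$. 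Property~1 also produces a floor $P_{B,i} \geq m_0 := P_A[(1-\eps)(C_A/C_B)]^{1/q}$, while property~2 gives the mean bound $\tfrac{1}{k}\sum_i P_{B,i} \leq (1+\delta) P_A$. A weighted Markov argument using $m_0$ as a floor then bounds the bad fraction by $p \leq \big((1+\delta) P_A - m_0\big)/\big(T - m_0\big)$; specializing to the worst case $C_A/C_B = (1+\delta)^{-q}$ and substituting $(1+\delta)^{2q} = (1-\eps)/(1-(\eps')^2)$ reduces this to
\[ p \;\leq\; \frac{(1-(\eps')^2)^{-1/q} - 1}{(1-\eps')^{-1/q} - 1}. \]

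The main obstacle is the final algebraic step: showing that this ratio is at most $\eps'$. After rearrangement the target reduces to $H(y) := (1+y)^{-s} - (1-y)^{1+s} - y \leq 0$ on $y \in [0,1]$ with $s := 1/q > 0$ (and $y := \eps'$). One verifies $H(0) = H'(0) = H''(0) = 0$ and $H'''(0) = -3s(s+1) < 0$. For $s \leq 1$ an elementary calculation shows $H''(y) \leq 0$ throughout, so $H' \leq 0$ and hence $H \leq 0$. For $s > 1$, the sign of $H''(y)$ is controlled by the concave function $(s-1)\ln(1-y) + (s+2)\ln(1+y)$, which is positive on an initial interval $(0,y_0)$ and negative on $(y_0, 1)$; consequently $H''$ flips sign exactly once, making $H'$ decrease on $(0,y_0)$ and increase on $(y_0,1)$. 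Since $H'(0) = 0$ and $H'(1) = -1 - s\cdot 2^{-s-1} < 0$, this single-dip shape forces $H' \leq 0$ on $[0,1]$, hence $H \leq 0$. The special case $\delta = 0$ reduces trivially to $\eps' = \sqrt{\eps}$.
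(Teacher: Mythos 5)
Your proposal is correct and follows essentially the same route as the paper's proof: $q$-homogeneity pins the two optimal-value constants to within a factor of $(1+\delta)^{\pm q}$ of each other, and an averaging (Markov-type) argument over the $k$ parameters $P_B^{(i)}$ reduces Conclusion 1 to exactly the same scalar inequality $\eps'(1-\eps')^{-1/q} + (1-\eps') \geq \left(1-(\eps')^2\right)^{-1/q}$ that the paper invokes. Two minor points in your favor: you prove Conclusion 2 directly rather than via the paper's role-reversal of the mappings, and you actually verify the scalar inequality (through your function $H$), which the paper only asserts with ``one can check.''
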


\subsubsection{Analysis of Opt 5: Trivariate Program.}
The lemmas in this section are proved in Section~\ref{sec:xor_technical_proofs_tri}.

Define $\gamma_0(\kappa) := 2\phi\left(\sqrt{\frac{2}{\kappa}}\right)$, and let $\gamma_*(\kappa) := \phi\left(\sqrt{\frac{\kappa}{4 + \kappa}} + \sqrt{\frac{16}{\kappa(4 + \kappa)}}\right)$. It is straightforward to check that $\gamma_0(\hat{\kappa})$ is $4$ times the optimum of Opt 5 when $P_5 = 1$, and we impose the additional constraint that $b = 0$. Similarly, $\gamma_*(\hat{\kappa})$ is $4$ times the optimum of Opt 5 when we impose the additional constraint that $d = 0$.

Recall that we have defined $\kgen$ to be the threshold at which $\gamma_*(\kgen) = \gamma_0(\kgen)$, and $\kphen$ to be the threshold in $\kappa$ at which $\sqrt{\frac{\kappa}{4 + \kappa}} = \sqrt{\frac{16}{\kappa(4 + \kappa)}}$. Observe that $\kphen = 4$.

The following lemma yields the optimal solution to the program Opt 5.
\begin{lemma}\label{tri_opt}
Let $k := \frac{\hat{\kappa}}{4}$ and assume $P_5 = 1$. If $k > \kgen/4$, then if we impose the additional constraint that $b \geq 0$, the supremum of Opt 5 (in Definition~\ref{trivariate}) and it is achieved uniquely at the point where $d = 0$, $b = \sqrt{\frac{k}{1 + k}}$, and $c = \sqrt{\frac{1}{k(1 + k)}}$. Outside of any neighborhood of this point, the supremum is bounded away from from the supremum of Opt 5.

If $k < \kgen/4$, then supremum is achieved by some optimal point with $b = 0$.
\end{lemma}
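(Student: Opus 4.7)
The plan is to classify the KKT stationary points of Opt 5 and compare their objective values directly. Since $\phi(x) = \max(0,x)^h$ is nondecreasing, replacing $c$ by $|c|$ and $d$ by $|d|$ cannot decrease the objective and leaves the constraint unchanged, so WLOG $c, d \geq 0$; combined with $b \geq 0$ this makes the feasible set compact, so a maximum is attained and the ball constraint is tight at any maximizer. I then split on whether the second unit is active at the optimum: Case A where $d \leq b$ (so $\phi(-b+d) = 0$) and Case B where $d > b$ (both units contribute).

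In Case A only $(b+c)^h$ appears and $d$ enters only through the constraint, so optimally $d = 0$, and a routine Lagrange computation on $\max (b+c)^h$ subject to $b^2 + kc^2 = 1$ yields the unique stationary point $b = \sqrt{k/(k+1)}$, $c = \sqrt{1/(k(k+1))}$, with objective $\gamma_A := ((k+1)/k)^{h/2}/4$. In Case B, KKT with multiplier $\mu \geq 0$ for the constraint $b \geq 0$ produces two sub-cases. Sub-case B1 has $b = 0$; then the $c$ and $d$ stationarity conditions give $c^{h-2} = d^{h-2}$, forcing $c = d = 1/\sqrt{2k}$ with value $\gamma_{B1} := 2(2k)^{-h/2}/4$. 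Sub-case B2 has $b > 0$; eliminating the ball multiplier from the three stationarity equations yields the clean relation $b = k(c-d)$, which after substitution reduces to a one-variable transcendental equation $\Phi(\alpha) := (1+2k-\alpha)(1+\alpha)^{h-1} - (1+2k+\alpha)(1-\alpha)^{h-1} = 0$ in a reparametrization $\alpha \in (0,1)$ of $d/c$. A direct ratio computation gives $\gamma_A/\gamma_{B1} = 2^{h/2-1}(k+1)^{h/2}$, which equals $1$ exactly when $k+1 = 2^{2/h-1}$; rewriting the defining equation for $\kgen$ in \eqref{def:kgenphen} shows this threshold is precisely $k = \kgen/4$, so $\gamma_A > \gamma_{B1}$ iff $k > \kgen/4$.

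The main obstacle will be ruling out Sub-case B2 as the global maximum: I need to show that any root of $\Phi$ in $(0,1)$ yields an objective value below $\max(\gamma_A, \gamma_{B1})$. My strategy is to trace the value $V(\alpha)$ along the B2 critical manifold and analyze its endpoint behaviour. As $\alpha \to k/(k+1)^+$ (where $d \to b$, the boundary with Case A) direct substitution gives $V \to ((2k+1)/(k(k+1)))^{h/2}/4$, which lies strictly below $\gamma_A$ because $(2k+1)/(k+1)^2 < 1$ for $k > 0$; as $\alpha \to 1^-$ (where $b \to 0$) the value recovers $\gamma_{B1}$. A sign analysis of $\Phi'(0) = 2[(h-1)(1+2k) - 1]$ combined with $\Phi(1) = k \cdot 2^{h} > 0$ shows that any interior root of $\Phi$ corresponds to a local minimum of the reduced one-dimensional profile rather than a local maximum, so $\sup_{\text{B2}} V \leq \max(\gamma_A, \gamma_{B1})$. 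Together with the uniqueness of the Case A stationary point and compactness, this yields the ``bounded away from the supremum'' claim when $k > \kgen/4$. For $k < \kgen/4$ the reversed comparison $\gamma_{B1} > \gamma_A$ immediately exhibits an optimum with $b = 0$, which is the second part of the lemma.
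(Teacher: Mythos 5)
Your proposal follows essentially the same route as the paper's proof: split on whether the second unit is active, observe that the inactive case reduces to the unique KKT point with $d=0$ giving $\gamma_A$, derive the relation $b=k(c-d)$ at interior critical points of the active case, reduce to a one-variable problem whose maximum sits at the boundaries ($b=0$ giving $\gamma_{B1}$, or $b=d$), and locate the crossover $\gamma_A=\gamma_{B1}$ exactly at $k+1=2^{2/h-1}$, which matches the defining equation for $\kgen$. The one place your sketch is thinner than the paper's argument is the elimination of sub-case B2: knowing $\Phi'(0)$ and $\Phi(1)>0$ does not by itself bound the number of interior roots of $\Phi$ or force each root to be a local minimum of the value profile, so you still need the single-crossing/convexity computation (the paper shows $f''>0$ on $(0,1]$, hence at most one positive root, and that the value is increasing at the upper boundary) to complete that step.
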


\begin{lemma}\label{lemma:trivariate_analysis}
There exists strictly positive constants $\epsilon = \epsilon(\hat{\kappa})$, $\eta = \eta(\hat{\kappa})$, and $q = q(\hat{\kappa})$, such that any $(1 - \epsilon)$-optimal solution to Opt 5 (in Definition~\ref{trivariate}), the following holds.
If $\hat{\kappa} > \kgen$, then
\begin{enumerate}
    \item $\phi(b) \geq \eta \phi(b + c)$; and
    \item $\phi(-b) \geq \eta \phi(-b + d)$.
\end{enumerate}
If additionally $\hat{\kappa} < \kphen$, then any such solution also satisfies
\begin{enumerate}
    \item $\phi(b) \leq \frac{1 - q}{2^h}\phi(b + c)$; and
     \item $\phi(-b) \leq \frac{1 - q}{2^h}\phi(-b + d)$.
\end{enumerate}
Finally, if $\hat{\kappa} < \kgen$, then at the optimum, $\phi(b) = \phi(-b) = 0$.
\mg{Can make $\eta = \left(\frac{1}{1 + \frac{4}{\kappa}}\right)^h$ minus arbitrarily small constant. (see proof) This might be useful for large $\kappa$. We can also show in this regime that $c^2 + d^2$ is small ( a factor of $\Theta(\kappa^2)$ smaller than $b^2$), which might be worth it if we want better generalization results for $\kappa = \Omega(1)$.}
\end{lemma}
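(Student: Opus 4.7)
The plan is to combine the exact characterization of the optima from Lemma~\ref{tri_opt} with a compactness-continuity argument. By the $h$-homogeneity of Opt 5 we may take $P_5 = 1$. First assume $\hat{\kappa} > \kgen$. Lemma~\ref{tri_opt} together with the symmetry $(b,c,d) \mapsto (-b,d,c)$ of the program identifies exactly two global optima, $p^+ = (b^*, c^*, 0)$ and $p^- = (-b^*, 0, c^*)$, where $b^* = \sqrt{k/(k+1)}$, $c^* = \sqrt{1/(k(k+1))}$, and $k = \hat{\kappa}/4$. The final assertion of Lemma~\ref{tri_opt}, that the supremum is bounded away outside any neighborhood of the unique optimum under $b \geq 0$ (and symmetrically under $b \leq 0$), supplies the compactness-continuity input we need: for every $\delta > 0$ there is $\epsilon_0 = \epsilon_0(\delta, \hat{\kappa}) > 0$ such that every $(1-\epsilon_0)$-optimal solution lies within Euclidean distance $\delta$ of $p^+$ or $p^-$.

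For item 1, a direct computation gives $b^* + c^* = \sqrt{(k+1)/k}$ and hence $\phi(b^*)/\phi(b^* + c^*) = (k/(k+1))^h$. Set $\eta := \tfrac{1}{2}(k/(k+1))^h > 0$. I then choose $\delta > 0$ small enough that in the $\delta$-neighborhood of $p^+$: (i) both $b$ and $b+c$ remain positive and bounded away from zero, so continuity of $\phi$ on positive arguments forces $\phi(b)/\phi(b+c) \geq \eta$; and (ii) both $-b$ and $-b+d$ remain strictly negative, so $\phi(-b) = \phi(-b+d) = 0$ and the second inequality is trivial. By symmetry the same holds near $p^-$ with the roles of the two inequalities interchanged. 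Taking $\epsilon(\hat{\kappa}) := \epsilon_0(\delta, \hat{\kappa})$ yields both bounds uniformly over all $(1-\epsilon)$-optimal solutions.

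For item 2, assume additionally $\hat{\kappa} < \kphen = 4$, equivalently $k < 1$, so that $k/(k+1) < 1/2$ and hence $(k/(k+1))^h < 2^{-h}$ strictly. Set $q(\hat{\kappa}) := \tfrac{1}{2}\bigl(1 - 2^h(k/(k+1))^h\bigr) > 0$; then $\phi(b^*) = ((1-2q)/2^h)\,\phi(b^*+c^*)$, and continuity extends this to $\phi(b) \leq ((1-q)/2^h)\,\phi(b+c)$ throughout a small enough neighborhood of $p^+$, while the companion inequality is trivial there since both $\phi(-b)$ and $\phi(-b+d)$ vanish. By symmetry the same holds near $p^-$. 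Item 3 is immediate from the last sentence of Lemma~\ref{tri_opt}: for $\hat{\kappa} < \kgen$ every optimum has $b = 0$, so $\phi(b) = \phi(-b) = \phi(0) = 0$.

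The main obstacle is really just bookkeeping: choosing a single $\delta$ that simultaneously enforces (a) positivity/negativity of $b$, $b+c$, $-b$, $-b+d$ in the correct configuration near each of $p^+$ and $p^-$, and (b) the $\phi$-ratio continuity bounds underlying both $\eta$ and $q$. No estimates beyond those supplied by Lemma~\ref{tri_opt} are required.
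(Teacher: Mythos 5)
Your proposal is correct and follows essentially the same route as the paper: reduce to $P_5=1$ by homogeneity, invoke Lemma~\ref{tri_opt} (plus the $(b,c,d)\mapsto(-b,d,c)$ symmetry) to force any near-optimal point into a small neighborhood of $p^\pm$, and read off the ratios $\phi(b^*)/\phi(b^*+c^*)=(k/(k+1))^h$ by continuity, with the companion inequalities trivial because both sides vanish near the relevant optimum. The only cosmetic difference is in the second item, where the paper argues via $b^*<c^*$ and $\phi(b)=2^{-h}\phi(2b)<2^{-h}\phi(b+c)$ rather than perturbing the exact ratio, but this is the same estimate.
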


The following lemma is a more tailored version of a chaining lemma between Opt 4 and Opt 5, which explicitly leverages the previous lemmas on the solution of Opt 5.
\begin{lemma}[Opt 4 $\rightarrow$ Opt 5]\label{lemma:relax_3_relax_4}
Suppose $\hat{\kappa} > \kgen$. There exists positive constants $\epsilon = \epsilon(\hat{\kappa})$, $\eta = \eta(\hat{\kappa})$, and $q = q(\hat{\kappa})$ such that for any $(1 - \epsilon)$-optimal solution to the program Opt 4 in Definition~\ref{program:no_j},
\begin{align}
    & \mathbb{E}_{i \in H} \phi(b_i)  \geq \frac{\eta}{2}\mathbb{E}_{i \in H} \phi(b_i + c_i);\\
    & \mathbb{E}_{i \in H} \phi(-b_i)  \geq \frac{\eta}{2}\mathbb{E}_{i \in H} \phi(-b_i + d_i).\\
\end{align}
If additionally $\hat{\kappa} < \kphen$, 
\begin{align}
    \mathbb{E}_{i \in H}[\phi(b_i)] &\leq \left(\frac{1 - q/2}{2^h}\right) \mathbb{E}_{i \in H}[\phi(b_i + c_i)];\\
    \mathbb{E}_{i \in H}[\phi(-b_i)] &\leq \left(\frac{1 - q/2}{2^h}\right) \mathbb{E}_{i \in H}[\phi(-b_i + d_i)].
\end{align}
Here $\eta(\hat{\kappa}), q(\hat{\kappa}) > 0$ are the constants from Lemma~\ref{lemma:trivariate_analysis}.
\end{lemma}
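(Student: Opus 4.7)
The plan is to symmetrize $\ins_4$ into $|H|$ trivariate sub-instances via the mapping $\psi_{45}$ from Lemma~\ref{lemma:symmetrize}, apply Lemma~\ref{lemma:trivariate_analysis} pointwise to those sub-instances that are near-optimal for Opt 5, and bound the contribution of the remaining ``bad'' sub-instances via a Markov-type argument that exploits the global norm budget. By $(h/2)$-homogeneity, the optimal values of Opt 4 and Opt 5 are both of the form $CP^{h/2}$ with the \emph{same} absolute constant $C = C(\hat\kappa)$: the mapping $\psi_{54}$ shows the Opt 4 constant is at least $C$, and Jensen's inequality (concavity of $x \mapsto x^{h/2}$) applied to an optimal Opt 4 solution shows it is at most $C$.

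For a $(1-\epsilon)$-optimal $\ins_4$ with objective $\gamma_4 \geq (1-\epsilon)CP_4^{h/2} =: (1-\epsilon)\gamma_4^*$, define $P_5^{(i)} := b_i^2 + \frac{\hat\kappa}{4}(c_i^2+d_i^2)$ and $\gamma_5^{(i)} := \tfrac{1}{4}[\phi(b_i+c_i)+\phi(-b_i+d_i)]$. Since $\hat\kappa/4 = n_{\min}/(d\sigma^2)$, we have $\mathbb{E}_i P_5^{(i)} = P_4$ exactly; concavity gives $\mathbb{E}_i\gamma_5^*(P_5^{(i)}) \leq \gamma_4^*$; and the Opt 4 constraints give $\mathbb{E}_i\gamma_5^{(i)} \geq \gamma_4 \geq (1-\epsilon)\gamma_4^*$. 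Setting $\eta_i := \gamma_5^{(i)}/\gamma_5^*(P_5^{(i)}) \in [0,1]$, subtraction gives $\mathbb{E}_i(1-\eta_i)\gamma_5^*(P_5^{(i)}) \leq \epsilon\gamma_4^*$. Choose $\epsilon$ small enough that $\delta := \epsilon^{1/4} \leq \epsilon_{\ref{lemma:trivariate_analysis}}(\hat\kappa)$, and call index $i$ \emph{bad} if $\eta_i < 1-\delta$ and \emph{good} otherwise. Markov yields $\mathbb{E}_i\gamma_5^*(P_5^{(i)})\mathbf{1}_{i\text{ bad}} \leq \epsilon^{3/4}\gamma_4^*$, and since each of $\phi(b_i+c_i)$, $\phi(-b_i+d_i)$, $\phi(b_i)$, $\phi(-b_i)$ is at most $4\gamma_5^*(P_5^{(i)})$ (the first two because they are dominated by $4\gamma_5^{(i)}$; the latter two because $\phi(b_i) \leq |b_i|^h \leq (P_5^{(i)})^{h/2} \leq 4\gamma_5^*(P_5^{(i)})$ using $C \geq 1/4$), the bad-set mass of each of these four quantities is at most $4\epsilon^{3/4}\gamma_4^*$.

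On each good $i$, Lemma~\ref{lemma:trivariate_analysis} applied at parameter $P_5^{(i)}$ (via homogeneity) gives $\phi(b_i) \geq \eta(\hat\kappa)\phi(b_i+c_i)$ and $\phi(-b_i) \geq \eta(\hat\kappa)\phi(-b_i+d_i)$, and, if $\hat\kappa<\kphen$, the matching upper bounds with factor $\tfrac{1-q(\hat\kappa)}{2^h}$. Summing over good $i$ and using $\mathbb{E}_i\phi(b_i+c_i) \geq 2\gamma_4 \geq 2(1-\epsilon)\gamma_4^*$,
\[
    \mathbb{E}_i\phi(b_i) \;\geq\; \eta\,\mathbb{E}_i\phi(b_i+c_i)\mathbf{1}_{i\text{ good}} \;\geq\; \eta\bigl[\mathbb{E}_i\phi(b_i+c_i) - 4\epsilon^{3/4}\gamma_4^*\bigr].
\]
A final shrinkage of $\epsilon$ (ensuring $4\epsilon^{3/4}\gamma_4^* \leq \tfrac12\mathbb{E}_i\phi(b_i+c_i)$, which is possible since the right-hand side is $\geq 2(1-\epsilon)\gamma_4^*$) produces $\mathbb{E}_i\phi(b_i) \geq \tfrac\eta2\mathbb{E}_i\phi(b_i+c_i)$; the $(-b_i,d_i)$ version is symmetric. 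For the upper-bound side when $\hat\kappa<\kphen$, splitting $\mathbb{E}_i\phi(b_i)$ into good and bad pieces gives $\mathbb{E}_i\phi(b_i) \leq \tfrac{1-q}{2^h}\mathbb{E}_i\phi(b_i+c_i) + 4\epsilon^{3/4}\gamma_4^*$, and one further shrinkage of $\epsilon$ absorbs the residual slack into the gap $q$ to yield the factor $\tfrac{1-q/2}{2^h}$.

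The delicate step is controlling the bad indices, since the Opt 5 pointwise comparisons simply fail there. The saving grace is that the total ``optimality slack'' $\mathbb{E}_i(1-\eta_i)\gamma_5^*(P_5^{(i)}) \leq \epsilon\gamma_4^*$ is weighted precisely by the scale $\gamma_5^*(P_5^{(i)})$ that uniformly upper-bounds every $\phi$-value appearing in the conclusion, so a single Markov bound suffices to control both $\mathbb{E}_i\phi(b_i+c_i)\mathbf{1}_{i\text{ bad}}$ and $\mathbb{E}_i\phi(b_i)\mathbf{1}_{i\text{ bad}}$ at once.
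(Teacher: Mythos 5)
Your proposal is correct and follows essentially the same route as the paper's proof: decompose the Opt 4 solution into per-index trivariate instances, use $(h/2)$-homogeneity and Jensen to match the Opt 4 and Opt 5 optimal constants, run a Markov argument on the optimality deficit weighted by $\gamma_5^*(P_5^{(i)})$ (the paper's $\alpha(i)$, up to the constant) to isolate a heavy set of near-optimal sub-instances, apply Lemma~\ref{lemma:trivariate_analysis} pointwise there, and absorb the bad set's contribution by shrinking $\epsilon$. The only differences are cosmetic (threshold $\delta=\epsilon^{1/4}$ versus the paper's fixed $\epsilon_{\ref{lemma:trivariate_analysis}}$ with residual $\epsilon/\epsilon_{\ref{lemma:trivariate_analysis}}$, and reading $\mathbb{E}_i\phi(b_i+c_i)\geq 2\gamma_4$ directly off the constraint rather than via the paper's auxiliary claim).
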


\subsection{Key Lemmas for Main Results.}

Putting together the results of Lemmas~\ref{lemma:opt_1_relax_1}-\ref{lemma:relax_3_relax_4}, we carry out the chain of reductions from Opt 1 though Opt 5 to achieve the following results.

% \begin{remark}\label{rem:cluster_size}
% We conclude the results in the following lemmas in this section with probability $1 - 3e^{-n/c}$ (instead of just $1 - 3e^{-n}$, the probability of the conclusion of Lemma~\ref{lemma:concentration} holding), because to remove the simplifying assumption that each cluster has size $n/4$, we require that the event in  Lemma~\ref{lemma:cluster_size}, stated below, occurs, namely that the maximum deviation of any cluster size and $\frac{n}{4}$ is $n\beta$ for some small $\beta$ that depends on $\kappa$.

% To adapt our proofs of the chaining lemmas to permit this small deviation in cluster size, we need to slightly change the definition of Opt 3 (Definition~\ref{def:opt_3}) so that the sets $S_1$ and $S_{-1}$ have size $\frac{n}{4} - n\beta$. Then, we need to adapt Lemma~\ref{lemma:relax_2_relax_3} by defining the backwards mapping $\psi_{32}$ to not only ``copy'' the input $\ins_3$ twice, but also replicate certain variables $c_{ij}$ to account for the higher number of indices $j$ in Opt 2 than in Opt 3. This results in producing mappings bewteen Opt 2 and Opt 3 in which that parameters are not equal, but may grow by a factor of $1 + f(\beta)$, where $f(\beta)$ approaches $0$ as $\beta \rightarrow 0$. Thus by taking $\beta$ to be small enough, and applying Lemma~\ref{mappings} with $\delta = f(\beta)$ (instead of $\delta = 0$, as we had done in the proofs), we can achieve a modification of Lemma~\ref{lemma:relax_2_relax_3}.
% \end{remark}

\begin{lemma}[Generalization Lemma]\label{lemma:xor_gen}
Assume $\kappa > \kgen$. There exists strictly positive constants $\epsilon = \epsilon(\kappa)$ and $c = c(\kappa)$ and $\eta = \eta(\kappa)$ such that for any solution to Opt 1 which achieves a margin $\gamma$ that is at least $(1 - \epsilon)$-optimal, the following holds. For $\frac{d}{n} \geq c$, with probability $1 - 3e^{-n/c}$ over the training data, we have
\begin{enumerate}
    \item $\frac{1}{2}\mathbb{E}_{i \in H_+}[\phi(\mu_1^Tw_i)] \geq \frac{\gamma \eta}{2}$ 
    \item $\frac{1}{2}\mathbb{E}_{i \in H_+}[\phi(-\mu_1^Tw_i)] \geq \frac{\gamma \eta}{2}$ 
    \item $\frac{1}{2}\mathbb{E}_{i \in H_-}[\phi(\mu_2^Tw_i)] \geq \frac{\gamma \eta}{2}$ 
    \item $\frac{1}{2}\mathbb{E}_{i \in H_-}[\phi(-\mu_2^Tw_i)] \geq \frac{\gamma \eta}{2}$.
\end{enumerate}
\end{lemma}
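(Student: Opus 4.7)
The plan is to chain the reductions $\psi_{12}, \psi_{23}, \psi_{34}$ through Opts~1--4 and then invoke the trivariate analysis of Lemma~\ref{lemma:relax_3_relax_4}. We condition on two high-probability events: (a) the conclusion of Lemma~\ref{lemma:concentration} holds for $\Xi$, which has probability at least $1-3e^{-n}$ when $d/n$ exceeds a constant; and (b) each cluster $\mathcal{P}_1, \mathcal{P}_{-1}, \mathcal{N}_1, \mathcal{N}_{-1}$ has size within $\beta n$ of $n/4$ for some $\beta = \beta(\kappa) > 0$ to be fixed, which holds with probability at least $1 - 8 e^{-8n\beta^2}$ by Lemma~\ref{lemma:cluster_size}. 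Union bounding and absorbing constants into $c$ yields the claimed $1 - 3e^{-n/c}$.

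\textbf{Reducing along the chain.} First, apply Lemma~\ref{lemma:opt_1_relax_1} with $\psi_{12}$ to convert the $(1-\epsilon)$-optimal $W$ into an Opt~2 instance $\ins_2$ that is $(1-\epsilon_1)$-optimal, where $\epsilon_1 = \sqrt{1-(1-\epsilon)(1-C_{\ref{lemma:concentration}}\sqrt{n/d})^h}$. The structural point is that $\psi_{12}$ sets $s_i = \mu_1^T w_i$ for $i \in H_+$ and $t_i = \mu_2^T w_i$ for $i \in H_-$, which are precisely the quantities appearing in the conclusion. Next, apply Lemma~\ref{lemma:relax_1_relax_2} with $\psi_{23}$ to produce two Opt~3 instances $\ins_3^{(1)}$ (on $H_+$ with $b_i = s_i$, signals $S_1 \subset \mathcal{P}_1, S_{-1} \subset \mathcal{P}_{-1}$) and $\ins_3^{(2)}$ (on $H_-$ with $b_i = t_i$, signals inside $\mathcal{N}$), each $(1-\epsilon_2)$-optimal with $\epsilon_2 = \sqrt{1-(1-\epsilon_1)(n_{\max}/n_{\min})^{-h}}$; event~(b) makes $n_{\max}/n_{\min}$ close to $1$, keeping $\epsilon_2$ small. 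Finally, apply Lemma~\ref{lemma:relax_2_relax_3} with $\psi_{34}$: at least a $1-\sqrt{\epsilon_2}$ fraction of the resulting $n_{\min}$ Opt~4 instances are $(1-\sqrt{\epsilon_2})$-optimal. Fix any one such instance; by construction its $b_i$-values coincide with $\mu_1^T w_i$ (from $\ins_3^{(1)}$), resp.\ $\mu_2^T w_i$ (from $\ins_3^{(2)}$).

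\textbf{Applying the trivariate analysis.} Choose $\epsilon(\kappa)$ and $\beta(\kappa)$ small enough that $\sqrt{\epsilon_2}$ falls below the $\epsilon(\hat{\kappa})$ threshold of Lemma~\ref{lemma:relax_3_relax_4}, where $\hat{\kappa} = 4 n_{\min}/(d\sigma^2)$. For $\beta$ small, $\hat{\kappa}$ can be made arbitrarily close to $\kappa$, and since $\kappa > \kgen$ strictly, we retain $\hat{\kappa} > \kgen$, so Lemma~\ref{lemma:relax_3_relax_4} applies and yields
\begin{align*}
\mathbb{E}_{i \in H}\phi(b_i) \geq \tfrac{\eta}{2}\mathbb{E}_{i \in H}\phi(b_i+c_i), \qquad \mathbb{E}_{i \in H}\phi(-b_i) \geq \tfrac{\eta}{2}\mathbb{E}_{i \in H}\phi(-b_i+d_i).
\end{align*}
The feasibility of the Opt~4 instance gives $\tfrac{1}{2}\mathbb{E}_{i \in H}\phi(b_i+c_i) \geq \gamma$ and $\tfrac{1}{2}\mathbb{E}_{i \in H}\phi(-b_i+d_i) \geq \gamma$, because each $\psi$ in the chain only increases achievable margins (the Opt~2 margin dominates the Opt~1 margin since the Opt~2 constraints drop only the nonnegative $H_-$/$H_+$ ``wrong-sign'' contributions, and both subsequent $\psi$'s restrict to subsets of the training data). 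Substituting $b_i = \mu_1^T w_i$ from $\ins_3^{(1)}$ gives $\tfrac{1}{2}\mathbb{E}_{i\in H_+}\phi(\pm \mu_1^T w_i) \geq \tfrac{\eta\gamma}{2}$, which is conclusions (1)--(2); the parallel argument on $\ins_3^{(2)}$ yields (3)--(4).

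\textbf{Main obstacle.} The principal difficulty is quantitative: the optimality slack inflates from $\epsilon$ to $\epsilon_1 \approx \Theta(\epsilon + \sqrt{n/d})$, then to $\epsilon_2 \approx \Theta(\epsilon_1 + \beta)$, and finally to $\sqrt{\epsilon_2}$ at Opt~4. We must choose $\epsilon, \beta$, and $1/c$ (governing $d/n$) all small enough as functions of $\kappa$ so that this final slack lies strictly below the trivariate threshold of Lemma~\ref{lemma:relax_3_relax_4}. This is possible precisely because $\kappa > \kgen$ is strict, which gives a positive $\kappa$-dependent threshold; the bookkeeping task is to ensure $\hat{\kappa} > \kgen$ persists after the $\beta$-slack and that all constants in the above composition depend only on $\kappa$.
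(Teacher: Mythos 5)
Your proposal is correct and follows essentially the same route as the paper's own proof: condition on the concentration and cluster-size events, chain through $\psi_{12},\psi_{23},\psi_{34}$ while tracking the optimality slack, pick one good Opt~4 instance, apply Lemma~\ref{lemma:relax_3_relax_4} with $\hat{\kappa}$ pushed above $\kgen$ via small $\beta$, and unwind $b_i=\mu_1^Tw_i$ (resp.\ $\mu_2^Tw_i$) back to Opt~1. The bookkeeping you flag as the main obstacle is exactly how the paper resolves it (choosing $c_1$ and $\epsilon$ so that $\hat{\kappa}\geq(\kgen+\kappa)/2$ and $\eps_4$ falls below the trivariate threshold).
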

\begin{proof}
Condition on the event in Lemma~\ref{lemma:concentration} holding for $\Xi$ and the event in Lemma~\ref{lemma:cluster_size} holding for $\beta = \frac{1}{c_1}$, for some constant $c_1 = c_1(\kappa) > 8$ to be chosen later. These events occur with probability at least $\min(0, 1 - 3e^{-n} - 8e^{-8n/c_1^2}) \geq 1 - 3e^{-n/(c_1^2 + 1)}$ for $n \geq 1$. If we begin with an instance $\ins_1$ which is $(1 - \epsilon)$-optimal on Opt 1, then:
\begin{enumerate}
    \item $\ins_2 := \psi_{12}(\ins_1)$ is $1 - \eps_2 := \sqrt{\left((1 - \epsilon)\left(1 - C_{\ref{lemma:concentration}}\sqrt{\frac{n}{d}}\right)^{h}\right)} \geq \sqrt{\left((1 - \epsilon)\left(1 - C_{\ref{lemma:concentration}}\sqrt{\frac{1}{c}}\right)^{h}\right)} $- optimal on Opt 2 (Lemma~\ref{lemma:opt_1_relax_1})
    \item Both instances $(\ins_3^{(1)}, \ins_3^{(2)}) := \psi_{23}(\ins_2)$ are $1 - \eps_3 := 1 - \sqrt{1 - (1 - \eps)\left(\frac{n_{\on{max}}}{n_{\on{min}}}\right)^{-h}} \geq 1 - \sqrt{1 - (1 - \eps)\left(1 + \frac{16}{c_1}\right)^{-h}}$-optimal on Opt 3 (Lemma~\ref{lemma:relax_1_relax_2})
    \item For at least one pair $p = (j, j') \in \mathcal{P}_1 \times \mathcal{P}_{-1}$, the instance $\ins_4^{(p)}$ of $\psi_{34}(\ins_3^{(1)})$ indexed by that pair is $1 - \eps_4 := 1 - \sqrt{\eps_3}$-optimal on Opt 4. (Lemma~\ref{lemma:relax_2_relax_3}). The same holds for some $(j, j') \in \mathcal{N}_1 \times \mathcal{N}_{-1}$ on $\psi_{34}(\ins_3^{(2)})$.
\end{enumerate}
Now choose $c_1$ large enough and $\epsilon$ small enough constants such that for $c \geq c_1$, we have $\hat{\kappa} = \left(\frac{n_{\on{min}}}{n_{\on{max}}}\right)\kappa \geq \frac{\kgen + \kappa}{2}$, and $\eps_4$ is less than the value $\min_{\kappa' \in \left[\frac{\kgen + \kappa}{2}, \kappa\right]} \epsilon(\kappa')$ from Lemma~\ref{lemma:relax_3_relax_4}. Thus applying Lemma~\ref{lemma:relax_3_relax_4}, we observe that on the pair $(j, j') \in \mathcal{P}_1 \times \mathcal{P}_{-1}$, we have 
\begin{align}
    & \mathbb{E}_{i \in H} \phi(b_i)  \geq \frac{\eta}{2}\mathbb{E}_{i \in H} \phi(b_i + c_i) \geq \frac{\eta \gamma}{2};\\
    & \mathbb{E}_{i \in H} \phi(-b_i)  \geq \frac{\eta}{2}\mathbb{E}_{i \in H} \phi(-b_i + d_i) \geq \frac{\eta \gamma}{2},\\
\end{align}
where $\gamma$ is the objective value of $\ins_4^{((j, j'))}$, and $\eta = \eta(\kappa) := \min_{\kappa' \in \left[\frac{\kgen + \kappa}{2}, \kappa\right]} \eta_{\ref{lemma:relax_3_relax_4}}(\kappa')$ where $\eta_{\ref{lemma:relax_3_relax_4}}(\cdot)$ is the positive constant called $\eta$ from Lemma~\ref{lemma:relax_3_relax_4}.
By definition of the mapping $\psi_{34}$, this means that in $\ins_3^{(1)}$,
\begin{align}
    & \mathbb{E}_{i \in H} \phi(b_{i})  \geq \frac{\eta \gamma}{2};\\
    & \mathbb{E}_{i \in H} \phi(-b_{i})  \geq \frac{\eta \gamma}{2},\\
\end{align}
where $\gamma$ is the objective value of $\ins_3^{(1)}$. By definition of the mapping $\psi_{23}$, this means that in $\ins_2$,
\begin{align}
    & \mathbb{E}_{i \in H_+} \phi(s_{i})  \geq \frac{\eta \gamma}{2};\\
    & \mathbb{E}_{i \in H_+} \phi(-s_{i})  \geq \frac{\eta \gamma}{2},\\
\end{align} where $\gamma$ is the objective value of $\ins_2$. Finally by definition of $\psi_{12}$, in $\ins_1$,
\begin{align}
\frac{1}{2}\mathbb{E}_{i \in H_+}[\phi(\mu_1^Tw_i)] &\geq \frac{\gamma \eta}{2}\\
\frac{1}{2}\mathbb{E}_{i \in H_+}[\phi(-\mu_1^Tw_i)] &\geq \frac{\gamma \eta}{2},
\end{align}
where $\gamma$ is the objective value of Opt 1. This yields the first two conclusions of the lemma. The second follows via an identical argument on the pair $(j, j') \in \mc{N}_{1} \times \mc{N}_{-1}$. Choosing $c = c_1^2 + 1$ yields the result with probability at least $1 - 3e^{-n/c}$.
\end{proof}

We can now put together the results of the chain of reductions to prove the following:

\begin{lemma}[Phenomenon Lemma]\label{lemma:phen}
Assume $\kgen < \kappa < \kphen$. For any constant $\delta > 0$, there exists strictly positive constants $\epsilon = \epsilon(\kappa, \delta)$ and $c = c(\kappa, \delta)$ and $q = q(\kappa)$ such that for any solution to Opt 1 which achieves a margin $\gamma$ that is at least $(1 - \epsilon)$-optimal, the following holds. For $\frac{d}{n} \geq c$, with probability $1 - 3e^{-n/c}$ over the training data, 
% on at least a $1 - \epsilon'$ fraction of the points $j \in [n]$,
% \begin{align}
%   \mathbb{E}_{i : \on{sign}(a_i) = y_j}[\phi(v_i^T\xi_j)] - \mathbb{E}_{i : \on{sign}(a_i) = y_j}[\phi(u_i^Tx_j)] &\geq \frac{q\gamma}{6}.
% \end{align}
\begin{align}
    &\frac{1}{2}\mathbb{E}_{i \in H_+}[\phi(u_i^T\mu_1)] \leq  \frac{\gamma(1 - q/4)}{2^h};\\
    &\frac{1}{2}\mathbb{E}_{i \in H_-}[\phi(u_i^T\mu_2)] \leq  \frac{\gamma(1 - q/4)}{2^h},
\end{align} and for at least a $1 - \delta$ fraction of $j \in [n]$,
\begin{align}
    \frac{1}{2}\mathbb{E}_{i : \on{sign}(a_i) = y_j}[\phi(v_i^Tx_j)] \geq \frac{\gamma(1 + q/4)}{2^h}.
\end{align}
\end{lemma}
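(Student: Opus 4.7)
The plan is to follow the chain of reductions Opt 1 $\to$ Opt 2 $\to$ Opt 3 $\to$ Opt 4 exactly as in the proof of Lemma~\ref{lemma:xor_gen}, but this time harvest the \emph{stronger} conclusions of Lemma~\ref{lemma:relax_3_relax_4} that become available under the additional hypothesis $\hat{\kappa}<\kphen$. Specifically, for any $(1-\eps_4)$-optimal instance of Opt 4 in the two-sided regime, that lemma gives $\mathbb{E}_{i\in H}[\phi(b_i)] \leq \frac{1-q/2}{2^h}\,\mathbb{E}_{i\in H}[\phi(b_i+c_i)]$ and the symmetric bound for $(-b_i,d_i)$.

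First I would condition on the events of Lemma~\ref{lemma:concentration} for $\Xi$ and of Lemma~\ref{lemma:cluster_size} with $\beta=\beta(\kappa,\delta)$ chosen so that $16\beta\le\delta/8$; this guarantees $n_{\mathrm{max}}/n_{\mathrm{min}}\le 1+16\beta$ and $\hat{\kappa}=(n_{\mathrm{min}}/n_{\mathrm{max}})\,\kappa$ stays inside $(\kgen,\kphen)$, and happens with probability $1-3e^{-n/c}$. Pick $\eps=\eps(\kappa,\delta)$ small enough so that the composed distortion $\eps_2\to\eps_3\to\eps_4$ through the three mappings (Lemmas~\ref{lemma:opt_1_relax_1}, \ref{lemma:relax_1_relax_2}, \ref{lemma:relax_2_relax_3}) is below the threshold required to invoke Lemma~\ref{lemma:relax_3_relax_4}, and so that at least a $1-\delta/8$ fraction of the $n_{\mathrm{min}}$ pair-indexed Opt 4 instances (produced from $\ins_3^{(1)}$ and $\ins_3^{(2)}$) are each $(1-\eps_4)$-optimal.

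For the upper bound, pick any one good pair $(j,j')$ from the $\mathcal{P}$-chain. Since $b_i$ equals $s_i$ from Opt 2, which equals $\mu_1^\top w_i=u_i^\top\mu_1$ from Opt 1, and since $b_i$ does \emph{not} depend on the pair, the inequality $\mathbb{E}_{i\in H_+}[\phi(u_i^\top\mu_1)] \leq \frac{1-q/2}{2^h}\,\mathbb{E}_{i\in H}[\phi(b_i+c_i)]$ holds as a global property of $\ins_1$. To finish, I upper-bound the RHS by $(1+o(1))\cdot 2\gamma$: the two constraint expectations $\mathbb{E}_i[\phi(b_i+c_i)]$ and $\mathbb{E}_i[\phi(-b_i+d_i)]$ must each lie within a $(1+o(\eps_4))$ factor of $2\gamma$ at a near-optimum (by the $h/2$-homogeneity of Opt 5 in its budget parameter, rebalancing mass from a ``slack'' side to the ``tight'' side would otherwise strictly improve the objective). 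For $\eps$ small enough in terms of $\kappa$, this turns $(1-q/2)(1+o(1))$ into $(1-q/4)$, yielding the claimed bound; the $H_-$ case is symmetric via $\ins_3^{(2)}$.

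For the lower bound, apply Lemma~\ref{lem:activation} to get $\phi(c_i)\ge 2^{1-h}\phi(b_i+c_i)-\phi(b_i)$, take expectation over $i\in H$, and plug in the Opt 4 constraint $\mathbb{E}_i[\phi(b_i+c_i)]\ge 2\gamma$ together with the upper bound from the previous step; this produces $\mathbb{E}_{i\in H}[\phi(c_i)]\ge \frac{(1+q/2)\cdot 2\gamma}{2^h}$. For each good pair $(j,j')$ in the $\mathcal{P}$-chain, unwinding the definitions of $\psi_{34}$, $\psi_{23}$, $\psi_{12}$ gives $c_i=v_i^\top \xi_j=v_i^\top x_j$ for the associated $j\in S_1$ (and similarly for $j'$), which is exactly the stated lower bound on $\frac{1}{2}\mathbb{E}_{i:\mathrm{sign}(a_i)=y_j}[\phi(v_i^\top x_j)]$. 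Finally, to cover a $1-\delta$ fraction of $j\in[n]$: choose $S_1,S_{-1}$ to be size-$n_{\mathrm{min}}$ subsets of $\mathcal{P}_1,\mathcal{P}_{-1}$ (resp. $\mathcal{N}_1,\mathcal{N}_{-1}$ in the other chain); at most a $16\beta\le\delta/8$ fraction of each cluster lies outside the chosen set, and among those inside, at most $\delta/8$ fraction sit in bad pairs, so each cluster loses at most $\delta/4$ and the union over all four clusters loses at most $\delta$.

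\textbf{Main obstacle.} The delicate part is Step 3: to convert the multiplicative inequality from Lemma~\ref{lemma:relax_3_relax_4} into the claimed bound against $\gamma$, one must rigorously argue that $\mathbb{E}_i[\phi(b_i+c_i)]$ is close to (not just above) $2\gamma$ at a near-optimum of Opt 4, and track how the accumulated suboptimality $\eps_2,\eps_3,\eps_4$ from the three chaining lemmas propagates so that the gap $q/2\to q/4$ is preserved with a constant $q=q(\kappa)$ independent of $\delta$. The rest is bookkeeping of the pair-coverage argument with the chosen $\beta$.
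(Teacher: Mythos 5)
Your proposal is correct and follows essentially the same route as the paper: condition on Lemmas~\ref{lemma:concentration} and \ref{lemma:cluster_size}, chain through Opt 1--Opt 4, invoke the two-sided conclusions of Lemma~\ref{lemma:relax_3_relax_4}, convert via Lemma~\ref{lem:activation}, and close with the same pair-coverage bookkeeping. The ``main obstacle'' you flag (showing $\mathbb{E}_i[\phi(b_i+c_i)]$ is close to, not just above, $2\gamma$) is resolved in the paper exactly by the rebalancing/homogeneity argument you sketch, carried out via the first conclusion of Lemma~\ref{lemma:symmetrize} and the claim inside the proof of Lemma~\ref{lemma:relax_3_relax_4}.
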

\begin{proof}
Condition on the event in Lemma~\ref{lemma:concentration} holding for $\Xi$ and the event in Lemma~\ref{lemma:cluster_size} holding for $\beta = \frac{1}{c_1}$, for some constant $c_1 = c_1(\delta, \kappa) > 8$ to be chosen later. These events occur with probability at least $\min(0, 1 - 3e^{-n} - 8e^{-8n/c_1^2}) \geq 1 - 3e^{-n/(c_1^2 + 1)}$. If we begin with an instance $\ins_1$ which is $(1 - \epsilon)$-optimal on Opt 1, then:
\begin{enumerate}
    \item $\ins_2 := \psi_{12}(\ins_1)$ is $1 - \eps_2 := \sqrt{\left((1 - \epsilon)\left(1 - C_{\ref{lemma:concentration}}\sqrt{\frac{n}{d}}\right)^{h}\right)} \geq \sqrt{\left((1 - \epsilon)\left(1 - C_{\ref{lemma:concentration}}\sqrt{\frac{1}{c}}\right)^{h}\right)} $- optimal on Opt 2 (Lemma~\ref{lemma:opt_1_relax_1})
    \item Both instances $(\ins_3^{(1)}, \ins_3^{(2)}) := \psi_{23}(\ins_2)$ are $1 - \eps_3 := 1 - \sqrt{1 - (1 - \eps)\left(\frac{n_{\on{max}}}{n_{\on{min}}}\right)^{-h}} \geq 1 - \sqrt{1 - (1 - \eps)\left(1 + \frac{16}{c_1}\right)^{-h}}$-optimal on Opt 3 (Lemma~\ref{lemma:relax_1_relax_2})
    \item Let $\{\ins_4^{((p)}\}_{p \in L_1} := \psi_{34}(\ins_3^{(1)})$ be the $n_{\on{min}}$ instances of Opt 4 indexed by some list $L_1$ of $n_{\on{min}}$ pairs $p \in \mathcal{P}_1 \times \mathcal{P}_{-1}$. Similarly let $\{\ins_4^{((p)}\}_{p \in L_2} := \psi_{34}(\ins_3^{(2)})$ for some list of $n_{\on{min}}$ pairs in $ \mathcal{N}_1 \times \mathcal{N}_{-1}$.
    % Let $L$ be a list of $n_{\on{min}}$ pairs in $\mathcal{P}_1 \times \mathcal{P}_{-1}$ where each $j \in \mc{P}_1$ and $j' \in \mc{P}_{-1}$ appears once. 
    For at least and $1 - \eps_4 := 1 - \sqrt{\eps_3}$ fraction of pairs $(j, j')$ in $L_1$, $\ins_4^{((j, j'))}$ is $1 - \eps_4$-optimal on Opt 4. The same hols for a $1 - \eps_4$ fraction of the pairs in $L_2$. (Lemma~\ref{lemma:relax_2_relax_3}). For each cluster in $\mc{P}_1, \mc{P}_{-1}, \mc{N}_1, \mc{N}_{-1}$, at least a $\frac{n_{\on{\min}}}{n_{\on{\max}}} \geq \frac{1}{1 + \frac{16}{c_1}}$ fraction of the points $j$ in that cluster appear in a pair in one of the lists $L_1$ or $L_2$.
\end{enumerate}
Now choose $c_1$ large enough and $\epsilon$ small enough constants such that for $c \geq c_1$, we have $\hat{\kappa} = \left(\frac{n_{\on{min}}}{n_{\on{max}}}\right)\kappa \geq \frac{\kgen + \kappa}{2}$, and $\frac{\eps_4}{1 + \frac{16}{c_1}}$ is both less than $\delta$ and than $\min_{\kappa' \in \left[\frac{\kgen + \kappa}{2}, \kappa\right]} \epsilon(\kappa')$, where $\eps(\cdot)$ is the function from Lemma~\ref{lemma:relax_3_relax_4}.

Applying this Lemma~\ref{lemma:relax_3_relax_4}, we observe that on at least a $1 - \eps_4$ fraction of pairs $(j, j')$ in the list $L_1$, we have
\begin{align}
    \mathbb{E}_{i \in H}[\phi(b_i)] &\leq \left(\frac{1 - q/2}{2^h}\right) \mathbb{E}_{i \in H}[\phi(b_i + c_i)];\\
    \mathbb{E}_{i \in H}[\phi(-b_i)] &\leq \left(\frac{1 - q/2}{2^h}\right) \mathbb{E}_{i \in H}[\phi(-b_i + d_i)],
\end{align}
where $q = q(\kappa) := \min_{\kappa' \in \left[\frac{\kgen + \kappa}{2}, \kappa\right]} q_{\ref{lemma:relax_3_relax_4}}(\kappa')$ where $q_{\ref{lemma:relax_3_relax_4}}(\cdot)$ is the positive constant called $q$ from Lemma~\ref{lemma:relax_3_relax_4}.
We first use this to prove the first conclusion the lemma for $j$ (the argument is analogous for $j'$). By definition of the mapping $\psi_{34}$, this means that in $\ins_3^{(1)}$,
\begin{align}
    \mathbb{E}_{i \in H}[\phi(b_i)] &\leq \left(\frac{1 - q/2}{2^h}\right) \mathbb{E}_{i \in H}[\phi(b_i + c_{ij})];
\end{align} 
By definition of the mapping $\psi_{23}$, this means that in $\ins_2$,
\begin{align}
    \mathbb{E}_{i \in H_+}[\phi(s_i)] &\leq \left(\frac{1 - q/2}{2^h}\right) \mathbb{E}_{i \in H}[\phi(s_i + c_{ij})];
\end{align} 
Finally, by definition of the mapping $\psi_{12}$, this means that in $\ins_1$,
\begin{align}\label{eq:part_1}
    \mathbb{E}_{i \in H_+}[\phi(\mu_1^Tx_j)] &\leq \left(\frac{1 - q/2}{2^h}\right) \mathbb{E}_{i \in H_+}[\phi(\mu_1^Tx_j + v_i^T\xi_j)] = \left(\frac{1 - q/2}{2^h}\right) \mathbb{E}_{i \in H_+}[\phi(w_i^Tx_j)].
\end{align}

Now by Lemma~\ref{lem:activation}, for any values $s, t$, we have $\phi(s + t) \leq (\phi(s) + \phi(t))2^{h-1}$, and thus
\begin{align}\label{phen_copy1}
    \mathbb{E}_{i \in H_+}[\phi(u_i^Tx_j) + \phi(v_i^T\xi_j)] &\geq 2^{- h + 1}\mathbb{E}_{i \in H_+}[\phi(w_i^Tx_j)]
\end{align}
So by Equation~\ref{eq:part_1}, we have
\begin{align}\label{phen_copy2}
    \mathbb{E}_{i \in H_+}[\phi(v_i^T\xi_j)] &\geq \left(\frac{1 + q/2}{2^h}\right)\mathbb{E}_{i \in H_+}[\phi(w_i^Tx_j)].
\end{align}

Now to relate $\mathbb{E}_{i \in H_+}[\phi(w_i^Tx_j)]$ to $\gamma$, observe that by Lemma~\ref{lemma:opt_1_relax_1}, \ref{lemma:relax_1_relax_2}, and \ref{lemma:relax_2_relax_3}, the objective value of $\ins_4^{(j, j')}$, which equals $\min\left(\frac{1}{2}\mathbb{E}_{i \in H_+}\phi(w_i^Tx_j), \frac{1}{2}\mathbb{E}_{i \in H_+}\phi(w_i^Tx_{j'})\right)$, is at least $\gamma$ and at most $\frac{\gamma}{(1 - \eps_2)(1 - \eps_3)(1 - \eps_4)}$. Now we also know by the first conclusion of Lemma~\ref{lemma:symmetrize} that $\psi_{45}(\ins_4^{(j, j')})$ produces some instances $\ins_5^{(i)}$ for $i \in H_+$ with objective values $\gamma^{(i)}$, for which
\begin{align}
    \mathbb{E}_{i \in H^+}\gamma^{(i)} \leq \frac{1}{1 - \eps_4}\min\left(\frac{1}{2}\mathbb{E}_{i \in H_+}\phi(w_i^Tx_j), \frac{1}{2}\mathbb{E}_{i \in H_+}\phi(w_i^Tx_{j'})\right).
\end{align}
Plugging in the fact that $\gamma^{(i)} = \frac{1}{4}\left(\phi(b_i + c_i) + \phi(-b_i + d_i)  \right) = \frac{1}{4}\left(\phi(w_i^Tx_j) + \phi(w_i^Tx_{j'})  \right)$, we have 
\begin{align}
    \frac{1}{4}\mathbb{E}_{i \in H^+}\left(\phi(w_i^Tx_j) + \phi(w_i^Tx_{j'})  \right) \leq \frac{1}{1 - \eps_4}\min\left(\frac{1}{2}\mathbb{E}_{i \in H_+}\phi(w_i^Tx_j), \frac{1}{2}\mathbb{E}_{i \in H_+}\phi(w_i^Tx_{j'})\right).
\end{align}
Since $\min(\alpha, \beta) \geq (1 - \rho)\frac{a + b}{2}$ implies that $a, b \in \max(a, b) \leq  \frac{1 + \rho}{1 - \rho}$, we must have that 
\begin{align}
    \frac{1}{2}\mathbb{E}_{i \in H_+}\phi(w_i^Tx_j) \in \gamma\left[1, \frac{1 + \eps_4}{(1 - \eps_2)(1 - \eps_3)(1 - \eps_4)^2}\right].
    % \frac{1}{2}\mathbb{E}_{i \in H_+}\phi(w_i^Tx_{j'}) \in \gamma\left[1, \frac{1 + \eps_4}{(1 - \eps_2)(1 - \eps_3)(1 - \eps_4)^2}\right].
\end{align}
Thus for $\epsilon$ small enough in terms of $q$, we have $\eps_2$, $\eps_3$, and $\eps_4$ all small enough that from Equations~\ref{eq:part_1} and \ref{phen_copy2}, we have 
\begin{align}\label{eq:final}
    \mathbb{E}_{i \in H_+}[\phi(\mu_1^Tx_j)] &\leq \left(\frac{1 - q/2}{2^h}\right) \mathbb{E}_{i \in H_+}[\phi(w_i^Tx_j)] \leq \left(\frac{1 - q/4}{2^{h-1}}\right)\gamma\\
    \mathbb{E}_{i \in H_+}[\phi(v_i^T\xi_j)] &\geq \left(\frac{1 + q/2}{2^h}\right)\mathbb{E}_{i \in H_+}[\phi(w_i^Tx_j)] \geq \left(\frac{1 + q/4}{2^{h-1}}\right)\gamma.
\end{align}

The argument holds for $j'$ in the pair with $j$, and we can also repeat an analogous argument for the $1 - \eps_4$ fraction of pairs in the list of pairs $L_2$ from $\mc{N}_1 \times \mc{N}_{-1}$. Now at least a $1 - \delta$ fraction of examples $j$ lie in a pair $p$ from $L_1$ or $L_2$ for which $\ins_4^{(p)}$ is $(1 - \eps_4)$-optimal. This yields the second part of the lemma which involves specific data points. The first point follows for the fact that Equation\ref{eq:final} only needs to hold for a single example $j$ in each of the four clusters. Choosing $c = c_1^2 + 1$ yields the result with probability at least $1 - 3e^{-n/c}$.

\end{proof}

\begin{lemma}[No Generalization Lemma]\label{lemma_no_gen}
For any $\kappa < \kgen$ and $\epsilon > 0$, there exists some positive constant $c(\epsilon)$, such that if $\frac{d}{n} \geq c$, with probability at least $1 - 3e^{-n/c}$ over $S \sim \mathcal{D}^n$, there exists a classifier $W$ with $\|W\| = 1$ such that
\begin{enumerate}
    \item $\gamma(f_W, S) \geq (1 - \epsilon)\gamma^*(S)$
    \item $U = 0$.
\end{enumerate}
\end{lemma}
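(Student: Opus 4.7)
The plan is to construct the desired classifier by starting from the optimum of the trivariate program Opt 5 (Definition~\ref{trivariate}) and lifting it back through the chain of reverse mappings $\psi_{54}, \psi_{43}, \psi_{32}, \psi_{21}$ supplied by Lemmas~\ref{lemma:symmetrize}, \ref{lemma:relax_2_relax_3}, \ref{lemma:relax_1_relax_2}, and \ref{lemma:opt_1_relax_1}. The key observation is that when $\kappa < \kgen$, Lemma~\ref{tri_opt} guarantees the supremum of Opt 5 is attained at some point $(b^*, c^*, d^*)$ with $b^* = 0$, and each reverse mapping propagates this ``no signal'' structure upward, so we end up with a feasible Opt 1 solution $W = U + V$ in which $U = 0$ by construction.

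First, condition on the joint event that Lemma~\ref{lemma:concentration} holds for $\Xi$ and that the cluster sizes are balanced as in Lemma~\ref{lemma:cluster_size} with parameter $\beta = 1/c'$, where $c' = c'(\epsilon)$ is a sufficiently large constant; this joint event holds with probability at least $1 - 3e^{-n/c}$ for an appropriate $c = c(\epsilon)$. Since $n_{\on{min}} \leq n/4$, we have $\hat{\kappa} = 4 n_{\on{min}}/(d\sigma^2) \leq \kappa < \kgen$, so Lemma~\ref{tri_opt} yields an optimum of Opt 5 with parameter $P_5 = 1$ having $b^* = 0$. Now apply $\psi_{54}$ (which sets each $b_i = 0$ and $(c_i,d_i) \in \{(c^*, d^*), (d^*, c^*)\}$), then $\psi_{43}$ (which preserves $b_i = 0$), then $\psi_{32}$ (which gives $s_i = b_{\pi_+(i)} = 0$ for $i \in H_+$ and $t_i = b_{\pi_-(i)} = 0$ for $i \in H_-$), and finally $\psi_{21}$ (which gives $u_i = s_i \mu_1 = 0$ for $i \in H_+$ and $u_i = t_i \mu_2 = 0$ for $i \in H_-$). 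The composed image is a feasible Opt 1 instance $W'$ with $U' \equiv 0$ and $\|W'\|^2 \leq P_1 = (n_{\on{max}}/n_{\on{min}})/(1 - C_{\ref{lemma:concentration}}\sqrt{n/d})$.

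By the ``second conclusion'' parts of the four chaining lemmas, $W'$ achieves at least a $(1 - \epsilon_1)$-fraction of the Opt 1 maximum at parameter $P_1$, where $\epsilon_1$ depends only on the multiplicative factors $(n_{\on{max}}/n_{\on{min}})^{-h}$ and $(1 - C_{\ref{lemma:concentration}}\sqrt{n/d})^{h}$ and tends to $0$ as $\beta \to 0$ and $n/d \to 0$. Since the Opt 1 maximum at parameter $P_1$ equals $P_1^{h/2} \gamma^*(S)$ by $h$-homogeneity of $\phi$, rescaling $W := W'/\sqrt{P_1}$ gives $\|W\| \leq 1$ and $\gamma(f_W, S) \geq (1 - \epsilon_1)\gamma^*(S)$; if $\|W\| < 1$, scale up further to norm exactly $1$, which only increases the margin. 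The scaling preserves $U \equiv 0$. Choosing $c(\epsilon)$ large enough that both $\beta$ and $\sqrt{n/d}$ are small enough yields $\epsilon_1 \leq \epsilon$, completing the argument. The main obstacle is the bookkeeping of multiplicative optimality losses through four compositions together with verifying that each reverse mapping indeed propagates the ``$b = 0$'' structure upward to ``$s_i = t_i = 0$'' and then to ``$u_i = 0$''; both are by inspection of the definitions of the $\psi_{\cdot\cdot}$ mappings, but need to be checked carefully.
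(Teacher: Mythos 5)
Your proposal is correct and follows essentially the same route as the paper's proof: condition on the concentration and cluster-balance events, take the Opt~5 optimum with $b^*=0$ (guaranteed by Lemma~\ref{tri_opt} when $\hat\kappa<\kgen$), lift it back through $\psi_{54},\psi_{43},\psi_{32},\psi_{21}$ while tracking the multiplicative optimality losses, and observe that the reverse mappings propagate $b=0$ to $s_i=t_i=0$ and hence $U=0$. Your direct observation that $\hat\kappa\le\kappa$ because $n_{\on{min}}\le n/4$ is a slightly cleaner justification than the paper's appeal to cluster balance, but the argument is otherwise the same.
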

\begin{proof}
We work backwards from Opt 5 through Opt 1. Condition on the event in Lemma~\ref{lemma:concentration} holding for $\Xi$ and the event in Lemma~\ref{lemma:cluster_size} holding for $\beta = \frac{1}{c_1}$, for some constant $c_1(\epsilon) > 8$ to be chosen later. Given an optimal solution $\ins_5^*$ to Opt 5, we can construct an instance $\ins_1 = \psi_{21}(\psi_{32}(\psi_{43}(\psi_{54}(\ins_5))))$, which is $\eps'$-optimal over all solutions $W'$ with the same norm for $\eps' = \sqrt{1 - (1 - \hat{\eps})(1 - C_{\ref{lemma:concentration}}\sqrt{\frac{n}{d}})^h}$, where $\hat{\eps} = \sqrt{1 - \left(\frac{n_{\on{max}}}{n_{\on{min}}}\right)^{-h}} \leq \sqrt{1 - \left(1 + \frac{16}{c_1}\right)^{-h}}$. This can be seen via Lemmas~\ref{lemma:opt_1_relax_1}, \ref{lemma:relax_1_relax_2}, \ref{lemma:relax_2_relax_3}, \ref{lemma:symmetrize}, which show that at each step of the chain, we do not lose any optimality expect from from Opt 3 to Opt 2 and from Opt 2 to Opt 1.

Now recall from Lemma~\ref{lemma:trivariate_analysis} that since $\kappa < \kgen$, for large enough $c_1$, we have $\hat{\kappa} < \kgen$, and thus the optimal solution to Opt 5 has $b = 0$. Applying the four mappings above, in the instance $\ins_1$, the variable $W = U + V$ has $U = 0$. Taking $c_1$ large enough such that for $c \geq c_1$, we have $\eps' \leq \eps$. If we choose $c(\eps) = c_1^2 + 1$, then the desired events hold with probability at least $1 - 3e^{-n/c}$ (see eg. Lemma~\ref{lemma:xor_gen} for the computation). Scaling $W$ to have $\|W\| = 1$ concludes the lemma.
\end{proof}

\subsection{Proofs of Main Results}\label{sec:xor_thms}
Using Lemma~\ref{lemma:xor_gen}, Lemma~\ref{lemma:small_orthogonal}, and Lemma~\ref{lemma:xor_spur_small}, we can prove Theorem~\ref{thm:xor_gen}. We restate the theorem for the reader's convenience.

\xorgen*

\begin{proof}[Proof of Theorem~\ref{thm:xor_gen}]
First note that $\kgen = \kgen^{\on{XOR}, h}$.
Let $W = U + V$ be the decomposition of $W$ into the signal space and the orthogonal space, such that $V \perp \on{span}(\mu_1, \mu_2)$. It suffices to consider $W$ with $\|W\| = 1$. Let $\gamma$ be the margin achieved by $f_W$, and observe that $\gamma$ is at least a positive constant since we can achieve a margin of $\frac{1}{4}$ by choosing a solution that only uses components in the signal subspace.

Recall that $\|U\| < 1$ and $\|V\| < 1$, and consider a random $x \sim \mathcal{D}$.

Choosing to $c_0 = c(\kappa)$ and $\epsilon(\kappa)$ to be the values from Lemma~\ref{lemma:xor_gen}, if $\frac{d}{n} \geq c_0$, with probability $1 - 3e^{-n/c_0}$ over the training data (and not $x$), the conclusion of Lemma~\ref{lemma:xor_gen} and Lemma~\ref{lemma:small_orthogonal} hold, and thus we have for such a $W$:
\begin{enumerate}
    \item $\frac{1}{2}\mathbb{E}_{i : \on{sign}(a_i) = y}[\phi(u_i^Tx)] \geq \frac{\gamma \eta(\kappa)}{2}$ by Lemma~\ref{lemma:xor_gen}, since $\kappa > \kgen$.
    \item $\frac{1}{2}\mathbb{E}_{i : \on{sign}(a_i) = -y}[\phi(u_i^Tx)] \leq \left(2\epsilon + 2C_{\ref{lemma:concentration}}\sqrt{\frac{n}{d}}\right)^{\frac{h}{2}}$. This is by Lemma~\ref{lemma:small_orthogonal}, we have $\frac{1}{2}\mathbb{E}_{i : \on{sign}(a_i) = -y}[\|u_i^Tz\|^2] \leq 2\epsilon + 2C_{\ref{lemma:concentration}}\sqrt{\frac{n}{d}},$ and thus by he homogeneity of the activation, $\frac{1}{2}\mathbb{E}_{i : \on{sign}(a_i) = -y}[\phi(u_i^Tx)] \leq \left(2\epsilon + 2C_{\ref{lemma:concentration}}\sqrt{\frac{n}{d}}\right)^{\frac{h}{2}}\max_{X: \mathbb{E}[X^2] = 1}{\mathbb{E}[\phi(X)]}$.By Jenson's inequality $\max_{X: \mathbb{E}[X^2] = 1}{\mathbb{E}[\phi(X)]} \leq \max_{X: \mathbb{E}[X^2] = 1}{\left(\mathbb{E}[X^2]\right)^{\frac{h}{2}}} = 1.$
\end{enumerate}
Thus with probability $1 - 3e^{-n/c_0}$ over the training data, \begin{align}
    yf_{U}(x) =\mathbb{E}_{i : \on{sign}(a_i) = y}[\phi(u_i^Tx)] - \mathbb{E}_{i : \on{sign}(a_i) = -y}[\phi(u_i^Tx)]\geq \frac{\gamma \eta(\kappa)}{2} - \left(2\epsilon + 2C_{\ref{lemma:concentration}}\sqrt{\frac{n}{d}}\right)^{\frac{h}{2}}.
\end{align}

For some constant $c_1 = c_1(\kappa)$, by Lemma~\ref{lemma:xor_spur_small}, with probability $1 - e^{-\frac{1}{c_1\sigma^2}}$ over $x$, $|f_{W}(x) - f_{U}(x)| < \frac{\gamma \eta(\kappa)}{4}$. Here we plugged in $t = \frac{1}{100 (\gamma^\frac{2}{h} + \gamma) \left(\eta(\kappa)^{\frac{2}{h}} + \eta(\kappa)\right)\sigma^2}$ to Lemma~\ref{lemma:xor_spur_small}, and note that $\gamma$ is at least a constant.

Thus for $\frac{d}{n} \geq c_2$ for some $c_2 = c_2(\kappa)$, we have with probability at least $1 - 3e^{-n/c_0}$, 
$$\frac{\gamma \eta(\kappa)}{2} - \left(2\epsilon + 2C_{\ref{lemma:concentration}}\sqrt{\frac{n}{d}}\right)^{\frac{h}{2}} \geq \frac{\gamma \eta(\kappa)}{4},$$ and thus
the loss is at most $1 - e^{-\frac{1}{c_1\sigma^2}} = 1 - e^{-\frac{\kappa d}{c_1 n}}$. Choosing $c = \max(c_0, c_1, c_2)$ yields the theorem.
\end{proof}

We now use Lemma~\ref{lemma:phen}, Lemma~\ref{lemma:small_orthogonal}, and Lemma~\ref{lemma:xor_spur_small} to prove Theorem~\ref{thm:one_sided_impossible_linear}(the XOR part) and Theorem~\ref{thm:margin_impossible} on the limitations of uniform convergence and inverse margin bounds.

To prove these results, we will demonstrate a certain phenomenon where given a near max-margin classifier $f_W$ for a set $S$, the classifier $f_W$ correctly classifies a certain ``opposite'' dataset $\psi(S)$ while still achieving good training error (or margin) on this opposite dataset. 

We define this opposite-mapping in the following definition.
\begin{definition}
For $\md = \dfull \in \Omega$, define the map $\psi: \mathbb{R}^d \rightarrow \mathbb{R}^d$ to keep $\xi$ the same, but maps $z$ to be in an orthogonal direction, as follows:
\begin{align}
    \psi((x, y)) = \begin{cases}
       (\mu_2 + \xi, 1) & (x, y) = (\mu_1 + \xi, 1)\\
        (-\mu_2 + \xi, 1) & (x, y) = (-\mu_1 + \xi, 1)\\
        (\mu_1 + \xi, -1) & (x, y) = (\mu_2 + \xi, -1)\\
        (-\mu_1 + \xi, -1) & (x, y) = (-\mu_2 + \xi, -1). \\
    \end{cases}
\end{align}
\end{definition}
When it is clear that we have fixed $\md$, we will just use $\psi$ to denote this mapping. Otherwise, we will specify that we mean the mapping associated with $\md$ by denoting it $\psi_{\md}$.

We abuse notation and denote $(\psi(x), \psi(y)) := \psi(x, y)$, and for a set $S$, use $\psi(S)$ to denote the element-wise application of $\phi$. For $\md = \dfull$, we also denote $\psi(\md) = \md_{\mu_2, \mu_1, \sigma, d}$ to be the distribution with the opposite labeling ground truth. Thus the following claim is immediate:

\begin{claim}\label{claim:loss}
Fix $\md \in \Omega$. For any $W$, we have $\mc{L}_{\md}(f_W) = 1 - \mc{L}_{\psi(\md)}(f_W)$.
\end{claim}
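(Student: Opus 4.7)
The plan is to observe that $\md$ and $\psi(\md)$ induce the same marginal distribution on $x$ but assign opposite labels to (almost every) $x$. The claim then reduces to the obvious statement that for a sign prediction, disagreeing with $y$ is the same as agreeing with $-y$.

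First I would unpack the two distributions. Both $\md = \mathcal{D}_{\mu_1,\mu_2,\sigma,d}$ and $\psi(\md) = \mathcal{D}_{\mu_2,\mu_1,\sigma,d}$ generate $x = z + \xi$ with $z$ uniform on $\{\pm \mu_1, \pm \mu_2\}$ (this set is symmetric in $\mu_1 \leftrightarrow \mu_2$) and $\xi$ uniform on the same sphere in $\{\mu_1,\mu_2\}^{\perp}$. Hence the $x$-marginals coincide. The label, however, is a deterministic function of $x$: under $\md$ it is $y_{\md}(x) = \mathrm{sign}((\mu_1^T x)^2 - (\mu_2^T x)^2)$, and under $\psi(\md)$ it is $y_{\psi(\md)}(x) = \mathrm{sign}((\mu_2^T x)^2 - (\mu_1^T x)^2) = -y_{\md}(x)$ for every $x$ in the support, since $\{|\mu_1^T x|, |\mu_2^T x|\} = \{0,1\}$ whenever $z \in \{\pm\mu_1, \pm\mu_2\}$, which holds almost surely.

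Next I would write out the losses on the common $x$-marginal:
\begin{align*}
\mathcal{L}_{\md}(f_W) + \mathcal{L}_{\psi(\md)}(f_W)
&= \mathbb{E}_x\!\left[\mathbf{1}(\mathrm{sign}(f_W(x)) \neq y_{\md}(x)) + \mathbf{1}(\mathrm{sign}(f_W(x)) \neq -y_{\md}(x))\right].
\end{align*}
Since $y_{\md}(x) \in \{-1, +1\}$, for any $x$ with $\mathrm{sign}(f_W(x)) \in \{-1, +1\}$ exactly one of the two indicators equals $1$ and the sum is $1$. The only exceptional set is $\{x : f_W(x) = 0\}$, which has measure zero under the continuous $x$-marginal (in any case, this is the standard convention under which the statement in the excerpt is asserted to be immediate, paralleling the linear analogue $\mathcal{L}_{\md}(f_w) = 1 - \mathcal{L}_{\psi(\md)}(f_w)$ already used in Section~\ref{sec:linear_proofs}). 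Rearranging gives $\mathcal{L}_{\md}(f_W) = 1 - \mathcal{L}_{\psi(\md)}(f_W)$, as claimed.

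The potential ``obstacle'' is essentially bookkeeping: one must confirm that swapping $\mu_1$ and $\mu_2$ really leaves the $x$-marginal invariant (because $\{\pm\mu_1,\pm\mu_2\}$ and the orthogonal complement are both symmetric in $\mu_1,\mu_2$) and that the two label functions are pointwise negatives on the support. Neither requires the map $\psi$ on samples (used elsewhere for training-set arguments); it suffices to observe these two symmetries of the population distribution, after which the claim is a one-line consequence of $\mathbf{1}(a \neq b) + \mathbf{1}(a \neq -b) = 1$ for $a \in \{\pm 1\}$, $b \in \{\pm 1\}$.
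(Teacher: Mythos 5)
Your proof is correct and is essentially the argument the paper treats as immediate: $\md$ and $\psi(\md)=\md_{\mu_2,\mu_1,\sigma,d}$ share the same $x$-marginal while their deterministic labels are pointwise negatives, so $\mathbf{1}(\mathrm{sign}(f_W(x))\neq y)+\mathbf{1}(\mathrm{sign}(f_W(x))\neq -y)=1$ and the two losses sum to $1$. Your caveat about the set $\{x: f_W(x)=0\}$ is the right one to flag, and dismissing it matches the paper's own level of rigor (the claim is only ever applied to classifiers with strictly positive margin).
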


Observe also that $\psi$ is a measure preserving bijection from $\mathcal{D}$ to $\psi(\md)$.

To prove Theorem~\ref{thm:one_sided_impossible_linear} we will use the following lemma, which shows that with high probability, any near-max-margin classifier does well on the ``opposite'' dataset, but has poor test loss on the opposite distribution.

\begin{lemma}\label{lemma:phen_real}
Suppose $\kgen \leq \kappa \leq \kphen$. Let $\mathcal{A}$ be any algorithm which returns a $(1-\epsilon)$-max margin solution. For a dataset $S \sim \mathcal{D}^n = \dfull^n$, consider the classifier $W = \mathcal{A}(S)$. For any constant $\delta > 0$, there exists constants $\epsilon(\delta, \kappa)$ and $c = c(\kappa, \delta)$ such that if $\epsilon \leq \epsilon(\delta, \kappa)$ and $\frac{d}{n} \geq c$, with probability at least $1 - 3e^{-n/c}$ over $S \sim \mathcal{D}^n$, we have 
\begin{enumerate}
    \item  $\mathcal{L}_{\mathcal{D'}}(f_W) \geq 1 - e^{-\frac{\kappa d}{cn}}$.
    \item $\mathcal{L}_{\psi(S)}(f_W) \leq \delta$.
\end{enumerate}
\end{lemma}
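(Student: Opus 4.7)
The plan is to prove parts (1) and (2) separately. Part (1) follows directly from Theorem~\ref{thm:xor_gen} combined with Claim~\ref{claim:loss}: choosing the $\delta$-parameter in Theorem~\ref{thm:xor_gen} to equal $(\kappa - \kgen)/2$ furnishes constants $c_0, \epsilon_0$ such that whenever $\epsilon \leq \epsilon_0$ and $d/n \geq c_0$, we have $\mathcal{L}_\mathcal{D}(f_W) \leq e^{-1/(c_0 \sigma^2)} = e^{-\kappa d/(c_0 n)}$ with probability at least $1 - 3e^{-n/c_0}$. Setting $\mathcal{D}' := \psi(\mathcal{D})$, Claim~\ref{claim:loss} then yields $\mathcal{L}_{\mathcal{D}'}(f_W) = 1 - \mathcal{L}_\mathcal{D}(f_W) \geq 1 - e^{-\kappa d/(c_0 n)}$.

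Part (2) is the substantive claim, and the plan is to leverage Lemma~\ref{lemma:phen} (with its $\delta$-parameter set to $\delta/3$) and Lemma~\ref{lemma:small_orthogonal} simultaneously; both conclusions hold with probability at least $1 - 3e^{-n/c}$ for $\epsilon$ small and $d/n$ large enough in terms of $\kappa, \delta$. Fix $j \in \mathcal{P}_1$, so that $\psi(x_j) = \mu_2 + \xi_j$ and $\psi(y_j) = 1$; the three other clusters are handled symmetrically (using the analogous bounds on $\phi(\pm u_i^T\mu_2)$ and $\phi(\pm u_i^T\mu_1)$ that Lemma~\ref{lemma:phen} provides by applying its argument cluster-wise). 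Correct classification requires $f_W(\mu_2 + \xi_j) > 0$, and we decompose
\begin{align*}
f_W(\mu_2 + \xi_j) = \mathbb{E}_{i \in H_+}\phi(u_i^T\mu_2 + v_i^T\xi_j) - \mathbb{E}_{i \in H_-}\phi(u_i^T\mu_2 + v_i^T\xi_j).
\end{align*}
Lemma~\ref{lemma:small_orthogonal}(1) gives $\mathbb{E}_{i \in H_+}(u_i^T\mu_2)^2$ small, so a perturbation bound on $\phi$ (discussed below) shows the positive contribution is close to $\mathbb{E}_{i \in H_+}\phi(v_i^T\xi_j)$, which by Lemma~\ref{lemma:phen} is at least $\gamma(1 + q/4)/2^{h-1}$. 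For the $\geq 1 - \epsilon'_{\ref{lemma:small_orthogonal}}$ fraction of $j$ handled by Lemma~\ref{lemma:small_orthogonal}(2), $\mathbb{E}_{i \in H_-}(v_i^T\xi_j)^2$ is also small, so the negative contribution is close to $\mathbb{E}_{i \in H_-}\phi(u_i^T\mu_2) \leq \gamma(1 - q/4)/2^{h-1}$ (Lemma~\ref{lemma:phen}). Subtracting gives $f_W(\mu_2 + \xi_j) \geq \gamma q/2^h - \text{(perturbation errors)}$, strictly positive for $\epsilon$ sufficiently small. The successful fraction of $j$ is at least $1 - \delta/3 - \epsilon'_{\ref{lemma:small_orthogonal}}$, which exceeds $1 - \delta$ for appropriate choices of the parameters.

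The main technical obstacle is the perturbation bound for the non-Lipschitz activation $\phi(z) = \max(0, z)^h$ with $h \in (1, 2)$. Combining the mean-value inequality $|\phi(a + b) - \phi(b)| \leq h|a|(|a| + |b|)^{h-1}$, the subadditivity $(x+y)^{h-1} \leq x^{h-1} + y^{h-1}$, and H\"older's inequality $\mathbb{E}[|a| \cdot |b|^{h-1}] \leq (\mathbb{E}|a|^h)^{1/h}(\mathbb{E}|b|^h)^{(h-1)/h}$ converts the $L^2$ control on $u_i^T\mu_2$ from Lemma~\ref{lemma:small_orthogonal} (via Jensen, $\mathbb{E}|u_i^T\mu_2|^h \leq (\mathbb{E}(u_i^T\mu_2)^2)^{h/2}$) together with a crude $L^h$ bound on $v_i^T\xi_j$ coming from the norm constraint on $W$ into a perturbation error proportional to $\epsilon'_{\ref{lemma:small_orthogonal}}$. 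Since the gap $q(\kappa) > 0$ in Lemma~\ref{lemma:phen} is an absolute constant independent of $\epsilon$, shrinking $\epsilon$ makes the errors strictly smaller than $\gamma q / 2^h$, completing the argument; the analogous estimate on the $H_-$ side, which uses Lemma~\ref{lemma:small_orthogonal}(2) in place of (1), proceeds identically.
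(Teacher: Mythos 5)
Your proposal is correct and follows essentially the same route as the paper: part (1) via Theorem~\ref{thm:xor_gen} together with Claim~\ref{claim:loss}, and part (2) by decomposing $f_W(\psi(x_j))$ over $H_+$ and $H_-$, using Lemma~\ref{lemma:small_orthogonal}(1)--(2) to control the perturbations, and Lemma~\ref{lemma:phen} to produce the constant gap $\Theta(q\gamma)$ between the two terms, intersecting the good events to get a $1-\delta$ fraction. The only cosmetic difference is that you derive the $\phi$-perturbation estimate via mean-value plus H\"older in $L^h$, whereas the paper packages the same control as Claim~\ref{lemma:phi_delta} using $\phi'(a)\le 2|a|+1$ and Cauchy--Schwarz; both ultimately rest on the same second-moment bounds and neither changes the argument's structure.
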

\begin{proof}
The first statement follows from Theorem~\ref{thm:xor_gen}, since the probability of classifying a example from $\mathcal{D}'$ correctly is the same as the probability of misclassifying a example from $\mathcal{D}$ (Claim~\ref{claim:loss}).

We now prove the second statement. We expand the margin on the examples in $\psi(S)$. For clarity, we will assume we are expanding on a example from $\psi(x_j)$ where $ j \in \mc{P}_1$, such that by definition of $\psi$, we have $\psi(x_j) = \mu_2 + \xi_j$. The same argument will apply to examples mapped from any other cluster by interchanging the roles of the four vectors $\mu_1, -\mu_1, \mu_2, -\mu_2$ and the two sets $H_+$ and $H_-$ accordingly.
\begin{align}
    \psi(y_j)f_W(\psi(x_j)) &= \mathbb{E}_i[y_j\phi(w_i^T\psi(x_j))]\\
    &= \frac{1}{2}\mathbb{E}_{i \in H_+ = y_j}[\phi(w_i^T\mu_2 + v_i^T\xi_j)] - \frac{1}{2}\mathbb{E}_{i \in H_-}[\phi(w_i^T\mu_2 + v_i^T\xi_j)]
\end{align}

% Recall that $H_y = \{i : \on{sign}(a_i) = y\}$.
By Lemma~\ref{lemma:phi_delta} (second statement), we have the following:
\begin{align}
    |\mathbb{E}_{i \in H_{+}}[\phi(w_i^T\mu_2 + v_i^T\xi_j)] & - \mathbb{E}_{i \in H_{+}}[\phi(v_i^T\xi_j)]|\\
    &\leq 2\mathbb{E}_{i \in H_{+}}\left[\phi(v_i^T\xi_j)^2 + 1\right]\sqrt{\mathbb{E}_{i \in H_{+}}\left[(w_i^T\mu_2)\right]} + h\left(\mathbb{E}_{i \in H_{+}}[(w_i^T\mu_2)^2]\right)^{\frac{h}{2}}.
\end{align}
Similarly appealing to Lemma~\ref{lemma:phi_delta} (first statement), we have 
\begin{align}
    |\mathbb{E}_{i \in H_{-}}[\phi(w_i^T\mu_2 + v_i^T\xi_j)] & - \mathbb{E}_{i \in H_{-}}[\phi(w_i^T\mu_2)]|\\
    &\leq \sqrt{\mathbb{E}_{i \in H_{-}}\left[4(w_i^T\mu_2)^2 + 2\right]}\sqrt{\mathbb{E}_{i \in H_{-}}\left[(v_i^T\xi_j)^2\right]} + h\left(\mathbb{E}_{i \in H_{-}}[(v_i^T\xi_j)^2]\right)^{\frac{h}{2}}\\
    &\leq \left(2\|U\|+ \sqrt{2}\right)\sqrt{\mathbb{E}_{i \in H_{-}}\left[(v_i^T\xi_j)^2\right]} + h\left(\mathbb{E}_{i \in H_{-}}[(v_i^T\xi_j)^2]\right)^{\frac{h}{2}}\\
\end{align}
Thus 
\begin{align}
    \psi(y_j)f_W(\psi(x_j)) &= \frac{1}{2}\mathbb{E}_{i \in H_+}[\phi(w_i^T\mu_2 + v_i^T\xi_j)] - \frac{1}{2}\mathbb{E}_{i \in H_- = -y_j}[\phi(w_i^T\mu_2 + v_i^T\xi_j)]\\
    &\geq \frac{1}{2}\mathbb{E}_{i \in H_+}[\phi(v_i^T\xi_j)] - \frac{1}{2}\mathbb{E}_{i \in H_-}[\phi(w_i^T\mu_2)] - \frac{1}{2}\mathcal{E}_j,
\end{align}
where 
\begin{align}
    \mathcal{E}_j &:= 2\mathbb{E}_{i \in H_{+}}\left[\phi(v_i^T\xi_j)^2 + 1\right]\sqrt{\mathbb{E}_{i \in H_{+}}\left[(w_i^T\mu_2)^2\right]} + h\left(\mathbb{E}_{i \in H_{+}}[(w_i^T\mu_2)^2]\right)^{\frac{h}{2}}\\
    &\qquad + 4\sqrt{\mathbb{E}_{i \in H_{-}}\left[(v_i^T\xi_j)^2\right]} + h\left(\mathbb{E}_{i \in H_{-}}[(v_i^T\xi_j)^2]\right)^{\frac{h}{2}},\\
\end{align}
where we have plugged in the fact that $\|U\| \leq \|W\| \leq 1$.

By the first and second conclusions of Lemma~\ref{lemma:small_orthogonal}, for $\kgen < \kappa < \kphen$, for at least a $1 - \epsilon_{\ref{lemma:small_orthogonal}}$ a set of examples $T \subset S$ of size at least $(1 - \epsilon'_{\ref{lemma:small_orthogonal}}n)$, we have if $j \in T$, $\frac{1}{2}\mathbb{E}_{i \in H_{+}}\left[(w_i^T\mu_2)^2\right] \leq (\epsilon'_{\ref{lemma:small_orthogonal}})^2$
and $\frac{1}{2}\mathbb{E}_{i \in H_-}\left[(v_i^T\xi_j)^2\right] \leq \frac{1}{\kappa}\cdot \epsilon'_{\ref{lemma:small_orthogonal}}$, where $\epsilon'_{\ref{lemma:small_orthogonal}} = \sqrt{2C_{\ref{lemma:concentration}}\sqrt{\frac{n}{d}} + 2\epsilon}$. Thus if $j \in T$, 
\begin{align}\mc{E}_j &\leq 2\mathbb{E}_{i \in H_{+}}\left[\phi(v_i^T\xi_j)^2 + 1\right]\sqrt{\frac{2\epsilon'_{\ref{lemma:small_orthogonal}}}{\kappa}} + h\left(\sqrt{2}\epsilon'_{\ref{lemma:small_orthogonal}}\right)^{h} + 4\sqrt{\frac{2\epsilon'_{\ref{lemma:small_orthogonal}}}{\kappa}} + h\left(\frac{2\epsilon'_{\ref{lemma:small_orthogonal}}}{\kappa}\right)^{\frac{h}{2}}\\
&\leq 8\mathbb{E}_{i \in H_{+}}\left[\phi(v_i^T\xi_j)^2 + 1\right]\sqrt{\frac{2\epsilon'_{\ref{lemma:small_orthogonal}}}{\kappa}} +4\epsilon'_{\ref{lemma:small_orthogonal}}
\end{align}
for $\epsilon'_{\ref{lemma:small_orthogonal}}$ small enough.

Now by Lemma~\ref{lemma:phen} applied to $S$, if $\epsilon \leq \epsilon(\kappa, \delta/2)$ and $\frac{d}{n} \geq c(\kappa, \delta/2)$, there exists a set $T' \subset S$ size at least $(1 - \frac{\delta}{2})n$ on which the second conclusion of the lemma holds.  Thus for the constant $q = q(\kappa)$ in Lemma~\ref{lemma:phen},
\begin{align}
    \frac{1}{2}\mathbb{E}_{i \in H_+}[\phi(u_i^T\mu_1)] &\leq \frac{(1 - q/4)}{2^h}\gamma,\\
    \frac{1}{2}\mathbb{E}_{i \in H_-}[\phi(u_i^T\mu_2)] &\leq \frac{(1 - q/4)}{2^h}\gamma,
\end{align} and for $j \in T'$,
\begin{align}
    \frac{1}{2}\mathbb{E}_{i \in H_+}[\phi(v_i^Tx_j)] \geq \frac{(1 + q/4)}{2^h}\gamma.
\end{align}
Thus
\begin{align}
    \psi(y_j)f_W(\psi(x_j)) &\geq \frac{1}{2}\mathbb{E}_{i \in H_+}[\phi(v_i^T\xi_j)] - \frac{1}{2}\mathbb{E}_{i \in H_-}[\phi(u_i^T\mu_2)] - \frac{1}{2}\mathcal{E}_j\\
    &\geq \frac{\gamma(1 + q/4)}{2^h} - \frac{\gamma(1 - q/4)}{2^h} - \frac{1}{2}\mathcal{E}_j
\end{align}
which is greater than zero for $\epsilon$ small enough and $c$ large enough in terms of $q$ and $\delta$. (In particular, we will need that $\mc{E}_j \leq \frac{\gamma q}{2^{h + 1}}$ and $\epsilon_{\ref{lemma:small_orthogonal}} \leq \frac{\delta}{2}$, and note that $\gamma$ is at least a constant).

Thus $f_W$ correctly classifies each example $\psi(x_j)$ for $j \in T \cap T'$, which is at least a $1 - \delta$ fraction of the examples in $\psi(S)$.
\end{proof}

We now prove Theorem~\ref{thm:one_sided_impossible_linear} for the XOR problem. We restate the theorem below, and only include the XOR part.

\begin{theorem}[One sided UC Bounds are Vacuous for XOR Problem]
Fix $h \in (1, 2)$, and suppose $\kgenx < \kappa < \kphenx$. For any $\delta > 0$, there exist strictly positive constants $\epsilon = \epsilon(\kappa, \delta)$ and $c = c(\kappa, \delta)$ such that the following holds. Let $\mathcal{A}$ be any algorithm that outputs a $(1-\epsilon)$-max-margin two-layer neural network $f_W$ for any $S \in (\mathbb{R}^d \times \{1,-1\})^n$. Let $\mathcal{H}$ be any concept class that is useful for $\mathcal{A}$ on $\Omega_{\sigma, d}^{h, \on{XOR}}$ (as in Definition~\ref{def:useful}).
Suppose that $\epsilon_{\on{unif}}$ is a uniform convergence bound for the XOR problem $\Omega_{\sigma, d}^{h, \on{XOR}}$: that is, for any $\mathcal{D} \in \Omega_{\sigma, d}^{h, \on{XOR}}$,  $\epsilon_{\on{unif}}$ satisfies $$\Pr_{S \sim \mathcal{D}^n}[ \sup_{h \in \mathcal{H}} \mathcal{L}_{\mathcal{D}}(h) - \mathcal{L}_S(h)  \geq \epsilon_{\on{unif}} ] \leq 1/4.$$
Then if $\frac{d}{n} \geq c$ and $n > c$ we must have $\epsilon_{\on{unif}} \geq 1 - \delta.$
\end{theorem}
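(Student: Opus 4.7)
The plan is to combine Lemma~\ref{lemma:phen_real} with a distribution-swapping argument and a union bound, in close analogy to the proof of the linear version. Given $\mathcal{D} = \mathcal{D}_{\mu_1, \mu_2, \sigma, d}$, define the swapped distribution $\psi(\mathcal{D}) := \mathcal{D}_{\mu_2, \mu_1, \sigma, d}$, which also lies in $\Omega_{\sigma, d}^{h, \on{XOR}}$ (this is why working with a problem class rather than a single distribution is essential here, and is also why the result would still hold if $\Omega$ were just $\{\mathcal{D}, \psi(\mathcal{D})\}$ as the following remark notes). The elementwise map $\psi$ on samples is a measure-preserving bijection between $\mathcal{D}$ and $\psi(\mathcal{D})$, so $S \sim \mathcal{D}^n$ implies $\psi(S) \sim \psi(\mathcal{D})^n$.

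I would assemble three high-probability events over $S \sim \mathcal{D}^n$. First, usefulness of $\mathcal{H}$ on $\mathcal{D}$ gives $f_W := \mathcal{A}(S) \in \mathcal{H}$ with probability $\geq 3/4$. Second, applying the hypothesized one-sided UC bound at the distribution $\psi(\mathcal{D}) \in \Omega$ and pulling back along the bijection yields
\[
\Pr_{S \sim \mathcal{D}^n}\!\Bigl[\,\sup_{h \in \mathcal{H}} \mathcal{L}_{\psi(\mathcal{D})}(h) - \mathcal{L}_{\psi(S)}(h) \leq \epsilon_{\on{unif}}\,\Bigr] \geq 3/4.
\]
Third, applying Lemma~\ref{lemma:phen_real} with slack $\delta/2$ gives, with probability $\geq 1 - 3e^{-n/c}$, that simultaneously $\mathcal{L}_{\psi(\mathcal{D})}(f_W) \geq 1 - e^{-\kappa d/(cn)}$ and $\mathcal{L}_{\psi(S)}(f_W) \leq \delta/2$.

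A union bound over the three events shows that for $n \geq c$ their intersection has strictly positive probability. Fixing any $S$ in this intersection, the hypothesis $f_W \in \mathcal{H}$ is admissible in the UC supremum, so
\[
\epsilon_{\on{unif}} \;\geq\; \mathcal{L}_{\psi(\mathcal{D})}(f_W) - \mathcal{L}_{\psi(S)}(f_W) \;\geq\; 1 - e^{-\kappa d/(cn)} - \delta/2.
\]
Choosing the constant $c = c(\kappa, \delta)$ large enough that $d/n \geq c$ forces $e^{-\kappa d/(cn)} \leq \delta/2$ produces $\epsilon_{\on{unif}} \geq 1 - \delta$. The parameter $\epsilon = \epsilon(\kappa, \delta)$ is inherited from Lemma~\ref{lemma:phen_real} applied with slack $\delta/2$.

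The substantive content has already been packaged into Lemma~\ref{lemma:phen_real}, which in turn rides on the entire Opt~1 $\to \cdots \to$ Opt~5 chain: this is what guarantees, in the regime $\kgenx < \kappa < \kphenx$, that every near-max-margin $f_W$ simultaneously (a) has an overfitting component in the $\xi$-directions strong enough to classify $\psi(S)$ correctly, since $\psi$ leaves each $\xi_j$ unchanged, and (b) has a signal component weak enough relative to the overfitting component that generalization on $\mathcal{D}$ translates into failure on $\psi(\mathcal{D})$. From the standpoint of this theorem, there is therefore no real obstacle beyond bookkeeping: the union bound, the verification that $\psi(\mathcal{D}) \in \Omega$, and the choice of constants.
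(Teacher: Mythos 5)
Your proposal is correct and is essentially the paper's own argument: the paper likewise reduces everything to Lemma~\ref{lemma:phen_real} plus a union bound over the three events (usefulness, the UC event, and the phenomenon lemma), differing only in the mirror-image bookkeeping of applying the UC hypothesis at $\mathcal{D}$ to the set $S$ with $f_W = \mathcal{A}(\psi(S))$, whereas you apply it at $\psi(\mathcal{D})$ to $\psi(S)$ with $f_W = \mathcal{A}(S)$ — equivalent by the measure-preserving bijection. Your explicit splitting of the slack as $\delta/2$ in the lemma and $\delta/2$ in the $e^{-\kappa d/(cn)}$ term is, if anything, slightly cleaner than the paper's final accounting.
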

\begin{proof}
Let $c = 3c_0$ and $\epsilon = 3\eps_0$ where $c_0$ and $\eps_0$ are the constants from Lemma~\ref{lemma:phen_real} for $\kappa$ and $\delta$.

For any $\md \in \Omega$, let $T_{\md} \subset 2^{(\mathbb{R}^d \times \{-1, 1\})^n}$ be the set of training sets $S$ on which the conclusion of Lemma~\ref{lemma:phen_real} holds for ${\md}$ and $S$. Thus for any ${\md} \in \Omega$, $\Pr_{S \sim {\md}^n}[S \in T_{\md}] \geq 1 - 3e^{-n/c_0}$ for some $c_0 = c(\kappa, \delta)$. Let $H \subset 2^{(\mathbb{R}^d \times \{-1, 1\})^n}$ be the set of training sets $S$ on which $\mathcal{A}(S) \in \mathcal{H}$. Thus for any ${\md} \in \Omega$, $\Pr_{S \sim \md^n}[S \in H] \geq \frac{3}{4}$.

For any $\md \in \Omega$, let $T'_{{\md}}$ be the set on which 
\begin{align}
    \mathcal{L}_{\md}(h) \leq \mathcal{L}_S(h) + \eps_{\on{unif}} \qquad \forall h \in \mathcal{H}.
\end{align}
By assumption, for any ${\md} \in \Omega$, $\Pr_{S \sim {\md}^n}[S \in T'_{\md}] \geq \frac{3}{4}$. By a union bound, for any ${\md} \in \Omega$, for $n \geq c = 3c_0$, with $\psi = \psi_{\md}$,
\begin{align}
    \Pr_{S \sim {\md}^n}[S \in T'_{\md} \land \psi(S) \in T_{\psi({\md})} \land \psi(S) \in H] \geq 1 - \left(1 - \frac{3}{4}\right) - \left(1 - \frac{3}{4} + 3e^{-n}\right) = \frac{1}{2} - 3e^{-n/c_0} > 0.
\end{align}
This is because the distribution of $\psi(S)$ for $S \sim \md^n$ is the same as the distribution of $n$ i.i.d. samples from $\psi({\md})^n$.

Let $S$ be any set for which the three events above hold, ie.,
\begin{align}
  S \in T'_{\md} \land \psi(S) \in T_{\psi({\md})} \land \psi(S) \in H. 
\end{align}
With $f_W = \mathcal{A}(\psi(S))$, by the first conclusion of Lemma~\ref{lemma:phen_real}, we have $\mathcal{L}_{\md}(f_W) \geq 1 - e^{-\frac{1}{c_0
\sigma^2}}$. Further, by the second conclusion of Lemma~\ref{lemma:phen_real}, we know that $\mathcal{L}_{S}(f_W) \leq \delta$. It follows that $\eps_{\on{unif}} \geq 1 -  e^{-\frac{1}{c_0
\sigma^2}}$. Since $c > c_0$, this yields the theorem.
\end{proof}

\begin{lemma}[Margin Lower Bound Lemma]\label{margin_phen}
For any $\kgen < \kappa < \kphen$ and $\epsilon > 0$, there exists some positive constants $q(\kappa) > 0$, $c = c(\kappa, \varepsilon)$ such that if $\frac{d}{n} \geq c$ with probability at least $1 - 3e^{-n/c}$ over $S \sim \mathcal{D}^n = \dfull^n$, there exists a classifier $W$ with $\|W\| = 1$ such that
\begin{enumerate}
    \item $\gamma(f_W, S) \geq (1 - \epsilon)\gamma^*(S)$
    \item $\mathcal{L}_{\psi(\mathcal{D})}(f_W) \geq 1 - e^{-\frac{1}{c\sigma^2}}$.
    \item $\gamma(f_W, \psi(S)) \geq q(\kappa)\gamma^*(S)$
\end{enumerate}
\end{lemma}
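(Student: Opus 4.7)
The plan is to \textbf{construct $W$ explicitly} by running the chain of reductions backwards from Opt~5 to Opt~1, starting at the unique (modulo the $b \ge 0$ branch) optimum of Opt~5 guaranteed by Lemma~\ref{tri_opt}. With $k := \hat\kappa/4$, set $(b^\star, c^\star, 0) := \bigl(\sqrt{k/(1+k)},\, \sqrt{1/(k(1+k))},\, 0\bigr)$ and apply $\psi_{54}$, $\psi_{43}$, $\psi_{32}$, $\psi_{21}$ in sequence, then rescale the result to $\|W\| = 1$. Condition throughout on the events of Lemmas~\ref{lemma:concentration} and \ref{lemma:cluster_size} (with $\beta = 1/c_1$ for a large constant $c_1 = c_1(\kappa,\epsilon)$); these jointly hold with probability at least $1 - 3e^{-n/c}$ for $c$ large enough in terms of $c_1$.

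Properties~1 and 2 come cheaply from the framework already developed. \textbf{Property~1} follows by chaining the second (backwards) conclusions of Lemmas~\ref{lemma:opt_1_relax_1}, \ref{lemma:relax_1_relax_2}, \ref{lemma:relax_2_relax_3}, and \ref{lemma:symmetrize}: the cumulative optimality loss is of order $\sqrt{1 - (n_{\min}/n_{\max})^h(1 - C_{\ref{lemma:concentration}}\sqrt{n/d})^h}$, which can be driven below any prescribed $\epsilon$ by enlarging $c$. \textbf{Property~2} is immediate from Theorem~\ref{thm:xor_gen} applied to $\mathcal{D}$: since $W$ is $(1-\epsilon)$-max-margin on $S \sim \mathcal{D}^n$, we have $\mathcal{L}_{\mathcal{D}}(f_W) \le e^{-1/(c\sigma^2)}$, hence $\mathcal{L}_{\psi(\mathcal{D})}(f_W) = 1 - \mathcal{L}_{\mathcal{D}}(f_W) \ge 1 - e^{-1/(c\sigma^2)}$.

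The heart of the proof is \textbf{Property~3}, which requires an exact margin calculation on $\psi(S)$ exploiting the combinatorial symmetry of the construction. Tracing through the chain, $W$ has the following structure: for $i \in H_+$, $u_i \in \{+b^\star \mu_1,\, -b^\star \mu_1\}$ with equal halves, and $v_i$ is the min-norm vector with $v_i^T \xi_j = c_{ij}$ for $j \in \mathcal{P}$ (where $c_{ij} \in \{0, c^\star\}$ in a balanced pattern across $i$) and $v_i^T \xi_j = 0$ for $j \in \mathcal{N}$; the structure for $i \in H_-$ is analogous with $(\mu_1, \mathcal{P})$ replaced by $(\mu_2, \mathcal{N})$. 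Now take $j \in \mathcal{P}_1$, so $\psi(x_j) = \mu_2 + \xi_j$ and $\psi(y_j) = 1$. The $H_+$ neurons contribute $\tfrac{1}{2}\mathbb{E}_{i \in H_+}\phi(u_i^T \mu_2 + c_{ij}) = \tfrac{1}{2}\mathbb{E}_{i \in H_+}\phi(c_{ij}) = \tfrac{1}{4}\phi(c^\star)$, while the $H_-$ neurons contribute $\tfrac{1}{2}\mathbb{E}_{i \in H_-}\phi(\pm b^\star + 0) = \tfrac{1}{4}\phi(b^\star)$, giving $\psi(y_j) f_W(\psi(x_j)) = \tfrac{1}{4}(\phi(c^\star) - \phi(b^\star))$. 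An entirely symmetric calculation on each of $\mathcal{P}_{-1}$, $\mathcal{N}_1$, $\mathcal{N}_{-1}$ yields the \emph{same} value, so the margin on $\psi(S)$ is the \emph{uniform} quantity $\tfrac{1}{4}(\phi(c^\star) - \phi(b^\star))$.

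Combining with $\gamma^*(S) = (1 + o_c(1))\cdot \tfrac{1}{4}\phi(b^\star + c^\star)$, we conclude
\begin{equation*}
\frac{\gamma(f_W, \psi(S))}{\gamma^*(S)} \;\ge\; (1 - o_c(1))\cdot\frac{\phi(c^\star) - \phi(b^\star)}{\phi(b^\star + c^\star)} \;=\; (1 - o_c(1))\cdot\frac{1 - k^h}{(1+k)^h}.
\end{equation*}
Since $\kappa < \kphen = 4$ forces $k < 1$ bounded away from $1$ once $c_1$ is large enough that $\hat\kappa$ is close to $\kappa$, the right-hand side is bounded below by a strictly positive constant $q(\kappa)$, proving Property~3. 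The \textbf{main obstacle} is the careful bookkeeping through the four reductions to confirm that the $\{0,c^\star\}$ pattern of $c_{ij}$ and the $\{\pm b^\star\}$ pattern of $u_i$ extend \emph{uniformly} to all four clusters via the surjections of $\psi_{32}$, and that the min-norm lifting in $\psi_{21}$ does not perturb $\|W\|$ enough to spoil the margin ratio --- both of which are controlled by the concentration and cluster-balance events we conditioned on.
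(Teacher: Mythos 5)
Your proposal is correct and follows essentially the same route as the paper: both construct $W$ by pushing the Opt~5 optimum backwards through $\psi_{54},\psi_{43},\psi_{32},\psi_{21}$, obtain Property~2 from Theorem~\ref{thm:xor_gen} via Claim~\ref{claim:loss}, and obtain Property~3 by exploiting the explicit structure of the backward-mapped solution ($w_i^T\mu_2=0$ and $w_i^T\xi_j=c_{ij}$ for $i\in H_+$, $j\in\mathcal{P}$, etc.). The only cosmetic difference is that the paper bounds the margin on $\psi(S)$ through the abstract constant $q_1$ of Lemma~\ref{lemma:trivariate_analysis} (yielding $q_1\gamma/2^{h-1}$), whereas you substitute the closed-form optimum $(b^\star,c^\star,0)$ from Lemma~\ref{tri_opt} and compute the explicit ratio $\frac{1-k^h}{(1+k)^h}$ --- the same calculation, made concrete.
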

\begin{proof}
We work backwards from Opt 5 through Opt 1. Condition on the event in Lemma~\ref{lemma:concentration} holding for $\Xi$ and the event in Lemma~\ref{lemma:cluster_size} holding for $\beta = \frac{1}{c_0}$, for some constant $c_0(\kappa, \epsilon) > 8$ to be chosen later. We will eventually choose $c(\kappa, \eps) \geq c_0^2 + 1$, such that these events hold with probability at least $1 - 3e^{-n/c}$ (see eg. Lemma~\ref{lemma:xor_gen} for the computation).

Given an optimal solution $\ins_5^*$ to Opt 5, we can construct an instance $\ins_1 = \psi_{21}(\psi_{32}(\psi_{43}(\psi_{54}(\ins_5))))$, which is $\eps'$-optimal over all solutions $W'$ with the same norm for $\eps' = \sqrt{1 - (1 - \hat{\eps})(1 - C_{\ref{lemma:concentration}}\sqrt{\frac{n}{d}})^h}$, where $\hat{\eps} = \sqrt{1 - \left(\frac{n_{\on{max}}}{n_{\on{min}}}\right)^{-h}} \leq \sqrt{1 - \left(1 + \frac{16}{c_0}\right)^{-h}}$. This can be seen via Lemmas~\ref{lemma:opt_1_relax_1}, \ref{lemma:relax_1_relax_2}, \ref{lemma:relax_2_relax_3}, \ref{lemma:symmetrize}, which show that at each step of the chain, we do not lose any optimality expect from from Opt 3 to Opt 2 and from Opt 2 to Opt 1.

This will yield the first statement in the theorem for $c_0$ large enough in terms of $\epsilon$ and $\frac{d}{n} \geq c_0$. If we make $\eps'$ small enough (in terms of $\kappa$), then we know from Theorem~\ref{thm:xor_gen} that $\mc{L}_{\md}(f_W) \leq e^{-\frac{1}{c_1\sigma^2}}$ for some $c_1 = c_1(\kappa)$. This yields the second conclusion (as long as $c \geq c_1$), since the probability of classifying an example from $\psi(\md)$ correctly is equal to the probability of classifying an example from $\md$ incorrectly (Claim~\ref{claim:loss}).

We proceed to analyze the properties of $f_W$ to obtain the final conclusion.

Recall from Lemma~\ref{lemma:trivariate_analysis} that Since $\hat{\kappa} \leq \kappa < \kphen$, the optimal solution to Opt 5 has $\phi(b) \leq \frac{1 - q_1}{2^h}\phi(b + c)$ and $\phi(-b) \leq \frac{1 - q_1}{2^h}\phi(-b + d)$ for some constant $q_1 = q_1(\kappa)$. Let $\gamma_j$ be the margin $y_j f_W(x_j)$, and observe that by the symmetry of the backwards mapping $\gamma_j$ is the same for all points $j$. We call this value $\gamma$.

Applying the four mappings above, in the instance $\ins_1$, the variable $W = U + V$ satisfies for all $j \in \mc{P}$ and $i \in H_+$,
\begin{align}
\phi(y_jw_i^T\mu_1) \leq \frac{1 - q_1}{2^h}\phi(y_jw_i^T\mu_1 + w_i^T\xi_j),
\end{align}
and for all $j \in \mc{N}$ and $i \in H_-$,
\begin{align}
\phi(y_jw_i^T\mu_2) \leq \frac{1 - q_1}{2^h}\phi(y_jw_i^T\mu_2 + w_i^T\xi_j),
\end{align}
Further, by definition of the mapping $\psi_{21}$, for $i \in H_+$, we have $w_i^T\mu_2 = 0$ and $w_i^T\xi_j = 0$ for all $j \in \mc{N}$. Similarly, for $i \in H_-$, we have $w_i^T\mu_1 = 0$,  and $w_i^T\xi_j = 0$ for all $j \in \mc{P}$. 

Now we appeal to the fact that by Lemma~\ref{lem:activation}, for any values $s, t$, we have $\phi(s + t) \leq (\phi(s) + \phi(t))2^{h-1}$, and thus (repeating the argument in Equations~\ref{phen_copy1} and \ref{phen_copy2} of Lemma~\ref{lemma:phen}, which we omit the details of here) for all $j \in \mc{P}$ and taking expectation over $i \in H_+$,
\begin{align}
    \mathbb{E}_{i \in H_+}[\phi(v_i^T\xi_j)] &\geq \left(\frac{1 + q_1}{2^h}\right)\mathbb{E}_{i \in H_+}[\phi(w_i^Tx_j)] = \left(\frac{1 + q_1}{2^h}\right)(2\gamma).
\end{align}
Similarly for all $j \in \mc{N}$ and $i \in H_-$,
\begin{align}
    \mathbb{E}_{i \in H_+}[\phi(v_i^T\xi_j)] &\geq \left(\frac{1 + q_1}{2^h}\right)\mathbb{E}_{i \in H_+}[\phi(w_i^Tx_j)] = \left(\frac{1 + q_1}{2^h}\right)(2\gamma).
\end{align}
Finally, by inspecting the mapping in Lemma~\ref{lemma:symmetrize}, and the fact that all of the backwards mapping duplicate solutions to the simpler problems, we have the following symmetry property of $W$:
\begin{align}
    \mathbb{E}_{i \in H_+}[\phi(w_i^T\mu_1)] = \mathbb{E}_{i \in H_+}[\phi(-w_i^T\mu_1)] =     \mathbb{E}_{i \in H_-}[\phi(w_i^T\mu_2)] = \mathbb{E}_{i \in H_-}[\phi(-w_i^T\mu_2)].
\end{align}
We can now examine the margin on the flipped dataset $\psi(S)$. Without loss of generality, consider an example $\psi(x_j)$ where $j \in \mc{P}_1$, such that $\psi(x_j) = \mu_2 + \xi_j$.

\begin{align}
    \psi(y_j)f_W(\psi(x_j)) &= \mathbb{E}_i[y_ja_i\phi(w_i^T\psi(x_j))]\\
    &= \frac{1}{2}\mathbb{E}_{i \in H_+}[\phi(w_i^T\mu_2 + v_i^T\xi_j)] - \frac{1}{2}\mathbb{E}_{i \in H_-}[\phi(w_i^T\mu_2 + v_i^T\xi_j)]\\
    &= \frac{1}{2}\mathbb{E}_{i \in H_+}[\phi(v_i^T\xi_j)] - \frac{1}{2}\mathbb{E}_{i \in H_-}[\phi(w_i^T\mu_2)]\\
    &\geq \left(\frac{1 + q_1}{2^h}\right)(\gamma) - \left(\frac{1 - q_1}{2^h}\right)(\gamma)\\
    &= \frac{q_1\gamma}{2^{h-1}}.
\end{align}
Thus $\psi(y_j)f_W(\psi(x_j)) \geq \frac{q_1\gamma}{2^{h-1}} \geq (1 - \eps')\frac{q_1\gamma^*(S)}{2^{h-1}}$, and the conclusion follows by choosing $q = \frac{q_1}{2^{h}}$ since we have $\eps' \leq \frac{1}{2}$ for $c_0$ large enough.
\end{proof}

\xormargin*

To prove Theorem~\ref{thm:margin_impossible} we use Lemma~\ref{margin_phen}.
\begin{proof}[Proof of Theorem~\ref{thm:margin_impossible}]
Let $c = 3c_0$, where $c_0 = c(\kappa, \eps)$ is the constant from Lemma~\ref{margin_phen}.

For any $\md \in \Omega$, let $T_{\md} \subset 2^{(\mathbb{R}^d \times \{-1, 1\})^n}$ be the set of training sets $S$ on which the conclusion of Lemma~\ref{margin_phen} holds for ${\md}$ and $S$. Thus for any ${\md} \in \Omega$, $\Pr_{S \sim {\md}^n}[S \in T_{\md}] \geq 1 - 3e^{-n/c_0}$ for some constant $c$. Let $H \subset 2^{(\mathbb{R}^d \times \{-1, 1\})^n}$ be the set of training sets $S$ on which all $(1-\eps)$-max-margin two-layer neural networks $f_W$ for $S$ lie in $\mathcal{H}$. Thus for any ${\md} \in \Omega$, $\Pr_{S \sim {\md}^n}[S \in H] \geq \frac{3}{4}$.

For any $\md \in \Omega$, let $T'_{{\md}}$ be the set on which 
\begin{align}
    \mathcal{L}_{\md}(h) \leq \mathcal{L}_S(h) + \frac{G}{\gamma(h, S)^p} \qquad \forall h \in \mathcal{H}.
\end{align}
By assumption, for any ${\md} \in \Omega$, $\Pr_{S \sim {\md}^n}[S \in T'_{\md}] \geq \frac{3}{4}$. 

Now fix any $\md = \md_{\mu, \sigma, d}\in \Omega$. By a union bound, with $\psi = \psi_{\md}$,
\begin{align}
    \Pr_{S \sim {\md}^n}[S \in T'_{\md} \land \psi(S) \in T_{\psi({\md})} \land \psi(S) \in H] \geq 1 - \left(1 - \frac{3}{4}\right) - 3e^{-n/c_0} - \left(1 - \frac{3}{4}\right) = \frac{1}{2} - 3e^{-n/c_0}.
\end{align}
This is because the distribution of $\psi(S)$ with $S \sim \md^n$ is the same as the distribution of $n$ samples from  $\psi({\md})$. 

Let $S$ be any set for which the three events above hold, ie.,
\begin{align}
  S \in T'_{\md} \land \psi(S) \in T_{\psi({\md})} \land \psi(S) \in H. 
\end{align} 
Let $f_W$ be the classifier produced by Lemma~\ref{margin_phen} on input $\psi(S)$ and distribution $\psi({\md})$, such that:
\begin{enumerate}
    \item $\gamma(f_W, \psi(S)) \geq (1 - \epsilon)\gamma^*(\psi(S))$, and thus since $\psi(S) \in H$, we have $f_W \in \mathcal{H}$.
    \item $\mathcal{L}_{\md}(f_W) \geq 1 - e^{-\frac{1}{c_0\sigma^2}}$.
    \item $\gamma(f_W, S) \geq q\gamma^*(\psi(S))$ for some constant $q(\kappa)$.
\end{enumerate}

It follows that for any such $S$, we must have 
\begin{align}
    G \geq \left( 1 - e^{-\frac{1}{c_0\sigma^2}}\right)\gamma(f_W, S)^p \geq \left( 1 - e^{-\frac{1}{c_0\sigma^2}}\right)\gamma^*(\psi(S))^pq^p
\end{align}

Thus for the distribution $\psi({\md})$, with probability at least $\frac{1}{2} - 3e^{-n/c_0}$,  the margin bound yields a generalization guarantee no better than 
\begin{align}
    \left( 1 - e^{-\frac{1}{c_0\sigma^2}}\right)q^p. 
\end{align}
Taking $c = \max(\frac{1}{q^p\left( 1 - e^{-\frac{1}{c_0\sigma^2}}\right)}, c_0, \frac{c_0}{\kappa})$ yields the theorem. Note that $e^{-\frac{1}{c_0\sigma^2}} = e^{-\frac{\kappa d}{c_0n}}$, so for $\frac{d}{n} \geq \frac{c_0}{\kappa}$, $1 - e^{-\frac{1}{c_0\sigma^2}}$ is bounded away from $0$ and thus $c$ only depends on $\kappa$ and $\delta$ (since $c_0$ additionally depends on $c_0$).

% Let $c= c(\kappa)$ and $q = q(\kappa)$ be the constants from Lemma~\ref{margin_phen}. We know that with probability at least $3/4 - 3e^{-cn}$, for $S \sim \mathcal{D}^n$, there exists some $(1 - \epsilon)$-max-margin classifier $f_W \in \mathcal{H}$ for which $\gamma(f_W, \psi(S)) \geq q\gamma^*(S)$ and $\mathcal{L}_{\mathcal{D}'}(f_W) \geq \delta$. Let $\mathcal{S}_1$ be the set of all such sets where this two statements hold.

% Now suppose we had an inverse-margin bound with some numerator $C(n)$ that holds on some set of sets $\mathcal{S}_2$ with $\Pr_{\mathcal{D}}[S \in \mathcal{S}_2] \geq \frac{3}{4}$ for any $\mathcal{D} \in \Omega_{\sigma, d}^{\on{XOR}}$. On any set $S$ in $\mathcal{S}_1 \cap \mathcal{S}_2$, we must have 
% \begin{align}
%     \frac{C(n)}{q\gamma^*(S)} \geq 1,
% \end{align}
% and so $C(n) \geq q \max_{S \in \mathcal{S}_1 \cap \mathcal{S}_2}\gamma^*(S)$.

% Then for any $S \in \mathcal{S}_1 \cap \mathcal{S}_2,$ even on the max-margin solution achieving $\gamma^*(S)$, the margin bound cannot yield a result of better than $q$. By a union bound, for any $\mathcal{D}$, the probability of $ \mathcal{S}_1 \cap \mathcal{S}_2$ must be at least $1/2 - 3e^{-cn}$.
\end{proof}

Finally, we prove  Theorem~\ref{thm:xor_no_gen}, which we restate.
\xornogen*

\begin{proof}[Proof of Theorem~\ref{thm:xor_no_gen}]
This follows directly from Lemma~\ref{lemma_no_gen}, since for any $\mathcal{D}_{\mu_1, \mu_2, \sigma, d}$, any classifier $f_W$ with $U = 0$ must have a test loss of exactly $\frac{1}{2}$.
\end{proof}

\subsection{Proof of Technical Lemmas}\label{sec:xor_technical_proofs}

Throughout the following section we assume $\mc{D}_{\mu_1, \mu_2, \sigma, d} \in \Omega$ is fixed, $h \in (1, 2)$, and we use the same notation defined in the notation section at the beginning of Section~\ref{sec:reduction}.

\subsubsection{Proof of Lemma~\ref{lemma:xor_spur_small}}\label{sec:xor_technical_proofs_spurious}

We begin by proving  Lemma~\ref{lemma:xor_spur_small}, for which we will need the following general analysis claim:
\begin{claim}\label{lemma:phi_delta}
For any random variables $a$ and $b$, with $\phi(x) = \max(0, x)^h$, we have
\begin{align}
    \left|\mathbb{E} \left[\phi(a + b) - \phi(b)\right]\right| &\leq \sqrt{\mathbb{E}\left[4a^2 + 2\right]}\sqrt{\mathbb{E}\left[b^2\right]} + h\left(\mathbb{E}[b^2]\right)^{\frac{h}{2}}
\end{align} and 
\begin{align}
     \left|\mathbb{E} \left[\phi(a + b) - \phi(b)\right]\right| &\leq 2\mathbb{E}\left[1 + \phi(a)\right]\sqrt{\mathbb{E}\left[b^2\right]} + h\left(\mathbb{E}[b^2]\right)^{\frac{h}{2}}.
\end{align}
\end{claim}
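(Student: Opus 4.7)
The plan is to derive both inequalities from a single pointwise bound on the difference, obtained via the fundamental theorem of calculus, and then take expectations using Cauchy-Schwarz, Jensen's inequality, and a weighted AM-GM estimate. Three elementary facts about $\phi(x)=\max(0,x)^h$ with $h\in(1,2)$ drive the argument: (i) $\phi\in C^1$ with $\phi'(x)=h\max(0,x)^{h-1}$, so $|\phi'(x)|\le h|x|^{h-1}$; (ii) since $h-1\in(0,1)$, the map $t\mapsto t^{h-1}$ is subadditive on $[0,\infty)$, i.e.\ $(s+t)^{h-1}\le s^{h-1}+t^{h-1}$; and (iii) by weighted AM-GM applied as $(x^2)^{h-1}\cdot 1^{2-h}\le(h-1)x^2+(2-h)$, combined with $h\in(1,2)$, one gets the sharp bound $h^2|x|^{2(h-1)}\le 4x^2+2$.

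Write $\phi(a+b)-\phi(b)=\int_0^1\phi'(b+ta)\,a\,dt$. Bounding $|\phi'(b+ta)|\le h(|b|+t|a|)^{h-1}\le h|b|^{h-1}+ht^{h-1}|a|^{h-1}$ by subadditivity and integrating over $t\in[0,1]$ gives the pointwise estimate $|\phi(a+b)-\phi(b)|\le h|a||b|^{h-1}+|a|^h$. The analogous representation $\phi(a+b)-\phi(a)=\int_0^1\phi'(a+tb)\,b\,dt$ produces the companion bound $|\phi(a+b)-\phi(a)|\le h|a|^{h-1}|b|+|b|^h$. Either form controls $|\phi(a+b)-\phi(b)|$: the second via the triangle inequality $|\phi(a+b)-\phi(b)|\le|\phi(a+b)-\phi(a)|+|\phi(a)-\phi(b)|$, where the additional term $|\phi(a)-\phi(b)|=|\int_0^1\phi'((1-t)b+ta)(a-b)\,dt|$ admits the same pointwise bound by a parallel mean-value argument. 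Since the stated right-hand sides have $\sqrt{\mathbb{E}[4a^2+2]}$ paired with $\sqrt{\mathbb{E}[b^2]}$ and an additive $h(\mathbb{E}[b^2])^{h/2}$, the relevant pointwise form is $h|a|^{h-1}|b|+|b|^h$, with $a$ playing the role of the ``base'' variable and $b$ the ``perturbation.''

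For the first inequality, take expectation and apply Cauchy-Schwarz: $\mathbb{E}[h|a|^{h-1}|b|]\le\sqrt{\mathbb{E}[h^2|a|^{2(h-1)}]}\sqrt{\mathbb{E}[b^2]}\le\sqrt{\mathbb{E}[4a^2+2]}\sqrt{\mathbb{E}[b^2]}$, where the second step uses the AM-GM fact (iii). For the remainder term, Jensen's inequality applied to the concave function $t\mapsto t^{h/2}$ (since $h/2\le 1$) gives $\mathbb{E}[|b|^h]=\mathbb{E}[(b^2)^{h/2}]\le(\mathbb{E}[b^2])^{h/2}$; absorbing the implicit factor $1\le h$ matches the stated coefficient $h(\mathbb{E}[b^2])^{h/2}$.

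For the second inequality, instead of bounding $\sqrt{\mathbb{E}[h^2|a|^{2(h-1)}]}$ by $\sqrt{\mathbb{E}[4a^2+2]}$ via Cauchy-Schwarz, use the pointwise bound $|a|^{h-1}\le 1+|a|^h=1+\phi(a)+\phi(-a)$ (valid since $|a|^{h-1}\le\max(1,|a|^h)$ for $h>1$). Because $\phi(a)$ and $\phi(-a)$ are never simultaneously nonzero, one can symmetrize over the sign of $a$ (or apply Cauchy-Schwarz to $\mathbb{E}[(1+\phi(a)+\phi(-a))|b|]$ and collapse the two terms into a single $\phi(a)$ using convexity) to obtain $\mathbb{E}[h|a|^{h-1}|b|]\le 2\mathbb{E}[1+\phi(a)]\sqrt{\mathbb{E}[b^2]}$. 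The additive $h(\mathbb{E}[b^2])^{h/2}$ term is handled identically to the first inequality. The main obstacle is purely bookkeeping of constants---matching the coefficients $4$, $2$, $2$, and $h$ requires the sharp AM-GM estimate and the sign-symmetrization for the second bound; conceptually the proof is a routine combination of Taylor expansion, subadditivity of $x^{h-1}$, Cauchy-Schwarz, and Jensen's inequality.
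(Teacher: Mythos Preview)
Your approach to the first inequality is essentially the same as the paper's: both reduce to a pointwise bound of the form $|\phi(a+b)-\phi(a)|\le \phi'(a)|b|+h|b|^h$ (you correctly diagnose that the statement's $\phi(a+b)-\phi(b)$ is a typo for $\phi(a+b)-\phi(a)$) and then apply Cauchy--Schwarz and Jensen. Your weighted AM--GM justification that $h^2|a|^{2(h-1)}\le 4a^2+2$ is in fact cleaner than the paper's route, which passes through $\phi'(a)\le 2|a|+1$ and then writes $(2|a|+1)^2\le 4a^2+2$---a step that is false as written; your argument supplies a correct replacement.

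There is, however, a genuine gap in your treatment of the second inequality. By first bounding $|\phi'(a+tb)|\le h|a+tb|^{h-1}$ you discard the sign of the argument and end up with $h|a|^{h-1}$ in the pointwise estimate. This forces the appearance of $\phi(-a)$ via $|a|^h=\phi(a)+\phi(-a)$, and the ``symmetrization over the sign of $a$'' you invoke to eliminate it does not work: if $a$ is almost surely negative then $\mathbb{E}[1+\phi(a)+\phi(-a)]$ can be arbitrarily larger than $\mathbb{E}[1+\phi(a)]$, and no convexity argument collapses the two. The paper avoids this entirely by keeping $\phi'(a)=h\max(0,a)^{h-1}$ throughout: one checks directly that $(\phi'(a))^2=h^2\max(0,a)^{2(h-1)}\le 4(1+\phi(a))$ (split into $a\le 0$, $0\le a\le 1$, $a\ge 1$), and then Cauchy--Schwarz gives $\mathbb{E}[\phi'(a)|b|]\le 2\sqrt{\mathbb{E}[1+\phi(a)]}\,\sqrt{\mathbb{E}[b^2]}\le 2\,\mathbb{E}[1+\phi(a)]\,\sqrt{\mathbb{E}[b^2]}$, the last step using $\mathbb{E}[1+\phi(a)]\ge 1$. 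The one-line fix to your argument is to apply subadditivity to $\phi'$ itself---$\phi'(a+tb)\le \phi'(a)+\phi'(t|b|)$---rather than first passing to absolute values.
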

\begin{proof}
First note that for any $a, b$, we have:
\begin{align}\label{eq:phi1}
    |\phi(a + b) - \phi(a)| \leq \phi'(a + b)|b| \leq \left(\phi'(a) + \phi'(b)\right)|b| \leq \phi'(a)|b| + h|b|^h.
\end{align} and
\begin{align}\label{eq:phi2}
    \phi'(a)= \leq 2|a| + 1,
\end{align}
\begin{align}
     \left|\mathbb{E} \left[\phi(a + b) - \phi(b)\right]\right| &\leq \mathbb{E} \left[|\phi'(a)b|\right] + h\mathbb{E}[|b|^h]\\
    &\leq \sqrt{\mathbb{E} \left[(\phi'(a))^2\right]}\sqrt{\mathbb{E}\left[b^2\right]} + h\mathbb{E}[|b|^h]\\
    &\leq \sqrt{\mathbb{E} \left[(2|a| +  1)^2\right]}\sqrt{\mathbb{E}\left[b^2\right]} + h\mathbb{E}[|b|^h]\\
    &\leq \sqrt{\mathbb{E} \left[4a^2 + 2\right]}\sqrt{\mathbb{E}\left[b^2\right]} + h\mathbb{E}[|b|^h] \\
    &\leq \sqrt{\mathbb{E} \left[4a^2 + 2\right]}\sqrt{\mathbb{E}\left[b^2\right]} + h\left(\mathbb{E}[b^2]\right)^{\frac{h}{2}}.
\end{align}
Here we used Equation~\ref{eq:phi1} in the first inequality, Cauchy-Schwartz in the second, Equation~\ref{eq:phi2} in the third, Jenson's in the fourth, and Jensen's again in the fifth inequality.

If instead of Equation~\ref{eq:phi1}, we can obtain an alternative result.
\begin{align}\label{eq:phi2_alt}
    (\phi'(a))^2 = h^2\max(0, a)^{2h - 2} \leq 4(1 + \phi(a))
\end{align}
This yields
\begin{align}
     \left|\mathbb{E} \left[\phi(a + b) - \phi(b)\right]\right| &\leq \sqrt{\mathbb{E} \left[(\phi'(a))^2\right]}\sqrt{\mathbb{E}\left[b^2\right]} + h\mathbb{E}[|b|^h]\\
    &\leq 2\sqrt{\mathbb{E} \left[1 + \phi(a)\right]}\sqrt{\mathbb{E}\left[b^2\right]} + h\mathbb{E}[|b|^h]\\
    &\leq 2\mathbb{E}\left[1 + \phi(a)\right]\sqrt{\mathbb{E}\left[b^2\right]} + h\left(\mathbb{E}[b^2]\right)^{\frac{h}{2}}.
\end{align}
\end{proof}

We restate Lemma~\ref{lemma:xor_spur_small} for the reader's convenience.
\xorsmall*
\begin{proof}[Proof of Lemma~\ref{lemma:xor_spur_small}]
We can write $x = z + \xi$ for where $z \in \on{Span}(\mu_1, \mu_2)$ and $\xi \perp \mu_1, \mu_2$, such that by Claim~\ref{lemma:phi_delta} we have
\begin{align}
    |f_{W}(x) - f_{U}(x)| &= \left|\mathbb{E}_i \left[\phi(u_i^Tz + v_i^T\xi) - \phi(u_i^Tz)\right]\right| \\
    &\leq \sqrt{\mathbb{E}_i \left[4(u_i^Tz)^2 + 2\right]}\sqrt{\mathbb{E}_i\left[(v_i^T\xi)^2\right]} + h\left(\mathbb{E}_i[(v_i^T\xi)^2]\right)^{\frac{h}{2}}\\
    &\leq \sqrt{\mathbb{E}_i \left[8\|u_i\|^2 + 2\right]}\sqrt{\mathbb{E}_i\left[(v_i^T\xi)^2\right]} + h\left(\mathbb{E}_i[(v_i^T\xi)^2]\right)^{\frac{h}{2}},
\end{align}
where we have plugged $\|z\|_2 \leq \sqrt{2}$.

Now it suffices to get a high probability bound on $\mathbb{E}_i[(v_i^T\xi)^2] = \xi^T\mathbb{E}_i[v_iv_i^T]\xi$ for a random $\xi$. Let $M := \mathbb{E}_i[v_iv_i^T]$. We know by the Hanson-Wright Inequality that for some universal constant $c$,
\begin{align}
    \Pr\left[\xi^T\mathbb{E}_i[v_iv_i^T]\xi \geq \sigma^2\on{Tr}\left(M\right) + t\right] &\leq 2\exp\left(-c\min\left(\frac{t^2}{\|M\|_F^2}, \frac{t}{\|M\|_2} \right)\right) \\
    &\leq 2\exp\left(-c\min\left(\frac{\sigma^2t^2}{\on{Tr}\left(M\right)^2}, \frac{\sigma t}{\on{Tr}\left(M\right)} \right)\right),
\end{align}
where $\|\|_F$ denotes the Frobenius norm, and $\|\|_2$ denotes the spectral norm.
Thus for $t \geq 1$,
\begin{align}
    \Pr\left[\xi^T\mathbb{E}_i[v_iv_i^T]\xi \geq (t + 1)\sigma^2\|V\|^2 \right] &\leq 2\exp\left(-ct\right).
\end{align}
If follows that for any $t \geq 1$, with probability $1 - 2\exp\left(-ct\right)$,
\begin{align}
    |f_{W}(x) - f_{U}(x)| = \left(8\|U\| + 3\right)(t + 1)\sigma^2\|V\|^2 + 2\left((t + 1)\sigma^2\|V\|^2\right)^{\frac{h}{2}}.
\end{align}

\end{proof}

\subsubsection{Proof of Chaining Lemmas}\label{sec:xor_technical_proofs_chain}

\begin{proof}[Proof of Lemma~\ref{lemma:small_orthogonal}]
To prove the lemma, we will begin with a $(1 - \eps)$-solution $W$ to Opt 1. Assuming toward a contradiction that items (1) or (2) in the lemma statement do not hold, we will construct a solution $W''$ for Opt 1 that is more than a $1/(1-\eps)$-factor times better than $W$, contradicting the $(1 - \eps)$-optimality of $W$.
We condition on the event that the conclusion of Lemma~\ref{lemma:concentration} holds for $\Xi$. Given a solution $W$ to Opt 1, construct a solution $W'$ for Opt 1 as follows. First define $c_{ij} := w_i^T\xi_j$.
For $i \in H_+$, let $w'_i = \mu_1\mu_1^Tu_i + v'_i$, where $v'_i$ is the min-norm vector such that $(v'_i)^T\xi_j = c_{ij}$ for all $j \in \mc{P}$, and  $(v'_i)^T\xi_j = 0$ for all $j \in \mc{N}$. For $i \in H_-$, let $w'_i = \mu_2\mu_2^Tu_i + v'_i$, where $v'_i$ is the min-norm vector such that $(v'_i)^T\xi_j = c_{ij}$ for all $j \in \mc{N}$, and  $(v'_i)^T\xi_j = 0$ for all $j \in \mc{P}$. Note that all such $v_i'$ are guaranteed to exist since the conclusion of Lemma~\ref{lemma:concentration} holds.

Let $s_i = \|\mu_1\mu_1^Tu_i\|$ and $t_i = \|\mu_2\mu_2^Tu_i\|$.

Observe that by Lemma~\ref{lemma:concentration}, we have: 
\begin{align}
    \|w'_i\|^2 &\leq s_i^2 + \|v_i'\|^2 \leq s_i^2 + \left(1 + C_{\ref{lemma:concentration}}\sqrt{\frac{n}{d}}\right)\frac{1}{d\sigma^2}\sum_{j \in \mc{P}}{c_{ij}^2} \qquad \forall i \in H_+ \\
    \|w'_i\|^2 &\leq t_i^2 + \|v_i'\|^2 \leq t_i^2 + \left(1 + C_{\ref{lemma:concentration}}\sqrt{\frac{n}{d}}\right)\frac{1}{d\sigma^2}\sum_{j \in \mc{N}}{c_{ij}^2} \qquad \forall i \in H_- \\
    \|w_i\|^2 &\geq s_i^2 + t_i^2 + \|v_i\|^2 \geq s_i^2 + t_i^2 + \left(1 - C_{\ref{lemma:concentration}}\sqrt{\frac{n}{d}}\right)\frac{1}{d\sigma^2}\sum_{j \in \mc{P} \cup N}{c_{ij}^2}.\\
\end{align}
\begin{align}
    \mathbb{E}_i[\|w_i\|^2] &\geq \left(1 - C_{\ref{lemma:concentration}}\sqrt{\frac{n}{d}}\right)D + \frac{1}{1 + C_{\ref{lemma:concentration}}\sqrt{\frac{n}{d}}}\mathbb{E}_i[\|w'_i\|^2],
\end{align}
where \begin{align}
    D :=  \frac{1}{2}\mathbb{E}_{i \in H_+}\left[t_i^2 + \frac{1}{d\sigma^2}\sum_{j \in \mc{N}}{c_{ij}^2}\right] + \frac{1}{2}\mathbb{E}_{i \in H_-}\left[s_i^2 + \frac{1}{d\sigma^2}\sum_{j \in \mc{P}}{c_{ij}^2}\right].
\end{align}
Further observe that:
\begin{align}\label{eq:calc1}
    \phi((w'_i)^Tx_j) &= \phi(w_i^Tx_j) \qquad \forall i : a_i > 0, j \in \mc{P} \\
    \phi((w'_i)^Tx_j) &= 0 \leq \phi(w_i^Tx_j) \qquad \forall i : a_i < 0, j \in \mc{P} \\
    \phi((w'_i)^Tx_j) &= 0 \leq \phi(w_i^Tx_j) \qquad \forall i : a_i > 0, j \in \mc{N} \\
    \phi((w'_i)^Tx_j) &=  \phi(w_i^Tx_j) \qquad \forall i : a_i < 0, j \in \mc{N},\\
\end{align}
thus $W'$ satisfies the constraint that $\mathbb{E}_i a_ i\phi(w_i^T x_j) y_j \geq \gamma$. Indeed, we have by construction that  have for all $j \in \mc{P}$ that
\begin{align}\label{eq:calc2}
 \sum_i a_i \phi(w_i^T x_i) y_j &= \sum_{i \in H_+} \phi(w_i^T x_i) - \sum_{i \in H_-} \phi(w_i^T x_i) \\ &\geq \sum_{i \in H_+}\phi((w'_i)^T x_i) - \sum_{i \in H_-} \phi((w'_i)^Tx_i) \\
 &= \sum_i a_i\phi((w_i')^T x_i) y_j
 \end{align}
 and similarly for all $j \in \mc{N}$.
If $D \geq 2C_{\ref{lemma:concentration}}\sqrt{\frac{n}{d}} + 2\epsilon$, then
\begin{align}
    \mathbb{E}[\|w_i'\|^2] &\leq \mathbb{E}[\|w_i'\|^2] - \left(1 - C_{\ref{lemma:concentration}}\sqrt{\frac{n}{d}}\right)D\\ 
    &\leq 1 - \left(1 - C_{\ref{lemma:concentration}}\sqrt{\frac{n}{d}}\right)\left(2\eps  + 2C_{\ref{lemma:concentration}}\sqrt{\frac{n}{d}}\right)\\
    &\leq 1 - 2\epsilon,
\end{align}
where we have used the global assumptions that $\eps \leq 1/4$ and $C_{\ref{lemma:concentration}} \leq 1/2$. Thus we can scale $W'$ up by a factor of $\frac{\left(\mathbb{E}_i[\|w_i\|^2]\right)^{\frac{1}{2}}}{\left(\mathbb{E}_i[\|w_i'\|^2]\right)^{\frac{1}{2}}}$ to achieve a feasible solution $W''$ that has objective value $\frac{1}{(1 - 2\epsilon)^{\frac{h}{2}}} \geq \frac{1}{(1 - \epsilon)}$ times better than the solution given by $W$. This would contradict the $(1 - \epsilon)$-optimality of $W$, proving the first conclusion of the lemma.

For the second part, suppose for greater than a $\sqrt{2C_{\ref{lemma:concentration}}\sqrt{\frac{n}{d}} + 2\epsilon}$ fraction of data points we have $\frac{1}{2}\mathbb{E}_{i \in H_-}\left[(v_i^T\xi_j)^2\right] \geq \frac{1}{\kappa}\cdot \sqrt{2C_{\ref{lemma:concentration}}\sqrt{\frac{n}{d}} + 2\epsilon}$ (if $j \in \mc{P}$) or $\frac{1}{2}\mathbb{E}_{i \in H_+}\left[(v_i^T\xi_j)^2\right] \geq \frac{1}{\kappa}\cdot \sqrt{2C_{\ref{lemma:concentration}}\sqrt{\frac{n}{d}} + 2\epsilon}$ (if $j \in \mc{N}$) . This would imply that $D \geq \left(\frac{\sqrt{2C_{\ref{lemma:concentration}}\sqrt{\frac{n}{d}} + 2\epsilon}}{\kappa}\right)\frac{1}{d\sigma^2}\left(n\sqrt{2C_{\ref{lemma:concentration}}\sqrt{\frac{n}{d}} + 2\epsilon}\right)  = 2C_{\ref{lemma:concentration}}\sqrt{\frac{n}{d}} + 2\epsilon$, which as we saw above contradicts the $(1-\eps)$-optimality of $W$.

% Finally, for the third part, suppose for greater than some $\eps'$ fraction of data points $j$ we have $\mathbb{E}_{i: \on{sign}(a_i) = y_j}[a_i\phi(w_i^Tx_j)] > \gamma \left(1 + \eps'\right)$. Let $T$ be the set of those data points.

% Then construct the following new solution $W'$ by choosing $v_i'$ to be the min-norm vector such that:
% If $j \notin T$, $(v_i')^T\xi_j = v_i^T\xi_i$.
% If $j \in T$, $(v_i')^T\xi_j = \frac{1}{1 + \eps'}v_i^T\xi_i$.

% % Note that for any $a$ and $b$, and $0 \leq t \leq 1$, we have $\phi(a + tb) \geq \phi(t(a + b))$

% : \todoe{Fill in the rest of the proof here.}
\end{proof}

\begin{proof}[Proof of Lemma~\ref{mappings}]
% We begin with the first part. Let $B = P_A(\ins_A)$ and suppose $\ins_A$ achieves objective value $\gamma$.

% Suppose for the sake of contradiction that $\psi_{AB}(\ins_A)$ was not $(1 - \epsilon)(1 + \delta)^{-2q}$-optimal. Then let $\ins_B^* \in D_B$ be some optimal instance with the same parameters as $\psi_{AB}(\ins_A)$, achieving objective value strictly greater than $\frac{\gamma}{(1 - \epsilon)(1 + \delta)^{-2q}}$. Now $\psi_{BA}(\ins_B)$ achieves at least the same objective value, and has the parameter $B(1 + \delta)^2$. Thus by homogeneity, a scaled version $\psi_{BA}(\ins_B)$ has parameter $B$ and achieves objective value strictly greater than $\frac{\gamma}{(1 - \epsilon)(1 + \delta)^{-2q}}(1 + \delta)^{-2q} = \frac{1}{1 - \epsilon}$. This contradicts the $(1 - \eps)$-optimality of $\ins_A$.

% We now proceed to the general second part of the lemma.
By homogeneity, there exists some value $C_B$ such that the optimum of any instance of Opt B with parameter $P_B$ equals $C_B P_B^{q}$. Thus by the properties of $\psi_{BA}$, given an optimal instance $\ins_B^* \in D_B$ with parameter $P_B$, we can construct and instance of Opt A with parameter at most $(1 + \delta)P_B$ and optimum at least $C_B P_B^{q}$. Thus for some value $C_A \geq C_B(1 + \delta)^{-q}$, the optimum of any instance of Opt A with parameter $P_A$ equals $C_A P_A^q$.

Suppose $\ins_A$ with parameter $P_A$ is $(1 - \eps)$-optimal and $\ins_B^{(1)}, \cdots , \ins_B^{(k)} := \psi_{AB}(\ins_A)$. Let $\gamma$ be the objective value of $\ins_A$. Define $P_B^{(1)} \cdots P_B^{(p)}$ to be the parameters $P_B$ of the $k$ instances respectively, and let $\gamma^{(p)}$ be their objective values. For $p \in [k]$, let $s(p)$ be the optimality of each $\ins_B^{(p)}$ times $\frac{\gamma}{\gamma^{(p)}}$. Then: $s(p) = \frac{\gamma^{(p)}}{C_B(P_B^{(p)})^{q}}\frac{\gamma}{\gamma^{(p)}}$, so 
\begin{align}
    s(p)C_B(P_B^{(p)})^{q} &\geq (1 - \eps)C_A(P_A)^q \geq (1 - \eps)(1 + \delta)^{-q}C_B(P_A)^q \quad \forall p\\
    \mathbb{E}_{p \in [k]}[P_B^{(p)}] &\leq (1 + \delta)P_A
\end{align}
Here the first inequality in the first line follows from the fact that the objective value achieved by $\ins_B^{(p)}$ is at least as large as the  objective value of $\ins_A$, which by assumption is at lease $(1 - \eps)$-optimal.

We now proceed by contradiction: Suppose for some set $S \subset [k]$ of size at least $k\eps'$, we have $s(p) \leq 1 - \eps'$. Then
% \begin{align}
%     \mathbb{E}_{p \in [k]}[P_B^{(p)}] &\geq \frac{1}{k}\sum_{p \in S}P_B^{(p)} + \frac{1}{k}\sum_{p \notin S}P_B^{(p)} \\
%     &\geq \frac{\eps}{\eps'}\left((1 - \eps')^{-\frac{1}{q}}(1 - \epsilon)^\frac{1}{q}(1 + \delta)^{-1}P_A\right) + \left(1 - \frac{\eps}{\eps'}\right)\left((1 - \epsilon)^\frac{1}{q}(1 + \delta)^{-1}P_A\right)\\
%     &= P_A(1 - \eps)^{\frac{1}{q}}(1 + \delta)^{-1}\left(\frac{\eps}{\eps'}(1 - \eps')^{-\frac{1}{q}} + \left(1 - \frac{\eps}{\eps'}\right)\right)
% \end{align}
\begin{align}
    \mathbb{E}_{p \in [k]}[P_B^{(p)}] &\geq \frac{1}{k}\sum_{p \in S}P_B^{(p)} + \frac{1}{k}\sum_{p \notin S}P_B^{(p)} \\
    &\geq \eps'\left((1 - \eps')^{-\frac{1}{q}}(1 - \epsilon)^\frac{1}{q}(1 + \delta)^{-1}P_A\right) + (1 - \eps')\left((1 - \epsilon)^\frac{1}{q}(1 + \delta)^{-1}P_A\right)\\
    &= P_A(1 - \eps)^{\frac{1}{q}}(1 + \delta)^{-1}\left(\eps'(1 - \eps')^{-\frac{1}{q}} + (1 - \eps')\right)
\end{align}
Thus if 
\begin{align}
    \left(\eps'(1 - \eps')^{-\frac{1}{q}} + (1 - \eps')\right) > (1 + \delta)^2(1 - \eps)^{-\frac{1}{q}}, 
\end{align}
we will have a contradiction, since the equation above will be strictly greater than $(1 + \delta)P_A$.

Choosing $\eps' = \sqrt{1 - (1 - \eps)(1 + \delta)^{-2q}}$, this produces the desired contradiction. Indeed, on can check that for all $\eps' \in (0, 1)$, we have
\begin{align}
    \eps'(1 - \eps')^{-\frac{1}{q}} + (1 - \eps') >  (1 - (\eps')^2)^{-\frac{1}{q}},
\end{align}
yielding the desired contradiction. Thus for at least a $1 - \eps'$ fraction of $p \in [k]$, we have $$\frac{\gamma}{\gamma^{(p)}} \times (\text{optimality of } \ins_B^{(p)}) \geq 1 - \eps',$$ which implies that each of these two terms are greater that $1 - \eps'$.

This proves the first conclusion.

To achieve the second conclusion, consider the mapping $\psi_{AB}': D_B \rightarrow D_A$ which maps $\ins_A$ to the instance of $\psi_{AB}(\ins_A)$ which has the smallest parameter $P_B$. Necessarily, this value at most $(1 + \delta)P_A$, since the average value of $P_B^{(p)}$ is at most $(1 + \delta)P_A$. Thus the pair of mappings $\psi_{BA}$ and $\psi_{AB}'$ and $\psi_B$ satisfy the conditions of the lemma, which we now apply with $k = 1$, and the roles of $A$ and $B$ reversed. The second conclusion follows.
\end{proof}

% \mapone*

\begin{proof}[Proof of Lemma~\ref{lemma:opt_1_relax_1}]
Recall that we have conditioned on the event that for any $c \in \mathbb{R}^n$, the min-norm vector $v$ satisfying $\Xi^Tv = c$ has $\|v\|_2^2 \in \frac{\|c\|_2^2}{\sigma^2d}\left[\frac{1}{1 + C_{\ref{lemma:concentration}}\sqrt{\frac{n}{d}}}, \frac{1}{1 - C_{\ref{lemma:concentration}}\sqrt{\frac{n}{d}}}\right].$

Observe that the mappings $\psi_{12}(\ins_1)$ produces a feasible instance, since for all $i \in H_+$, 
\begin{align}
   s_i^2 + \frac{1}{d\sigma^2}\sum_{j \in \mc{P}}{c_{ij}^2} & \leq s_i^2 + \frac{1}{d\sigma^2}\sum_{j \in \mc{P} \cup \mc{N}}{c_{ij}^2}\\
   &\leq \|u_i\|^2 + \frac{1}{d\sigma^2}\left(1 + C_{\ref{lemma:concentration}}\sqrt{\frac{n}{d}}\right)\sigma^2d\|v_i\|^2 \leq \left(1 + C_{\ref{lemma:concentration}}\sqrt{\frac{n}{d}}\right)\|w_i\|^2.
\end{align}
A similar statement holds for $i \in H_-$, summing over $j \in \mc{N}$. Further, the objective value of $\psi_{12}(\ins_1)$ is at least the objective value of $\ins_1$.

The mapping $\psi_{21}$ always maintains the exact same objective value, and is feasible because for $i \in H_+$, $\|v_i\|^2 \leq \frac{1}{1 - C_{\ref{lemma:concentration}}\sqrt{\frac{n}{d}}}\frac{1}{d\sigma^2}\sum_{j \in \mc{P}}c_{ij}^2$, and a similar statement holds for $i \in H_-$.

Thus applying Lemma~\ref{mappings} twice (with Opt A = Opt 1 and Opt B = Opt 2, and then in reverse, and with $q = \frac{h}{2}$ and $1 + \delta = \frac{1}{1- C_{\ref{lemma:concentration}}\sqrt{\frac{n}{d}}}$) yields the result.

% We condition on the event that the conclusion of Lemma~\ref{lemma:concentration} holds, which occurs probability $1 - 3e^{-n}$. The first part follows from the fact that when this holds, for any $i$, $\sum_j (v_i^T\xi_j)^2 \leq \left(1 + C_{\ref{lemma:concentration}}\sqrt{\frac{n}{d}}\right)\sigma^2d|v_i|^2$.

% The second part follows by claiming that the optimum to Opt 1 is at least $\left(1 + C_{\ref{lemma:concentration}}\sqrt{\frac{n}{d}}\right)^{-h}$ times the optimum to Opt 2. Indeed, given any solution to Opt 2, we can construct a feasible solution to Opt 1 by keeping $u_i$ the same, and choosing $v_i$ as follows. If $i$ is such that $a_i > 0$, we choose $v_i$ to be the min-norm vector such that $v_i^T\xi_j = c_{ij}$ for all $j \in \mc{P}$ and $v_i^T\xi_j = 0$ for all $j \in \mc{N}$.  For all $i$, conditioned on the concentration event holding, this vector has norm at most $\frac{1}{1 - C_{\ref{lemma:concentration}}\sqrt{\frac{n}{d}}}\frac{1}{d\sigma^2}\sum_j c_{ij}^2$. Scaling down the entire solution by a factor of $\frac{1}{1 - C_{\ref{lemma:concentration}}\sqrt{\frac{n}{d}}}$ to account for this error and the value of $P_2$, the conditions of Opt 1 hold. By $h$-homogeneity of $\phi$ this solution achieves an objective value of at least $\left(1 - C_{\ref{lemma:concentration}}\sqrt{\frac{n}{d}}\right)^{h}$ times the objective value of Opt 2.

\end{proof}

\maptwo*

\begin{proof}[Proof of Lemma~\ref{lemma:relax_1_relax_2}]
It is easy to check by the definition of the mappings that if an instance $\ins_2 \in D_2$ is feasible, then so is $\psi_{23}(\ins_2)$. Likewise, if instance $\ins_3 \in D_3$ is feasible, then so is $\psi_{32}(\ins_3)$. Indeed, in $\psi_{32}(\ins_3)$, we have 
\begin{align}
    \frac{1}{2}\mathbb{E}_{i \in H_+}\left[s_i^2 + \frac{1}{d\sigma^2}\sum_{j \in \mathcal{P}}{(c_{ij}^{(2)})^2}\right] =     \frac{1}{2}\mathbb{E}_{i \in H}\left[b_i^2 + \frac{n_{\on{max}}}{n_{\on{min}}}\frac{1}{d\sigma^2}\sum_{j \in S_1 \cup S_{-1}}{(c_{ij}^{(3)})^2}\right] \leq \frac{n_{\on{max}}}{n_{\on{min}}}\frac{P_3}{2},
\end{align}
where we have supersripted the variables $c_{ij}$ in $\psi_{32}(\ins_3)$ by $(2)$, and those in $\ins_3$ by $(3)$. A similar statement holds for the sum over $\mc{N}$, such that 
\begin{align}
    \frac{1}{2}\mathbb{E}_{i \in H_+}\left[s_i^2 + \frac{1}{d\sigma^2}\sum_{j \in \mathcal{P}}{(c_{ij}^{(2)})^2}\right] + \frac{1}{2}\mathbb{E}_{i \in H_-}\left[t_i^2 + \frac{1}{d\sigma^2}\sum_{j \in \mathcal{N}}{(c_{ij}^{(2)})^2}\right] \leq \frac{n_{\on{max}}}{n_{\on{min}}}P_3 = P_2.
\end{align}

It is easy to check also that the objective value of $\psi_{23}(\ins_2)$ is at least that of $\ins_2$, and likewise the objective value of $\psi_{32}(\ins_3)$ is at least that of $\ins_3$.

We can now apply Lemma~\ref{mappings} with Opt A = Opt 2 and Opt B = Opt 3, $q = \frac{h}{2}$, $1 + \delta = \frac{n_{\on{max}}}{n_{\on{min}}}$, and $k = 2$. This yields the result.

\end{proof}

\begin{proof}[Proof of Lemma~\ref{lemma:relax_2_relax_3}]
The proof is similar to the last lemma. It is straightforward to check the conditions of Lemma~\ref{mappings} Opt A = Opt 3, Opt B = Opt 4, the mappings $\psi_{34}$ and $\psi_{43}$, $k = \frac{n}{4}$, $q = \frac{h}{2}$, and $\delta = 0$. The conclusion follows from Lemma~\ref{mappings}.

\end{proof}

% \mapfour*

\begin{proof}[Proof of Lemma~\ref{lemma:symmetrize}]
We will eventually appeal to Lemma~\ref{mappings}. First observe that the mapping $\psi_{54}$ yields an program in $D_4$ with the exact same objective value and parameter. We will construct an alternative mapping $\psi_{45}': D_4 \rightarrow D_5$ that preserves the parameter and maintains or increases the objective. We use $\psi_{45}$.
Let $P_4$ and $\gamma$ be the parameter and objective of $\ins_4$. First identify the instance $\ins_5^{(i)}$ of $\psi_{45}(\ins_4)$ which achieves the highest ratio between objective value, which we denote $\gamma^{(i)}$, and $P_5^{\frac{2}{h}}$. By the positivity of the of the $\gamma^{(i)}$ and $P_5^{(i)}$ and Jenson's inequality, for at least one instance $i$, we have 
\begin{align}
    \frac{\gamma^{(i)}}{(P_5^{(i)})^{\frac{h}{2}}} \geq \frac{\mathbb{E}[\gamma^{(i)}]}{\mathbb{E}[(P_5^{(i)})^{\frac{h}{2}}]} \geq \frac{\mathbb{E}[\gamma^{(i)}]}{(\mathbb{E}[P_5^{(i)}])^{\frac{h}{2}}} \geq \frac{\gamma}{P_4^{\frac{h}{2}}}.
\end{align}
Then scaling each variable in this instance by a factor of $\sqrt{\frac{P_4}{P_5}}$ to produce an instance feasible instance of Opt 5 with parameter $P_4$ and objective value $\gamma.$

This suffices to apply Lemma~\ref{mappings} with the mappings $\psi_{54}$ and $\psi_{45}'$ and $\delta = 0$. The second conclusion follows.

Now we prove the first conclusion. Let $C_4$ and $C_5$ be such that optimal value of Opt 4 equals $C_4P^{\frac{h}{2}}$ and the optimal value of Opt 5 equals $C_5P^{\frac{h}{2}}$. This holds by the homogeneity of the programs. The argument of Lemma~\ref{mappings} in the paragraph beginning ``We now proceed'', applied using the mappings $\psi_{54}$ and $\psi_{45}'$ shows that $C_4 = C_5$.

Observe that 
\begin{align}
    \mathbb{E}[\gamma^{(i)}] &\leq \mathbb{E}[C_5(P_5^{(i)})^{\frac{h}{2}}]\\
    &\leq \left(\mathbb{E}[C_5(P_5^{(i)})]\right)^{\frac{h}{2}}\\
    &= C_5P_4^{\frac{h}{2}}\\
    &= C_4P_4^{\frac{h}{2}} \leq \frac{\gamma}{1 - \eps}.
\end{align}
Here the first line follows from homogeneity of Opt 5, the second follows from Jenson's since $h < 2$, the third line from observing that the mapping $\psi_{45}$ produces instances with an average parameter equal to $P_4$, and the fourth from the fact that $C_4 = C_5$ and that $\ins_4$ is $(1 - \eps)$-optimal.

This proves the first conclusion of the lemma.
\end{proof}

\subsubsection{Proof of Lemmas analyzing Opt 5}\label{sec:xor_technical_proofs_tri}
We now prove the two lemmas analyze the trivariate program, Opt 5.
\begin{proof}[Proof of Lemma~\ref{tri_opt}]
We consider two classes of feasible solutions. In the first, $S_1$, we impose the constraint that $-b + d > 0$. In the second, $S_2$,  we have $-b + d \leq 0$. 

For solutions in $S_1$, it is easy to check that for any $(b, c, d)$, we can increase the objective value via the solution $(b', c', d')$, where $d' = 0$, and $c' = \sqrt{\frac{1 - b^2}{k}} > c$.

The the optimum in $S_1$ is achieved by setting $d = 0$. It is then easy to check via the KKT conditions of the resulting convex program that the optimum in this set chooses $b$ and $c$ as in the claim.

We now consider the second set, $S_2$. Our goal will be to re-parameterize the objective in terms of a single variable $\alpha := \frac{c - d}{c + d}$, and then analyze the one-dimensional optimization landscape as a function of $\alpha$. 
Recall that $\gamma_{0} = \max_{c, d: kc^2 + kd^2 \leq 1}\left(\phi(b + c) + \phi(-b + d)\right)$. Further define \begin{align}
    \gamma_{bd} := \max_{b, c, d: 0 < b = d, kc^2 + kd^2 \leq 1}\left(\phi(b + c) + \phi(-b + d)\right)
\end{align}
We proceed in a series of claims.

The first claim reduces this 3 variable program to a 2 variable program. 

\begin{claim}
\begin{align}
    &\max_{b, c, d: 0 \leq b \leq d, b^2 + k(c^2 + d^2) \leq 1} \phi(b + c) + \phi(-b + d) \\
    &\qquad \leq \max\left(\gamma_0, \gamma_{bd}, \max_{c, d: k(c - d) \leq d \leq c, k^2(c - d)^2 + k(c^2 + d^2) \leq 1} \phi(k(c - d) + c) + \phi(-k(c - d) + d)\right)\\
\end{align}
\end{claim}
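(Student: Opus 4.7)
The plan is a standard KKT analysis on the compact feasible set $\{(b,c,d) : 0 \leq b \leq d,\ b^2 + k(c^2+d^2) \leq 1\}$. Since $h \in (1,2)$, the activation $\phi(z) = \max(0,z)^h$ is continuously differentiable with $\phi'(z) = h\,\max(0,z)^{h-1}$, so first-order optimality conditions apply directly. First I observe that at any maximizer the norm constraint must be tight: otherwise one could scale $(b,c,d)$ by a factor strictly greater than $1$ and remain feasible (the ratio $b/d$ is preserved so the sign constraints are kept), and the objective, being positively homogeneous of degree $h>0$ and strictly positive at any non-trivial feasible point with $c>0$ or $d>b$, would strictly increase.

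Next I case-split on which of the inequality constraints $b \geq 0$ and $b \leq d$ is active at the maximizer. If $b = 0$, the point is feasible for the program defining $\gamma_0$ with the same objective value, so the maximum is at most $\gamma_0$. If $b = d > 0$, then $\phi(-b+d) = \phi(0) = 0$ and the constraint $d^2 + k(c^2+d^2) \leq 1$ is strictly stronger than $k(c^2+d^2) \leq 1$, so the point is feasible for the program defining $\gamma_{bd}$ and achieves the same value; hence the maximum is at most $\gamma_{bd}$ in this case.

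The interesting case is $0 < b < d$, where only the norm constraint is active. Writing the Lagrangian $L = \phi(b+c) + \phi(-b+d) - \lambda\bigl(b^2 + k(c^2+d^2) - 1\bigr)$ and setting $\nabla L = 0$ gives
\begin{align*}
\phi'(b+c) - \phi'(-b+d) &= 2\lambda b, \\
\phi'(b+c) &= 2\lambda k c, \\
\phi'(-b+d) &= 2\lambda k d.
\end{align*}
Subtracting the last two from the first yields $2\lambda b = 2\lambda k(c-d)$, so $b = k(c-d)$ whenever $\lambda \neq 0$. The alternative $\lambda = 0$ would force $\phi'(b+c) = \phi'(-b+d) = 0$, i.e.\ $b+c \leq 0$ and $-b+d \leq 0$; the latter contradicts $b<d$, ruling this case out. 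The constraints $0 < b = k(c-d) < d$ then translate exactly to $k(c-d) \leq d \leq c$, and the tight norm constraint becomes $k^2(c-d)^2 + k(c^2+d^2) \leq 1$, so the value here is bounded by the third term in the claim.

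The main subtlety I anticipate is confirming that $\phi$ is smooth enough for clean KKT and that the degenerate $\lambda=0$ branch can be excluded; both are addressed by the $h>1$ assumption, which makes $\phi'$ continuous everywhere and vanishing only on $(-\infty,0]$. A minor bookkeeping point is that one must verify the scaling argument for tightness of the norm constraint at a putative maximizer with objective value zero, which is harmless since the claim's right-hand side is itself at least $0$.
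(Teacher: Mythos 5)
Your proof is correct, and for the key interior case it takes a genuinely different route from the paper's. The paper never writes down Lagrange multipliers: it observes that the perturbation $(b,c,d)\mapsto(b+\Delta,\,c-\Delta,\,d+\Delta)$ leaves both arguments $b+c$ and $-b+d$ (hence the objective) exactly invariant, while changing the quadratic constraint value by $2\Delta\left(b-k(c-d)\right)+\Theta(\Delta^2)$; if $b\neq k(c-d)$ one chooses the sign of $\Delta$ to create slack and then rescales, strictly improving the objective and contradicting optimality. Your KKT derivation reaches the same stationarity condition $b=k(c-d)$ by eliminating $\phi'(b+c)$ and $\phi'(-b+d)$ from the three gradient equations. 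The trade-off: the paper's argument needs no differentiability of $\phi$ and no constraint qualification — the objective-invariant direction does all the work — whereas yours is more systematic but must verify that $\phi$ is $C^1$ (true for $h>1$), that a constraint qualification holds where the quadratic constraint is active (true, since its gradient $(2b,2kc,2kd)$ is nonzero there), and exclude the degenerate $\lambda=0$ branch; you address each of these. Your explicit treatment of the boundary cases $b=0$ and $b=d$ — mapping them into the feasible sets defining $\gamma_0$ and $\gamma_{bd}$ and noting the latter's constraint is weaker — is also slightly more careful than the paper's, which leaves those reductions implicit.
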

\begin{proof}
This claim reduces to showing that any locally optimal solution in $S_2$ that is not at one of the boundaries $b = 0$ or $b = d$ must satisfy $b = k (c - d)$. We proceed by contradiction. Suppose there was a feasible solution in $S_2$ with $0 < b < d$ which didn't satisfy $b = k (c - d)$. Then we can construct a new solution $b' = b + \Delta$, $c' = c - \Delta$, $d' = d + \Delta$. Then the objective value doesn't change ($\phi(b' + c') + \phi(-b' + d') = \phi(b + c) + \phi(-b + d)$), but for small enough $\Delta$ with the correct sign ($\on{sign}(-b + k(c - d))$) the constraint value decrease, since 
\begin{align}
    (b')^2 + k\left((c')^2 + (d')^2\right) - b^2 - k(c^2 + d^2) = 2\Delta\left(b - k(c - d)\right) + \Theta(\Delta^2) < 0.
\end{align}
Thus is we make this change and then scale up the solution such that the constraint is satisfied with equality, we will have increases the objective. Further, since $b$ is bounded away from $0$ and $d$, for $\Delta$ small enough, we will still have a point in $S_2$. Note, also introduce a constraint that $c \geq d$, since by the convexity of $\phi$, we can switch the values of $c$ and $d$ and increase the objective if $c < d$.
\end{proof}

The next claim reduces the two-variable program to a single variable optimization problem. 

\begin{claim}
\begin{align}
    &\max_{c, d: k(c - d) \leq d \leq c, k^2(c - d)^2 + k(c^2 + d^2) \leq 1} \phi(k(c - d) + c) + \phi(-k(c - d) + d)\\
    &\qquad \leq \max\left(\gamma_0, \gamma_{bd}, \max_{0 \leq \alpha \leq \frac{1}{2k + 1}; f(\alpha) = 0} \left(\frac{1}{\left(k^2 + \frac{k}{2}\right)\alpha^2 + \frac{k}{2}}\right)^{\frac{h}{2}}\left(\phi\left(\left(k + \frac{1}{2}\right)\alpha + \frac{1}{2}\right) + \phi\left(-\left(k + \frac{1}{2}\right)\alpha + \frac{1}{2}\right)\right)\right),
\end{align}
where
\begin{equation}
        f(\alpha) := \left(1 - \alpha\right)\phi'\left(\left(2k + 1\right)\alpha + 1\right) -  \left(1 + \alpha\right)\phi'\left(-\left(2k + 1\right)\alpha + 1\right) = 0.
\end{equation}
\end{claim}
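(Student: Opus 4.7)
The plan is to reparameterize the $(c,d)$-feasible set by $\alpha := (c-d)/(c+d)$ and $\beta := c+d$. The constraints $0 \leq d \leq c$ and $d \geq k(c-d)$ translate exactly to $0 \leq \alpha \leq 1/(2k+1)$, so the map $(\alpha, \beta) \mapsto (c, d) = (\beta(1+\alpha)/2, \beta(1-\alpha)/2)$ is a bijection between $[0, 1/(2k+1)] \times (0, \infty)$ and the feasible pairs with $c + d > 0$. Setting $b := k(c-d) = k\alpha\beta$, one computes $b+c = \beta((2k+1)\alpha + 1)/2$ and $-b + d = \beta(-(2k+1)\alpha + 1)/2$, both nonnegative on this range, so by $h$-homogeneity of $\phi$ the objective becomes $\beta^h B(\alpha)$ with
\[
B(\alpha) := \phi\!\left((k+\tfrac{1}{2})\alpha + \tfrac{1}{2}\right) + \phi\!\left(-(k+\tfrac{1}{2})\alpha + \tfrac{1}{2}\right),
\]
while the norm constraint reduces to $\beta^2 A(\alpha) \leq 1$ with $A(\alpha) := (k^2 + k/2)\alpha^2 + k/2$.

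Since the objective is strictly increasing in $\beta$, at any optimum $\beta$ saturates this bound, and the problem collapses to maximizing the one-variable function $g(\alpha) := A(\alpha)^{-h/2} B(\alpha)$ on the compact interval $[0, 1/(2k+1)]$. By continuity and smoothness, its maximum is attained either at a boundary point or at an interior critical point of $g$. At $\alpha = 0$ we have $c = d$ and $b = 0$, whence $g(0) = 2/(2k)^{h/2} = \gamma_0$; at $\alpha = 1/(2k+1)$ we have $b = d$ and the $-b+d$ term vanishes, giving $g(1/(2k+1)) = ((2k+1)/(k(k+1)))^{h/2}$, which matches $\gamma_{bd}$ (the latter equaling the maximum of $\phi(b+c)$ subject to $(1+k)b^2 + kc^2 \leq 1$, computed by Cauchy--Schwarz).

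It remains to verify that the interior first-order condition $g'(\alpha) = 0$ is equivalent to $f(\alpha) = 0$. Writing $p := (k+\tfrac{1}{2})\alpha + \tfrac{1}{2}$ and $q := -(k+\tfrac{1}{2})\alpha + \tfrac{1}{2}$, we have $A'(\alpha) = k(2k+1)\alpha$ and $B'(\alpha) = (k+\tfrac{1}{2})[\phi'(p) - \phi'(q)]$, and $g'(\alpha) = 0$ is equivalent to $2AB' = hA'B$. Expanding and using the Euler identity $h\phi(x) = x\phi'(x)$ for $x \geq 0$ to replace $\phi(p), \phi(q)$ in $B$ by $p\phi'(p)/h$ and $q\phi'(q)/h$, after cancellation the equation collapses to $(1-\alpha)\phi'(p) = (1+\alpha)\phi'(q)$. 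Applying $\phi'(2y) = 2^{h-1}\phi'(y)$ to both sides rescales this to $(1-\alpha)\phi'((2k+1)\alpha + 1) = (1+\alpha)\phi'(-(2k+1)\alpha + 1)$, which is exactly $f(\alpha) = 0$. The only delicate piece of the whole argument is the algebraic simplification in this paragraph; the boundary identifications and the compactness/homogeneity steps are essentially immediate.
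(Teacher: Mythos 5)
Your proof is correct and follows essentially the same route as the paper: reparameterize by $\alpha=(c-d)/(c+d)$, saturate the norm constraint, identify the boundary values $\alpha=0$ and $\alpha=\tfrac{1}{2k+1}$ with $\gamma_0$ and $\gamma_{bd}$, and show the interior first-order condition reduces to $f(\alpha)=0$ via homogeneity of $\phi$. The only cosmetic difference is that you eliminate $\beta$ first and differentiate the reduced value function (using Euler's identity $x\phi'(x)=h\phi(x)$), whereas the paper applies the KKT gradient-ratio condition to the two-variable $(A,B)$ program before substituting $\alpha=A/B$; the resulting algebra and conclusion are identical.
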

\begin{proof}
First we reparameterize $A = c - d$ and $B = c + d$, such that we can upper bound by the optimum of the following program:
\begin{align}
    \max_{A, B} \qquad & \phi\left(\left(k + \frac{1}{2}\right)A + \frac{1}{2}B\right) + \phi\left(-\left(k + \frac{1}{2}\right)A + \frac{1}{2}B\right)\\
    & s.t. \qquad \left(k^2 + \frac{k}{2}\right)A^2 + \frac{k}{2}B^2 \leq 1 \\
    & \: \qquad 0 \leq A \leq \frac{B}{2k + 1}
\end{align}
Now by the KKT conditions, for any stationary point bounded away from the boundary of $A = 0$ or $A = \frac{B}{2k + 1}$, we must have $\frac{\frac{\partial g}{\partial A}}{\frac{\partial g}{\partial B}} = \frac{\frac{\partial f}{\partial A}}{\frac{\partial f}{\partial B}}$, where $f$ and $g$ represent the objective and the constraint respectively. Thus at these stationary points, we have
\begin{align}
    \frac{(2k^2 + k)A}{kB} = \frac{k + \frac{1}{2}}{\frac{1}{2}}\frac{\phi'\left(\left(k + \frac{1}{2}\right)A + \frac{1}{2}B\right) - \phi'\left(-\left(k + \frac{1}{2}\right)A + \frac{1}{2}B\right)}{\phi'\left(\left(k + \frac{1}{2}\right)A + \frac{1}{2}B\right) + \phi'\left(-\left(k + \frac{1}{2}\right)A + \frac{1}{2}B\right)},
\end{align}
or equivalently, setting $\alpha := \frac{A}{B}$,
\begin{align}
    \frac{1 + \alpha}{1 - \alpha} = \frac{\phi'\left(\left(k + \frac{1}{2}\right)A + \frac{1}{2}B\right)}{\phi'\left(-\left(k + \frac{1}{2}\right)A + \frac{1}{2}B\right)}.
\end{align}
This manipulation and reparameterization in terms of $\alpha$ is useful for analysis because it allows us to leverage the homogeneity of $\phi$ without explicitly computing the KKT solution. Indeed, by homogeneity (and plugging in $\alpha = \frac{A}{B}$), we have at stationary point,
\begin{align}
    \frac{1 + \alpha}{1 - \alpha} = \frac{\phi'\left(\left(2k + 1\right)\alpha + 1\right)}{\phi'\left(-\left(2k + 1\right)\alpha + 1\right)},
\end{align}
or 
\begin{align}
    f(\alpha) := \left(1 - \alpha\right)\phi'\left(\left(2k + 1\right)\alpha + 1\right) -  \left(1 + \alpha\right)\phi'\left(-\left(2k + 1\right)\alpha + 1\right) = 0
\end{align}
Now we check the boundary points. When $A = 0$, this corresponds to the point where $c = d$ and $b = 0$, which yields the objective value $\gamma_0$. When $A = \frac{B}{2k + 1}$, this corresponds to the point when $b = d$, and thus yields the objective value $\gamma_{bd}$.
\end{proof}

In the next claim, we will show the single variable optimization program in terms of $\alpha$ achieves its maximum at the boundaries.
\begin{claim}
\begin{align}
     \max_{0 \leq \alpha \leq \frac{1}{2k + 1}; f(\alpha) = 0} \left(\frac{1}{\left(k^2 + \frac{k}{2}\right)\alpha^2 + \frac{k}{2}}\right)^{\frac{h}{2}}\left(\phi\left(\left(k + \frac{1}{2}\right)\alpha + \frac{1}{2}\right) + \phi\left(-\left(k + \frac{1}{2}\right)\alpha + \frac{1}{2}\right)\right) \leq \max\left(\gamma_0, \gamma_{bd}\right).
\end{align}
\end{claim}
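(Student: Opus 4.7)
The plan is to change variables $y := (2k+1)\alpha\in[0,1]$ and prove the stronger pointwise bound
\[
g(y)\;:=\;\frac{1}{(2k)^{h/2}}\,P(y)\,\left(\frac{2k+1}{y^2+2k+1}\right)^{h/2} \;\le\;\max(\gamma_0,\gamma_{bd})\qquad\forall\,y\in[0,1],
\]
where $P(y):=(1+y)^h+(1-y)^h$. This implies the restricted max in the claim. Direct algebra identifies the expression inside the max of the statement with this $g(y)$, and substitution gives $g(0)=\gamma_0$ and $g(1)\le\gamma_{bd}$. So the task reduces to showing that $g$ attains its maximum on $[0,1]$ at an endpoint.

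A short differentiation yields, up to a strictly positive prefactor,
\[
g'(y)\;\propto\;(1+y)^{h-1}(2k+1-y)-(1-y)^{h-1}(2k+1+y),
\]
which (via $y=(2k+1)\alpha$) is a positive multiple of $f(\alpha)$. Hence interior zeros of $f$ coincide exactly with interior critical points of $g$ on $(0,1)$. Dividing through by $(1-y)^{h-1}(2k+1-y)>0$ reveals $\operatorname{sign}(g'(y))=\operatorname{sign}(r(y)-1)$ for
\[
r(y) \;:=\; \frac{u(y)}{v(y)}, \quad u(y):=\left(\frac{1+y}{1-y}\right)^{h-1}, \quad v(y):=\frac{2k+1+y}{2k+1-y},
\]
with $r(0)=1$ and $r(y)\to+\infty$ as $y\to 1^-$.

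The crux is the clean logarithmic-derivative identity
\[
\frac{d}{dy}\log r(y)\;=\;\frac{2\,M(y)}{(1-y^2)\,((2k+1)^2-y^2)},\quad M(y):=(2k+1)\bigl[(h-1)(2k+1)-1\bigr]+y^2\bigl[(2k+1)-(h-1)\bigr].
\]
Because $h<2$ makes the $y^2$-coefficient strictly positive, $M$ is strictly increasing in $y^2$ on $[0,1]$, and one checks $M(1)=4k(k+1)(h-1)>0$. The result then follows from a case split on the sign of $M(0)=(2k+1)[(h-1)(2k+1)-1]$. If $(h-1)(2k+1)\ge 1$, then $M\ge 0$ throughout, so $\log r$ is non-decreasing, $r\ge 1$, $g'\ge 0$, and $g$ is non-decreasing on $[0,1]$ with maximum $g(1)\le\gamma_{bd}$. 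If $(h-1)(2k+1)<1$, then $M(0)<0<M(1)$ together with monotonicity give a unique $y_M\in(0,1)$ with $M(y_M)=0$; hence $r$ strictly decreases on $[0,y_M]$ and strictly increases on $[y_M,1)$. Combined with $r(0)=1$ and $r(1^-)=+\infty$, this forces exactly one $y^*\in(y_M,1)$ with $r(y^*)=1$, at which $g'$ changes from negative to positive --- a unique interior local minimum --- so $\max_{y\in[0,1]} g(y)=\max(g(0),g(1))\le\max(\gamma_0,\gamma_{bd})$.

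The one genuinely delicate step is discovering the factorization of $(d/dy)\log r$ through the polynomial $M(y)$ and noticing that its $y^2$-coefficient has a sign forced by $h<2$; everything else is routine sign tracking, with no further analytic machinery required.
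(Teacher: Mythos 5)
Your proof is correct. The overall strategy matches the paper's — reduce to showing that the one-variable objective is maximized at an endpoint of $[0,\tfrac{1}{2k+1}]$, so that the lone interior zero of $f$ can only be a local minimum of $g$ — but the key computation is genuinely different. The paper proves that $f$ has at most one strictly positive zero by computing $f''$ and showing it is positive (so $f$ is strictly convex and vanishes at $0$), and then separately checks that $g'>0$ at the right endpoint; these two facts together force the interior stationary point to be a minimum. You instead determine the full sign pattern of $g'$ directly: after substituting $y=(2k+1)\alpha$ you show $\operatorname{sign}(g')=\operatorname{sign}(r-1)$ and that $(\log r)'$ is controlled by the quadratic $M(y)$, whose $y^2$-coefficient is positive exactly because $h<2$ and whose value at $y=1$ is positive exactly because $h>1$. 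This yields the stronger conclusion that $g$ is either monotone increasing or strictly decreasing-then-increasing on $[0,1]$, hence the pointwise bound $g(y)\le\max(\gamma_0,\gamma_{bd})$ on the whole interval rather than only at critical points, and it subsumes the paper's separate endpoint-derivative check. Your $M(y)$ analysis plays precisely the role of the paper's $f''>0$ computation (both are where $h\in(1,2)$ enters), and, as in the paper, the identification $g(1)\le\gamma_{bd}$ rests on $y=1$ corresponding to a feasible point with $b=d$.
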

\begin{proof}
First we show that $f(\alpha) = 0$ has at most one strictly positive solution. To show this, since we know $f(0) = 0$, it suffices to check that the second derivative of $f(\alpha)$ is always positive for $0 < \alpha \leq 1$. Indeed, the second derivative evaluates to 
\begin{equation}
     (2 - h)(h-1)t^2\left(\frac{1 + \alpha}{(1 - tx)^{3 - h}} - \frac{1 - \alpha}{(1 + tx)^{3 - h}}\right) + 2(h-1)t\left(\frac{1}{(1 - tx)^{2 - h}} - \frac{1}{(1 + tx)^{2 - h}}\right).
\end{equation}
where $t := \left(2k + 1\right)$. Since $h \in (1, 2)$ and $t \geq 0$, this expression is positive for $\alpha > 0$ since $\frac{1 + \alpha}{(1 - tx)^{3 - h}} > \frac{1 - \alpha}{(1 + tx)^{3 - h}}$ and $\frac{1}{(1 - tx)^{2 - h}} > \frac{1}{(1 + tx)^{2 - h}}$.

Now, we will show the derivative of the objective, which we will call $g(\alpha)$, is positive at the boundary point $\alpha = \frac{1}{2k + 1}$. At this value, the term inside the second $\phi$ evaluates to $0$, and thus so does its derivative. The remaining part of the objective evaluates to 
\begin{align}
\left(\frac{1}{\left(k^2 + \frac{k}{2}\right)\alpha^2 + \frac{k}{2}}\right)^{\frac{h}{2}}\phi\left(\left(k + \frac{1}{2}\right)\alpha + \frac{1}{2}\right) = \frac{1}{k^{\frac{h}{2}}}\left(\left(\left(k + \frac{1}{2}\right)\alpha + \frac{1}{2}\right)^2\left(\left(k + \frac{1}{2}\right)\alpha^2 + \frac{1}{2}\right)^{-1}\right)^{\frac{h}{2}},
\end{align}
so it suffices to check that
\begin{align}
    \left(\left(k + \frac{1}{2}\right)\alpha + \frac{1}{2}\right)^2\left(\left(k + \frac{1}{2}\right)\alpha^2 + \frac{1}{2}\right)^{-1}
\end{align}
is increasing as a function of $\alpha$. Indeed we can take the derivative to confirm this is the case for $\alpha \leq 1$.

Now since $g(\alpha)$ is increasing at the upper boundary $\alpha = \frac{1}{2k + 1}$ and has at most one stationary point between $0$ and the upper boundary, we conclude that this stationary point cannot be a maximum. Thus the maximum must be obtained at the boundary. Again the boundary points correspond to when $c = d$ and $b = 0$, yielding $\gamma_0$ and when $b = d$, yielding $\gamma_{bd}$.
\end{proof}

These three claims have shown that the maximum inside $S_2$ is obtained at one of the boundaries where $b = 0$ or $b = d$. It is easy to check that any solution when $b = d$ is suboptimal if $d > 0$, since we can decrease $d$ and increase $c$ by a small amount which will improve the objective. Now if $\hat{\kappa} > \kgen$, then by definition, $\gamma_0 < \gamma_*$, and thus the optimal solution with $b \geq 0$ is given by choosing $c$ and $d$ as in the lemma.

If $\hat{\kappa} < \kgen$, the greater solution of $\gamma_0$ and $\gamma_*$ is given by $\gamma_0$, and thus we choose $c$ and $d$ to be equal as in the lemma.
\end{proof}

\begin{proof}[Proof of Lemma~\ref{lemma:trivariate_analysis}]
Because of the homogeneity of each constraint, it suffices to prove the result for $B_4 = 1$. Consider some $\epsilon$-optimal solution $(b, c, d)$. Without loss of generality, by the symmetry of the problem and the conclusion, we can assume $b$ is non-negative. 

Observe that for $\epsilon = \epsilon(\hat{\kappa})$ small enough, by the continuity of the objective, any $(1 - \epsilon)$-optimal solution must be arbitrarily close to the solution given in Lemma~\ref{tri_opt}, which we name $(b^*, c^*, d^*)$.

This means that for $\hat{\kappa} > \kgen$, we must have $b$ arbitrarily close to $\sqrt{\frac{\hat{\kappa}}{4 + \hat{\kappa}}}$,  $c$ arbitrarily close to $\sqrt{\frac{16}{\hat{\kappa}(4 + \hat{\kappa})}}$,and $d$ arbitrarily close to $0$. Thus the first conclusion follows by the fact that 
\begin{align}
    \lim_{\epsilon \rightarrow 0} \frac{\phi(b)}{\phi(b + c)} = \frac{\phi(b^*)}{\phi(b^* + c^*)} = \left(\frac{1}{1 + \frac{4}{\hat{\kappa}}}\right)^h.
\end{align}
The second one follows from the fact that in a neighborhood of $b^*$ and $d^*$, both sides are $0$.

If additionally $\hat{\kappa} < \kphen$, then by definition of $\kphen$, we have that $b^* < c^*$. So for small enough $\epsilon$, $b < c$. Thus $\phi(b) = \frac{1}{2^h} \phi(2b) <\frac{1}{2^h}\phi(b + c)$, which yields the third conclusion. Again the fourth conclusion follows from the fact that in a neighborhood of $b^*$ and $d^*$, both sides are $0$.

The last conclusion (if $\hat{\kappa} < \kgen$) follows immediately from the Lemma~\ref{tri_opt}.
\end{proof}

% Let $\gamma_0$ be the best objective value achieved when $b = 0$, and let $\gamma_{*}$ be the best objective value achieved when $d = 0$.

% \begin{claim}
% If $d = 0$, the best objective value is found by setting $b = \sqrt{\frac{\kappa}{4 + \kappa}}$ and $c = \sqrt{\frac{16}{(4 + \kappa)\kappa}}$.
% \end{claim}

% \begin{claim}
% For any $b >0, c, d$,
% \begin{align}
%     \phi(b + c) +  \phi(-b + d) \leq \gamma_0 + 4b
% \end{align}
% \end{claim}
% \begin{proof}
% \begin{align}
%     \phi(b + c) +  \phi(-b + d) &\leq \phi(b + c) +  \phi(d)\\
%     &\leq \phi(c) + \phi(d) + b\sup_{c, d: b^2 + \frac{\kappa}{4} c^2 \leq 1}\sigma^\prime(b + c)\\
%     &\leq \phi(c) + \phi(d) + 4b.
% \end{align}
% Here in the second inequality we used the fact that the derivative of $\sigma$ is increasing, and in the third inequality we used the explicit maximizing solution for $b$ and $c$ from the proof of the previous claim and the fact that $\kappa \geq 1$.

% By definition, the maximum feasible value of the last line is $\gamma_0 + 4b$.
% \end{proof}
% Using this claim, if a solution is $(1 - \epsilon)$-optimal, we must have $\gamma_0 + 4b \geq (1 - \epsilon)\gamma_*$, so $b \geq \frac{(1 - \epsilon)\gamma_* - \gamma_0}{4}$.

% Now $\phi(b + c) \leq \gamma_*$, so we have
% \begin{align}
%     \frac{\phi(b)}{\phi(b + c)} \geq \frac{\phi\left(\frac{(1 - \epsilon)\gamma_* - \gamma_0}{4}\right)}{\gamma_*}.
% \end{align}
% To prove the second statement, we need to show that $d \leq b$ such that $\phi(-b + d) = 0$.

% \todot{There are some parts missing from this proof, that have only been numerically verified on Desmos.}

\begin{proof}[Proof of Lemma~\ref{lemma:relax_3_relax_4}]
For each $i \in H$, define an instance of Opt 5 by putting $b = b_i$, $c = c_i$, $d = d_i$, and $P_5 = P_5^{(i)} := b_i^2 + \kappa (c_i^2 + d_i^2)$. Let $\gamma_i$ denote the objective value of this instance of Opt 5, that is, $\phi(b_i + c_i) +  \phi(-b_i + d_i)$.

By the homogeneity of Opt 5, the optimum of Opt 5 equals $C_4P_5^{\frac{h}{2}}$ for some value $C_4$. Further, Claim\ref{lemma:symmetrize} guarantees that the optimum of Opt 4 is at least $\frac{1}{2}C_4(P_4)^{\frac{h}{2}}$, because it is possible to construct a solution to Opt 4 from an optimal solution $(b^*, c^*, d^*)$ of Opt 5 in the following way:
For half the $i \in H$, take $(b_i, c_i, d_i) = (b^*, c^*, d^*)$. For the other half, take $(b_i, c_i, d_i) = (-b^*, d^*, c^*)$. Then the objective value is exactly half of the optimum of the optimum of Opt 5 with $P_5 = P_4$.

For $i \in H$, let $s(i)$ be the optimality of the respective instance of Opt 5, that is, $\frac{\gamma_i}{C_4\left(P_5^{(i)}\right)^{\frac{h}{2}}}.$ If the solution to Opt 4 is $(1 - \epsilon)$-suboptimal, then we have
\begin{align}
    \mathbb{E}_i s(i)C_4\left(P_5^{(i)}\right)^{\frac{h}{2}} &\geq \left(1 - \epsilon\right)C_4(P_4)^{\frac{h}{2}}; \\
    \mathbb{E} P_5^{(i)} \leq P_4.
\end{align}

Plugging the second equation into the first, and applying Jensen's inequality to the concavity of the function $x \rightarrow x^{h/2}$, we obtain
\begin{align}\label{eq:jenson}
    \mathbb{E}_i s(i)\left(P_5^{(i)}\right)^{\frac{h}{2}} &\geq \left(1 - \epsilon\right)(\mathbb{E} P_5^{(i)} )^{\frac{h}{2}} \geq \left(1 - \epsilon\right)\mathbb{E} (P_5^{(i)} )^{\frac{h}{2}}.
\end{align}

Let $\alpha(i) := \left(P_5^{(i)}\right)^{\frac{h}{2}}$. In the remainder of the lemma, we use the $\alpha$ and $s$ to denote random variables over the randomness of $i$, and all expectation are over $i$ uniformly from $H$.

For any $\delta$, we have,
\begin{align}\label{eq:alpha_s}
(1 - \epsilon)\mathbb{E}[\alpha] &\leq \mathbb{E}[\alpha s]\\
    &=\mathbb{E}[\alpha s \mathbbm{1}(s \geq 1 - \delta)] + \mathbb{E}[\alpha s \mathbbm{1}(s < 1 - \delta)]\\
    &\leq \mathbb{E}[\alpha \mathbbm{1}(s \geq 1 - \delta)] + (1 - \delta)\mathbb{E}[\alpha \mathbbm{1}(s < 1 - \delta)]\\
    &= \mathbb{E}[\alpha \mathbbm{1}(s \geq 1 - \delta)] + (1 - \delta)\left(\mathbb{E}[\alpha] - \mathbb{E}[\alpha \mathbbm{1}(s \geq 1 - \delta)]\right)\\
    &= \delta \mathbb{E}[\alpha \mathbbm{1}(s \geq 1 - \delta)] + (1 - \delta)\mathbb{E}[\alpha],
\end{align}
so
\begin{align}\label{eq:a_s_2}
    \mathbb{E}[\alpha \mathbbm{1}(s \geq 1 - \delta)] \geq  \left(1 - \frac{\epsilon}{\delta}\right)\mathbb{E}[\alpha],
\end{align}
and hence,
\begin{align}
    \mathbb{E}[s\alpha \mathbbm{1}(s \geq 1 - \delta)] \geq  \left(1 - \delta\right)\left(1 - \frac{\epsilon}{\delta}\right)\mathbb{E}[\alpha].
\end{align}
Let $\epsilon_{\ref{lemma:trivariate_analysis}}$ and $\eta$ be the constants $\epsilon$ and $\eta$ from Lemma~\ref{lemma:trivariate_analysis}. By Lemma~\ref{lemma:trivariate_analysis}, for any $i$ with $s(i) \geq 1 - \epsilon_{\ref{lemma:trivariate_analysis}}$, we have $\phi(b_i) \geq \eta\phi(b_i + c_i)$. Thus
\begin{align}\label{eq:conclusion}
    \mathbb{E}_{i \in H}[\phi(b_i)] &\geq \mathbb{E}_{i \in H}[\phi(b_i)\mathbbm{1}(s(i) \geq \epsilon_{\ref{lemma:trivariate_analysis}})] \\
    &\geq \eta\mathbb{E}_{i \in H}[\phi(b_i + c_i)\mathbbm{1}(s(i) \geq 1 -  \epsilon_{\ref{lemma:trivariate_analysis}})]\\
    &=  \eta\left(\mathbb{E}_{i \in H}[\phi(b_i + c_i)] - \mathbb{E}_{i \in H}[\phi(b_i + c_i)\mathbbm{1}(s(i) < 1 -  \epsilon_{\ref{lemma:trivariate_analysis}})]\right)\\
    &\geq \eta\left(\mathbb{E}_{i \in H}[\phi(b_i + c_i)] - \mathbb{E}_{i \in H}[\left(\phi(b_i + c_i) + \phi(-b_i + d_i)\right)\mathbbm{1}(s(i) < 1 -  \epsilon_{\ref{lemma:trivariate_analysis}})]\right)\\
    &= \eta\left(\mathbb{E}_{i \in H}[\phi(b_i + c_i)] - \mathbb{E}[\alpha s\mathbbm{1}(s < 1 -  \epsilon_{\ref{lemma:trivariate_analysis}})]\right)\\
    &\geq \eta\left(\mathbb{E}_{i \in H}[\phi(b_i + c_i)] - \mathbb{E}[\alpha \mathbbm{1}(s < 1 -  \epsilon_{\ref{lemma:trivariate_analysis}})]\right)\\
    &\geq \eta\left(\mathbb{E}_{i \in H}[\phi(b_i + c_i)] - \frac{\epsilon}{\epsilon_{\ref{lemma:trivariate_analysis}}}\mathbb{E}[\alpha]\right)
\end{align}

We need one more claim:
\begin{claim}
If the solution to Opt 4 is $(1 - \epsilon)$-optimal, then 
\begin{align}
    \mathbb{E}_{i \in H}[\phi(b_i + c_i)] &\geq \frac{1}{2}\left(1 - \epsilon\right)\mathbb{E}_{i \in H}[\phi(b_i + c_i) + \phi(-b_i + d_i)].\\
    \mathbb{E}_{i \in H}[\phi(-b_i + d_i)] &\geq \frac{1}{2}\left(1 - \epsilon\right)\mathbb{E}_{i \in H}[\phi(b_i + c_i) + \phi(-b_i + d_i)].
\end{align}
\end{claim}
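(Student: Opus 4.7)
Let $A := \tfrac{1}{2}\mathbb{E}_{i \in H}[\phi(b_i + c_i)]$ and $B := \tfrac{1}{2}\mathbb{E}_{i \in H}[\phi(-b_i + d_i)]$, so the objective of the given solution equals $\gamma = \min(A, B)$. The two inequalities in the claim are symmetric, so by swapping the roles of $(b_i,c_i)$ and $(-b_i,d_i)$ it suffices to establish the first one. Without loss of generality assume $A \leq B$ (otherwise the first inequality holds trivially by a stronger argument), so $\gamma = A$. The goal becomes $A \geq \tfrac{1}{2}(1-\epsilon)(A+B)$, which rearranges to $A(1+\epsilon) \geq (1-\epsilon)B$; equivalently, $A$ cannot be much smaller than $B$.

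The key step is to exhibit a \emph{symmetrized} feasible solution whose objective is $\tfrac{A+B}{2}$; combined with $(1-\epsilon)$-optimality of the original solution, this forces $A \geq (1-\epsilon)\tfrac{A+B}{2}$. Construct the new solution exactly as in the mapping $\psi_{54}$ from Lemma~\ref{lemma:symmetrize}: partition $H$ into two halves of equal size (possible since $|H| = m/2$ is even by the assumption that $m$ is divisible by $4$), keep $(b_i, c_i, d_i)$ on one half, and replace it by $(-b_i, d_i, c_i)$ on the other half. The parameter constraint is unaffected because $b_i^2 = (-b_i)^2$ and $c_i^2 + d_i^2 = d_i^2 + c_i^2$, so the new solution is feasible with the same $P_4$. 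For the objective, the new value of $\tfrac{1}{2}\mathbb{E}_{i \in H}[\phi(b_i+c_i)]$ is the average of $\tfrac{1}{2}\mathbb{E}_{\text{half 1}}[\phi(b_i+c_i)]$ (contributing toward $A$) and $\tfrac{1}{2}\mathbb{E}_{\text{half 2}}[\phi(-b_i+d_i)]$ (contributing toward $B$), which by a Fubini/linearity argument equals $\tfrac{A+B}{2}$; likewise for $\tfrac{1}{2}\mathbb{E}[\phi(-b_i+d_i)]$ after the swap. Hence the symmetrized solution attains objective $\min\bigl(\tfrac{A+B}{2}, \tfrac{A+B}{2}\bigr) = \tfrac{A+B}{2}$.

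This yields $\gamma^* \geq \tfrac{A+B}{2}$, where $\gamma^*$ is the optimum of Opt 4 at parameter $P_4$. Combined with the $(1-\epsilon)$-optimality hypothesis $\gamma \geq (1-\epsilon)\gamma^*$ and $\gamma = A$, we conclude $A \geq (1-\epsilon)\tfrac{A+B}{2}$, which is the first inequality of the claim. The second inequality follows by the identical argument after interchanging the roles of the two halves (or by the symmetry of the program under $(b_i,c_i,d_i) \mapsto (-b_i, d_i, c_i)$).

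I do not anticipate any real obstacle: the only subtlety is checking that splitting $H$ into two equal halves is possible, which is immediate from $m$ being divisible by $4$ and the convention in Opt 3/4 that $|H| = m/2$. No use of the convexity/homogeneity of $\phi$ beyond linearity of expectation is required, and the argument reduces directly to the feasibility-preserving swap already used in defining $\psi_{54}$.
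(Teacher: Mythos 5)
Your overall strategy --- show that the optimum $\gamma^*$ of Opt~4 at parameter $P_4$ is at least $\tfrac{A+B}{2}$ and then invoke $(1-\epsilon)$-optimality of the given solution --- is the right one and matches the paper's. But the way you establish $\gamma^*\ge\tfrac{A+B}{2}$ has a genuine gap. You symmetrize the \emph{given} solution by partitioning $H$ into halves $H_1,H_2$, keeping $(b_i,c_i,d_i)$ on $H_1$ and swapping to $(-b_i,d_i,c_i)$ on $H_2$, and assert that the first constraint of the new solution equals $\tfrac{A+B}{2}$ ``by a Fubini/linearity argument.'' It does not: the new first-constraint value is
\begin{align}
\frac{1}{2|H|}\Bigl(\sum_{i\in H_1}\phi(b_i+c_i)+\sum_{i\in H_2}\phi(-b_i+d_i)\Bigr),
\end{align}
and $\sum_{i\in H_1}\phi(b_i+c_i)$ need not equal $\tfrac12\sum_{i\in H}\phi(b_i+c_i)$ for any particular half $H_1$, because the per-neuron values are heterogeneous. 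Worse, no choice of orientations can repair this in general: for every assignment the two constraint values of the swapped solution sum to exactly $A+B$, so attaining objective $\min=\tfrac{A+B}{2}$ requires them to balance \emph{exactly}, a discrete partition problem with no exact solution for generic values $\{\phi(b_i+c_i),\phi(-b_i+d_i)\}_{i\in H}$. The half-and-half swap is only exact when all neurons within each half are identical.

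The paper's proof avoids this by passing through the homogeneity of Opt~5 --- precisely the tool you assert is unnecessary. Writing $M(P)$ for the maximum of $\phi(b+c)+\phi(-b+d)$ subject to $b^2+\tfrac{\hat{\kappa}}{4}(c^2+d^2)\le P$, homogeneity gives $M(P)=M(1)P^{h/2}$. Then (i) applying the per-neuron bound $\phi(b_i+c_i)+\phi(-b_i+d_i)\le M(P_5^{(i)})$ with $P_5^{(i)}:=b_i^2+\tfrac{\hat{\kappa}}{4}(c_i^2+d_i^2)$, together with Jensen's inequality for the concave map $P\mapsto P^{h/2}$ and the norm constraint $\mathbb{E}_i[P_5^{(i)}]\le P_4$, yields $\mathbb{E}_i[\phi(b_i+c_i)+\phi(-b_i+d_i)]\le M(P_4)$, i.e.\ $A+B\le\tfrac12 M(P_4)$; and (ii) the half-and-half swap applied to the \emph{single optimal triple} $(b^*,c^*,d^*)$ of Opt~5 --- where all neurons in each half are identical, so the averaging identity you wanted does hold --- gives $\gamma^*\ge\tfrac14 M(P_4)\ge\tfrac{A+B}{2}$. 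From there your final step goes through verbatim. So the intermediate inequality you need is true, but the missing ingredient is exactly the homogeneity-plus-Jensen upper bound on $A+B$; swapping halves of the given heterogeneous solution cannot substitute for it.
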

\begin{proof}
Suppose without loss of generality that $ \mathbb{E}_{i \in H}[\phi(b_i + c_i)]  = q\left( \mathbb{E}_{i \in H}[\phi(b_i + c_i) + \phi(-b_i + d_i)]\right)$ for some $q \leq \frac{1}{2}$.

Then the optimum of the program is at most $q $, so we have
\begin{align}
    qC_4(2P_4)^{\frac{h}{2}} \geq q\mathbb{E}_{i \in H}[\phi(b_i + c_i) + \phi(-b_i + d_i)] \geq (1 - \epsilon)\frac{1}{2}C_4(2P_4)^{\frac{h}{2}}.
\end{align}
The conclusion follows.

\end{proof}

Using the claim and Equation~\ref{eq:jenson},
\begin{align}\label{eq:ex_alpha}
   \mathbb{E}_{i \in H}[\phi(b_i + c_i)] \geq  \frac{1}{2}\left(1 - \epsilon\right)\mathbb{E}_{i \in H}[\phi(b_i + c_i) + \phi(-b_i + d_i)] =\frac{1}{2}\left(1 - \epsilon\right) \mathbb{E}[\alpha s] \geq \frac{1}{2}\left(1 - \epsilon\right)^2\mathbb{E}[\alpha].
\end{align}
Thus plugging this into the Equation~\ref{eq:conclusion}, we have
\begin{align}
    \mathbb{E}_{i \in H}[\phi(b_i)] &\geq \eta\left(\mathbb{E}_{i \in H}[\phi(b_i + c_i)] - \frac{\epsilon}{\epsilon_{\ref{lemma:trivariate_analysis}}}\mathbb{E}[\alpha]\right)\\
    & \geq \eta\left(\mathbb{E}_{i \in H}[\phi(b_i + c_i)]\right)\left(1  - \frac{2\epsilon}{\epsilon_{\ref{lemma:trivariate_analysis}}(1 - \epsilon)^2}\right).
\end{align}
The second statement of the lemma can be proved identically, but using the second result of Lemma~\ref{lemma:trivariate_analysis}.

Now we consider the case when additionally we have $\kappa < \kphen$. 

We can bound

\begin{align}
    \mathbb{E}_{i \in H}[\phi(b_i)] &\leq \mathbb{E}_{i \in H}[\phi(b_i)\mathbbm{1}(s(i) \geq 1 - \epsilon_{\ref{lemma:trivariate_analysis}})] + \mathbb{E}_{i \in H}[\phi(b_i)\mathbbm{1}(s(i) < 1 - \epsilon_{\ref{lemma:trivariate_analysis}})]\\
    &\leq \left(\frac{1 + q(\kappa)}{2}\right)\mathbb{E}_{i \in H}[\phi(b_i + c_i)\mathbbm{1}(s(i) \geq 1 - \epsilon_{\ref{lemma:trivariate_analysis}})] + \mathbb{E}_{i \in H}[\alpha(i)\mathbbm{1}(s(i) < 1 - \epsilon_{\ref{lemma:trivariate_analysis}})]\\
    &\leq \left(\frac{1 + q(\kappa)}{2}\right)\mathbb{E}_{i \in H}[\phi(b_i + c_i)] + \frac{\epsilon}{\epsilon_{\ref{lemma:trivariate_analysis}}}\mathbb{E}[\alpha]\\
    &\leq \left(\frac{1 + q(\kappa)/2}{2}\right)\mathbb{E}_{i \in H}[\phi(b_i + c_i)]
\end{align}
for $\epsilon$ a small enough constant. Here in the second inequality we used Lemma~\ref{lemma:trivariate_analysis} and additionally the fact that for any $i$, we have $\phi(b_i) \leq \alpha(i)$ in any feasible solution. In the third inequality, we used Equation~\ref{eq:a_s_2}. In the final inequality, we used Equation~\ref{eq:ex_alpha} and chose $\epsilon$ small enough in terms of $q(\kappa)$ and $\epsilon_{\ref{lemma:trivariate_analysis}}$.

The same argument holds for $d_i$ and $-b_i$.
\end{proof}

\end{appendices}
\end{document}